\newtheorem{theorem}{Theorem}
\newtheorem{definition}{Definition}
\newtheorem{lemma}{Lemma}
\newtheorem{proposition}{Proposition}
\newtheorem{remark}{Remark}
\numberwithin{equation}{section}
\newcommand{\ER}{Erd\H{o}s-R\'enyi}   
\DeclareMathOperator{\tr}{tr}
\newcommand{\norm}[1]{\|{#1}\|}
\newcommand{\abs}[1]{|{#1}|}
\newcommand{\set}[1]{\left\{{#1}\right\}}
\newcommand{\dotprod}[2]{\langle#1,#2\rangle}
\newcommand{\expec}{\ensuremath{\mathbb{E}}}
\newcommand{\matR}{\ensuremath{\mathbb{R}}}
\newcommand{\prob}{\ensuremath{\mathbb{P}}}
\newcommand{\veca}{\mathbf a}
\newcommand{\real}{\text{Re}}
\newcommand{\imag}{\text{Im}}
\newcommand{\vtil}{\ensuremath{\widetilde{v}}}
\newcommand{\lamtil}{\ensuremath{\widetilde{\lambda}}}
\newcommand{\immat}{\ensuremath{W}}
\newcommand{\immattil}{\ensuremath{\widetilde{W}}}
\newcommand{\sigtil}{\ensuremath{\bar{\sigma}}}
\newcommand{\xbar}{\ensuremath{\bar{x}}}
\newcommand{\varep}{\ensuremath{\varepsilon}}
\newcommand{\ergen}{\ensuremath{\psi}}
\newcommand{\varepbar}{\ensuremath{\bar{\varepsilon}}}
\newcommand{\setij}{\ensuremath{\set{i,j}}}
\newcounter{ale}
\newenvironment{liste}{\begin{itemize}}{\end{itemize}}
\newcommand{\aliste}{\begin{liste} \setcounter{ale}{1}}
\newcommand{\zliste}{\end{liste}}
\newcounter{noteMCctr} \setcounter{noteMCctr}{1}
\newcounter{noteHTctr} \setcounter{noteHTctr}{1}
\newcommand\withMaybeSmall{\ifthenelse{\boolean{withSmall}}{ \small }{}}
\begin{document}

\title{An extension of the angular synchronization problem to the heterogeneous setting\footnote{This work was supported by EPSRC grant EP/N510129/1}}

\author{Mihai~Cucuringu\footnotemark[1] \footnotemark[2]\;, 
Hemant Tyagi\footnotemark[3] \footnotemark[4]}

 \renewcommand{\thefootnote}{\fnsymbol{footnote}}
\footnotetext[1]{Department of Statistics and Mathematical Institute, University of Oxford, Oxford, UK. Email: mihai.cucuringu@stats.ox.ac.uk}
\footnotetext[2]{The Alan Turing Institute, London, UK}
\footnotetext[3]{Inria, Univ. Lille, CNRS, UMR 8524 - Laboratoire Paul Painlev\'{e}, F-59000.   Email: hemant.tyagi@inria.fr}
\footnotetext[4]{Authors are listed in alphabetical order.}

\renewcommand{\thefootnote}{\arabic{footnote}}

\maketitle

\begin{abstract}
Given an undirected measurement graph $G = ([n], E)$, 
the classical angular synchronization problem consists of recovering unknown angles $\theta_1,\dots,\theta_n$ from a collection of noisy pairwise measurements of the form $(\theta_i - \theta_j) \mod 2\pi$, for each $\set{i,j} \in E$. This problem arises in a variety of applications, including computer vision, time synchronization of distributed networks, and ranking from preference relationships. 
In this paper, we consider a generalization 
to the setting where there exist $k$ unknown groups of angles $\theta_{l,1}, \dots,\theta_{l,n}$, for $l=1,\dots,k$. For each $\setij \in E$, we are given noisy pairwise measurements of the form $\theta_{\ell,i}  - \theta_{\ell,j}$ for an \emph{unknown} $\ell \in \{1,2,\ldots,k\}$. This can be thought of as a natural extension of the angular synchronization problem to the heterogeneous setting of multiple groups of angles, where the measurement graph has an unknown edge-disjoint decomposition $G = G_1 \cup G_2 \ldots \cup G_k$, where the $G_i$'s denote the subgraphs of edges corresponding to each group.  
We propose a probabilistic generative model for this problem, along with a spectral algorithm for which we provide  a detailed theoretical analysis in terms of robustness against both sampling sparsity and noise. The theoretical findings are complemented by a comprehensive set of numerical experiments, showcasing the efficacy of our algorithm under various parameter regimes. Finally, we consider an application of bi-synchronization to the graph realization problem, and provide along the way an iterative graph disentangling procedure that uncovers the subgraphs $G_i$, $i=1,\ldots,k$ which is of independent interest, as it is shown to improve the final recovery accuracy across all the experiments considered. 
\end{abstract}

\textbf{Keywords:} group synchronization, spectral algorithms, matrix perturbation theory, singular value decomposition, random matrix theory. 

{
  \hypersetup{linkcolor=black}
  \tableofcontents
}

\section{Introduction}
Finding group elements from noisy pairwise measurements of their ratios is known as the \textit{group synchronization} problem. For example, the synchronization problem over the special orthogonal group $SO(d)$ consists of estimating a set of $n$ unknown $d\times d$  rotation matrices $R_1,\ldots,R_n \in SO(d)$ from noisy measurements of a subset of the pairwise ratios $R_i R_j^{-1}$
\begin{align*}
	&  \underset{R_1,\ldots,R_n \in SO(d)}{\text{minimize}}  \sum_{\setij \in E} w_{ij} \|  R_i R_j^{-1} - R_{ij} \|_{F}^{2},
\end{align*}
where $||\cdot||_F$ denotes the Frobenius norm, and $w_{ij}$ are non-negative weights denoting the confidence in the noisy pairwise measurements $R_{ij}$. 
Here, $E$ is the set of pairs for which a ratio of group elements is available, and can be realized as the edge set of an  undirected graph $G=([n],E)$ with vertices corresponding to the group elements. 

Spectral and semidefinite programming (SDP) relaxations for solving an instance of the above synchronization problem were introduced and analyzed by Singer \cite{sync} in the context of angular synchronization, over the group SO(2) of planar rotations. Therein, one is asked to estimate $n$ unknown angles $\theta_1,\ldots,\theta_n \in [0,2\pi)$ given $m$ noisy measurements $\delta_{ij}$ of their offsets $\theta_i - \theta_j \mod 2\pi$.  The difficulty of the problem is amplified on one hand by the amount of noise in the offset measurements, and on the other hand by the fact that $m \ll {n \choose 2}$, i.e., only a very small subset of all possible pairwise offsets are measured. In general, one may consider other groups $\mathcal{G}$ ($\mathbb{Z}_2$, SO($d$), O($d$)) for which there are available noisy measurements $g_{ij}$ of ratios between  group elements
\begin{equation}
 g_{ij} = g_i g_j^{-1}; \;  g_i, g_j \in \mathcal{G}; \;  \set{i,j} \in E.
\end{equation}
Whenever the group $\mathcal{G}$ is compact and has a real or complex representation, one may construct a real or Hermitian matrix (which may also be construed  as a matrix of matrices) where the element in position $\set{i,j}$ is the matrix representation of the measurement $g_{ij}$ (which can also be a matrix of size $1 \times 1$, as is the case for synchronization over $\mathbb{Z}_2$), or the zero matrix if there is no direct measurement for the ratio of $g_i$ and $g_j$. For example, the rotation group SO(3) has a real representation using $3 \times 3$ rotation matrices, and the group SO(2) of planar rotations has a complex representation as points on the unit circle. 

\paragraph{Applications} 
Instances of the group synchronization problem have emerged in a very wide range of applications in recent years, ranking from structural biology to computer vision and ranking systems. To name a few specific examples, synchronization over SO(2) plays an important role in the framework for ranking \cite{syncRank},  image reconstruction from pairwise intensity differences  \cite{Stella_2012, Stella_2009}, and also engineering, in a specific divide-and-conquer algorithm for local-to-global sensor network localization \cite{asap2d}, also described in detail in Section \ref{sec:GRP}. 

In the context of ranking, the angular synchronization paradigm was leveraged in \cite{syncRank} for the purpose of ranking items (resp. players) given a sparse noisy  subset of pairwise preference relationships (resp. match outcomes). The approach in  \cite{syncRank} starts by compactifying the real line by wrapping it over the upper half of the unit circle,  rendering  the problem amenable to standard synchronization over the compact group SO($2$). The estimated solution allowed for the recovery of the player rankings, after a post-processing step of modding out the best circular permutation. Note that the proposed approach only focused on recovering the individual rankings, and not the magnitude (i.e., strength) of each player, as it was proposed in our recent paper \cite{SVDRank}. As detailed in Section  \ref{sec:setupMotivMain}, applications from the literature on ranking from heterogeneous data, provided part of the motivation for our present work. 
Concerning synchronization over SO($3$), specific applications include the \textit{structure-from-motion} problem in computer vision \cite{structFromMotion_Amit}, global alignment of 3D scans in computer graphics \cite{Tzeneva_Thesis}, structural biology for identifying the 3D structure molecules using NMR spectroscopy \cite{asap3d}, and cryo-electron microscopy \cite{shkolnisky2012viewing,singer2011viewing}.

\paragraph{Notation} We denote vectors and matrices in lower case and upper case letters, respectively. For a finite set $S$, we denote ${S \choose 2}$ to be the collection of all subsets of $S$ of size two. $X \sim U[a,b]$ denotes a random variable $X$ with uniform distribution over $[a,b]$.  
For a graph $G=(V,E)$, we denote its binary adjacency matrix by $A$, with $A_{ij}=1$ iff $\set{i,j} \in E$.  For a complex-valued vector $z$, we denote its complex conjugate transpose by $ z^*$. For $M \in \mathbb{C}^{m \times n}$, we denote its spectral norm by $\norm{M}_2$.

\paragraph{Paper outline}
Section \ref{sec:angSyncSo2} gives an overview of the angular synchronization problem and the various methodologies proposed for solving it.  
Section \ref{sec:setupMotivMain} introduces the $k$-synchronization problem, its motivation and summary of main results. 
Section \ref{sec:bi_sync} focuses on the  bi-synchronization problem (where $k=2$), and theoretically analyzes the performance of a spectral method under a probabilistic generative model assumption. 
Section \ref{sec:k_sync} extends these results to the general $k$-synchronization setup. 
Section   \ref{sec:experimentsSync}  details the outcomes of a variety of numerical experiments across different parameter regimes and algorithmic approaches, and also introduces an iterative graph disentangling procedure. 
Section \ref{sec:GRP} is an application of  bi-synchronization  to the setting of the graph realization problem. 
Finally, Section \ref{sec:conclusion} is a summary of results and future research outlook.

%
\section{Angular Synchronization over $SO(2)$} 
\label{sec:angSyncSo2}

Let $G = ([n],E)$ be an undirected graph and denote  $\theta_1,\dots,\theta_n \in [0,2\pi)$ to be a collection of $n$ unknown angles. 
The classical angular synchronization problem was introduced by 
Singer \cite{sync} and can be summarized as follows.
\begin{framed}
\textit{Synchronization over $SO(2)$.}
\begin{itemize}
\item \textbf{Input:} $\Theta_{ij} = \text{ noisy version of } (\theta_i - \theta_j) \text{ mod } 2\pi$; for each $\setij \in E$.
\item \textbf{Goal:} Recover the unknown ground truth angles: $\theta_1, \ldots, \theta_n  \in [0, 2 \pi)$.
\end{itemize}
\end{framed}
In order to be able to recover the angles, the graph $G$ needs to be connected. If there is no noise in the measurements, then one can easily recover the angles uniquely by fixing a root node,  sequentially traversing a spanning tree in $G$, and summing the offsets modulo $2 \pi$. Note that the angles will all be uniquely determined up to an additive phase given by the angle corresponding to the root node. Of course, once the measurement graph becomes disconnected, it is no longer possible to recover the angles and relate pairs of angles that belong to different disconnected components. 

The problem at hand becomes more interesting and challenging in the noisy and sparse setting -- one would ideally aim to have a method which is able to recover the original angles in a stable and robust manner. A sequential propagation approach that iteratively integrates the pairwise offsets across one, or potentially multiple, spanning trees is prone to fail quickly due to accumulation of errors. Instead, one would ideally put forth a method that integrates all the pairwise angle offsets in a globally consistent manner, and does so in a computationally efficient approach, while being robust to the noise and sparsity inherent in real data.

In an attempt to preserve the angle offsets as best as possible, Singer 
considered the following maximization problem. Consider first the $n \times n$  measurement matrix $H$ where for each $\setij \in {[n] \choose 2}$; we set 
\begin{equation}
{H}_{ij} = \begin{cases}
 e^{\imath \Theta_{ij}} & \text{if } \setij \in E \\
 0 & \text{if } \setij \notin E 
\end{cases}, 
\label{eq:mapToCircle} 
\end{equation}
with diagonal entries $H_{ii} = 1$. Note that the fact $\Theta_{ji} = (-\Theta_{ij}) \mod 2\pi$ renders $H$ to be a Hermitian matrix. Next, consider the following maximization problem 
\begin{equation}
\underset{  \theta_1,\ldots,\theta_n \in [0,2\pi)   }{\max}  \sum_{i,j=1}^{n} e^{-\iota \theta_i} H_{ij} e^{\iota \theta_j},
\label{eq:angularMaxObj}
\end{equation}
which gets incremented by $+1$ whenever an assignment of angles $\theta_i$ and $\theta_j$ perfectly satisfies the given edge constraint $  \Theta_{ij} = \theta_i - \theta_j \mod 2\pi$ (i.e., for a \textit{good} edge), while the contribution of an incorrect assignment (i.e., of a \textit{bad} edge) will be uniformly distributed over the unit circle in the complex plane. 
However, \eqref{eq:angularMaxObj} is non-convex and computationally expensive to solve in practice.

\paragraph{Spectral relaxation} 
Singer \cite{sync} considered solving instead
%
\begin{equation*}
\underset{  z_1,\ldots,z_n \in \mathbb{C};  \;\;\; \sum_{i=1}^n |z_i|^2 = n  }{\max} \;\; \sum_{i,j=1}^{n}  z_i^*  H_{ij}  z_j,
\end{equation*}
%
\noindent where the individual constraints $z_i = e^{\iota \theta_i}$ having unit magnitude are replaced by the much weaker single constraint $ \sum_{i=1}^n |z_i|^2 = n $. 
We thus arrive at the spectral relaxation of  \eqref{eq:angularMaxObj}, given by 
$ 
\underset{  || z ||_2 ^2 = n }{\max} \; z^* H z, 
\label{eq:finalRelaxAmitObj}
$ 
which can be solved via a simple eigenvector computation, by setting $z = v_1$, the top eigenvector of $H$. Finally, we obtain the estimated rotation angles $\hat{\theta}_1,...,\hat{\theta}_n$ 
and their corresponding elements in SO(2), $\hat{r}_1,...,\hat{r}_n$ as
\begin{equation} \label{est-r}
\hat{r}_i =  e^{\imath \hat{\theta}_i} = \frac{v_{1,i}}{|v_{1,i}|} , \quad i=1,2,\ldots, n.
\end{equation}
Note that the estimation of the rotation angles $\theta_1,\ldots,\theta_n$ is up to an additive phase since $e^{i\phi}v_1$ is also an eigenvector of $H$ for any $\phi \in \mathbb{R}$.

\paragraph{Semidefinite programming relaxation}
As an alternative to the spectral relaxation, \cite{sync}  also introduced a semidefinite programming (SDP) relaxation of \eqref{eq:angularMaxObj}. One can first write the objective function in \eqref{eq:angularMaxObj} as
\begin{equation*}
\sum_{i,j=1}^{n} e^{-\iota \theta_i} H_{ij} e^{\iota \theta_j} = \text{trace}(H \Upsilon),
\end{equation*}
where  $ \Upsilon $ is a  rank-$1$, $n \times n$ Hermitian matrix with
$ \Upsilon_{ij} = e^{\iota (\theta_i-\theta_j)},  $ 
$\forall i,j=1,2,\ldots,n$.
After dropping the rank-1 constraint, 
all the remaining constraints are convex and altogether define the following SDP relaxation for \eqref{eq:angularMaxObj}  
\begin{equation}
	\begin{aligned}
	& \underset{\Upsilon \in \mathbb{C}^{n \times n}}{\text{maximize}}
	& & \text{trace}( H \Upsilon) \\
	& \text{subject to}
	& & \Upsilon_{ii} = 1 & i=1,\ldots,n \\
	& & &  \Upsilon \succeq 0. 
	\end{aligned}
 \label{eq:SDP_program_SYNC}
\end{equation}
%
This program is similar to the seminal Goemans-Williamson SDP relaxation for the well-known MAX-CUT problem of finding the maximum cut in a weighted graph. The only difference in \eqref{eq:SDP_program_SYNC}  stems from the fact that we optimize over the cone of complex-valued Hermitian positive semidefinite matrices, not just real symmetric matrices.
Note that the recovered solution is not necessarily of rank-1, and the final estimate is obtained from the best rank-1 approximation. 

\paragraph{Generalized power method} In recent work  \cite{boumal2016nonconvex}, Boumal proposed a modified version of the power method, with a provable convergence to the global optimum. This approach bears the advantage of empirically converging faster compared to the previous convex relaxations. The proposed method, dubbed as \textit{generalized power method} (GPM), succeeds in the same noise regime as its predecessors, and enjoys the theoretical guarantees that, under a suitable noise regime, second-order necessary optimality conditions 
are also sufficient, despite its non-convexity.   
In a follow-up work \cite{zhong2018near}, Zhong and Boumal established near-optimal bounds for phase synchronization, by proving that the SDP relaxation is tight for a suitable noise regime under a Spiked Gaussian Wishart model for synchronization. The same line of work provided guarantees that GPM converges to its global optimum under a similar noise regime, and also established a linear convergence rate for GPM.

\paragraph{Message passing algorithms} 
Finally, we also point out the work of Perry et al. \cite{perry2018message}, who provide  message-passing algorithms for synchronization problems over compact groups such as $\mathbb{Z}_2$ and SO(2), using tools from representation theory and statistical physics. Under a Gaussian noise model ensemble, the authors identify regimes where the problem is computationally easy, computationally hard, and statistically impossible,  
thus providing evidence of a statistical-to-computational gap \cite{bandeira2018notes}.

\bigskip 
\paragraph{Probabilistic models for angular synchronization}
Before concluding this section, we mention that the performance of the spectral and SDP relaxation-based estimators was demonstrated by Singer under the following measurement graph and noise model. Consider $G$ to be generated via the {\ER} model,  where each edge of $G$ is present independently with probability $\lambda \in [0,1]$. For each $\setij \in E$, the measurements $\Theta_{ij}$ (with $i < j$ w.l.o.g) are assumed to be generated independently through the random model
\begin{equation}
\Theta_{ij} = \left\{
 \begin{array}{rll}
 (\theta_i - \theta_j) \ \text{mod} \ 2\pi & \quad \text{ with probability } p \\
 \sim U[0,2\pi)  & \quad \text{ with probability } (1-p)	\\ 
\end{array}.
   \right.
\label{AmitNoiseTheta}
\end{equation}
It was shown in \cite{sync} that the top eigenvector of $H$ has above random correlation with the ground truth vector, when $n$ is large enough relative to $p$. Additional noise models have been considered subsequently in  \cite{bandeira2017tightness,perry2018message,perry2016optimality}. 
For $z \in \mathbb{C}^n$ a vector with unit modulus complex entries, and $W \in \mathbb{C}^{n \times n}$ a Hermitian Gaussian Wigner matrix (with i.i.d. complex standard Gaussian entries above its diagonal), the matrix of pairwise measurements is modeled as  
\begin{equation*}
    C = z z^* + \sigma W,
\end{equation*}
and the task becomes to recover the vector $z$ given $C$.  
The Maximum Likelihood Estimator maximizes $x^* C x$ over the parameter space $|x_1| = |x_2| = \ldots = |x_n| = 1$. Bandeira et al. \cite{bandeira2017tightness} prove exact recovery under a similar noise model for $\mathbb{Z}_2$, while for the above complex case, they establish that, under suitable noise regimes, the SDP relaxation admits a unique solution of rank 1, revealing the global optimum of \eqref{eq:angularMaxObj}.

\section{$k$-synchronization: setup, motivation, \& main results}  
\label{sec:setupMotivMain}
Motivated by real-world applications, in particular instances of the graph realization problem (detailed later in Section \ref{sec:GRP}) and ranking from preference relationships, we now introduce a variation of the above synchronization formulation, dubbed as \textit{k-synchronization}, where there exist $k$ underlying latent sequences of angles. 
 
\paragraph{k-synchronization} Given the measurement graph $G = ([n],E)$, we consider the scenario where $G$ has an \textbf{unknown decomposition} into $k$ edge-disjoint  subgraphs $G_1 = ([n],E_1), \ldots, G_k=([n],E_k)$  such that $ E_l \cap E_{l'} = \emptyset, \mbox{ for } l \neq l' $, and $ \cup E_l = E$.

\begin{framed}
\textit{k-synchronization over $SO(2)$.}
\begin{itemize}
\item \textbf{Input:} $\Theta_{ij} = \text{ noisy version of } (\theta_{l,i} - \theta_{l,j}) \text{ mod } 2\pi$; for each $\setij \in E_l$, for a fixed $l \in \{1,2,\ldots,k\}$.  
\item \textbf{Goal:} Recover the unknown ground truth angles: $\theta_{l,1}, \ldots, \theta_{l,n}  \in [0, 2 \pi)$, for all $l \in \{1,2,\ldots,k\}$. 
\end{itemize}
\end{framed}

\paragraph{Bi-synchronization ($k=2$)}
For simplicity, we detail below the case $k=2$, which we refer to as \textit{bi-synchronization}, and illustrate it in Figure \ref{fig:biSyncTwoGraphs}.   
%
For ease of notation, we shall denote by $\alpha$ and $\beta$ the two collections of angles. Say there exists a collection of ground truth angles $\alpha_1, \ldots, \alpha_n$ (respectively $\beta_1, \ldots, \beta_n$) for the  synchronization problem over the graph $G_1$ (respectively $G_2$). 
For each $\set{i,j} \in E$, the user is given pairwise information $\Theta_{ij}$ where
\begin{equation*}
{\Theta}_{ij} = \begin{cases}
A_{ij} = \text{ noisy version of } (\alpha_i - \alpha_j) \mod 2\pi; & \text{if } \setij \in E_1 \\
B_{ij} = \text{ noisy version of } (\beta_i - \beta_j) \mod 2\pi; & \text{if } \setij \in E_2.
\end{cases}
\end{equation*} 
Given the combined set of noisy incomplete measurements $\Theta_{ij}$ for $\setij \in E$, 
one is asked to recover an estimate for 
$\alpha_1, \ldots, \alpha_n$ and  $\beta_1, \ldots, \beta_n$. The motivation for this problem arises in structural biology, where the distance measurements between the pairwise atoms may correspond to different configurations, as is the case of molecules that may have multiple conformations. In ranking systems, the two groups of measurements associated to $G_1$ and $G_2$ may correspond to two different judges, whose latent rankings we seek to recover. We detail these applications in Section \ref{sec:kSyncApp}. 
The following remarks are in order.
%
\begin{itemize} 
\item As for classical synchronization, we clearly require the sub-graphs $G_1, G_2$ to be 
respectively connected in order to be able to recover the $\alpha_i$'s and $\beta_i$'s.
\item In case $G_1, G_2$ were known, then the problem reduces to the classical synchronization 
problem over $G_1,G_2$ respectively. 
\end{itemize}

\subsection{Applications}  \label{sec:kSyncApp}
We refer the reader to Sections \ref{sec:experimentsSync} and \ref{sec:GRP}
for an application to the graph realization problem, and an algorithmic procedure for extracting the individual measurement subgraphs. In this section, we elaborate on several applications that have motivated this work. 

\begin{figure}[h!]
\vspace{-2mm}
\centering
\includegraphics[width=0.57\textwidth]{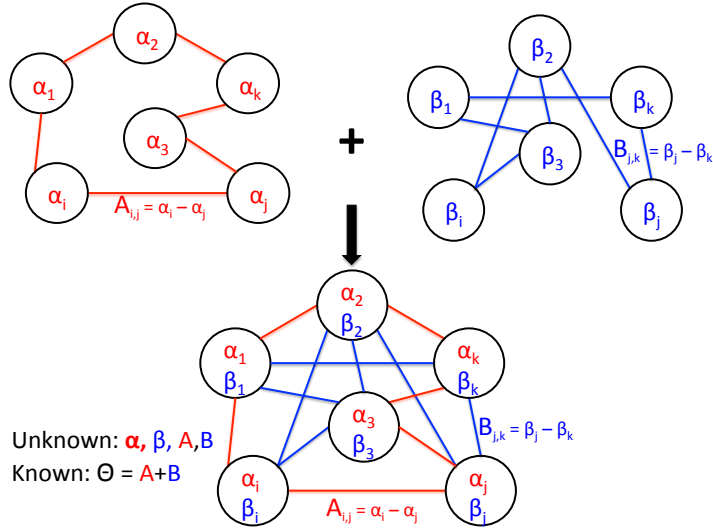}
\vspace{-2mm}
\captionsetup{width=1.00\linewidth}
\caption{An instance of the bi-synchronization problem. The measurement graph (bottom row) available to the user is a union of two edge disjoint graphs (top row) on the same vertex set. The top graphs each capture angle offsets for two different sets of angles: the top left graph encodes noise pairwise offsets between the angles $ \alpha_1, \ldots, \alpha_n$, while the top right graph contains similar information from the angles $\beta_1,  \ldots, \beta_n$.}
\label{fig:biSyncTwoGraphs}
\end{figure}
%

\paragraph{Graph realization problem and alignment of patch embeddings} 
In Section \ref{sec:GRP} we consider an application of $k$-synchronization to the popular \textit{graph realization problem}, where one is asked to recover a cloud of points in $\mathbb{R}^D$ from a small subset of noisy pairwise Euclidean distances. In applications of this problem to sensor network localization and structural biology, one may have readily available information on the embedding of a subset of nodes, and is left with the task of integrating the given embeddings into a globally consistent structure. We consider the scenario where the point cloud to be recovered may assume one of two possible configurations (denoted as   
type-$X$ and type-$Y$), and thus whenever a pair of subgraph embeddings are aligned (eg, via Procrustes analysis), the resulting alignment measurement may come from either configurations (i.e., the two subgraphs to be aligned are either both of type-$X$ or both of type-$Y$). We refer the reader to Section \ref{sec:GRP} for additional details and results on our application in this setting.

\paragraph{Extraction of latent rankings in multiple voters systems}
Yet another very closely related application arises in the ranking and recommendation systems literature.    
In many real-world scenarios, it is often the case that there exist multiple rating systems (eg, judges or voters) who provide incomplete and inconsistent rankings or pairwise comparisons between the same set of players or items. For example, in tournaments of $n$ players, where everyone plays against everybody else in a total of ${n \choose 2}$ matches, it may well be the case that, due to the large number of matches, a set of $k$ judges are distributed matches they are to referee. In other words, the set of all matches are randomly partitioned, and each set in the partition is attributed to a single judge, as shown pictorially in the example in Figure \ref{fig:biSyncRanking} for $k=2$. A similar setting can arise in applications well beyond sports and related competitions, including rating items in terms of preference relations between pairs of items. Such examples arise in many learning and social data analysis problems, including recommendation systems and collaborative-filtering for ranking movies for a user based on the movie rankings provided by other users, and information retrieval where one is interested in combining the results of different search engines.

The problem now becomes to disentangle the intrinsic ranking of all players that corresponds to each judge. When a judge is asked to evaluate a pair of players or items, she or he may base their decision on a specific latent feature which the said judge construes as most relevant for the ranking task. Different judges may use different intrinsic features on which they base their decision. The problem at hand now becomes to uncover the ranking provided by each judge. In light of the pipeline proposed in \cite{syncRank} by a subset of the authors, the $k$-synchronization problem and results covered in the present paper allow for the extraction of such latent rankings, given a single measurement matrix of pairwise ranking preferences.

\begin{figure}[h!]
\vspace{-2mm}
\hspace{40mm}  
\includegraphics[width=0.68\textwidth]{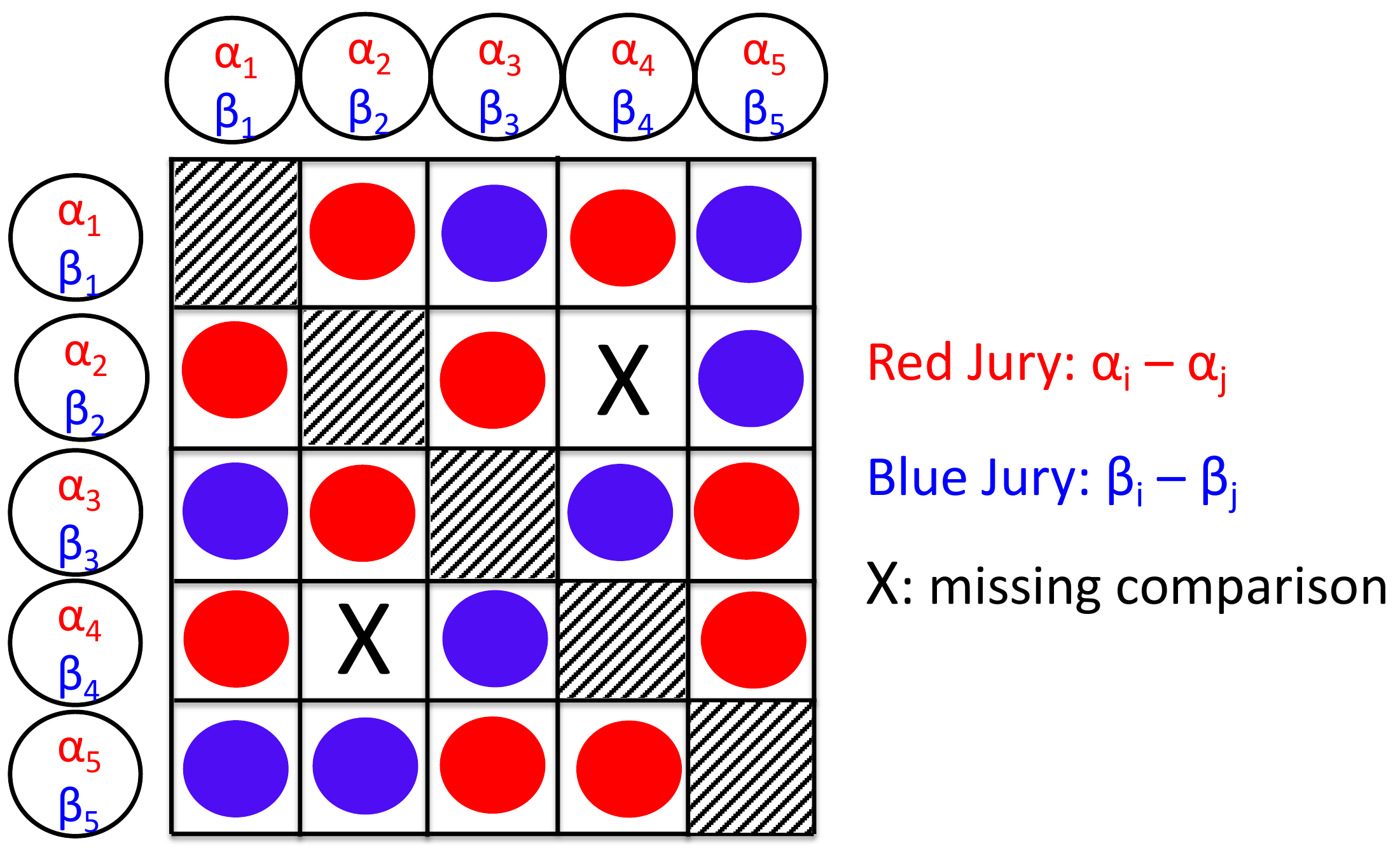}
\vspace{-2mm}
\captionsetup{width=1.00\linewidth}
\caption{An instance of the  \textbf{bi-synchronization}   problem in the context of ranking from pairwise comparisons with multiple voting systems. The red and blue judges each give a rating for pair of items $\setij$.  
$ (\alpha_1, \ldots, \alpha_n )$ denote the intrinsic strength/skill of the players  as construed by the red judge, while  $( \beta_1, \ldots, \beta_n)$ capture the strength/skill of the same set of $n$ players as perceived by the blue judge.} 
\label{fig:biSyncRanking}
\end{figure}

\begin{figure}[h!]
\vspace{-2mm}
\hspace{30mm}  
\includegraphics[width=0.68\textwidth]{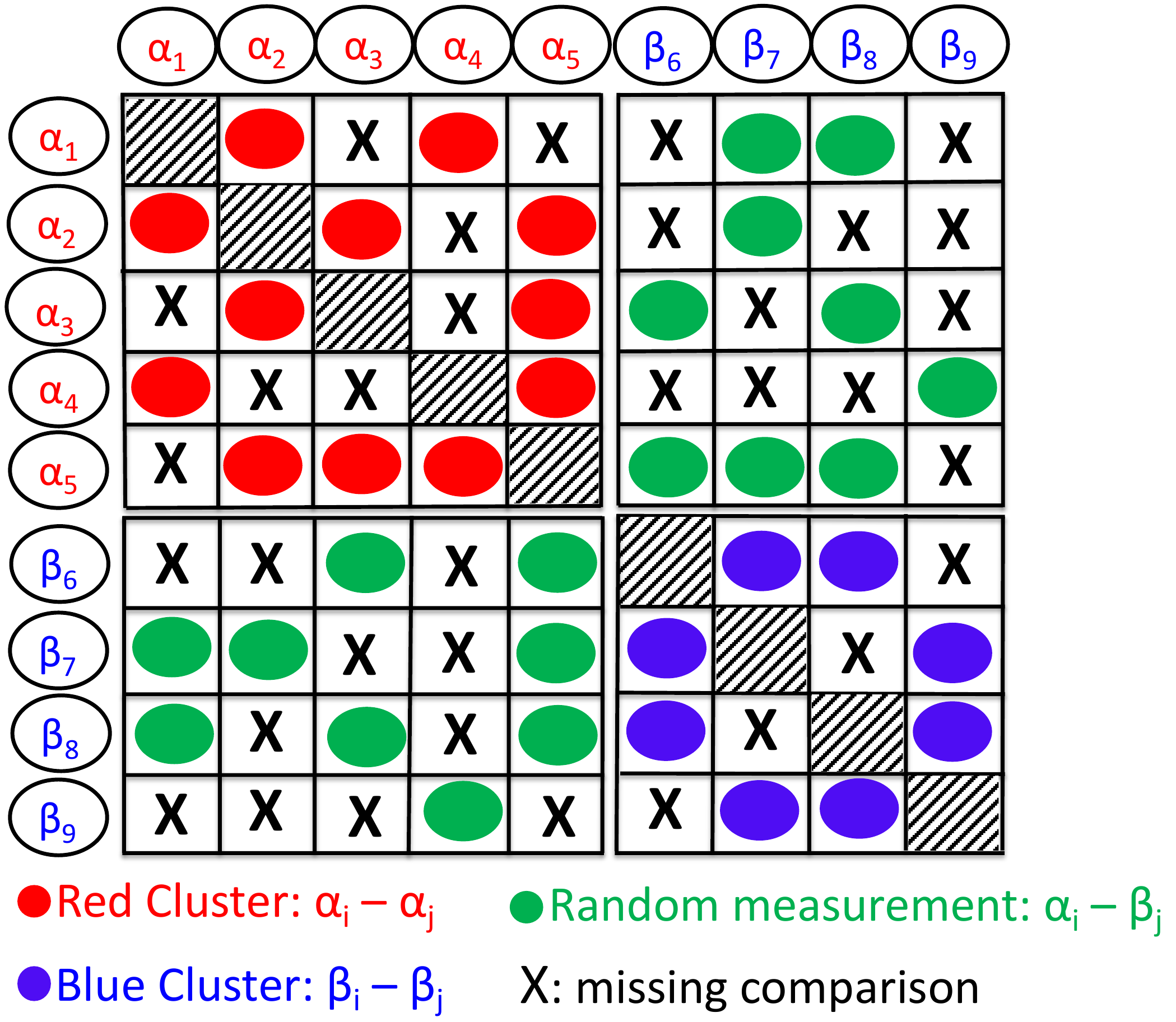}
\vspace{-2mm}
\captionsetup{width=1.00\linewidth}
\caption{An instance of the \textbf{cluster-synchronization}   problem, closely related to the  \textit{bi-synchronization} problem we study in this paper. The collection of angles is partitioned into two clusters 
$ \mathcal{C}_1 = (\alpha_1,\alpha_2,\ldots,\alpha_{n_1})$, 
$ \mathcal{C}_2 = (\beta_{n_1+1},\beta_{n+2},\ldots,\beta_{n_2})$, 
of potentially unequal sizes $n_1,n_2$ (as is the case in this example, with $n_1=5, n_2=4$), such that when a pair of angles is chosen from the same cluster, we obtain a (potentially noisy) offset $\alpha_i - \alpha_j$ or  $\beta_i - \beta_j$. However, when a pair $(\alpha_i,\beta_j)$ of angles from two different clusters is compared, the measured offset $\Theta_{ij}$ is completely random and contains no meaningful  information.}
\label{fig:clusterSync}
\end{figure}

\paragraph{Cluster-synchronization} Finally, we mention a closely related problem, coming from the ranking literature, of extracting partial rankings from pairwise comparison data. 
Similar to the setup pursued in this paper, there exist two collections of unknown angles grouped into two clusters 
$ \mathcal{C}_1 = (\alpha_1,\alpha_2,\ldots,\alpha_{n_1} )$, and 
$ \mathcal{C}_2 = (\beta_{n_1+1},\beta_{n+2},\ldots,\beta_{n_1 + n_2} )$ 
such that whenever a pair of angles from the same cluster is selected, the user has available a noisy proxy for the angle offset $\alpha_i - \alpha_j$ or  $\beta_i - \beta_j$. However, when a pair is selected such that one angle is from $ \mathcal{C}_1$ and the second angle from cluster  $ \mathcal{C}_2$, then the available measurement  $\Theta_{ij}$ is completely random, uniformly distributed in $[0,2\pi)$ (in particular,  $\Theta_{ij}$ is \textbf{not} a (potentially noisy)  proxy of $\alpha_i - \beta_j$). This setup is illustrated in Figure \ref{fig:clusterSync}, highlighting the cluster structure of the measurement matrix. As yet another instance of this problem, imagine for example two jigsaw puzzles being mixed together, and the goal is to disentangle the two sets of pieces. Or consider an archaeological site where two frescoes are identified, each of which is broken into many pieces which are now mixed together. 
We expect similar techniques as those studied in the present paper to be applicable in this setting as well, motivated by the fact that the expected measurement matrix has a low-rank structure. 

Finally, we stress  there is a fundamental difference between the setup in the bi-synchronization and cluster-synchronization problems. In both problems, there exist two groups of angles (of $\alpha$-type and $\beta$-type); however, in bi-synchronization the two groups must be of the same size $n_1=n_2$ and a measurement is either a noisy version between two $\alpha$-type angles or two $\beta$-type angles), while in cluster-synchronization we allow $n_1 \neq n_2$ and there also exist measurements across different types of angles.  Figures \ref{fig:biSyncRanking} and \ref{fig:clusterSync} illustrate pictorially the structural differences between these two problems.

\FloatBarrier
 

\subsection{Summary of our main contributions}
\begin{enumerate}
    \item We introduce the    $k$-synchronization problem and also propose a probabilistic generative model for it. This model is detailed in Section  \ref{sec:bi_sync} and Section \ref{sec:k_sync}. 
    
    \item We propose a spectral algorithm (Algorithm \ref{Algo:kSync}) which, under a certain approximate orthogonality assumption on the angles,  amounts to finding the top $k$ eigenvectors of the measurement matrix $H$ given in \eqref{eq:mapToCircle}. Furthermore, we also provide a detailed theoretical analysis of this algorithm for estimating the unknown angles in each group, under the aforementioned generative model. These results are stated as Theorem \ref{thm:bisync_main} for $k=2$ and Theorem \ref{thm:ksync_main} for the general case. 
    
    \item We provide a comprehensive set of numerical experiments in Section \ref{sec:experimentsSync} that illustrate the performance of our proposed spectral algorithm, under various parameter regimes (noise level, graph sparsity, and number of groups of angles $k$), and also compare it with the corresponding SDP relaxation, and also a normalized spectral version. 
    
    \item  We consider an application of bi-synchronization to the graph realization problem in Section \ref{sec:GRP}. Along the way, we also provide an iterative graph disentangling procedure, which can be of independent interest, and provide numerical experiments demonstrating that the proposed iterative scheme consistently improves the final recovery accuracy. 
\end{enumerate}

\section{Bi-synchronization: Synchronization with two groups of angles} \label{sec:bi_sync} 

We begin by  analyzing the case where $k=2$ since it is relatively easier as compared to the general case. Recall that we are interested in recovering the two sequences of angles $ \alpha_1,\ldots,\alpha_n \in [0,2\pi) $ and $\beta_1,\ldots,\beta_n \in [0,2\pi)$.
Denoting $z_1,z_2 \in \mathbb{C}^n$ to be the entry-wise representations of $(\alpha_i)_i$ and $(\beta_i)_i$ on the unit circle where
\begin{equation}  \label{eq:bisync_circ_rep}
z_{1,i} = \frac{1}{\sqrt{n}} e^{\iota \alpha_i} \text{ and } z_{2,i} = \frac{1}{\sqrt{n}} e^{\iota \beta_i}; \quad  
i=1,\ldots,n,
\end{equation}
we will assume that the vectors $z_1,z_2 \in \mathbb{C}^n$ defined in \eqref{eq:bisync_circ_rep} 
are $\delta$-orthogonal for $\delta \in (0,1)$.
\begin{definition} \label{def:approx_orth}
A collection of vectors $x_1,x_2,\dots,x_m \in \mathbb{C}^n$ are said to be $\delta$-orthogonal 
for $\delta \in [0,1]$ if $$\abs{\dotprod{x_i}{x_j}} \leq \delta \norm{x_i}_2 \norm{x_j}_2, \quad \forall \ i \neq j.$$
\end{definition}
This is the case, for instance, when the angles $\alpha_i, \beta_i$ are generated 
uniformly at random in $[0,2\pi)$, and are i.i.d. 
Indeed, one can then show using standard large deviation inequalities 
that the vectors $z_1,z_2 \in \mathbb{C}^n$ in \eqref{eq:bisync_circ_rep} will be approximately 
orthogonal with high probability. This is stated formally in the form of the following Proposition.
\begin{proposition} \label{prop:bisync_approx_orth_whp}
If $\alpha_i, \beta_i \sim U[0,2\pi)$ i.i.d for $i=1,\dots,n$, it then follows 
for any given $\delta \in (0,1)$ that $\abs{\dotprod{z_1}{z_2}} \leq \delta$ holds 
with probability at least $1 - 2e\exp(-c \delta^2 n)$. Here $c > 0$ is a constant. 
\end{proposition}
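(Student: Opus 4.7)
The plan is to reduce the claim to a two-dimensional concentration statement for a sum of i.i.d. bounded complex random variables, then apply Hoeffding's inequality coordinate-wise (real and imaginary parts) together with a union bound.

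First I would write out the inner product explicitly: from the definition in \eqref{eq:bisync_circ_rep},
\begin{equation*}
\dotprod{z_1}{z_2} \;=\; \sum_{i=1}^n z_{1,i}^* \, z_{2,i} \;=\; \frac{1}{n}\sum_{i=1}^n e^{\iota(\beta_i - \alpha_i)}.
\end{equation*}
Setting $\gamma_i := (\beta_i - \alpha_i) \bmod 2\pi$, the key observation is that since $\alpha_i, \beta_i$ are i.i.d. $U[0,2\pi)$ and $U[0,2\pi)$ is the Haar measure on the circle group, each $\gamma_i$ is itself $U[0,2\pi)$-distributed, and the $\gamma_i$'s are mutually independent. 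In particular $\expec[e^{\iota \gamma_i}] = 0$, and the random variables $X_i := \cos\gamma_i$ and $Y_i := \sin\gamma_i$ are i.i.d., mean-zero, and take values in $[-1,1]$.

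Next I would apply Hoeffding's inequality to $\{X_i\}$ and $\{Y_i\}$ separately. For any $t > 0$,
\begin{equation*}
\prob\!\left( \left| \tfrac{1}{n}\sum_{i=1}^n X_i \right| > t \right) \le 2\exp(-n t^2 / 2),
\end{equation*}
and similarly for $Y_i$. Since $|\dotprod{z_1}{z_2}|^2 = \bigl(\tfrac{1}{n}\sum X_i\bigr)^2 + \bigl(\tfrac{1}{n}\sum Y_i\bigr)^2$, the event $|\dotprod{z_1}{z_2}| > \delta$ forces at least one of the two partial averages to exceed $\delta/\sqrt{2}$. A union bound then gives
\begin{equation*}
\prob\bigl(|\dotprod{z_1}{z_2}| > \delta\bigr) \;\le\; 4\exp(-n \delta^2 / 4),
\end{equation*}
which is of the form $2e \exp(-c\delta^2 n)$ after absorbing the constant $4$ into $2e$ and adjusting $c$ (any $c < 1/4$ works for large enough $n$, or one can simply take $c = 1/4$ and loosen the leading constant).

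There is no real obstacle here; the only subtle point is the observation that $\beta_i - \alpha_i \bmod 2\pi$ is uniform on $[0,2\pi)$ (which is just the fact that the pushforward of the Haar measure under the group operation remains Haar), after which Hoeffding handles everything. If a sharper constant were desired one could instead bound the complex variables $e^{\iota\gamma_i}$ via a complex Hoeffding or Azuma-type inequality directly, but for the stated bound the elementary real-coordinate argument suffices.
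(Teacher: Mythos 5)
Your proof is correct and takes essentially the same route as the paper: split $\dotprod{z_1}{z_2}$ into its real and imaginary parts, observe these are averages of mean-zero bounded i.i.d.\ random variables, apply a Hoeffding-type tail bound to each, and combine with a union bound. The paper invokes the sub-Gaussian form of Hoeffding's inequality (\cite[Proposition~5.10]{vershynin2012}) while you use the classical bounded-variable version, but since bounded random variables are sub-Gaussian this is the same argument; your additional observation that $\beta_i-\alpha_i \bmod 2\pi$ is itself uniform is correct but not needed (mean-zero and boundedness suffice), and since $4 < 2e$ the leading constant absorbs directly with $c=1/4$ without any caveat about large $n$.
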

Its proof is deferred to Appendix \ref{app:subsec_prop_approx_orth}. 
\subsection{Spectral method}
Our aim is to recover the $\alpha_i$'s and $\beta_i$'s in such a way that we preserve as best as possible the available input pairwise measurement. To this end, with the matrix $H$ as defined in \eqref{eq:mapToCircle}, one may consider the following objective function  

\begin{equation}
\underset{ \substack{ \alpha_1,\ldots,\alpha_n \in [0,2\pi) \\ \beta_1,\ldots,\beta_n \in [0,2\pi) }
}{\max}  
\;\;\; \sum_{ \set{i,j}  \in E_1 } e^{-\iota \alpha_i} H_{ij} e^{\iota \alpha_j} + 
\sum_{ \set{i,j} \in E_2}  e^{-\iota \beta_i} H_{ij} e^{\iota \beta_j}.
\label{eq:angularMaxObj_BiSync}
\end{equation}
Since the optimization problem \eqref{eq:angularMaxObj_BiSync} is non-convex and computationally expensive to solve in practice, we consider the following spectral relaxation, while keeping the definition \eqref{eq:bisync_circ_rep} in mind
\begin{equation*}
\underset{ \substack{|| w_1 ||_2 ^2 = || w_2 ||_2 ^2 = 1 }
}{\max}  
\;\; \sum_{\set{i,j} \in E_1}  w_{1,i}^*  H_{ij}  w_{1,j}  +  \sum_{\set{i,j} \in E_2}  w_{2,i}^*  H_{ij} w_{2,j}. 
\end{equation*}
Since we do not know the decomposition of the edge set into $E_1$ and $E_2$, and the ground truths $z_1,z_2$ are approximately orthogonal by assumption, we consider 
\begin{equation*}
\underset{ \substack{|| w_1 ||_2 ^2 = || w_2 ||_2 ^2 = 1  \\
w_1 \perp w_2 
}
}{\max}  
\;\; \sum_{ i,j=1}^{n} ( w_{1,i}^*  H_{ij} w_{1,j} + w_{2,i}^* H_{ij} w_{2,j}). 
\end{equation*}
Altogether, the spectral relaxation of the non-convex optimization problem in \eqref{eq:angularMaxObj_BiSync} is given by 
\begin{equation}
\underset{ \substack{|| w_1 ||_2 ^2 = || w_2 ||_2 ^2 = 1  \\
w_1 \perp w_2 
}
}{\max} \; w_1^* H w_1 + w_2^* H w_2, 
\label{eq:finalRelaxAmitObjBiSync}
\end{equation}
whose solution is obtained by setting $w_1 = v_1$ and $w_2 = v_2$, the top two eigenvectors of $H$. Finally, we can then obtain the estimates $\hat{\alpha}_{i}$,  $\hat{\beta}_{i}$ for $i=1,\dots,n$ as
\begin{equation*}
	e^{\iota \hat{\alpha}_{i}} =  \frac{v_{1,i}}{|v_{1,i}|}, \quad e^{\iota \hat{\beta}_{i}} =  \frac{v_{2,i}}{|v_{2,i}|}.
\end{equation*}	
\begin{remark} \label{rem:Hii_same}
The solution of \eqref{eq:finalRelaxAmitObjBiSync} is clearly unchanged if we replace $H$ with $H-c I$ for some scalar $c$. Therefore one can set the entries $H_{ii}$ to the same (arbitrarily chosen) real value.
\end{remark}

\subsection{Erd\H{o}s-Renyi measurement model} 
We are interested in the following random measurement model. 
\begin{itemize}
\item The graph $G$ is generated via the Erd\H{o}s-R\'{e}nyi random graph model $G(n,\lambda)$, where each edge of $G$ is present with probability $\lambda \in [0,1]$.

\item For each $\setij \in E$, we assign it to $E_1$ with prob.  $q$ and to $E_2$  with prob.  $1-q$. 
So the graphs $G_1$ and $G_2$ are clearly edge disjoint, i.e.,  $E_1 \cap E_2 = \emptyset$, 
and moreover, $E_1 \cup E_2 = E$.

\item Consider $\setij \in E$, and assume $i < j$ w.l.o.g. 
Let $N_{ij} \sim U[0,2\pi)$ i.i.d. If $\setij \in E_1$, we obtain $\Theta_{ij} = (\alpha_i-\alpha_j) \mod 2\pi$ 
with probability $\widetilde q_1$,  and $\Theta_{ij} = N_{ij}$ with probability 
$1-\widetilde q_1$, i.i.d. If $\setij \in E_2$, 
we obtain $\Theta_{ij} = (\beta_i-\beta_j) \mod 2\pi$ with probability $\widetilde q_2$,  
and $\Theta_{ij} = N_{ij}$ with probability $1-\widetilde q_2$, i.i.d. 
\end{itemize}
Let us denote $p_1 = q \widetilde q_1$ and $p_2 = (1-q) \widetilde q_2$. For each $\setij \in E$, 
with $i < j$, we can write the obtained measurement $\Theta_{ij}$ as
\begin{equation} \label{Def:MixtureOmega}
\Theta_{ij} = \left\{
	\begin{array}{rl}
   	 (\alpha_i - \alpha_j) \mod 2\pi & \text{ w.p } \ p_1,  \\
   	 (\beta_i - \beta_j) \mod 2\pi & \text{ w.p } \ p_2, \\
    N_{ij} \sim U[0,2\pi) & \text{ w.p } \ \eta = 1-p_1 - p_2,
	\end{array}
   \right.
\end{equation}
where $p_1$ (respectively $p_2)$ denotes the probability of a correct pairwise measurement 
in $G_1$( respectively $G_2$), and $1-p_1-p_2$ is the probability of getting an incorrect measurement. 
Moreover, we have $\Theta_{ji} = (-\Theta_{ij}) \mod 2\pi$.

The difficulty of the problem is amplified on one hand by the noise and sparsity of the 
measurements (dictated by $\eta$, $\lambda$ respectively), and on the other hand by the ambiguity 
that stems from the fact that the user does not know a-priori whether a given edge (i.e., 
the measurement that comes with it) corresponds to a (perhaps noisy) offset 
$\alpha_i - \alpha_j$ or an offset $\beta_i - \beta_j$. In other  words, the graphs $G_1$ 
and $G_2$, in particular their respective edge sets $E_1$ and $E_2$, are not known to the user a-priori. 
From now on, we will assume w.l.o.g\footnote{Clearly, some separation is needed in order to 
be able to distinguish the graphs.} that $p_1 > p_2$.

We start by mapping the points over the complex unit circle as in \eqref{eq:mapToCircle} to construct 
a Hermitian matrix $H$. 
Due to \eqref{Def:MixtureOmega}, we have for each $i < j$ that
\begin{equation}
H_{ij} = \left\{
	\begin{array}{rl}
   	e^{\iota (\alpha_i - \alpha_j)}  & \text{ w.p  } \  p_1 \lambda, 	\\
   	e^{\iota (\beta_i  - \beta_j)}   	&  \text{ w.p  }  \ p_2 \lambda,	\\
 	e^{\iota N_{ij}},  N_{ij} \sim U[0,2\pi) 		&  \text{ w.p  } \ (1-p_1-p_2) \lambda,	\\
		0		& \text{ w.p } \ 1-\lambda.	\\
     \end{array}
   \right.
\label{eq:bisync_HijDef}
\end{equation} 
In light of Remark \ref{rem:Hii_same}, we set $H_{ii} = \lambda(p_1+p_2)$ for convenience. 
Note that $H_{ji} =  H_{ij}^{*}$, hence 
we can write $\expec[H]$ as 
\begin{equation} 
	\expec[H] = np_1 \lambda z_1 z_1^* + np_2 \lambda z_2 z_2^*.
	\label{eq:bisync_defEH}
\end{equation}
$\mathbb{E} H$ is of rank at 
most 2,  unless
$z_1$ is a scalar multiple of $z_2$. Given the above, we can write the available 
measurement matrix $H$ as 
\begin{equation} \label{eq:bisync_Hrank2Decomp}
	H = \mathbb{E}(H) + R = n\lambda (p_1 z_1 z_1^* + p_2 z_2 z_2^*) + R, 
\end{equation}
where $R$ is a random Hermitian matrix with $R_{ii} = 0$, whose elements 
$R_{ij} = H_{ij} - \mathbb{E}[H_{ij}] $ are zero-mean independent random variables for $i \leq j$. 
In particular, for each $i < j$, we have
\begin{equation} \label{eq:bisync_noise_form}
R_{ij} = \left\{
	\begin{array}{rl}
   	e^{\iota (\alpha_i - \alpha_j)} - p_1\lambda e^{\iota (\alpha_i - \alpha_j)} 
		- p_2\lambda e^{\iota (\beta_i  - \beta_j)}  &  \text{ w.p }   p_1 \lambda, 	\\
   	e^{\iota (\beta_i  - \beta_j)} - p_1\lambda e^{\iota (\alpha_i - \alpha_j)} 
		- p_2\lambda e^{\iota (\beta_i  - \beta_j)} 	&  \text{ w.p }  p_2 \lambda,	\\
 	e^{\iota N_{ij}} - p_1\lambda e^{\iota (\alpha_i - \alpha_j)} 
		- p_2\lambda e^{\iota (\beta_i  - \beta_j)}, N_{ij} \sim U[0,2\pi)
			&  \text{ w.p } (1-p_1-p_2) \lambda,	\\
		- p_1\lambda e^{\iota (\alpha_i - \alpha_j)} 
		- p_2\lambda e^{\iota (\beta_i  - \beta_j)}	& \text{ w.p } 1-\lambda,	\\
     \end{array}
   \right.
\end{equation}
and $R_{ji} = R_{ij}^{*}$.

%
%
%
%

\subsection{Main result} \label{subsec:bisync_main_res}
Our main theorem for the bi-synchronization problem for the case of approximately orthogonal 
representations $z_1,z_2$ is outlined below.
%
\begin{theorem} \label{thm:bisync_main}
Let $z_1,z_2$ in \eqref{eq:bisync_circ_rep} be $\delta$-orthogonal for 
$\delta \in (0,1)$, and let $p_1 > p_2$. For constants $\varep \in (0,1)$ and $\mu \in [0,1/2]$, let
\begin{equation*} 
n \geq \frac{72(2+\varepsilon)^2 C(\lambda,p_1,p_2)}{\mu^2 \lambda^2 \min\set{(p_1-p_2)^2, p_2^2}}, 
\end{equation*}
where $C(\lambda,p_1,p_2) > 0$ depends only on $\lambda,p_1,p_2$ (see \eqref{eq:bisync_cterm_1}). 
Then with probability at least 
$1 - 3n\exp\left(- \frac{8 C(\lambda,p_1,p_2) n}{\sigtil(\lambda,p_1,p_2)^2 c_{\varepsilon}} \right)$, 
the following bounds hold.
\begin{align*}
\abs{\dotprod{z_1}{v_1}}^2 &\geq
1 - \left(\sqrt{\frac{2\mu}{1-\mu}} + \sqrt{1 - \frac{p_1 - p_2}{\sqrt{(p_1-p_2)^2 + 4p_1p_2\delta^2}}} \right)^2, \\
\abs{\dotprod{z_2}{v_2}}^2 &\geq
1 - \left(\sqrt{\frac{2\mu}{1-\mu}} + 
\sqrt{1 - \frac{p_1 (1-\delta^2) - p_2}{\sqrt{(p_1-p_2)^2 + 4p_1p_2\delta^2}}} \right)^2.
\end{align*}
Here, $c_{\varepsilon}, \sigtil(\lambda,p_1,p_2) > 0$ only depend on the indicated parameters 
(see \ref{eq:infty_norm_bd_1}).
\end{theorem}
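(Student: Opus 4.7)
The proof plan is to decompose $H = M + R$, where $M := \expec[H] = n\lambda(p_1 z_1 z_1^* + p_2 z_2 z_2^*)$ is the rank-$2$ signal from \eqref{eq:bisync_defEH} and $R$ is the zero-mean random perturbation from \eqref{eq:bisync_noise_form}, and then to combine three ingredients: (i) an explicit analysis of the top two eigenpairs of $M$ that yields closed-form lower bounds on $|\dotprod{z_j}{u_j}|^2$ for $j = 1,2$, where $u_1, u_2$ are the top two unit eigenvectors of $M$; (ii) a matrix-concentration bound on $\norm{R}_2$; and (iii) a Davis--Kahan $\sin\Theta$ step combined with the triangle inequality for the principal angle between $1$-dimensional complex subspaces.

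For step (i), since the range of $M$ is the $2$-dimensional subspace $\calS := \mathrm{span}\set{z_1,z_2}$, both $z_1$ and $z_2$ lie in $\mathrm{span}\set{u_1,u_2}$. Expanding $z_j = \dotprod{u_1}{z_j}\,u_1 + \dotprod{u_2}{z_j}\,u_2$ and combining $|\dotprod{z_j}{u_1}|^2 + |\dotprod{z_j}{u_2}|^2 = 1$ with the Rayleigh identity $z_j^* M z_j = \lambda_+ |\dotprod{z_j}{u_1}|^2 + \lambda_- |\dotprod{z_j}{u_2}|^2$ solves explicitly for $|\dotprod{z_j}{u_j}|^2$. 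Here $\lambda_\pm = (n\lambda/2)\bigl[(p_1+p_2) \pm \sqrt{\Delta_a}\bigr]$ with $\Delta_a := (p_1-p_2)^2 + 4 p_1 p_2 |a|^2$ and $a := \dotprod{z_1}{z_2}$, while $z_j^* M z_j = n\lambda(p_j + p_{3-j} |a|^2)$. After elementary simplification and the bound $\sqrt{\Delta_a} \ge p_1 - p_2$, this yields
\begin{equation*}
|\dotprod{z_1}{u_1}|^2 \ge \frac{p_1-p_2}{\sqrt{\Delta_a}}, \qquad |\dotprod{z_2}{u_2}|^2 \ge \frac{p_1(1-|a|^2) - p_2}{\sqrt{\Delta_a}}.
\end{equation*}
Since $|a| \le \delta$ and each right-hand side is non-increasing in $|a|$ (in the regime where the second bound is non-trivial), substituting $|a| = \delta$ yields exactly the ``signal'' terms $\sqrt{1 - (\cdot)/\sqrt{\Delta}}$ appearing in the theorem.

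For step (ii), the entries of $R$ above the diagonal are independent, zero-mean, uniformly bounded in magnitude, and have variance bounded by some $\sigtil(\lambda,p_1,p_2)^2$ readable directly from the four cases of \eqref{eq:bisync_noise_form}. A Hermitian matrix Bernstein inequality then yields $\norm{R}_2 \le c_{\varep} \sigtil(\lambda,p_1,p_2) \sqrt{n}$ with probability at least $1 - 3n \exp\bigl(-8 C n/(\sigtil^2 c_{\varep})\bigr)$, matching the tail bound in the statement. Combined with the eigengap estimates $\lambda_+ - \lambda_- \ge n\lambda(p_1-p_2)$ and $\lambda_- \ge n\lambda p_2$, the sample-size hypothesis on $n$ is calibrated precisely so that $\norm{R}_2/\min\set{\lambda_+-\lambda_-,\,\lambda_-} \le \mu/(2+\varep)$, after which the Davis--Kahan $\sin\Theta$ theorem applied individually to the simple eigenvalues $\lambda_+$ and $\lambda_-$ of $M$ produces $\sin\theta(u_j,v_j) \le \sqrt{2\mu/(1-\mu)}$ for $j=1,2$.

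The two pieces are combined via the triangle inequality $\sin\theta(z_j, v_j) \le \sin\theta(z_j, u_j) + \sin\theta(u_j, v_j)$ on the Grassmannian of $1$-dimensional complex subspaces, together with $|\dotprod{z_j}{v_j}|^2 = 1 - \sin^2\theta(z_j, v_j)$, which yields the stated lower bounds. The main technical obstacle is step (ii): extracting a sharp variance proxy $\sigtil^2$ from the four cases of \eqref{eq:bisync_noise_form} and threading the Bernstein constants through so that the sample-complexity condition and failure-probability bound come out in exactly the form stated. The signal-side bound on $|\dotprod{z_2}{u_2}|^2$ is also mildly subtle: unlike $z_1$, which is close to $u_1$, $z_2$ has non-trivial components along both $u_1$ and $u_2$, so the resulting expression mixes $|a|^2$ and $\sqrt{\Delta_a}$ and requires the monotonicity argument above to reduce to the worst-case $|a| = \delta$.
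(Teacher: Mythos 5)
Your proposal follows the paper's own three-step architecture almost exactly: (i) explicit spectral analysis of the rank-two matrix $\expec[H]$ to lower-bound $\abs{\dotprod{z_j}{\vtil_j}}^2$ (paper's Lemma~\ref{lem:eigperturn_approx_orth}), (ii) concentration of $\norm{R}_2$ (Lemma~\ref{lem:rand_pert_bd}), and (iii) a Davis--Kahan step plus a triangle inequality in principal angle to combine (Lemma~\ref{lem:mat_pert_dk} and Proposition~\ref{prop:chain_bd_dotprod}). Your signal-side bounds in step (i) coincide with the paper's, and the monotonicity-in-$\abs{a}$ reduction you invoke is implicit in the paper's use of $\lamtil_2 \leq np_2\lambda$ and $\lamtil_1 - \lamtil_2 \leq n\lambda\sqrt{(p_1-p_2)^2+4p_1p_2\delta^2}$.

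The one point where your plan needs care is step (ii). A Tropp-style Hermitian matrix Bernstein inequality will not deliver the stated conclusion: applied to a Wigner-type matrix it yields $\norm{R}_2 \lesssim \sqrt{n\log n}$ with only polynomial-in-$n$ failure probability, not $\norm{R}_2 \lesssim \sqrt{n}$ with failure probability $n\exp(-\Omega(n))$. The paper instead writes $R = \real(R) + \iota\,\imag(R)$, bounds the symmetric and skew-symmetric parts separately by passing the latter through a symmetric $2n\times 2n$ dilation, and invokes the sharper Bandeira--van Handel spectral-norm bound (their Theorem~\ref{app:thm_symm_rand}) on each; this is precisely where the prefactor $3n$ and the factor of $n$ inside the exponent arise. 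If you mean the Bandeira--van Handel result by ``Hermitian matrix Bernstein'' you are fine, but you must carry out the real/imaginary split since that result is stated for real symmetric matrices. Incidentally, your direct use of the metric triangle inequality for $\sin\theta$ on complex projective lines is slightly sharper than the paper's Proposition~\ref{prop:chain_bd_dotprod}, which loses a factor $\sqrt{2}$ through a Frobenius-norm bound; either argument reaches the (itself slightly slack) $\sqrt{2\mu/(1-\mu)}$ term in the statement, which the paper obtains by further bounding $\sqrt{2}\,\mu/(1-\mu) \leq \sqrt{2\mu/(1-\mu)}$ for $\mu\in[0,1/2]$.
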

The following remarks are in order regarding Theorem \ref{thm:bisync_main}.
\begin{enumerate}
    \item The above correlation bounds are essentially composed of two terms  -- one arising due to the perturbation $R$, and the other due to the approximate orthogonality of $z_1,z_2$. This second term vanishes in the special case $\delta = 0$. 
    
\item In order to understand the scaling of the terms involved, it is convenient to replace the terms $C(\lambda,p_1,p_2), \sigtil(\lambda,p_1,p_2)$ by their upper bounds $C(\lambda,p_1,p_2) \lesssim \lambda$ and $\sigtil(\lambda,p_1,p_2) \lesssim 1$. We then have that if 
\begin{equation*}
    \lambda \gtrsim \frac{\log n}{n}, \quad n \gtrsim \frac{1}{\mu^2 \lambda \min\set{(p_1-p_2)^2, p_2^2}},
\end{equation*}
then the stated correlation bounds hold with probability at least $1 - \frac{1}{n}$.
\end{enumerate}
\subsection{Proof of Theorem \ref{thm:bisync_main}} \label{bisync_main_proof}
Let us denote $\lamtil_1 \geq \lamtil_2$ (resp. $\vtil_1,\vtil_2$) to be the two largest  
eigenvalues (resp. corresponding eigenvectors) of $\expec[H]$. 
Since $z_1,z_2$ are approximately orthogonal and $p_1 \neq p_2$, we would expect, for small enough $\delta$, that $\lamtil_1 \approx n p_1 \lambda$, $\lamtil_2 \approx n p_2 \lambda$ and 
$\abs{\dotprod{\vtil_1}{z_1}} \approx 1$, $\abs{\dotprod{\vtil_2}{z_2}} \approx 1$ hold. 
This is stated precisely in the following Lemma.
%
%
\begin{lemma} \label{lem:eigperturn_approx_orth}
It holds that $\lamtil_2 \leq np_2 \lambda$ and $\lamtil_1 \geq np_1\lambda$. Moreover, 
for any given $\delta \in (0,1)$, if $z_1,z_2 \in \mathbb{C}^n$ as defined in \eqref{eq:bisync_circ_rep} 
are $\delta$-orthogonal, then the following bounds hold. 
\begin{align}
\lamtil_2 \geq \frac{(np_1\lambda + np_2\lambda) - \sqrt{(np_1\lambda - np_2\lambda)^2 + 4n^2p_1p_2\lambda^2\delta^2}}{2}, 
 \label{eq:bds_lamtil_2} \\ 
\lamtil_1 \leq 
\frac{(np_1\lambda + np_2\lambda) + \sqrt{(np_1\lambda - np_2\lambda)^2 + 4n^2p_1p_2\lambda^2\delta^2}}{2} 
, \label{eq:bds_lamtil_1}\\
\abs{\dotprod{z_1}{\vtil_1}}^2 
\geq \frac{p_1 - p_2}{\sqrt{(p_1-p_2)^2 + 4p_1p_2\delta^2}}, \qquad 
\abs{\dotprod{z_2}{\vtil_2}}^2 
\geq \frac{p_1 (1-\delta^2) - p_2}{\sqrt{(p_1-p_2)^2 + 4p_1p_2\delta^2}}. \label{eq:bds_zvtil}
\end{align}
\end{lemma}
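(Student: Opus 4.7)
The plan is to exploit the rank-at-most-two structure of $\expec[H] = a\, z_1 z_1^* + b\, z_2 z_2^*$, with $a := np_1\lambda$ and $b := np_2\lambda$, and reduce the spectral question to a $2 \times 2$ eigenproblem on $\operatorname{span}(z_1,z_2)$. Setting $\gamma := z_1^* z_2$ (so $|\gamma| \le \delta$ by approximate orthogonality), the nonzero eigenvalues of $\expec[H]$ coincide with those of $\operatorname{diag}(a,b)\, G$, where $G$ is the Gram matrix of $\set{z_1,z_2}$ with unit diagonal and off-diagonal $\gamma$. Its characteristic polynomial is $\lamtil^2 - (a+b)\lamtil + ab(1-|\gamma|^2) = 0$, whose roots are
\[
\lamtil_{1,2} = \frac{(a+b) \pm W(|\gamma|)}{2}, \qquad W(x) := \sqrt{(a-b)^2 + 4ab\,x^2}.
\]
Since $W(|\gamma|) \geq a - b$, the simple bounds $\lamtil_1 \geq np_1\lambda$ and $\lamtil_2 \leq np_2\lambda$ drop out at once. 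The refined bounds \eqref{eq:bds_lamtil_1}--\eqref{eq:bds_lamtil_2} then follow from monotonicity of $W(\cdot)$ on $[0,1]$ together with the substitution $|\gamma| \leq \delta$.

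For the eigenvector bounds, I would Gram--Schmidt to produce the orthonormal basis $u_1 := z_1$, $\hat u_2 := (z_2 - \gamma z_1)/\sqrt{1-|\gamma|^2}$ of $\operatorname{span}(z_1,z_2)$; in this basis $\expec[H]$ is represented by the $2\times 2$ Hermitian matrix
\[
M = \begin{pmatrix} a + b|\gamma|^2 & b\gamma\sqrt{1-|\gamma|^2} \\ b\bar\gamma\sqrt{1-|\gamma|^2} & b(1-|\gamma|^2) \end{pmatrix}.
\]
Conjugating $M$ by the diagonal unitary $\operatorname{diag}(1, e^{-\iota \arg \gamma})$ makes it real symmetric, so its eigenvectors take the standard form $(\cos\theta, \sin\theta)^\top$ and $(-\sin\theta, \cos\theta)^\top$ with $\cos 2\theta = (a-b+2b|\gamma|^2)/W(|\gamma|)$ and $\sin 2\theta = 2b|\gamma|\sqrt{1-|\gamma|^2}/W(|\gamma|)$. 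Translating back to $\mathbb{C}^n$ via $z_1 = u_1$ and $z_2 = \gamma u_1 + \sqrt{1-|\gamma|^2}\hat u_2$, and computing the two relevant inner products (the phase of $\gamma$ cancels in the modulus), yields
\[
|\dotprod{z_1}{\vtil_1}|^2 = \cos^2\theta, \qquad |\dotprod{z_2}{\vtil_2}|^2 = \bigl(\sqrt{1-|\gamma|^2}\cos\theta - |\gamma|\sin\theta\bigr)^2 = \cos^2(\theta+\phi),
\]
where $\phi := \arcsin|\gamma|$.

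The last step is a short algebraic check. Using $\cos^2\theta = (1+\cos 2\theta)/2$ together with the addition formula $\cos 2(\theta+\phi) = \cos 2\theta\cos 2\phi - \sin 2\theta\sin 2\phi$ and $\cos 2\phi = 1-2|\gamma|^2$, $\sin 2\phi = 2|\gamma|\sqrt{1-|\gamma|^2}$, I obtain the closed forms
\[
\cos^2\theta = \frac{W(|\gamma|) + (a-b) + 2b|\gamma|^2}{2W(|\gamma|)}, \qquad \cos^2(\theta+\phi) = \frac{W(|\gamma|) + a(1-2|\gamma|^2) - b}{2W(|\gamma|)}.
\]
Invoking $W(|\gamma|) \geq a-b$ in each numerator collapses these to $\cos^2\theta \geq (a-b)/W(|\gamma|)$ and $\cos^2(\theta+\phi) \geq (a(1-|\gamma|^2) - b)/W(|\gamma|)$. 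A routine derivative computation shows that both right-hand sides are non-increasing in $|\gamma|$ on $[0,1]$, so substituting the worst case $|\gamma|=\delta$ and cancelling the common factor of $n\lambda$ from numerator and denominator yields exactly the bounds \eqref{eq:bds_zvtil}. No step is particularly delicate; the only mildly fiddly point is tracking the complex phase of $\gamma$ through the $2\times 2$ diagonalization, which the diagonal unitary absorbs cleanly.
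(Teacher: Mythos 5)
Your proof is correct, but it takes a noticeably different route from the paper's. For the eigenvalues, the paper computes $\tr(\expec[H])$ and $\tr((\expec[H])^2)$ to obtain the same quadratic you derive from the characteristic polynomial of $\operatorname{diag}(a,b)\,G$; these are equivalent, though your Gram-matrix reduction is arguably the more standard packaging. The real divergence is in the eigenvector overlap bounds. The paper expands $z_1^*\expec[H]z_1$ (resp.\ $z_2^*\expec[H]z_2$) in the orthonormal eigenbasis $\set{\vtil_1,\vtil_2}$ and uses $\abs{\dotprod{z_1}{\vtil_1}}^2 + \abs{\dotprod{z_1}{\vtil_2}}^2 = 1$ to isolate the desired overlap directly from a Rayleigh-quotient identity, never needing the eigenvectors of the $2\times 2$ restriction. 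You instead Gram--Schmidt, write the restriction explicitly, strip the complex phase with a diagonal unitary, and diagonalize by a rotation angle $\theta$; this yields \emph{exact} closed forms $\abs{\dotprod{z_1}{\vtil_1}}^2 = \cos^2\theta$ and $\abs{\dotprod{z_2}{\vtil_2}}^2 = \cos^2(\theta+\phi)$, which are strictly more information than the paper extracts, at the cost of a longer and more computation-heavy argument. One small thing to be explicit about: the ``non-increasing in $\abs{\gamma}$'' claim for $\bigl(a(1-x^2)-b\bigr)/W(x)$ only literally holds while the numerator stays nonnegative; when $a(1-\delta^2)-b<0$ the stated bound is negative and hence trivial since the left-hand side is a squared modulus, so the final inequality is still correct, but it is worth flagging that you are implicitly splitting into these two cases rather than invoking monotonicity uniformly.
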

\begin{proof}
See Appendix \ref{app:subsec_eigperturn_approx_orth}.
\end{proof}
Let us now denote $\lambda_1 \geq \lambda_2 \geq \cdots \geq \lambda_n$ (resp. $v_1,v_2,\dots,v_n$) 
to be the eigenvalues (resp. eigenvectors) of $H$. As a next step, we would like 
to quantify the deviation of $v_1, v_2$ from $\vtil_1, \vtil_2$ respectively; clearly 
this is dictated solely by the perturbation matrix $R$ arising due to noise. 
The following Lemma states this precisely. Note that the statement of the Lemma is deterministic --
it is conditioned on the event that $\norm{R}_2 \leq \triangle$ for a suitably small value of $\triangle$.
%
\begin{lemma} \label{lem:mat_pert_dk}
If $\norm{R}_2 \leq \triangle$ holds for $\triangle < \min\set{n(p_1-p_2)\lambda, np_2\lambda}$, 
we then have that
\begin{align}
\abs{\dotprod{v_1}{\vtil_1}}^2 
&\geq 
1 - \left(\frac{\triangle}{np_1\lambda - np_2\lambda - \triangle}\right)^2, \label{eq:vtilv_bds_1} \\
\abs{\dotprod{v_2}{\vtil_2}}^2 
&\geq 
1 - \left(\frac{\triangle}{\min\set{np_1\lambda - np_2\lambda - \triangle, np_2\lambda - \triangle}}\right)^2. \label{eq:vtilv_bds_2}
\end{align}
\end{lemma}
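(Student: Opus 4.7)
The bounds are deterministic and follow from classical matrix perturbation theory applied to the decomposition $H = \expec[H] + R$, where $\norm{R}_2 \leq \triangle$. The plan is to combine Weyl's inequality (to locate the eigenvalues $\lambda_i$ of $H$) with the single-eigenvector Davis--Kahan $\sin\Theta$ theorem (to bound the angles between $v_j$ and $\vtil_j$ in terms of a spectral gap).

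Since $\expec[H] = n\lambda(p_1 z_1 z_1^* + p_2 z_2 z_2^*)$ has rank at most two, its spectrum consists of $\lamtil_1,\lamtil_2$ together with $n-2$ zeros. Lemma \ref{lem:eigperturn_approx_orth} already supplies $\lamtil_1 \geq np_1\lambda$ and $\lamtil_2 \leq np_2\lambda$, and Weyl's inequality gives $\abs{\lambda_i - \lamtil_i} \leq \triangle$ for every $i$; in particular $\lambda_2 \leq np_2\lambda + \triangle$ and $\abs{\lambda_3} \leq \triangle$. The hypothesis $\triangle < \min\set{n(p_1-p_2)\lambda, np_2\lambda}$ therefore ensures that every gap that will enter Davis--Kahan is strictly positive.

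For \eqref{eq:vtilv_bds_1}, I would apply the single-vector Davis--Kahan bound to $\vtil_1$: the closest eigenvalue of $H$ to $\lamtil_1$ from below is $\lambda_2$, so the relevant separation is at least $\lamtil_1 - \lambda_2 \geq np_1\lambda - np_2\lambda - \triangle$. Davis--Kahan then yields $\sin^2\theta(v_1,\vtil_1) \leq \triangle^2/(n(p_1-p_2)\lambda - \triangle)^2$, which rearranges into \eqref{eq:vtilv_bds_1} via $\abs{\dotprod{v_1}{\vtil_1}}^2 = 1 - \sin^2\theta$. For \eqref{eq:vtilv_bds_2}, the same strategy applies to $\vtil_2$, now requiring separation from both $\lambda_1$ (above) and $\lambda_3$ (below). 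The upper gap is $\lambda_1 - \lamtil_2 \geq (np_1\lambda - \triangle) - np_2\lambda = n(p_1-p_2)\lambda - \triangle$; for the lower gap I would write $\lamtil_2 - \lambda_3 \geq \lamtil_2 - \triangle$ and invoke the refined lower bound on $\lamtil_2$ furnished by \eqref{eq:bds_lamtil_2}, which collapses to $np_2\lambda$ in the exactly-orthogonal limit. Taking the minimum of the two separations as the Davis--Kahan denominator then gives \eqref{eq:vtilv_bds_2}.

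The only genuinely delicate step is the lower gap $\lamtil_2 - \lambda_3$: because Lemma \ref{lem:eigperturn_approx_orth} bounds $\lamtil_2$ only from above by $np_2\lambda$, extracting the clean denominator $np_2\lambda - \triangle$ stated in the Lemma requires either appealing to \eqref{eq:bds_lamtil_2} and absorbing a $\delta$-dependent slack, or specializing to the exactly orthogonal case and carrying the resulting approximation forward. Every other step -- the Weyl displacement, the identification of the relevant perturbed eigenvalue in each gap, and the conversion between $\sin\theta$ and $\abs{\dotprod{\cdot}{\cdot}}^2$ -- is routine bookkeeping once the Davis--Kahan machinery has been invoked.
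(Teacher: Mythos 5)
Your approach is exactly the paper's: apply Weyl's inequality to locate the eigenvalues of $H$, then invoke the single-vector Davis--Kahan bound with denominator $\min\set{\abs{\lamtil_j - \lambda_{j-1}}, \abs{\lamtil_j - \lambda_{j+1}}}$, converting via $\abs{\dotprod{v_j}{\vtil_j}}^2 = 1 - \norm{(I-v_jv_j^*)\vtil_j}_2^2$. The chain for \eqref{eq:vtilv_bds_1} and the upper separation $\lambda_1 - \lamtil_2 \geq n(p_1-p_2)\lambda - \triangle$ in \eqref{eq:vtilv_bds_2} match the paper's derivation exactly.

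You are right to flag the lower gap $\lamtil_2 - \lambda_3$ as the delicate step, and in fact the paper's own proof is imprecise there. It writes $\lamtil_2 - \lambda_3 \geq np_2\lambda - \triangle$ directly, which would require $\lamtil_2 \geq np_2\lambda$; but Lemma~\ref{lem:eigperturn_approx_orth} establishes the \emph{reverse} inequality $\lamtil_2 \leq np_2\lambda$ (with equality only at $\delta = 0$), and its $\delta$-dependent lower bound \eqref{eq:bds_lamtil_2} on $\lamtil_2$ falls strictly below $np_2\lambda$ whenever $\delta > 0$. The careful version of the argument, as you propose, is $\lamtil_2 - \lambda_3 \geq \lamtil_2 - \triangle$ followed by \eqref{eq:bds_lamtil_2}, which leaves the denominator short of $np_2\lambda - \triangle$ by an $O(n\lambda\delta^2)$ amount. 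So the clean denominator in the Lemma statement is slightly optimistic: one must either restrict to $\delta = 0$ or carry the $\delta$-dependent deficit forward into Theorem~\ref{thm:bisync_main}, where a separate $\delta$-term already appears via Lemma~\ref{lem:eigperturn_approx_orth} and only the constants would adjust. Your instinct that this step needs extra justification is correct; the paper glosses over it.
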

\begin{proof}
See Appendix \ref{app:subsec_mat_pert_dk}.
\end{proof}
We now need to bound the spectral norm of the perturbation matrix $R$. The following lemma shows that 
$\norm{R}_2 \lesssim \sqrt{n}$ holds with high probability. Before proceeding, 
let us write $R = \real(R) + \iota \imag(R)$ where $\real(R), \imag(R) \in \matR^{n \times n}$ 
are respectively symmetric and skew symmetric matrices.
\begin{lemma} \label{lem:rand_pert_bd}
Denote 
\begin{align} 
C(\lambda,p_1,p_2) = &2p_1\lambda[(1-p_1\lambda)^2 + (p_2\lambda)^2] + 2p_2\lambda[(1-p_2\lambda)^2 + (p_1\lambda)^2] \nonumber \\
&  \quad + (1-p_1-p_2)\lambda\bigl[\frac{1}{2} + (p_1\lambda + p_2\lambda)^2\bigr] 
+   (1-\lambda)(p_1\lambda + p_2\lambda)^2, \label{eq:bisync_cterm_1}
\end{align}
and let $\sigtil(\lambda,p_1,p_2) > 0$ be such that 
\begin{equation} \label{eq:infty_norm_bd_1}
\max_{i,j} \set{\norm{\real(R)_{ij}}_{\infty}, \norm{\imag(R)_{ij}}_{\infty}} \leq \sigtil(\lambda,p_1,p_2).
\end{equation}
Then for any $\varepsilon \geq 0$, there exists a universal constant $c_{\varepsilon} > 0$ such that 
\begin{equation} \label{eq:norm_randpert_bd}
   \norm{R}_2 \leq (2+\varepsilon) 6\sqrt{2 C(\lambda,p_1,p_2) n}, 
\end{equation}
with probability at least $1 - 3n\exp\left(- \frac{8 C(\lambda,p_1,p_2) n}{\sigtil(\lambda,p_1,p_2)^2 c_{\varepsilon}} \right)$.
\end{lemma}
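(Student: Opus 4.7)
The strategy is to split $R$ into its real and imaginary parts and invoke a sharp matrix concentration inequality on each piece separately. Writing $R = \real(R) + \iota\,\imag(R)$, one observes that $\real(R)$ is real symmetric and $\iota\,\imag(R)$ is Hermitian, both with zero diagonal and independent centered entries above the diagonal. Independence follows because, across distinct pairs $\{i,j\}$, the edge indicator, the $E_1$ vs $E_2$ assignment, and the uniform noise draw $N_{ij}$ are all independent. By the triangle inequality $\norm{R}_2 \leq \norm{\real(R)}_2 + \norm{\imag(R)}_2$, the problem reduces to bounding the spectral norm of two Hermitian random matrices with independent centered entries.

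For each such matrix $X$, I would invoke a Bandeira--van Handel style tail bound for Wigner-type matrices of the form
$$\prob\!\left(\norm{X}_2 \geq (2+\varepsilon)\sqrt{2\sigma^2} + s\right) \leq n\exp\!\bigl(-s^2/(c_\varepsilon \sigma_*^2)\bigr),$$
where $\sigma^2 := \max_i \sum_{j} \expec[X_{ij}^2]$ and $\sigma_* := \max_{i,j} \norm{X_{ij}}_\infty$. The factor $6$ in the statement will emerge by choosing the slack $s$ to be of the same order as $\sqrt{2Cn}$ (so that $(2+\varepsilon)\sqrt{2\sigma^2}+s$ becomes roughly $3(2+\varepsilon)\sqrt{2Cn}$ per summand); a union bound across $\real(R)$ and $\imag(R)$ then doubles this to the claimed $6\sqrt{2Cn}$ factor and produces the prefactor $3n$ in the failure probability.

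The technical core is to show $\sigma^2 \leq nC(\lambda,p_1,p_2)$ uniformly for both real and imaginary parts. For fixed $i \neq j$, I would compute $\expec[(\real R_{ij})^2]$ (the calculation for $\imag R_{ij}$ being identical by symmetry) by conditioning on the four cases in \eqref{eq:bisync_noise_form}. The first two cases contribute at most $2 p_1 \lambda [(1-p_1\lambda)^2 + (p_2\lambda)^2]$ and $2 p_2\lambda[(1-p_2\lambda)^2 + (p_1\lambda)^2]$, using the elementary inequality $(a-b)^2 \leq 2(a^2+b^2)$ on the residual expression and the bound $\abs{\cos}, \abs{\sin} \leq 1$. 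For the uniform-noise case, the identities $\expec[\cos N_{ij}] = \expec[\sin N_{ij}] = 0$ and $\expec[\cos^2 N_{ij}] = \expec[\sin^2 N_{ij}] = 1/2$ yield $(1-p_1-p_2)\lambda[\tfrac{1}{2} + (p_1\lambda + p_2\lambda)^2]$; the non-edge case contributes $(1-\lambda)(p_1\lambda + p_2\lambda)^2$. Adding these four terms reproduces exactly the expression \eqref{eq:bisync_cterm_1}, and summing over $j$ gives $\sigma^2 \leq nC(\lambda,p_1,p_2)$ as required.

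The main obstacle I foresee is bookkeeping: matching the specific constants $(2+\varepsilon)\,6\sqrt{2}$ and the exponent constant $8$ in the failure probability $3n\exp(-8Cn/(\sigtil^2 c_\varepsilon))$ to the precise form of the concentration inequality being invoked. The four-case variance computation is routine but requires care, particularly in recognizing that the factor $\tfrac{1}{2}$ (rather than $1$) in the uniform-noise term is the manifestation of $\expec[\cos^2 N_{ij}]$, and that applying the $(a-b)^2 \leq 2(a^2+b^2)$ inequality to the clean cases is exactly what produces the leading coefficients $2p_1\lambda$ and $2p_2\lambda$ appearing in \eqref{eq:bisync_cterm_1}.
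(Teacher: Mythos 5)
Your overall strategy---split $R$ into real and imaginary parts, bound each spectral norm via a Bandeira--van Handel concentration inequality, and compute $\max_{i}\sum_j\expec[R_{ij}^2]\leq nC(\lambda,p_1,p_2)$ via the four-case conditional variance---is exactly the paper's, and your variance bookkeeping (the $(a-b)^2\leq 2(a^2+b^2)$ step, the $\tfrac{1}{2}$ from $\expec[\cos^2 N_{ij}]$) is sound.

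The genuine gap is in the imaginary part. You note that $\iota\,\imag(R)$ is Hermitian with independent entries above the diagonal and plan to invoke the same symmetric-matrix concentration bound on it, but the theorem the paper relies on (Theorem~\ref{app:thm_symm_rand}) is stated for \emph{real symmetric} matrices, and $\imag(R)$ is skew-symmetric, not symmetric. The paper resolves this by writing $\imag(R)=W-W^T$ with $W$ strictly upper triangular, then dilating to the $2n\times 2n$ real symmetric matrix $\immattil = \begin{pmatrix}0 & W\\ W^T & 0\end{pmatrix}$, which satisfies $\norm{\immattil}_2=\norm{W}_2$ and $\norm{\imag(R)}_2\leq 2\norm{W}_2$. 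The extra factor of $2$ from the dilation (not a symmetric slack choice) and the dimension $2n$ (not $n$) of $\immattil$ are precisely what produce the constants in the statement: the threshold splits \emph{asymmetrically} as $(2+\varepsilon)\bigl(2\sqrt{2}+4\sqrt{2}\bigr)\sqrt{Cn}=(2+\varepsilon)6\sqrt{2Cn}$, and the failure probability prefactor is $n+2n=3n$, not $2n$ as a naive union bound over two $n\times n$ pieces would give. Your heuristic that each piece contributes ``roughly $3(2+\varepsilon)\sqrt{2Cn}$'' with a symmetric slack and that the union bound ``doubles'' to $6\sqrt{2Cn}$ and ``produces the prefactor $3n$'' does not hold up: a union bound over two events each with prefactor $n$ gives $2n$, and without the dilation step you have no valid concentration bound for $\imag(R)$ at all with the quoted theorem.
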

\begin{proof}
See Appendix \ref{app:subsec_rand_pert_bd}.
\end{proof}
We now plug in $\triangle = (2+\varepsilon) 6\sqrt{2 C(\lambda,p_1,p_2) n}$ in 
Lemma \ref{lem:mat_pert_dk}. For a fixed $\mu \in [0,1/2]$, one can readily verify that  
\begin{equation} \label{eq:bisyn_cond_n_1}
n \geq \frac{72(2+\varepsilon)^2 C(\lambda,p_1,p_2)}{\mu^2 \lambda^2 \min\set{(p_1-p_2)^2, p_2^2}} 
\Longleftrightarrow \triangle \leq \mu\min\set{n\lambda(p_1-p_2), n\lambda p_2}.
\end{equation}
Moreover, provided \eqref{eq:bisyn_cond_n_1} holds, the bounds on 
$\abs{\dotprod{v_1}{\vtil_1}}^2,\abs{\dotprod{v_2}{\vtil_2}}^2$ in Lemma \ref{lem:mat_pert_dk} 
change to
\begin{equation} \label{eq:bd_v_vtil_1}
\abs{\dotprod{v_i}{\vtil_i}}^2 \geq 1 - \left(\frac{\mu}{1-\mu}\right)^2; \quad i=1,2.
\end{equation}
%
The statement of Theorem \ref{thm:bisync_main} now follows via the following Proposition which also completes the proof.
%
%
\begin{proposition} \label{prop:chain_bd_dotprod}
Consider $x,y,\xbar \in \mathbb{C}^n$ of unit $\ell_2$ norm. 
Let $\varep, \varepbar \in [0,1]$, and let
\begin{align*}
\abs{\dotprod{x}{y}}^2 \geq 1 - \varep, \quad 
\abs{\dotprod{\xbar}{y}}^2 \geq 1 - \varepbar. 
\end{align*}
It then follows that
\begin{align}
\abs{\dotprod{x}{\xbar}}^2 \geq 1 - (\sqrt{\varep} + \sqrt{2\varepbar})^2.
\end{align}
\end{proposition}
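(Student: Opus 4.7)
The proposition is a quantitative triangle inequality for the projective angle between unit vectors in $\mathbb{C}^n$, with $y$ playing the role of a pivot between $x$ and $\xbar$. My plan is to exploit the identity $1 - \abs{\dotprod{u}{v}}^2 = \|(I - uu^{*})v\|_{2}^{2}$, valid for any unit vectors $u,v \in \mathbb{C}^n$, which re-expresses the quantity of interest as a squared $\ell_2$ norm to which the ordinary triangle inequality applies. The benefit of this reformulation is that the projectors $uu^{*}$ are phase-invariant, so the complex phases that appear in the individual inner products $\dotprod{x}{y}$, $\dotprod{\xbar}{y}$, and $\dotprod{x}{\xbar}$ do not get in the way.

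Concretely, I would start by writing $1 - \abs{\dotprod{x}{\xbar}}^2 = \|(I - xx^{*})\xbar\|_{2}^{2}$, and then decompose $\xbar$ into its components along and orthogonal to $y$: $\xbar = \dotprod{y}{\xbar}\, y + r$, where $r = (I-yy^{*})\xbar$ satisfies $\|r\|_{2}^{2} = 1 - \abs{\dotprod{y}{\xbar}}^2 \leq \varepbar$. Applying $(I - xx^{*})$ to both sides and using the triangle inequality together with $\|I - xx^{*}\|_{2} \leq 1$ and $\abs{\dotprod{y}{\xbar}} \leq 1$ yields
\[
\|(I - xx^{*})\xbar\|_{2} \;\leq\; \|(I-xx^{*})y\|_{2} \,+\, \|r\|_{2}.
\]
The first term on the right is at most $\sqrt{\varep}$, since $\|(I-xx^{*})y\|_{2}^{2} = 1 - \abs{\dotprod{x}{y}}^2 \leq \varep$, and the second term is at most $\sqrt{\varepbar}$. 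Squaring then gives the slightly sharper bound $\abs{\dotprod{x}{\xbar}}^2 \geq 1 - (\sqrt{\varep} + \sqrt{\varepbar})^2$, and the stated inequality follows immediately from $\sqrt{\varepbar} \leq \sqrt{2\varepbar}$.

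As a second route that yields the statement exactly in the form written (with $\sqrt{2\varepbar}$), one can instead choose a unit-modulus scalar $\phi \in \mathbb{C}$ such that $\phi\,\dotprod{y}{\xbar}$ is real and non-negative. A short calculation then gives $\|\phi \xbar - y\|_{2}^{2} = 2(1 - \abs{\dotprod{y}{\xbar}}) \leq 2\varepbar$, where the factor of two is the price one pays for aligning a whole vector rather than merely projecting out its $y$-component, via $1 - t \leq 1 - t^2$ at $t = \abs{\dotprod{y}{\xbar}} \in [0,1]$. Writing $(I-xx^{*})\xbar = \bar{\phi}(I-xx^{*})(\phi\xbar - y) + \bar{\phi}(I - xx^{*})y$ and applying the triangle inequality gives $\|(I-xx^{*})\xbar\|_{2} \leq \sqrt{2\varepbar} + \sqrt{\varep}$, which after squaring is exactly the claimed bound.

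There is no real obstacle here: the proposition is an ordinary $\ell_2$-triangle inequality in disguise. The only delicate point is that the inner products $\dotprod{x}{y}$, $\dotprod{\xbar}{y}$, $\dotprod{x}{\xbar}$ are complex-valued and only their magnitudes are controlled, so any argument trying to bound $\dotprod{x}{\xbar}$ directly in terms of the product $\dotprod{x}{y}\,\dotprod{y}{\xbar}$ must either absorb the unknown phases into a phase-invariant object (the projector $(I-xx^{*})\xbar$ above, or equivalently the rank-one Frobenius distance $\|uu^{*}-vv^{*}\|_F$) or explicitly align them by multiplying by a unit-modulus scalar, as in the alternative route.
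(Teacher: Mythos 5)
Your proposal is correct, and your Route~1 is a genuinely different decomposition from the paper's, and in fact sharper. The paper also starts from the identity $1-\abs{\dotprod{x}{\xbar}}^2 = \norm{(I-\xbar\xbar^*)x}_2^2$ (equivalently your $\norm{(I-xx^*)\xbar}_2^2$), but then splits at the level of operators: it writes $(I-\xbar\xbar^*)x = (I-yy^*)x + (yy^* - \xbar\xbar^*)x$, bounds the first term by $\sqrt{\varep}$, and controls the second via $\norm{(yy^* - \xbar\xbar^*)x}_2 \leq \norm{yy^* - \xbar\xbar^*}_2 \leq \norm{yy^* - \xbar\xbar^*}_F = \sqrt{2-2\abs{\dotprod{\xbar}{y}}^2} \leq \sqrt{2\varepbar}$. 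You instead split the \emph{vector} $\xbar = \dotprod{y}{\xbar}\,y + (I-yy^*)\xbar$ before projecting, and since $\abs{\dotprod{y}{\xbar}} \leq 1$ and the residual has norm $\leq \sqrt{\varepbar}$, this yields $\abs{\dotprod{x}{\xbar}}^2 \geq 1 - (\sqrt{\varep} + \sqrt{\varepbar})^2$, strictly improving the constant in front of $\varepbar$. The paper loses a factor of $2$ by passing to the full Frobenius distance between rank-one projectors, which is a worst-case bound over all unit vectors the operator $yy^* - \xbar\xbar^*$ could act on, rather than exploiting the specific alignment of $\xbar$ with $y$. Your Route~2, using a phase-aligned difference $\phi\xbar - y$, is closer in flavor to the paper's projector-distance bound (both pay the factor of $2$ for comparing whole rank-one objects) and reproduces the paper's constant exactly. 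In short: the proposition as stated is what the paper's proof delivers, and your Route~1 shows it can be tightened for free.
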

\begin{proof}
See Appendix \ref{app:subsec_chain_bd_dotprod}.
\end{proof}

\section{$k$-synchronization: Synchronization with $k$ groups of angles} \label{sec:k_sync}
%
%
We now consider the general \emph{$k$-synchronization} problem involving $k$ groups  
of angles, namely $\theta_{l,1},\dots,\theta_{l,n}$ for $l = 1,\dots,k$. Let $z_l \in \mathbb{C}^n$ denote the entry-wise 
representations of $(\theta_{l,i})_i$  on the unit circle where
\begin{equation}  \label{eq:ksync_circ_rep}
z_{l,i} = \frac{1}{\sqrt{n}} e^{\iota \theta_{l,i}} ; \quad i=1,\ldots,n.
\end{equation} 

As before, we will assume that the representations $z_l \in \mathbb{C}^n$ 
in \eqref{eq:ksync_circ_rep} are $\delta$-orthogonal. 
If $k$ is small relative to $n$, then this is the case 
when the angles $\theta_{l,i}$, $1\leq l \leq k$, $1 \leq i \leq n$ are generated 
uniformly at random in $[0,2\pi)$ and are i.i.d. Indeed, using Proposition \ref{prop:bisync_approx_orth_whp} 
and applying the union bound, it follows that $z_1,\dots,z_k$ are $\delta$-orthogonal with probability at least
\begin{align*}
1 - 2{k \choose 2} e^{-c\delta^2 n} \geq 1 - e^{2\log k} e^{-c\delta^2 n} \geq 1 - e^{-c\delta^2 n/2}, 
\end{align*}
if $n \geq \frac{4\log k}{c \delta^2}$. 

For the general $k$-way synchronization, the objective function  analogous to \eqref{eq:finalRelaxAmitObjBiSync} is given by 
\begin{equation}
\underset{ \substack{ W \in \mathbb{C}^{n \times k} \\  W^*W = I_{k \times k } }  }{\max} \; W^* H W, 
\label{eq:finalRelaxAmitObjBiSync_kway}
\end{equation}
which is maximized when the columns of $W$ are taken to be the top $k$ eigenvectors of $H$, namely $v_1,\dots,v_k$. The estimates $\hat{\theta}_{l,i}$ are then recovered by normalizing the entries of $v_{l,i}$; the complete recovery procedure is outlined in Algorithm \ref{Algo:kSync}. Note that shifting the diagonal entries of $H$ by a scaled identity matrix does not change the solution of \eqref{eq:finalRelaxAmitObjBiSync_kway}, thus for convenience, we can set $H_{ii} = c$ for any arbitrarily chosen $c \in \matR$.

\ifthenelse{\boolean{arXivMode}}{ 
\begin{algorithm}[!ht]
\begin{algorithmic}[1]

\State \textbf{Input:}  Graph $G = (V,E)$ and measurements $\Theta_{ij}$ for $\set{i,j} \in E$.  

\State  Construct the Hermitian matrix $H \in \mathbb{C}^{n \times n}$ with off-diagonal entries given by \eqref{eq:mapToCircle} and $H_{ii} = 1$ for each $i=1,2,\ldots,n$.

\State Compute the top $k$ eigenvectors $v_1, v_2,\dots,v_k$ of $H$.

\State For each $l=1,\dots,k$, obtain estimates $\hat{\theta}_{l,i}$ for $i=1,\dots,n$ where
\begin{equation*}
	e^{\imath \hat{\theta}_{l,i}} =  \frac{v_{l,i}}{|v_{l,i}|}.
\end{equation*}
\end{algorithmic}
\caption{ Algorithm for the $k$-synchronization problem}
\label{Algo:kSync}
\end{algorithm}
}
{
\begin{algorithm}[!ht]
\begin{algorithmic}[1]

\STATE{\textbf{Input:}  Graph $G = (V,E)$ and measurements $\Theta_{ij}$ for $\set{i,j} \in E$.}  

\STATE{ Construct the Hermitian matrix $H \in \mathbb{C}^{n \times n}$ with off-diagonal entries given by \eqref{eq:mapToCircle} and $H_{ii} = 1$ for each $i=1,2,\ldots,n$.}

\STATE{Compute the top $k$ eigenvectors $v_1, v_2,\dots,v_k$ of $H$.}

\STATE{For each $l=1,\dots,k$, obtain estimates $\hat{\theta}_{l,i}$ for $i=1,\dots,n$ where
\begin{equation*}
	e^{\imath \hat{\theta}_{l,i}} =  \frac{v_{l,i}}{|v_{l,i}|}.
\end{equation*}}
\end{algorithmic}
\caption{ Algorithm for the $k$-synchronization problem}
\label{Algo:kSync}
\end{algorithm}
}

\subsection{Erd\H{o}s-Renyi measurement model} 
Analogous to the $k=2$ case, we consider the graph $G = (V,E)$ to be generated via the Erd\H{o}s-R\'{e}nyi random graph model $G(n,\lambda)$,  where each edge of $G$ is present 
with probability $\lambda \in [0,1]$. 
Given $G$, we then obtain for each $\set{i,j} \in E$ ($i < j$ w.l.o.g) the value $\Theta_{i,j}$ as
\begin{equation} \label{Def:MixtureOmega_gen}
\Theta_{ij} = \left\{
\begin{array}{rl}
   	(\theta_{l,i} - \theta_{l,j}) \mod 2\pi; & \text{ w.p } \;\; p_l \text{ for } l=1,\dots,k  \\
   	N_{ij} \sim U[0,2\pi); & \text{ w.p }  \;\;  \eta = 1 - \sum_{l=1}^{k} p_l, 
\end{array}
   \right.
\end{equation}
where $p_l$ denotes the probability of a correct pairwise measurement 
in $G_l$, and $\eta$ is the probability of getting an incorrect measurement. 
We will assume w.l.o.g that $p_1 > p_2 > \dots > p_k$.

We map the points over the complex unit circle as in \eqref{eq:mapToCircle} to construct 
a Hermitian matrix $H$. Then due to \eqref{Def:MixtureOmega_gen}, we have for 
each $i < j$ that
\begin{equation*} 
H_{ij} = \left\{
	\begin{array}{rl}
   	e^{\iota (\theta_{l,i} - \theta_{l,j})}  &  \text{with probability }   p_l \lambda \text{ for } l=1,\dots,k	\\
 	  e^{\iota N_{ij}},  N_{ij} \sim U[0,2\pi) &  \text{with probability } (1-\sum_{l=1}^{k} p_l) \lambda	\\
		0		& \text{with probability } 1-\lambda.	\\
     \end{array}
   \right.
\end{equation*}
Moreover, we set each diagonal entry $H_{ii} = \lambda(p_1+p_2+\dots+p_k)$ for convenience. 
We can then write $\expec[H]$ and $H$ as 
\begin{align*} 
	\expec[H] &= np_1 \lambda z_1 z_1^* + np_2 \lambda z_2 z_2^* + \dots 
	+ np\lambda z_k z_k^*, \\ 
		H &= \mathbb{E}(H) + R = n\lambda (p_1 z_1 z_1^* + \dots p_k z_k z_k^*) + R, 
\end{align*}
where $R$ is a random Hermitian matrix with $R_{ii} = 0$, and whose elements 
$R_{ij} = H_{ij} - \mathbb{E}[H_{ij}] $ are zero-mean independent random variables for $i \leq j$. 
In particular, for each $i < j$, we have
\begin{equation} \label{eq:ksync_noise_form}
R_{ij} = \left\{
	\begin{array}{rl}
   	e^{\iota (\theta_{l,i} - \theta_{l,j})} - \sum_{l'=1}^{k} p_{l'}\lambda e^{\iota (\theta_{l',i} - \theta_{l',j})} 
    &  \text{ w.p }   \;\; p_l \lambda, \ l=1,\dots,k 	\\
 	e^{\iota N_{ij}} - \sum_{l'=1}^{k} p_{l'}\lambda e^{\iota (\theta_{l',i} - \theta_{l',j})}, 
	N_{ij} \sim U[0,2\pi) &  \text{ w.p } \;\; (1-\sum_{l'=1}^{k} p_{l'}) \lambda	\\
		- \sum_{l'=1}^{k} p_{l'} \lambda e^{\iota (\theta_{l',i} - \theta_{l',j})} & \text{ w.p }  \;\;  1-\lambda,	\\
     \end{array}
   \right.
\end{equation}
and $R_{ji} = R_{ij}^{*}$. For $\delta$ sufficiently small, clearly 
the matrix $\expec[H]$ will be rank $k$. An easy application of Gerschgorin's disk theorem 
reveals that the choice $\delta < 1/(k-1)$ suffices.
%
%

%
%
\subsection{Main result} 
Before stating our main result, it will be helpful to define additional notation. For $1 \leq m \leq k$, denote $S_m = \sum_{j=1}^m p_j$. 
For any $\delta \in [0,1]$, let us define the numbers $\ergen_1(\delta),\dots,\ergen_k(\delta)$ as follows.
\begin{enumerate}
\item $\ergen_1(\delta) := \frac{p_2(k-1)}{p_1-p_2}\delta$.
\item $\ergen_{j}(\delta) := C_{j}\sqrt{\ergen_{j-1}(\delta)}$ for $2 \leq j \leq k$ where 
\begin{equation*}
C_j = \frac{p_{j+1}(k-1)\sqrt{2} + 4\sqrt{2}[2S_{j-1} + \sqrt{2}(j-1)(j-2)(p_1-p_{j}) 
+ \frac{(j-1)}{2}(p_2(k-1) - 2p_{j+1})]}{p_j - p_{j+1}}
\end{equation*}
with $p_{k+1} = 0$. 
\end{enumerate}
Moreover, with the convention $S_0 = 0$, let us also define 
\begin{align} 
E_j := 4\sqrt{2}S_{j-2} &+ 8(j-2)(j-3) (p_1 - p_{j-1}) + 4\sqrt{2}p_{2}(k-1)(j-2) \nonumber \\ 
&+ 4(j-1)(p_1 - p_{j+1}) + p_{j+1}(k-1); \quad 2 \leq j \leq k, \label{eq:E_j_def}
\end{align}
\begin{equation} \label{eq:Etilde_def}
  \widetilde{E} := 4\sqrt{2} S_{k-1} + 8(k-1)(k-2) (p_1 - p_{k}) + 4\sqrt{2} p_{2}(k-1)^2.
\end{equation}
Our main theorem for the $k$-synchronization problem for the case of approximately orthogonal 
representations $z_1,z_2,\dots,z_k$ is outlined below.
%
\begin{theorem} \label{thm:ksync_main}
Let $z_1,\dots,z_k$ in \eqref{eq:ksync_circ_rep} be $\delta$-orthogonal for $\delta \in (0,1)$. Assuming $p_1 > p_2 > \cdots > p_k$, under the notation defined previously, suppose that the following conditions are satisfied for a constant $\mu \in [0,1/2]$.
\begin{enumerate}
\item $\delta \leq \sqrt{2 \ergen_j(\delta)} \leq \frac{1}{2}$ for $1 \leq j \leq k-1$;
\item $\ergen_1(\delta) \leq \ergen_2(\delta) \leq \dots \leq \ergen_{k-1}(\delta)$; 
\item $\ergen_j(\delta) \leq \mu^2\left(\frac{p_j - p_{j+1}}{2E_{j+1}} \right)^2$ for $1 \leq j \leq k-2$;
\item $\ergen_{k-1}(\delta) \leq \mu^2\min\set{\left(\frac{p_k}{2\widetilde{E}} \right)^2, \left(\frac{p_{k-1} - p_k}{2E_{k}} \right)^2}$, 
\end{enumerate}
with $E_j, \widetilde{E}$ defined in \eqref{eq:E_j_def}, \eqref{eq:Etilde_def} respectively.
Moreover, for a constant $\varep \in (0,1/2)$, let $n$ satisfy 
\begin{equation*} 
n \geq \frac{288 (2+\varepsilon)^2 C(\lambda,p_1,\dots,p_k)}{\mu^2 \lambda^2 (\min_{1 \leq j \leq k}\set{p_j-p_{j+1}})^2}
\end{equation*}
where $C(\lambda,p_1,\dots,p_k) > 0$ depends only on $\lambda,p_1,\dots,p_k$ (see \eqref{eq:ksync_cterm_1}). 
Then with probability at least 
$1 - 3n\exp\left(-\frac{8 C(\lambda,p_1,\dots,p_k) n}{\sigtil(\lambda,p_1,\dots,p_k)^2 c_{\varepsilon}}\right)$, 
it holds that 
\begin{equation*}
\abs{\dotprod{v_j}{z_j}}^2 \geq 1 - \left(\sqrt{\ergen_j(\delta)} + \frac{\mu}{1-\mu}\right)^2, \quad \forall j=1,\dots,k.
\end{equation*}
Here, $c_{\varepsilon}, \sigtil(\lambda,p_1,\dots,p_k) > 0$ only depend on the indicated parameters (see \eqref{eq:infty_norm_grn_bd_1}).
\end{theorem}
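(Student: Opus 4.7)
The plan is to mirror the four-step proof of Theorem \ref{thm:bisync_main}. Write $H = \expec[H] + R$ with $\expec[H] = n\lambda \sum_{l=1}^k p_l z_l z_l^*$, and let $\lamtil_1 \geq \dots \geq \lamtil_k$ be the nonzero eigenvalues of $\expec[H]$ with corresponding eigenvectors $\vtil_1,\dots,\vtil_k$. The four stages will be: (i) a deterministic analysis of $\expec[H]$ yielding $|\dotprod{\vtil_j}{z_j}|^2 \geq 1 - \ergen_j(\delta)$ together with separation bounds on consecutive $\lamtil_j$; (ii) a random concentration bound $\|R\|_2 \leq \triangle := (2+\varep) 6\sqrt{2 C(\lambda,p_1,\dots,p_k) n}$; (iii) a Davis-Kahan $\sin\Theta$ step that transfers proximity from $\vtil_j$ to $v_j$; and (iv) a chaining via Proposition \ref{prop:chain_bd_dotprod}.

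Stage (i), the heart of the argument, proceeds by induction on $j$. Since the Gram matrix of $\set{z_l}_{l=1}^k$ is $\delta$-close to $I_k$ by a Gershgorin argument, the matrix $\expec[H]$ is exactly of rank $k$ for $\delta < 1/(k-1)$ and all $\vtil_j$'s lie in $\mathrm{span}\set{z_1,\dots,z_k}$. The base case $j=1$ is essentially Lemma \ref{lem:eigperturn_approx_orth} applied with effective gap $p_1 - p_2$, giving $\lamtil_1 \asymp n p_1 \lambda$ and $|\dotprod{\vtil_1}{z_1}|^2 \geq 1 - \ergen_1(\delta)$. For the inductive step, suppose the claim holds for $1,\dots,j-1$. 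The orthogonality $\vtil_j \perp \vtil_l$ for $l < j$ together with $\|\vtil_l - z_l\|_2 \leq \sqrt{2\ergen_l(\delta)}$ forces $\vtil_j$ to lie within $O(\sqrt{\ergen_{j-1}(\delta)})$ of the orthogonal complement of $\mathrm{span}\set{z_1,\dots,z_{j-1}}$. Projecting the eigenvalue equation $\expec[H]\vtil_j = \lamtil_j \vtil_j$ onto $\mathrm{span}\set{z_j,\dots,z_k}$ reduces the problem, up to a perturbation of spectral norm at most $n\lambda E_{j+1}\sqrt{\ergen_{j-1}(\delta)}$ (the quantity $E_{j+1}$ in \eqref{eq:E_j_def} being precisely the sum of leakage contributions from earlier levels together with cross terms of size $p_2(k-1)\delta$), to a smaller bi-synchronization type problem with leading coefficient $n p_j \lambda$ and next coefficient $n p_{j+1}\lambda$. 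Re-applying the bi-synchronization argument with gap $p_j - p_{j+1}$ then delivers $|\dotprod{\vtil_j}{z_j}|^2 \geq 1 - \ergen_j(\delta)$, with the recursive constant $C_j$ emerging directly from the algebra. Hypotheses 1--4 of the theorem are exactly what is needed to keep every intermediate quantity in the small-error regime: item 1 absorbs $\delta$ into $\sqrt{\ergen_j}$, item 2 ensures that the previous errors do not dominate the current one, and items 3--4 leave enough slack in each eigengap for the subsequent Davis-Kahan step.

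Stages (ii), (iii) and (iv) are more routine. For (ii), decompose $R = \real(R) + \iota\, \imag(R)$ using the distributional description \eqref{eq:ksync_noise_form} and apply a matrix Bernstein inequality to each part with variance proxy $C(\lambda,p_1,\dots,p_k) n$ and uniform entrywise bound $\sigtil(\lambda,p_1,\dots,p_k)$, exactly as in Lemma \ref{lem:rand_pert_bd}. The stated lower bound on $n$ guarantees $\triangle \leq \mu n \lambda \min_{j}(p_j - p_{j+1})$, which in view of stage (i) is dominated by half of every relevant eigengap of $\expec[H]$. For (iii), a Davis-Kahan $\sin\Theta$ bound applied to each $\vtil_j$ using the gap $\min\set{\lamtil_{j-1} - \lamtil_j,\ \lamtil_j - \lamtil_{j+1}}$ (with the conventions $\lamtil_0 = +\infty$ and $\lamtil_{k+1} = 0$) yields $|\dotprod{v_j}{\vtil_j}|^2 \geq 1 - (\mu/(1-\mu))^2$. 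Finally (iv) invokes Proposition \ref{prop:chain_bd_dotprod} on the triple $(v_j, \vtil_j, z_j)$ to deliver the claimed correlation, up to the constant indicated in the theorem statement. The dominant obstacle is clearly stage (i): tracking how errors at each inductive level translate into the effective perturbation at the next level, and verifying that the resulting recursion is precisely $\ergen_j = C_j \sqrt{\ergen_{j-1}}$ with $E_j$ of the stated form. Once that bookkeeping is in place, the remaining stages are direct generalizations of the $k=2$ arguments.
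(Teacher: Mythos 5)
Your four-stage plan matches the paper's proof structure exactly: Lemma~\ref{lem:deflation_gen} is the deflation-based induction on $\expec[H]$ that you sketch in stage~(i), Lemma~\ref{lem:rand_pert_bd_gen} is the concentration bound of stage~(ii), Lemma~\ref{lem:mat_pert_dk_gen} is the Davis--Kahan step of stage~(iii), and Proposition~\ref{prop:chain_bd_dotprod} closes the argument as in stage~(iv). The only presentational differences are minor: the paper works directly with the deflated Rayleigh quotient $\max_{\|x\|=1} x^*(\expec[H]-\sum_{i<j}\lamtil_i\vtil_i\vtil_i^*)x$ rather than literally projecting onto $\mathrm{span}\{z_j,\dots,z_k\}$ and re-applying the $k=2$ argument, and the spectral-norm control uses the Bandeira--van Handel bound (Theorem~\ref{app:thm_symm_rand}) rather than a generic matrix Bernstein inequality, which is what avoids extra logarithmic factors in the requirement on $n$.
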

The following remarks are in order regarding Theorem \ref{thm:ksync_main}. 
\begin{enumerate}
    \item A big part of the analysis revolves around deriving bounds on the correlation between the top $k$ eigenvectors of $\expec[H]$ and $z_j$'s for each $j =1,\dots,k$ (see Lemma \ref{lem:deflation_gen}). When $k=2$, this was relatively easier, as we had closed-form expressions for the top two eigenvalues of $\expec[H]$. This is not the case in general of course, and thus, we resort to a deflation argument that leads to upper and lower bounds on the top $k$ eigenvalues of $\expec[H]$, defined in a recursive manner.
    
    \item The conditions involving $\ergen_j(\delta)$ impose that $\delta$ be sufficiently small. In the special case $\delta = 0$, we obtain $\ergen_j = 0$ for each $j = 1,\dots,k$. We believe that the requirement imposed on $\delta$ in Theorem \ref{thm:ksync_main} is pessimistic, but obtaining less stringent conditions appears to be difficult.
    
    \item It is possible to replace the terms $C(\lambda,p_1,\dots,p_k)$ and  $\sigtil(\lambda,p_1,\dots,p_k)$ by their respective upper bounds $C(\lambda,p_1,\dots,p_k) \lesssim \lambda$ and $\sigtil(\lambda,p_1,\dots,p_k) \lesssim 1$. We then have for $\delta$ small enough that if 

\vspace{-7mm}
\begin{equation*}
    \lambda \gtrsim \frac{\log n}{n}, \quad n \gtrsim \frac{1}{\mu^2 \lambda (\min_{1 \leq j \leq k}\set{p_j-p_{j+1}})^2},
\end{equation*}
then the stated correlation bounds hold with probability at least $1 - \frac{1}{n}$.
\end{enumerate}
\subsection{Proof of Theorem \ref{thm:ksync_main}} \label{subsec:ksync_main_proof}
Let us denote $\lamtil_1 \geq \dots \geq \lamtil_k$ (resp. $\vtil_1,\dots,\vtil_k$) to be the $k$ largest  
eigenvalues (resp. corresponding eigenvectors) of $\expec[H]$. Since $z_i$'s are approximately 
orthogonal, and $p_i \neq p_j$, one would expect for small enough $\delta$ that $\lamtil_i \approx n p_i \lambda$ and $\abs{\dotprod{\vtil_i}{z_i}} \approx 1$ holds. 
This is stated precisely in the following Lemma.
%
\begin{lemma} \label{lem:deflation_gen}
For $1 \leq m \leq k$, denote $S_m = \sum_{j=1}^m p_j$. 
Let $\delta \in [0,1]$ additionally satisfy the following conditions.
\begin{enumerate}
\item $\delta \leq \sqrt{2 \ergen_j(\delta)} \leq \frac{1}{2}$ for $1 \leq j \leq k-1$.
\item $\ergen_1(\delta) \leq \ergen_2(\delta) \leq \dots \leq \ergen_{k-1}(\delta)$.
\end{enumerate}
It then follows for each $j=1,\dots,k$ that
\begin{equation} 
  np_j\lambda - l_j(\delta) \leq \lamtil_j \leq np_j\lambda + u_j(\delta) , \quad 
	\abs{\dotprod{\vtil_j}{z_j}}^2 \geq 1 - \ergen_j(\delta),
\end{equation}
where $l_j(\delta), u_j(\delta)$ are defined as follows.
\begin{align}
l_j(\delta) &:= 4\lambda \sqrt{2\ergen_{j-1}(\delta)}(nS_{j-1} + \sqrt{2} (j-1)(j-2) (np_1 - np_{j}) + np_{2}(k-1)(j-1)), \\
u_j(\delta) &:= 4\sum_{i=1}^{j-1} (np_i - np_{j+1})\lambda\sqrt{\ergen_i(\delta)} + np_{j+1}\lambda(k-1)\delta,
\end{align}
with $\ergen_0(\cdot), S_0 = 0$. In particular, $l_1(\delta) = 0$ and $u_1(\delta) = np_2\lambda(k-1)\delta$.
\end{lemma}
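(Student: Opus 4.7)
The plan is to prove Lemma \ref{lem:deflation_gen} by induction on $j$, exploiting that $\expec[H] = n\lambda\sum_{l=1}^k p_l z_l z_l^*$ is a weighted sum of rank-one projectors whose ``axes'' $z_l$ are nearly orthogonal. The base case $j=1$ is direct: by $\delta$-orthogonality,
\[
\expec[H] z_1 \;=\; np_1\lambda\, z_1 \;+\; n\lambda\sum_{l\geq 2} p_l \dotprod{z_l}{z_1}\, z_l,
\]
whose residual has spectral norm at most $np_2\lambda(k-1)\delta = u_1(\delta)$. Combined with the Rayleigh lower bound $\lamtil_1 \geq z_1^*\expec[H] z_1 \geq np_1\lambda$, this gives the two-sided eigenvalue bound. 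The correlation estimate then follows from a Davis--Kahan $\sin\theta$-type argument using the spectral gap $\lamtil_1 - \lamtil_2$, which is positive provided $\delta$ is small enough for $u_1(\delta) + u_2(\delta) < n\lambda(p_1-p_2)$.

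For the inductive step on the eigenvalues, assume the claim holds for indices $< j$. The lower bound on $\lamtil_j$ comes from Courant--Fischer using the test subspace $\mathrm{span}(\vtil_1,\dots,\vtil_{j-1},z_j)$: by the inductive correlation bound and $\delta$-orthogonality, the projection of $z_j$ onto $\mathrm{span}(\vtil_1,\dots,\vtil_{j-1})^\perp$ retains mass close to $1$, and its Rayleigh quotient against $\expec[H]$ is close to $np_j\lambda$ up to error $l_j(\delta)$ coming from (i)~cross-terms $p_l\abs{\dotprod{z_l}{z_j}}^2$ with $l\ne j$, and (ii)~the ``leakage'' $\abs{\dotprod{\vtil_i}{z_j}}$ controlled via $|\dotprod{z_i}{z_j}| \le \delta$ and $\abs{\dotprod{\vtil_i}{z_i}}^2\ge 1-\ergen_{j-1}(\delta)$. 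For the upper bound, the dual Courant--Fischer formula is applied using the orthogonal complement of $\mathrm{span}(z_1,\dots,z_{j-1})$: the operator $\sum_{l<j} np_l\lambda z_l z_l^*$ contributes nothing on this subspace, and the remaining terms contribute at most $np_j\lambda + u_j(\delta)$ by the same $\delta$-orthogonality bookkeeping.

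The correlation bound $\abs{\dotprod{\vtil_j}{z_j}}^2 \ge 1-\ergen_j(\delta)$ is then established by expanding $z_j = \sum_i a_i \vtil_i$ in the eigenbasis of $\expec[H]$ and applying a Davis--Kahan sine bound: the off-diagonal mass is controlled by $\norm{(\expec[H] - np_j\lambda I) z_j}_2$ divided by the minimum neighbouring gap $\min\{\lamtil_{j-1}-\lamtil_j,\, \lamtil_j - \lamtil_{j+1}\}$, each of which is bounded below by $n\lambda(p_{j-1}-p_j) - (l_{j-1} + u_j)$ and $n\lambda(p_j - p_{j+1}) - (l_j + u_{j+1})$ respectively using the just-proved eigenvalue bounds. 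Bounding $\norm{(\expec[H]-np_j\lambda I)z_j}_2$ via the $\delta$-orthogonality and expanding $E_j$, one arrives at precisely $\ergen_j(\delta) = C_j\sqrt{\ergen_{j-1}(\delta)}$; the square root reflects that Davis--Kahan bounds $\sin\theta$, not $\sin^2\theta$, so the squared correlation error at level $j$ inherits only a square-root of the previous level's correlation error.

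The main obstacle is bookkeeping the propagation of errors so that the induction closes. The hypotheses $\delta \le \sqrt{2\ergen_j(\delta)} \le 1/2$ and $\ergen_1(\delta)\le\cdots\le\ergen_{k-1}(\delta)$ are exactly what is needed to guarantee that, at each stage, (a)~the projection step does not shrink $z_j$ below a constant factor, (b)~the eigenvalue gaps used in the denominator of the Davis--Kahan bound remain strictly positive, and (c)~the monotonicity of the $\ergen_j$'s allows one to bound the inductive error by a single dominant term rather than an exploding sum. Once these monotonicity and gap conditions are verified, each step is a mechanical (but tedious) accounting of cross-terms involving $\dotprod{z_i}{z_l}$ and $\dotprod{\vtil_i}{z_l}$, which is the source of the precise forms of $C_j$, $E_j$, $l_j(\delta)$, and $u_j(\delta)$.
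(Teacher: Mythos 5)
Your outline has the right overall shape (induction in $j$, a deflation/Courant--Fischer argument for the eigenvalue bounds, and a control of the cross-overlaps between the $z_i$'s and the $\vtil_i$'s), and the lower bound on $\lamtil_m$ via the test vector $z_m$ against the deflated operator is exactly what the paper does. But two of your steps diverge from the paper in ways that matter, and one of your explanations is wrong.

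First, the paper does \emph{not} use a residual Davis--Kahan $\sin\theta$ argument to establish $\abs{\dotprod{\vtil_j}{z_j}}^2 \geq 1-\ergen_j(\delta)$. It instead expands $\lamtil_m = \vtil_m^*\expec[H]\vtil_m = \sum_i np_i\lambda \abs{\dotprod{z_i}{\vtil_m}}^2$, isolates the $i=m$ term, bounds the tail $\sum_{i>m}$ via Gershgorin ($\norm{Z}_2^2 \leq 1+(k-1)\delta$), bounds the $i<m$ terms via the inductive correlation bounds, and then \emph{combines this with the deflation lower bound on $\lamtil_m$} and rearranges to solve for $\abs{\dotprod{z_m}{\vtil_m}}^2$. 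In particular, the paper never invokes any gap between $\lamtil_{j-1}$ and $\lamtil_{j+1}$ in this lemma; Davis--Kahan appears only in Lemma~\ref{lem:mat_pert_dk_gen} to compare $v_j$ with $\vtil_j$ under the \emph{noise} perturbation $R$.

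Second, and more importantly, your stated source for the square root in the recursion $\ergen_j = C_j\sqrt{\ergen_{j-1}}$ --- that ``Davis--Kahan bounds $\sin\theta$, not $\sin^2\theta$'' --- is incorrect. A residual bound $\sin\theta(\vtil_j,z_j)\lesssim r/d$ with $r = \norm{(\expec[H]-np_j\lambda I)z_j}_2 = O(\delta n\lambda)$ and gap $d = \Theta(n\lambda)$ would give $1-\abs{\dotprod{\vtil_j}{z_j}}^2 \lesssim \delta^2$, uniformly in $j$, \emph{not} the rapidly degrading $\ergen_j \sim \delta^{2^{-(j-1)}}$ of the paper. The actual square root in the paper enters through the triangle-inequality step bounding the cross overlaps
\[
\norm{z_m - \vtil_j\vtil_j^* z_m}_2 \;\geq\; \norm{z_m - z_jz_j^*z_m}_2 - \norm{(z_jz_j^*-\vtil_j\vtil_j^*)z_m}_2 \;\geq\; \sqrt{1-\delta^2} - \sqrt{2\ergen_j(\delta)},
\]
which yields $\abs{\dotprod{\vtil_j}{z_m}}^2 \leq 4\sqrt{2\ergen_j(\delta)}$ --- a square-root loss arising from expressing a linear-in-$\sqrt{\ergen_j}$ defect of a norm that is then squared. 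Similarly, the paper's $u_j(\delta)$ explicitly contains $\sqrt{\ergen_i(\delta)}$ terms coming from $\abs{\dotprod{z_i}{\vtil_m}}^2 \leq 4\sqrt{\ergen_i(\delta)}$; your Courant--Fischer upper bound restricted to $\mathrm{span}(z_1,\dots,z_{j-1})^\perp$ never touches the $\vtil_i$'s, so the $\sqrt{\ergen_i}$ terms simply cannot appear from that argument and the constants will not ``precisely'' match the paper's $u_j(\delta)$. If anything, carrying your scheme out carefully would produce a different (likely sharper, $\delta$-rate) set of constants, so the induction would still close, but not in the paper's form; the assertion that it reproduces $\ergen_j(\delta)$, $l_j(\delta)$, $u_j(\delta)$ as given is unverified and not believable without a concrete calculation.

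There is also a minor circularity risk you should address: your Davis--Kahan gap uses $\lamtil_j - \lamtil_{j+1}$, but at inductive step $j$ the bound on $\lamtil_{j+1}$ has not yet been established. This can be sidestepped by bounding $\lamtil_{j+1} \leq np_{j+1}\lambda + O(\delta n\lambda)$ once and for all via Weyl applied to the Gram matrix $Z^*Z = I + E$ with $\norm{E}_2 \leq (k-1)\delta$, but you would need to say so explicitly.
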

%
\begin{proof}
 See Appendix \ref{app:subsec_deflation_gen}.
\end{proof}
Our next step is to provide a generalization of 
Lemma \ref{lem:mat_pert_dk}. Recall that 
$\lambda_1 \geq \lambda_2 \geq \cdots \lambda_n$ (resp. $v_1,v_2,\dots,v_n$) 
are denoted to be the eigenvalues (resp. eigenvectors) of $H$. Our goal is 
to quantify the deviation of $v_i$ from $\vtil_i$ for each $i=1,\dots,k$. Before stating the lemma we need to define some additional notation.
%
%
\begin{lemma} \label{lem:mat_pert_dk_gen}
Under the notation and conditions in Lemma \ref{lem:deflation_gen}, let $\delta$ additionally satisfy the following conditions for $\mu \in [0,1/2]$.
\begin{enumerate}
\item $\ergen_j(\delta) \leq \mu^2\left(\frac{p_j - p_{j+1}}{2E_{j+1}} \right)^2$ for $1 \leq j \leq k-2$,

\item and $\ergen_{k-1}(\delta) \leq \mu^2\min\set{\left(\frac{p_k}{2\widetilde{E}} \right)^2, \left(\frac{p_{k-1} - p_k}{2E_{k}} \right)^2}$, 
\end{enumerate}
with $E_j, \widetilde{E}$ defined in \eqref{eq:E_j_def}, \eqref{eq:Etilde_def} respectively.
If $\norm{R}_2 \leq \triangle$ with 
$\triangle \leq \mu\frac{n\lambda}{2}\min_{1 \leq j \leq k}\set{p_j-p_{j+1}}$, 
it then follows for each $j = 1,\dots,k$ that $\abs{\dotprod{\vtil_j}{v_j}}^2 \geq 1 - \left(\frac{\mu}{2(1-\mu)}\right)^2$.
\end{lemma}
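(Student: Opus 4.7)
}
The strategy mirrors the bi--synchronization argument in Lemma \ref{lem:mat_pert_dk}, but now applied individually to each of the top $k$ eigenvectors. The central tool is the Davis--Kahan $\sin\Theta$ theorem, which for each $j \in \{1,\dots,k\}$ yields
\begin{equation*}
\abs{\dotprod{\vtil_j}{v_j}}^2 \;\geq\; 1 - \left(\frac{\triangle}{\gamma_j - \triangle}\right)^2, \qquad \gamma_j := \min\set{\lamtil_{j-1} - \lamtil_j,\; \lamtil_j - \lamtil_{j+1}},
\end{equation*}
with the conventions $\lamtil_0 = +\infty$ and $\lamtil_{k+1}=0$ (since $\expec[H]$ has rank $k$). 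Thus the whole task reduces to verifying that, under the stated hypotheses, $\gamma_j \geq 2\triangle/\mu \cdot (1-\mu)/(1-\mu/2)$ or, more simply, to matching the same budget as in the $k=2$ case and obtaining $\triangle/(\gamma_j - \triangle) \leq \mu/(2(1-\mu))$.

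The first step is to convert Lemma \ref{lem:deflation_gen} into a clean lower bound on each eigenvalue gap. Writing $\lamtil_{j-1} \geq n p_{j-1}\lambda - l_{j-1}(\delta)$ and $\lamtil_j \leq n p_j\lambda + u_j(\delta)$, and similarly for $\lamtil_j - \lamtil_{j+1}$ (with $\lamtil_k - \lamtil_{k+1} = \lamtil_k \geq n p_k\lambda - l_k(\delta)$ for the last eigenvalue), we obtain
\begin{equation*}
\gamma_j \;\geq\; n\lambda\bigl(\min\set{p_{j-1}-p_j,\;p_j-p_{j+1}}\bigr) \;-\; \bigl(\max\set{l_{j-1}(\delta)+u_j(\delta),\; l_j(\delta) + u_{j+1}(\delta)}\bigr).
\end{equation*}
The second step is to absorb the perturbation terms $l_{\cdot}(\delta), u_{\cdot}(\delta)$. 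Using the monotonicity assumption $\ergen_1(\delta) \leq \cdots \leq \ergen_{k-1}(\delta)$ and the inequality $\delta \leq \sqrt{2\ergen_{j}(\delta)}$ from the first hypothesis, all the $\sqrt{\ergen_i(\delta)}$ and $\delta$ factors entering the definitions of $l_j$ and $u_{j+1}$ can be bounded uniformly by $\sqrt{\ergen_j(\delta)}$ up to the factor $\sqrt{2}$. After collecting the combinatorial coefficients and matching them against the definition of $E_{j+1}$ in \eqref{eq:E_j_def} (respectively $\widetilde{E}$ in \eqref{eq:Etilde_def} for the $j=k$ endpoint), one obtains
\begin{equation*}
l_j(\delta) + u_{j+1}(\delta) \;\leq\; n\lambda E_{j+1} \sqrt{\ergen_j(\delta)}, \qquad l_k(\delta) \;\leq\; n\lambda \widetilde{E}\sqrt{\ergen_{k-1}(\delta)}.
\end{equation*}
The hypotheses $\ergen_j(\delta) \leq \mu^2\bigl((p_j-p_{j+1})/(2E_{j+1})\bigr)^2$ for $j \leq k-2$, together with the $j=k-1$ clause involving both $\widetilde{E}$ and $E_k$, then immediately give $l_j(\delta)+u_{j+1}(\delta) \leq (\mu/2)\, n\lambda (p_j - p_{j+1})$. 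Hence
\begin{equation*}
\gamma_j \;\geq\; \Bigl(1 - \tfrac{\mu}{2}\Bigr)\, n\lambda \min_{1 \leq j \leq k}\set{p_j - p_{j+1}} \;\geq\; \frac{2\triangle}{\mu}\Bigl(1-\tfrac{\mu}{2}\Bigr),
\end{equation*}
where the last inequality invokes the hypothesis $\triangle \leq (\mu/2)\, n\lambda \min_j(p_j-p_{j+1})$.

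Plugging this lower bound on $\gamma_j$ back into the Davis--Kahan estimate yields
\begin{equation*}
\frac{\triangle}{\gamma_j - \triangle} \;\leq\; \frac{\triangle}{(2/\mu)(1-\mu/2)\triangle - \triangle} \;=\; \frac{\mu}{2(1-\mu)},
\end{equation*}
which produces the claimed inequality $\abs{\dotprod{\vtil_j}{v_j}}^2 \geq 1 - \bigl(\mu/(2(1-\mu))\bigr)^2$. The main obstacle in carrying out this plan is the second step: rewriting the somewhat intricate expressions $l_j(\delta)$ and $u_{j+1}(\delta)$ as bounded multiples of $\sqrt{\ergen_j(\delta)}$, matching exactly the constants appearing in $E_{j+1}$ and $\widetilde{E}$. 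This requires careful bookkeeping and uses the monotonicity of $\ergen_j(\delta)$ together with $\delta \leq \sqrt{2\ergen_j(\delta)}$ to uniformly dominate all mixed $\delta$ and $\sqrt{\ergen_i}$ contributions; the rest of the argument is a direct application of Davis--Kahan combined with Lemma \ref{lem:deflation_gen}.
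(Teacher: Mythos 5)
Your proposal is correct and follows essentially the same route as the paper's proof: apply Davis--Kahan together with Weyl to reduce the problem to bounding the spectral gaps $\lamtil_{j-1}-\lamtil_j$ and $\lamtil_j-\lamtil_{j+1}$, then control these gaps using the $l_j(\delta),u_j(\delta)$ bounds from Lemma \ref{lem:deflation_gen} and absorb them into the $E_{j+1}$ (and $\widetilde E$) constants via the hypotheses. The only cosmetic difference is that you handle all indices $j=1,\dots,k$ uniformly using the conventions $\lamtil_0 = +\infty$ and $\lamtil_{k+1}=0$, whereas the paper splits the argument into three explicit cases ($j=1$, $1<j<k$, $j=k$); both yield identical final bounds.
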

%
%
\begin{proof}
See Appendix \ref{app:subsec_mat_pert_dk_gen}.
\end{proof}
Next, we proceed to bound the spectral norm of the perturbation matrix $R$. 
This is shown in the following lemma,  the proof of which is similar to that 
of Lemma \ref{lem:rand_pert_bd}.

\begin{lemma} \label{lem:rand_pert_bd_gen}
Let $R = \real(R) + \iota \imag(R)$, where $\real(R), \imag(R) \in \matR^{n \times n}$ 
are respectively symmetric and skew-symmetric matrices. Denote 
\begin{align} 
C(\lambda,p_1,\dots,p_k) = \sum_{l=1}^k &2p_l\lambda\left[(1-p_l\lambda)^2 + \left(\sum_{l' \neq l} p_{l'} \lambda\right)^2 \right] 
+ (1-\sum_{l=1}^k p_l)\lambda\left[\frac{1}{2} + \left(\sum_{l'=1}^k p_{l'}\lambda \right)^2\right] \nonumber \\
&+ (1-\lambda)\left(\sum_{l'=1}^k p_{l'}\lambda \right)^2, \label{eq:ksync_cterm_1}
\end{align}
and let $\sigtil(\lambda,p_1,\dots,p_l) > 0$ be such that 
\begin{equation} \label{eq:infty_norm_grn_bd_1}
\max_{i,j} \set{\norm{\real(R)_{ij}}_{\infty}, \norm{\imag(R)_{ij}}_{\infty}} \leq \sigtil(\lambda,p_1,\dots,p_k).
\end{equation}
Then for any $\varepsilon \geq 0$, there exists a universal constant $c_{\varepsilon} > 0$ such that 
\begin{equation} \label{eq:norm_randpert_gen_bd}
   \norm{R}_2 \leq (2+\varepsilon) 6\sqrt{2 C(\lambda,p_1,\dots,p_k) n}, 
\end{equation}
with probability at least 
$1 - 3n\exp\left(- \frac{8 C(\lambda,p_1,\dots,p_k) n}{\sigtil(\lambda,p_1,\dots,p_k)^2 c_{\varepsilon}} \right)$.
\end{lemma}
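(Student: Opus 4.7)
The proof proceeds along the same lines as that of Lemma \ref{lem:rand_pert_bd} for the $k=2$ case, with the main adaptation being the computation of the variance parameter, which must now account for $k$ groups of angles.

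First, I would decompose $R = \real(R) + \iota \imag(R)$ and apply the triangle inequality to obtain $\norm{R}_2 \leq \norm{\real(R)}_2 + \norm{\imag(R)}_2$. Since $R$ is Hermitian with zero diagonal, $\real(R) \in \matR^{n \times n}$ is real symmetric while $\imag(R) \in \matR^{n \times n}$ is real skew-symmetric (equivalently, $\iota \imag(R)$ is Hermitian, so standard spectral-norm concentration applies after the obvious identification). In both cases, the entries above the diagonal are independent, mean-zero, and uniformly bounded in magnitude by $\sigtil(\lambda,p_1,\dots,p_k)$, as stipulated in \eqref{eq:infty_norm_grn_bd_1}.

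Next, I would apply a sharp concentration inequality of Bandeira--van Handel type for random symmetric matrices with independent bounded entries, namely
\begin{equation*}
\prob\bigl(\norm{X}_2 \geq (2+\varepsilon)\sigma(X) + t\bigr) \leq n \exp\bigl(-t^2/(c_\varepsilon \sigma_*(X)^2)\bigr),
\end{equation*}
to each of $\real(R)$ and $\imag(R)$, where $\sigma(X)^2 := \max_i \sum_j \expec[X_{ij}^2]$ and $\sigma_*(X) := \max_{i,j}\norm{X_{ij}}_{\infty}$. The key computational step is then to bound both $\sigma(\real(R))^2$ and $\sigma(\imag(R))^2$ by $C(\lambda,p_1,\dots,p_k)\, n$ with $C$ as defined in \eqref{eq:ksync_cterm_1}.

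The variance calculation is carried out entry-by-entry. For each $i < j$, I would condition on which of the $k+2$ outcomes in \eqref{eq:ksync_noise_form} occurs and bound $\expec[\real(R_{ij})^2]$ and $\expec[\imag(R_{ij})^2]$ accordingly. In the noise-corrupted case (probability $(1-\sum_l p_l)\lambda$), the uniform randomness of $N_{ij}\sim U[0,2\pi)$ yields $\expec[\cos^2 N_{ij}] = \expec[\sin^2 N_{ij}] = 1/2$, which accounts for the $1/2$ factor in \eqref{eq:ksync_cterm_1}. In each of the $k$ ``signal'' cases, the mean-centered entry has the form $e^{\iota(\theta_{l,i} - \theta_{l,j})} - \sum_{l'} p_{l'}\lambda e^{\iota(\theta_{l',i} - \theta_{l',j})}$, whose squared modulus is bounded uniformly in the angles via the triangle inequality and the estimate $|a-b|^2 \leq 2|a|^2 + 2|b|^2$, producing the terms matching the first $k$ summands in \eqref{eq:ksync_cterm_1}. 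Since $\expec[\real(R_{ij})^2], \expec[\imag(R_{ij})^2] \leq \expec|R_{ij}|^2$, summing over $j$ and using that the bound is uniform in $j$ gives $\sigma(X)^2 \leq C(\lambda,p_1,\dots,p_k)\, n$.

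The main obstacle is the combinatorial bookkeeping in this variance computation: the cross terms depend on the unknown angles $\theta_{l,i},\theta_{l',i}$, so the bound must be taken uniformly over all admissible configurations, and the number of such cross terms grows quadratically in $k$. Once the variance bound is in hand, the claim \eqref{eq:norm_randpert_gen_bd} follows by choosing $t \asymp \sqrt{C(\lambda,p_1,\dots,p_k)\, n}$ in the concentration inequality applied separately to $\real(R)$ and $\imag(R)$; the two resulting tail bounds combine via a union bound into the claimed probability $1 - 3n\exp\bigl(-8 C(\lambda,p_1,\dots,p_k) n/(\sigtil(\lambda,p_1,\dots,p_k)^2 c_\varepsilon)\bigr)$, with the constant $3$ absorbing the contributions from both applications of the concentration bound.
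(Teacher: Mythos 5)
Your proposal is correct and follows essentially the same route as the paper: split $R = \real(R) + \iota\imag(R)$, bound each spectral norm via the Bandeira--van Handel concentration result for symmetric random matrices with independent bounded entries (using the standard Hermitian-dilation trick for the skew-symmetric part $\imag(R)$ written as $W - W^\top$, which is what produces the factor $4\sqrt{2}$ and the $2n$ in its tail, hence the combined $6\sqrt{2}$ and $3n$), and obtain the variance proxy $\tilde\sigma^2 \le C(\lambda,p_1,\dots,p_k)\,n$ by bounding $\expec[\real(R_{ij})^2]$ case-by-case over the $k$ signal outcomes, the noise outcome (where $\expec[\cos^2 N_{ij}]=\tfrac12$), and the no-edge outcome, exactly as the paper's terms $P_1,P_2,P_3$. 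The only point worth tightening is your parenthetical remark that the per-entry variance bound can be obtained via $\expec[\real(R_{ij})^2]\le\expec|R_{ij}|^2$: that intermediate step is slightly lossy in the noise case ($\expec|e^{\iota N_{ij}}|^2 = 1$ rather than $\tfrac12$), so to arrive at the exact constant $C$ of \eqref{eq:ksync_cterm_1} one should bound $\expec[\real(R_{ij})^2]$ and $\expec[\imag(R_{ij})^2]$ directly rather than through the modulus.
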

%
%
\begin{proof}
See Appendix \ref{app:subsec_rand_pert_bd_gen}.
\end{proof}
Plugging $\triangle = (2+\varepsilon) 6\sqrt{2 C(\lambda,p_1,\dots,p_k) n}$ in 
Lemma \ref{lem:mat_pert_dk_gen}, we obtain 
\begin{equation}\label{eq:n_cond_gen}
\triangle \leq \mu\frac{n\lambda}{2}\min_{1 \leq j \leq k}\set{p_j-p_{j+1}} 
\Longleftrightarrow 
n \geq \frac{288 (2+\varepsilon)^2 C(\lambda,p_1,\dots,p_k)}{\mu^2 \lambda^2 (\min_{1 \leq j \leq k}\set{p_j-p_{j+1}})^2}.
\end{equation}
Hence if $n$ satisfies the condition in \eqref{eq:n_cond_gen} for $\mu \in [0,1/2]$, 
and $\delta$ satisfies the conditions 
in Lemmas \ref{lem:deflation_gen} and \ref{lem:mat_pert_dk_gen}, it holds with probability at least 
$1 - 3n\exp\left(-\frac{8 C(\lambda,p_1,\dots,p_k) n}{\sigtil(\lambda,p_1,\dots,p_k)^2 c_{\varepsilon}}\right)$ 
that 
\begin{equation*}
\abs{\dotprod{\vtil_j}{z_j}}^2 \geq 1 - \ergen_j(\delta), \quad
\abs{\dotprod{\vtil_j}{v_j}}^2 \geq 1 - \left(\frac{\mu}{2(1-\mu)}\right)^2; \quad j = 1,\dots,k.
\end{equation*}
Finally, by using Proposition \ref{prop:chain_bd_dotprod} for each $j = 1,\dots,k$ with 
$\varep_j = \ergen_j(\delta)$ and $\varepbar_j = \left(\frac{\mu}{2(1-\mu)}\right)^2$, we obtain 
\begin{equation*}
\abs{\dotprod{v_j}{z_j}}^2 \geq 1 - \left(\sqrt{\ergen_j(\delta)} + \frac{\mu}{1-\mu}\right)^2.
\end{equation*}
This completes the proof.

\section{Numerical Experiments}   \label{sec:experimentsSync} 

This section details the outcomes of a variety of numerical experiments of our proposed  
\Cref{Algo:kSync} for the general setting of $k$-synchronization, showcasing its robustness to noise and sampling sparsity.  We evaluate performance in a number of settings, where we vary the number of nodes  $n$, the number of collections of angles $k$, and show goodness of recovery as a function of the sampling sparsity of the measurement graph $G$. Furthermore, we also experiment with a modified version of \Cref{Algo:kSync} which employs an additional normalization step, and also compare with the analogous SDP relaxation \cref{eq:SDP_program_SYNC} for $k$-synchronization, in a subset of the experiments.  
We measure performance by the correlation between the group of estimated and ground truth angles. More specifically, if $\theta_i$, resp. $\hat{\theta}_i$, denotes the ground truth, resp. estimated, angles for $i=1, \ldots, n$, we compute the correlation as
\begin{equation*} 
\mbox{Corr}(\theta, \hat{\theta}) = \abs{ \dotprod{z}{\hat{z}}}, \quad  \mbox{with}  \quad z_i =  \frac{1}{\sqrt{n}} e^{\imath \theta_i}, \;\; \mbox{and} \;\;  \hat{z}_i = \frac{1}{\sqrt{n}} e^{\imath \hat{\theta}_i}, \quad i = 1, \ldots, n.
\end{equation*}
%
%

\begin{remark}
For convenience, one may construe the given measurement graph $G$ as a union of $k+1$ edge-disjoint subgraphs, where $G_1, G_2, \ldots, G_k$ denote the subgraphs of \textit{good} measurements (with respective edge densities $p_1 \lambda, p_2\lambda, \ldots, p_k\lambda$), while the subgraph $W$ contains all the outlier \textit{bad} measurements (with edge density $ \eta \lambda $, where we recall that $\eta = 1-(p_1+p_2+\ldots+p_k)$). 
\end{remark}

\subsection{Setup I: correlation versus graph sparsity}
The plots in Figure \ref{fig:scanID_3_abcd_k2_500_1000}  pertain to the setting of bi-synchronization; the top, respectively bottom, row considers the case of $n=500$, respectively $n=1000$, angles for various sparsity and noise levels. As expected, the recovery of the collection of angles whose good measurement graph $G_1$ is denser  (i.e., higher $p_1$) is significantly better than that for which the measurement graph $G_2$ is sparser (i.e., $ p_1>p_2$). As the noise level increases, the performance gap between the two recovery error curves becomes wider, as expected. For sparse graphs and large levels of noise (as shown in Figure \ref{fig:scanID_3_abcd_k2_500_1000} (c) and (f)), the recovery is significantly worse, especially for the  angles corresponding to the sparser measurement graph $G_2$.




\begin{figure}[!ht]
\vspace{-2mm}
\centering
\raisebox{0.75in}{\rotatebox[origin=t]{90}{$k=2; \;\;  n=500$}}\hspace{1mm}
\subcaptionbox[]{ $(p_1, p_2) = (0.3, 0.2)$, $ \eta = 0.50$  
}[ 0.31\textwidth ]
{\includegraphics[width=0.31\textwidth, trim=0cm 0cm 0cm 0.7cm,clip] {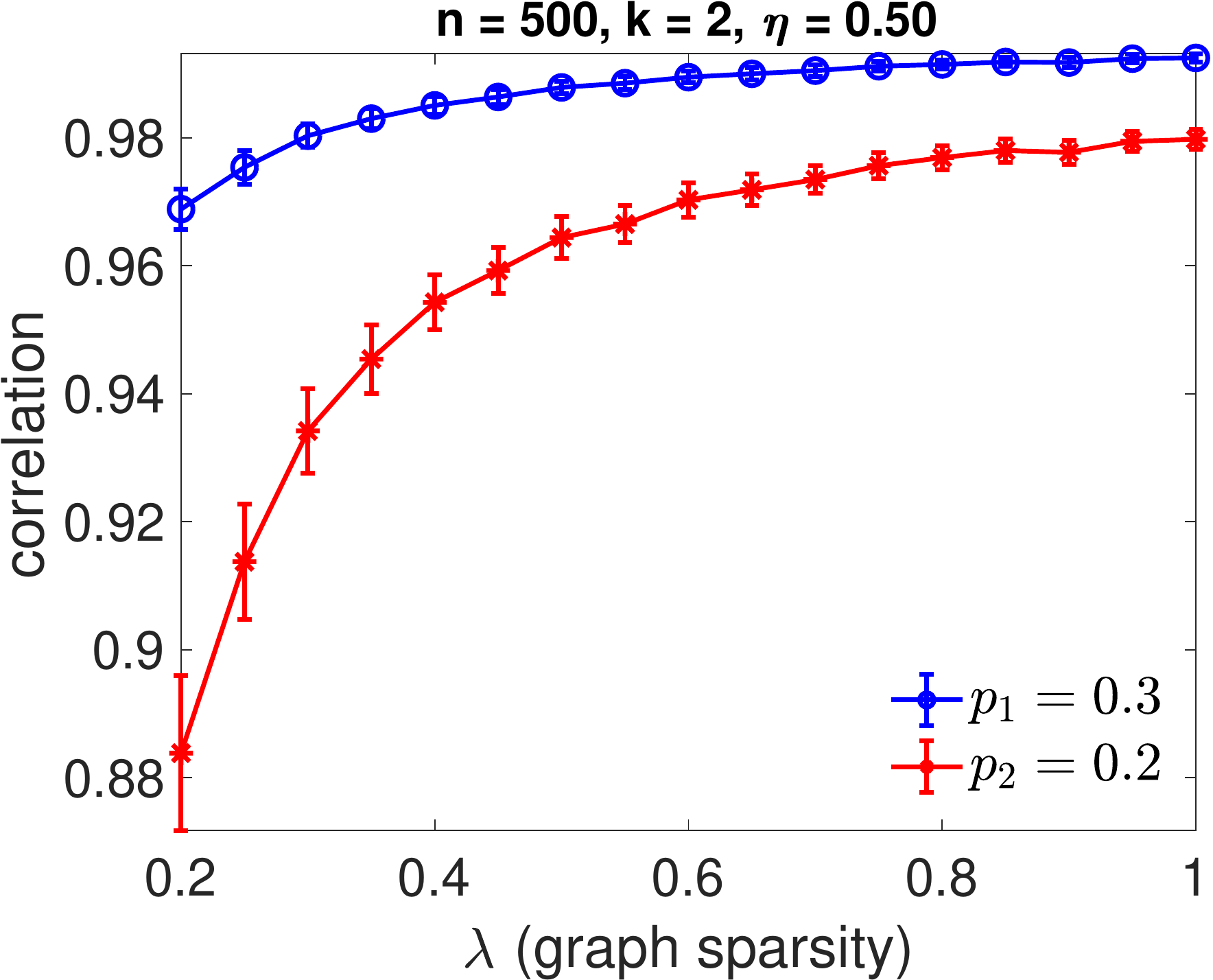} }
%
\subcaptionbox[]{ $(p_1, p_2) = (0.2,0.1)$, $ \eta = 0.70$   
}[ 0.31\textwidth ]
{\includegraphics[width=0.31\textwidth, trim=0cm 0cm 0cm 0.7cm,clip] {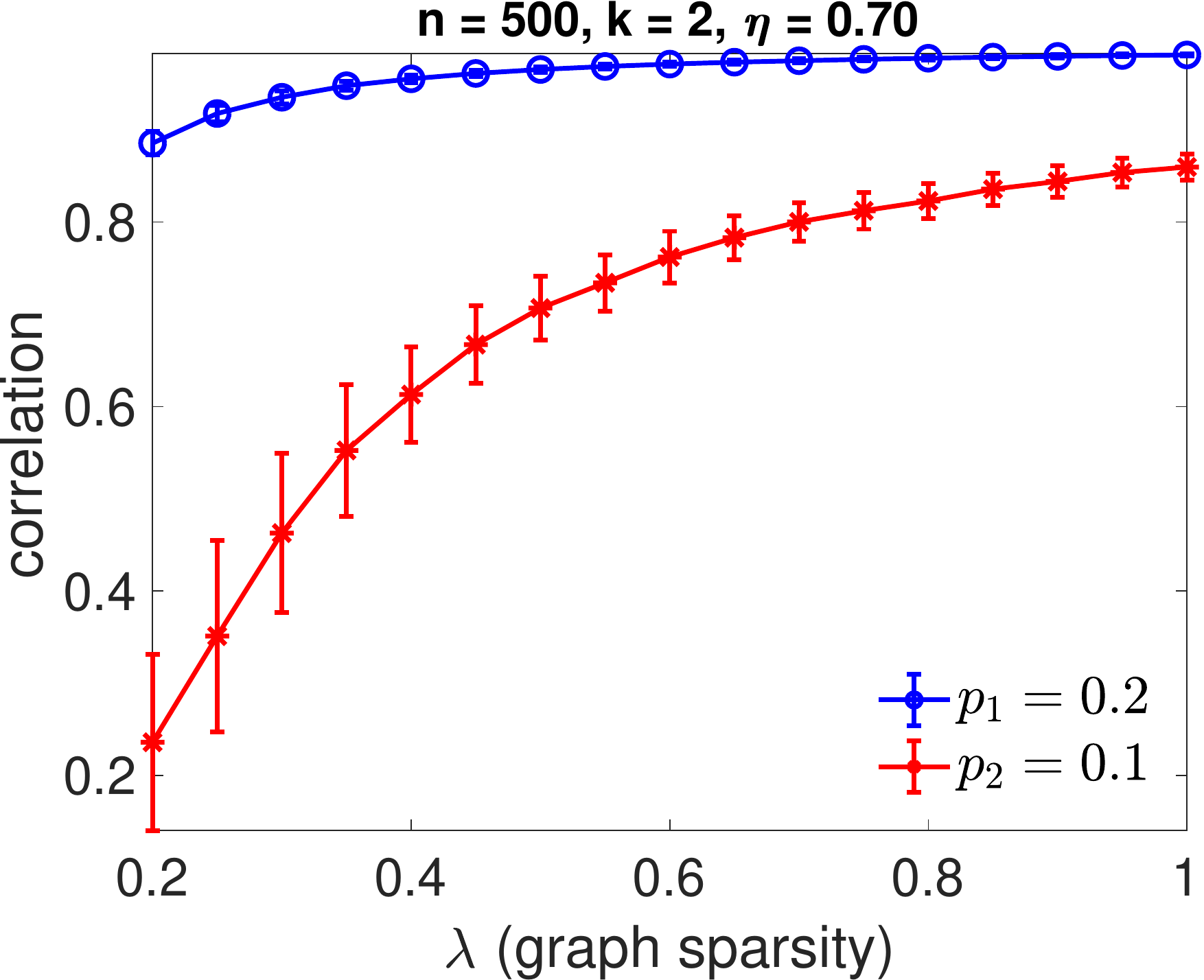} }
%
\subcaptionbox[]{ $(p_1, p_2) = (0.1,0.05)$, $\eta = 0.85$
}[ 0.31\textwidth ]
{\includegraphics[width=0.31\textwidth, trim=0cm 0cm 0cm 0.7cm,clip] {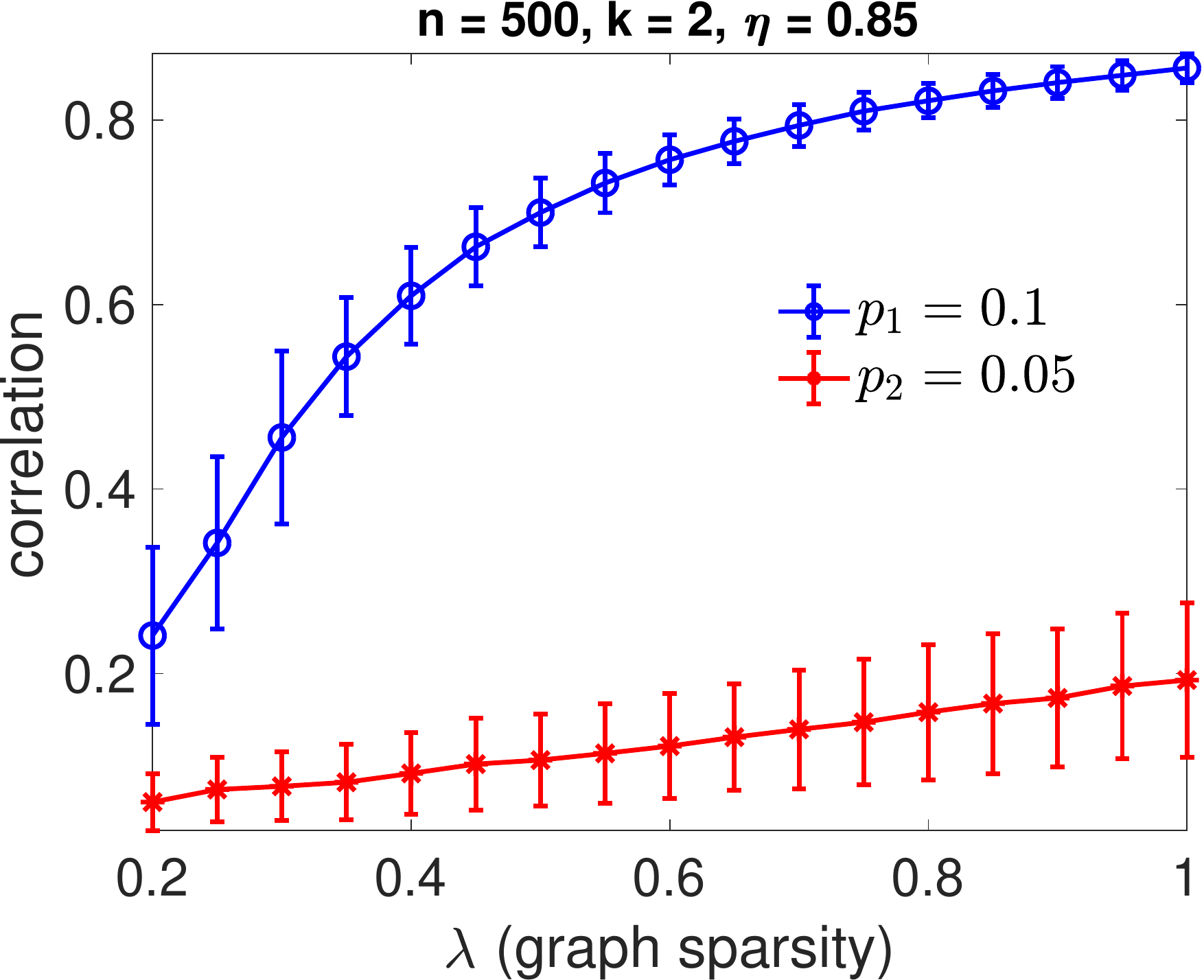} }
%
%

\raisebox{0.75in}{\rotatebox[origin=t]{90}{$k=2;  \;\; n=1000$}}\hspace{1mm}
\subcaptionbox[]{ $(p_1, p_2) = (0.3,0.2)$, $\eta = 0.50$
}[ 0.31\textwidth ]
{\includegraphics[width=0.31\textwidth, trim=0cm 0cm 0cm 0.7cm,clip] {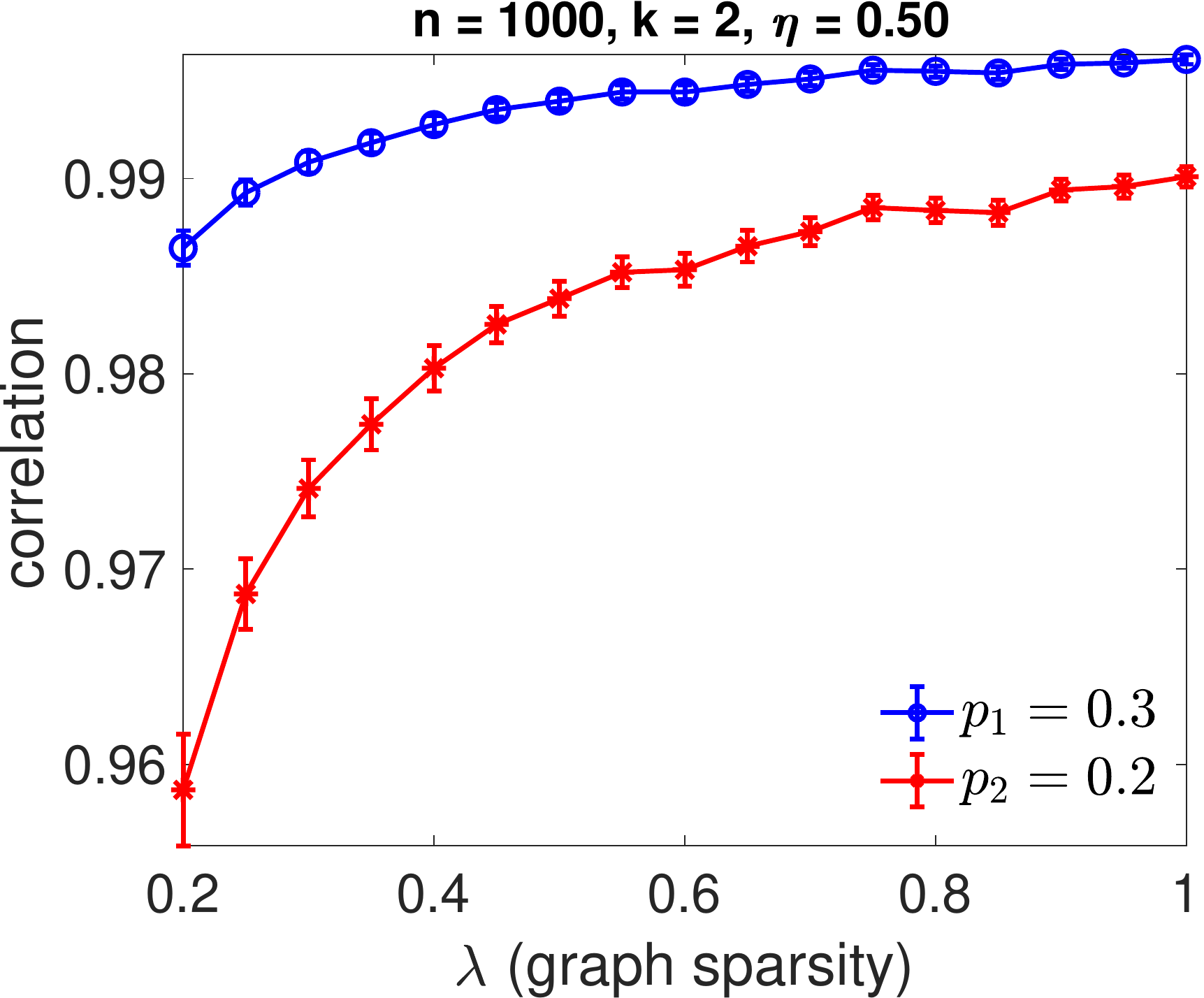} }
%
\subcaptionbox[]{ $(p_1, p_2) = (0.20,0.10)$, $\eta = 0.70$
}[ 0.31\textwidth ]
{\includegraphics[width=0.31\textwidth, trim=0cm 0cm 0cm 0.7cm,clip] {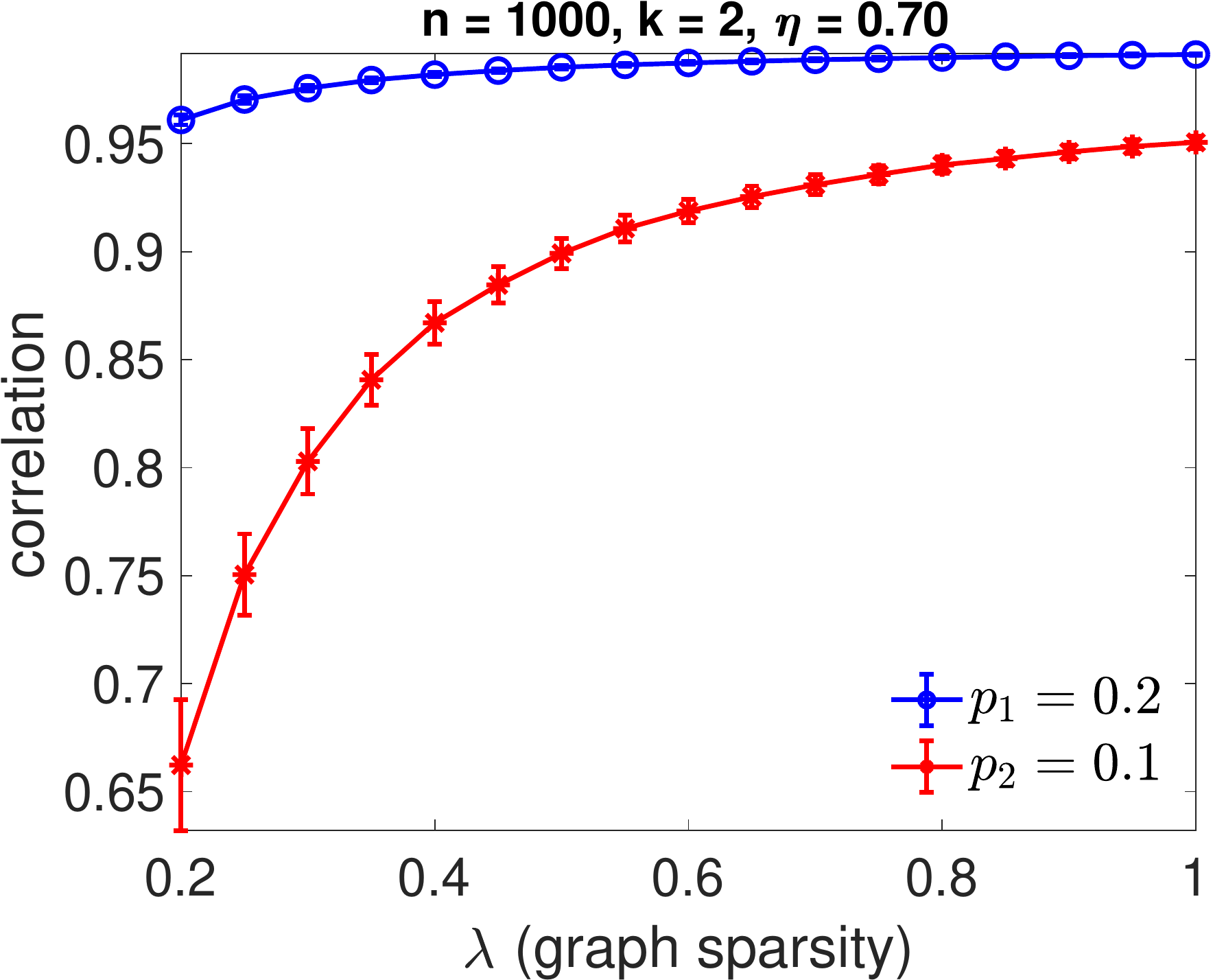} }
%
\subcaptionbox[]{ $(p_1, p_2) = (0.1,0.05)$, $\eta = 0.85$
}[ 0.31\textwidth ] 
{\includegraphics[width=0.31\textwidth, trim=0cm 0cm 0cm 0.7cm,clip] {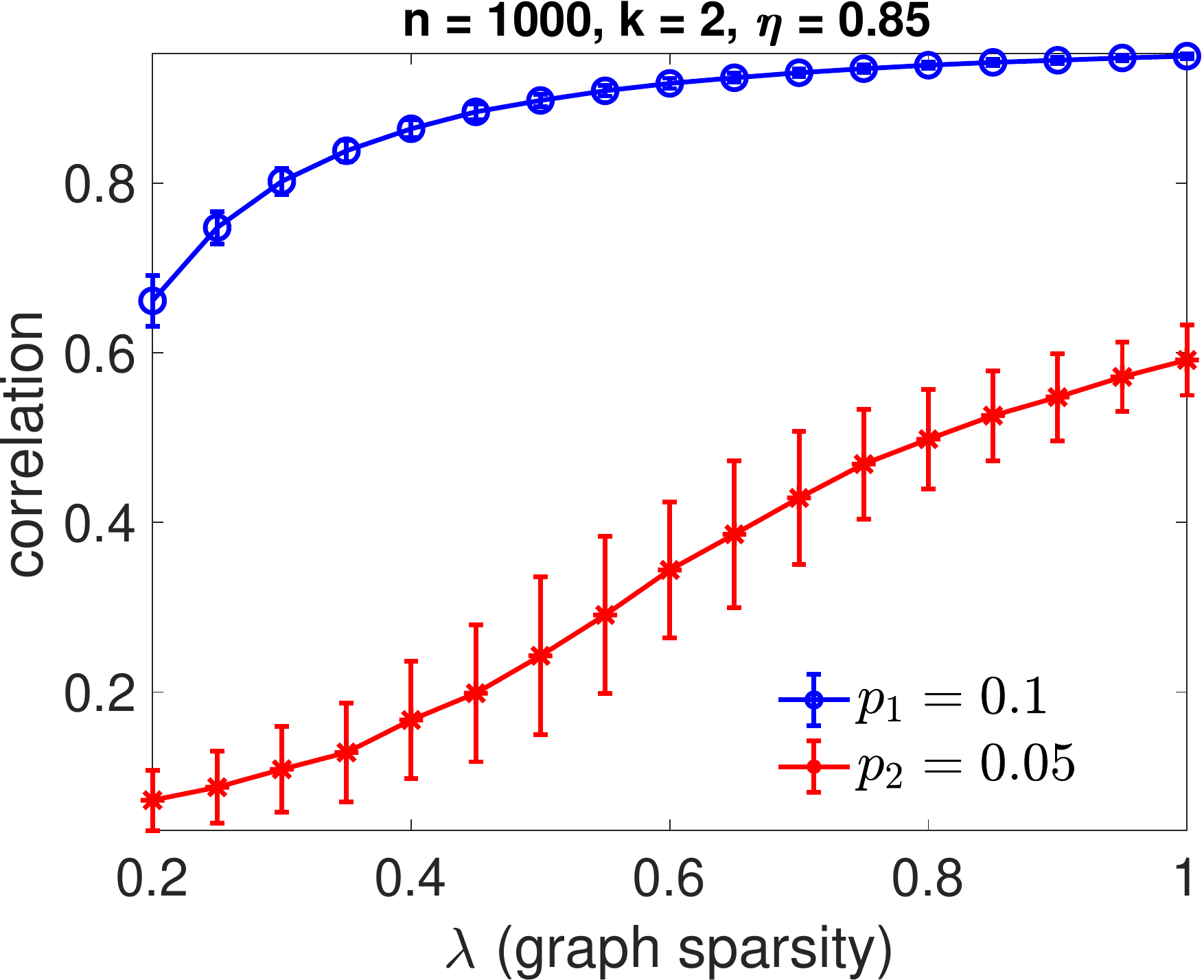} }
\vspace{-2mm}
\captionsetup{width=1.0\linewidth}
\caption[Short Caption]{Experimental Setup I: Correlation of recovered angles with the ground truth for $n=500$ (top), respectively $n=1000$ (bottom), with $k=2$ groups of angles, and various noise levels $\eta \in \{ 0.50, 0.70, 0.85\}$ as a function of $\lambda$ (sparsity of the measurement graph). Results are averaged over $20 \times 20 = 400$ runs (i.e, over 20 random instances of groups of angles, and for each such a group of angles, we consider 20 random instances of the sampling graph according to $p_1$ and $p_2$). 
}
\vspace{-3mm}
\label{fig:scanID_3_abcd_k2_500_1000}
\end{figure}

Figure \ref{fig:scanID_1_abcd_k34_1000} shows similar plots for the case of $k=3$-way (top  row) and $k=4$-way (bottom row) synchronization. When looking at the plots corresponding to $k=3$, as expected, the sparsest measurement graph $G_3$ attains the worst performance (i.e., the black curve corresponding to $p_3$). Furthermore, when the sparsity levels are close to each other, i.e., $p_1 - p_2$ and $p_2 - p_3$ are small, the overall performance  decreases, and the standard deviation error bars increase in magnitude. 
 %
The bottom-row plots of the same Figure \ref{fig:scanID_1_abcd_k34_1000} exhibit similar behaviour for the case of $k=4$-way synchronization. The bigger the gap between adjacent sparsity levels, the bigger the performance gap between the recovery levels of the corresponding collections of angles. For the sparsest setting and large levels of noise (as shown in Figure \ref{fig:scanID_1_abcd_k34_1000} (e) and (f)), we observe a significantly worse recovery for the angles corresponding to the sparser measurement graph $G_4$ (the magenta curve labeled $p_4$).



\begin{figure}[!ht]
\vspace{-2mm}
\captionsetup[subfigure]{justification=centering}
\centering
\raisebox{0.75in}{\rotatebox[origin=t]{90}{k=3}}\hspace{1mm}
\subcaptionbox[]{ 
$(p_1, p_2, p_3) = (0.30, 0.20, 0.10)$, and  $\eta = 0.40$
}[ 0.31\textwidth ]
{\includegraphics[width=0.31\textwidth, trim=0cm 0cm 0cm 0.7cm,clip] {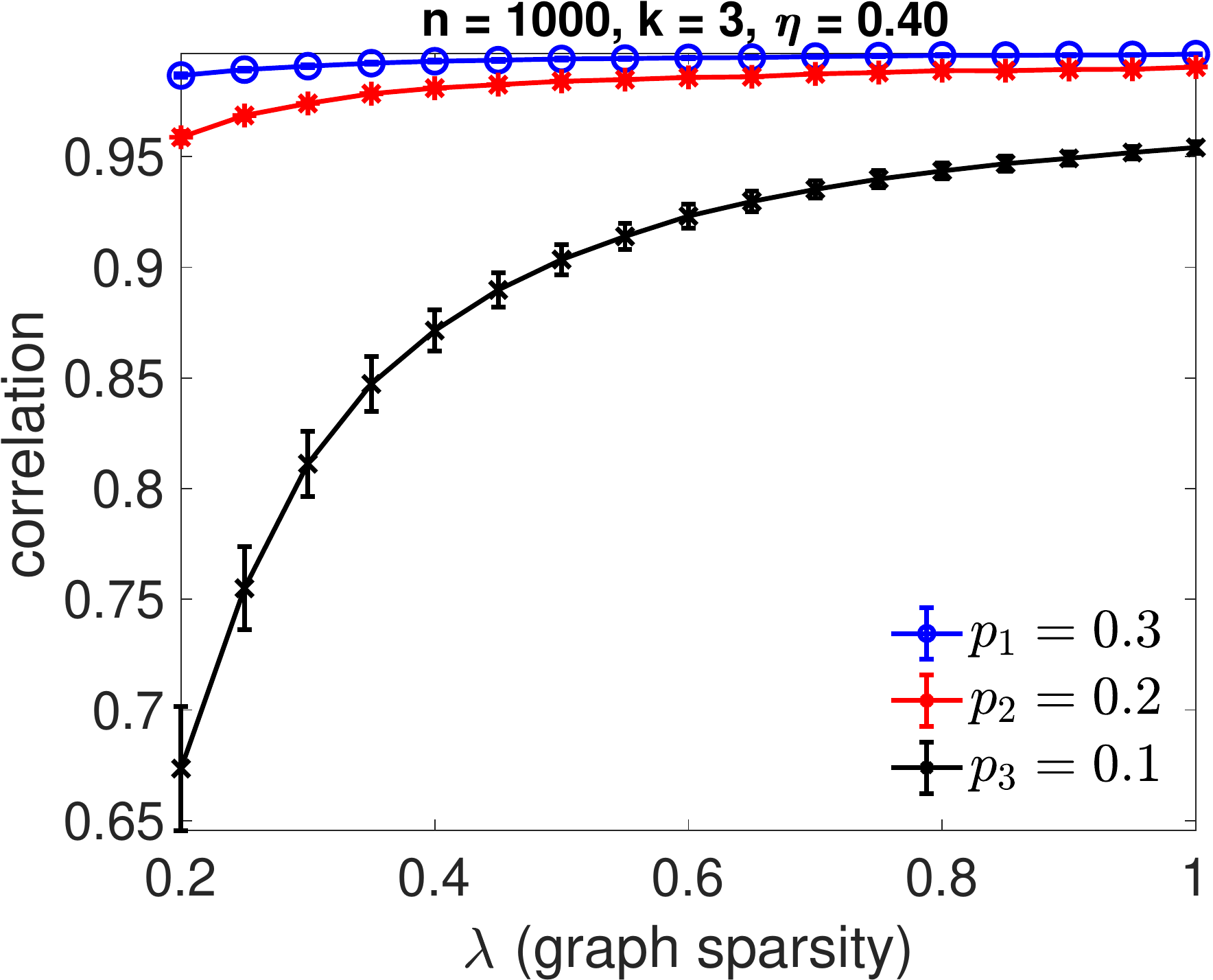} }
%
\subcaptionbox[]{ 
$(p_1, p_2, p_3) = (0.25, 0.20, 0.15)$,  and  $\eta = 0.40$
}[ 0.31\textwidth ]
{\includegraphics[width=0.31\textwidth, trim=0cm 0cm 0cm 0.7cm,clip] {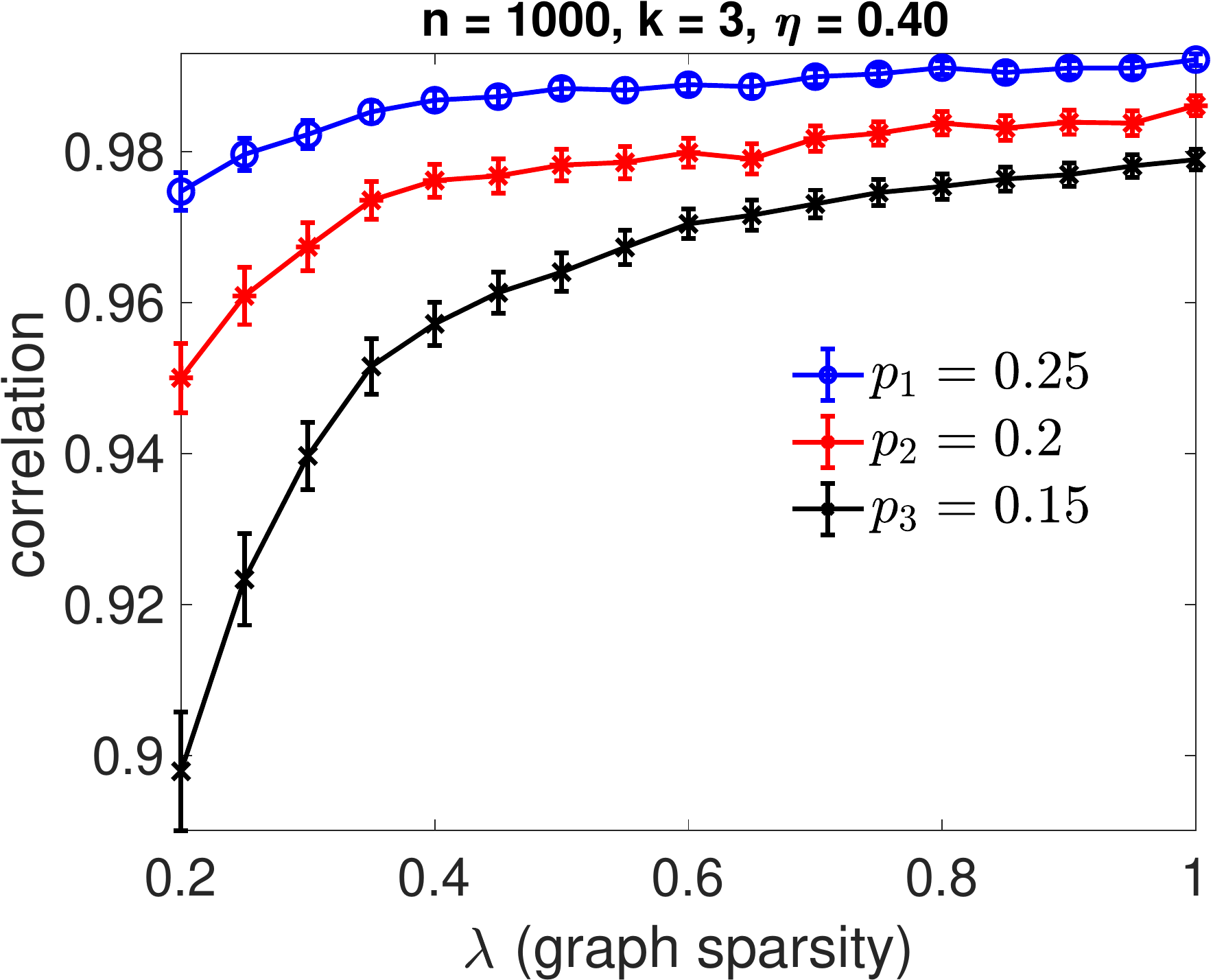} }
%
%
\subcaptionbox[]{ 
$(p_1, p_2, p_3) = (0.20, 0.15, 0.10)$,  and   $\eta = 0.55$
}[ 0.31\textwidth ]
{\includegraphics[width=0.31\textwidth, trim=0cm 0cm 0cm 0.7cm,clip] {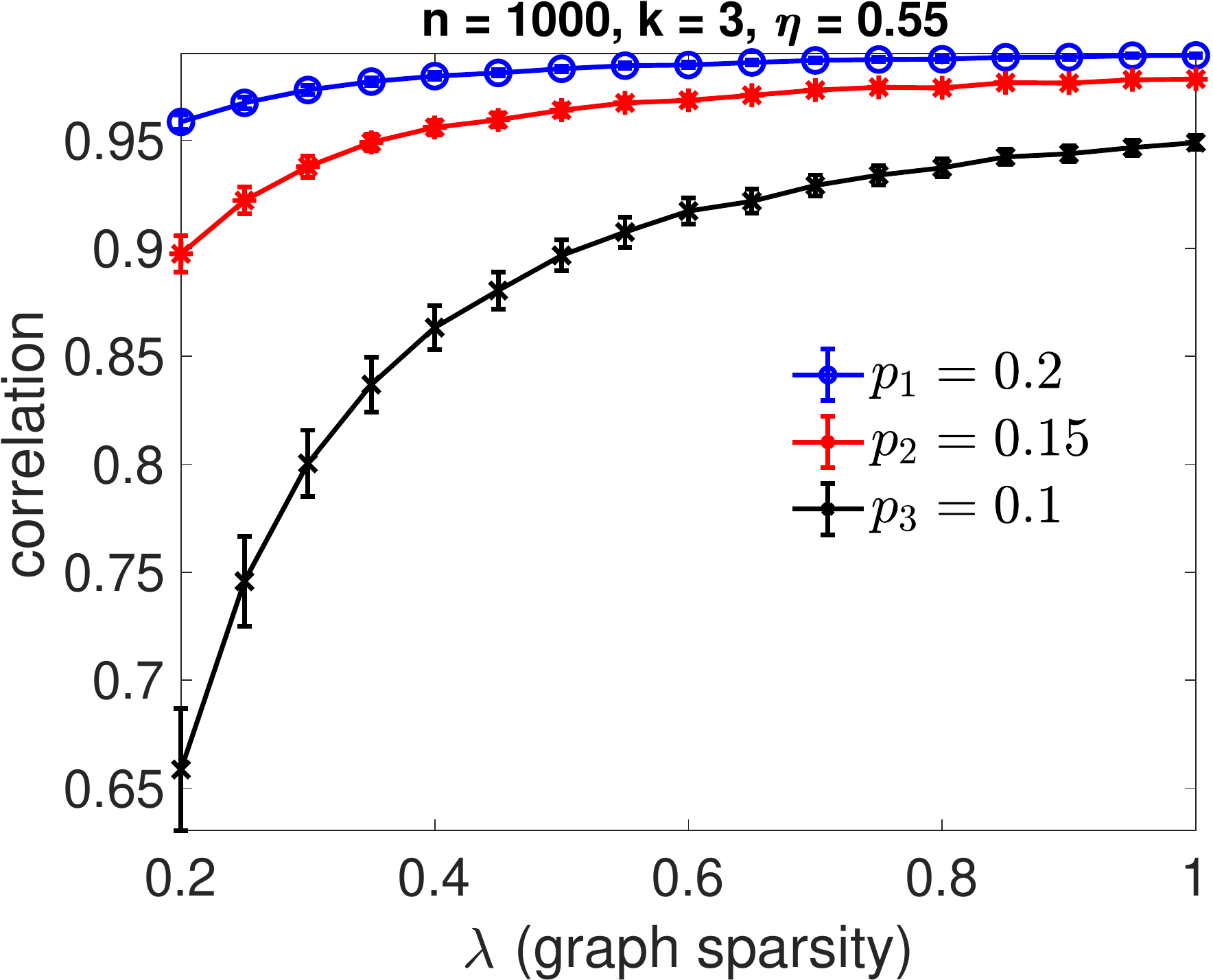} }
%
%
%
%
%
%

\raisebox{0.75in}{\rotatebox[origin=t]{90}{k=4}}\hspace{1mm}
\subcaptionbox[]{ $(p_1, p_2, p_3, p_4) = (0.25, 0.20, 0.15, 0.10)$, and  $\eta = 0.30$
}[ 0.31\textwidth ]
{\includegraphics[width=0.31\textwidth, trim=0cm 0cm 0cm 0.7cm,clip] {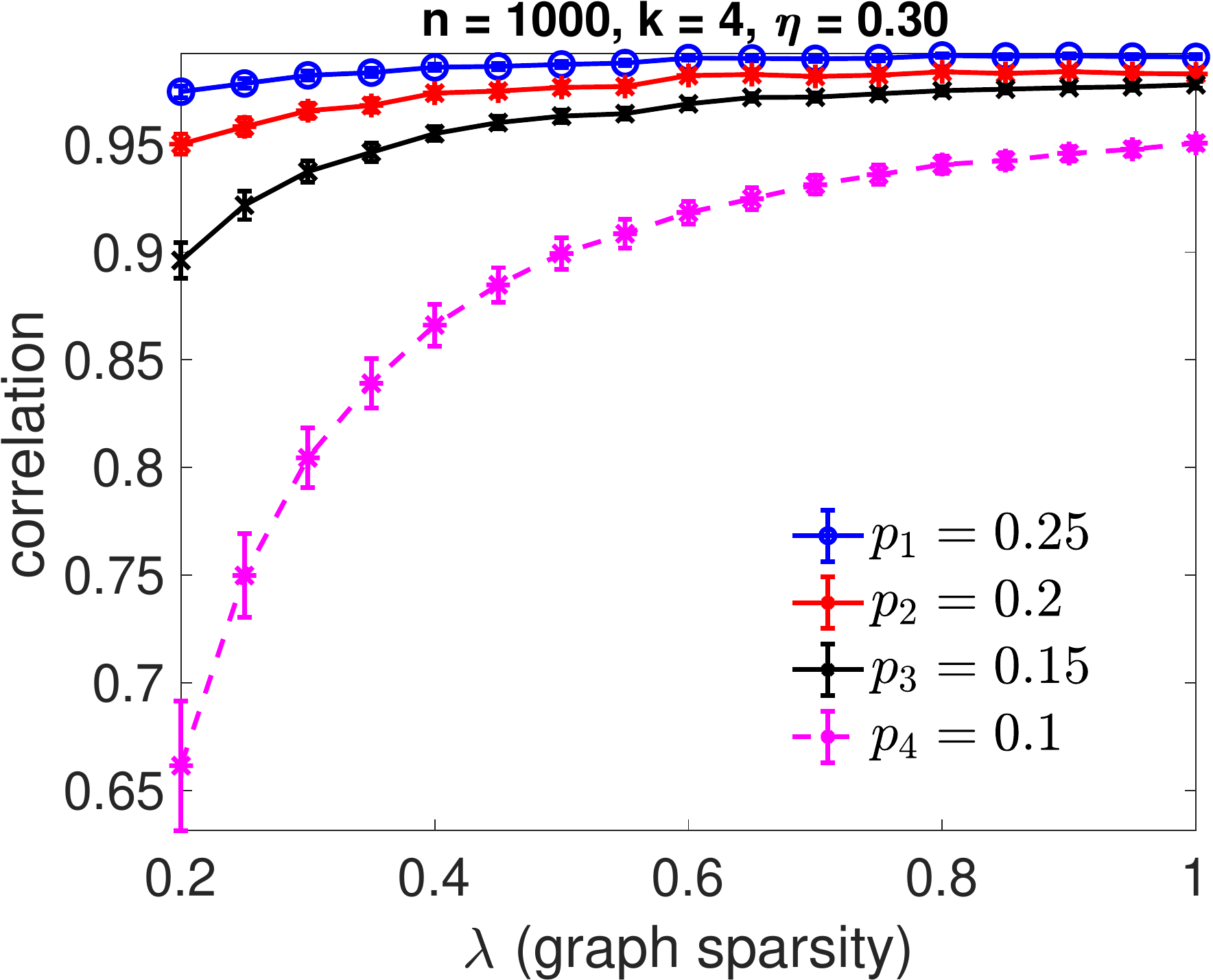} }
%
%
\subcaptionbox[]{ $(p_1, p_2, p_3, p_4) = (0.20, 0.15, 0.10, 0.05)$, and  $\eta = 0.50$
}[ 0.31\textwidth ]
{\includegraphics[width=0.31\textwidth, trim=0cm 0cm 0cm 0.7cm,clip] {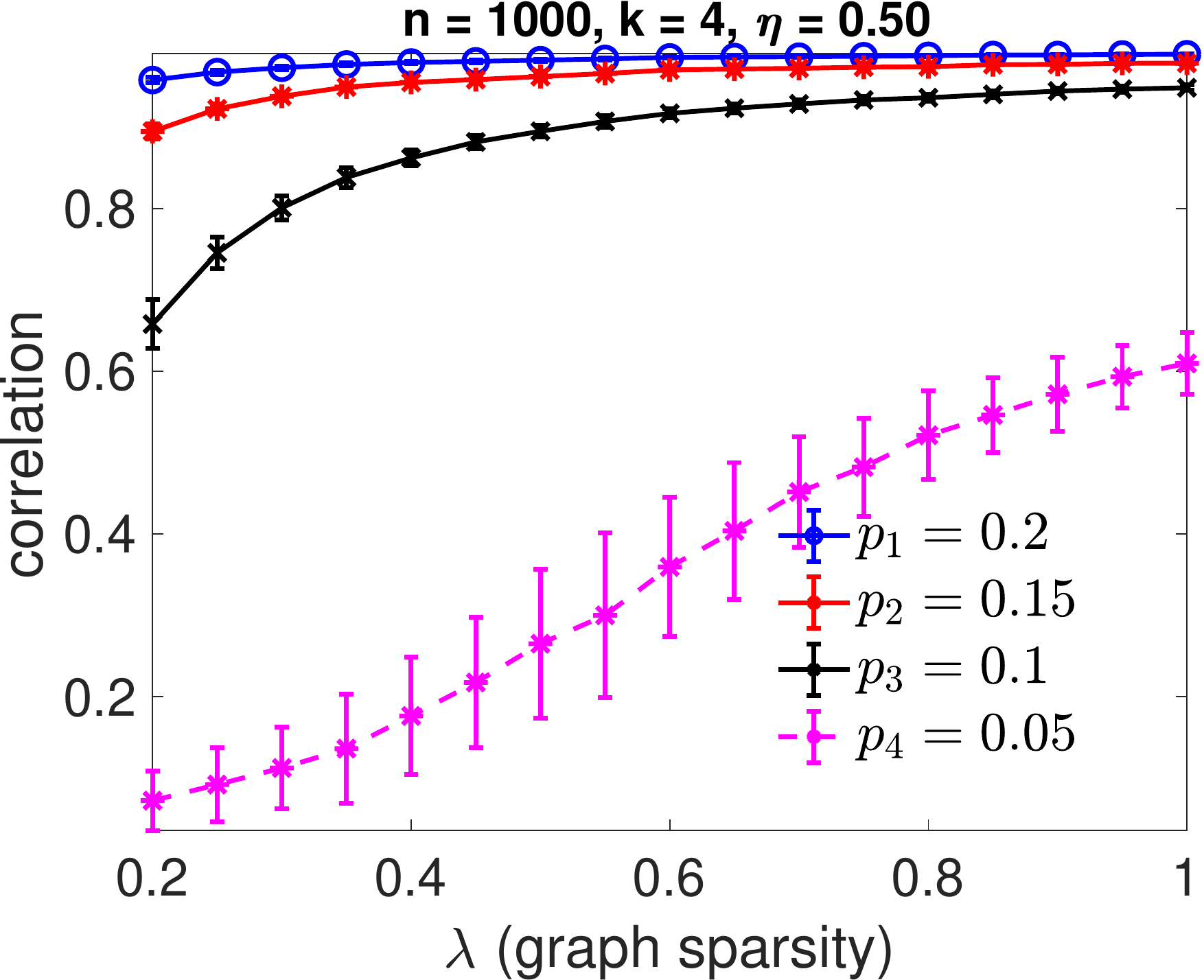} }
%
\subcaptionbox[]{ $(p_1, p_2, p_3, p_4) = (0.14, 0.11, 0.08, 0.05)$, and  $\eta = 0.62$
}[ 0.31\textwidth ]
{\includegraphics[width=0.31\textwidth, trim=0cm 0cm 0cm 0.7cm,clip] {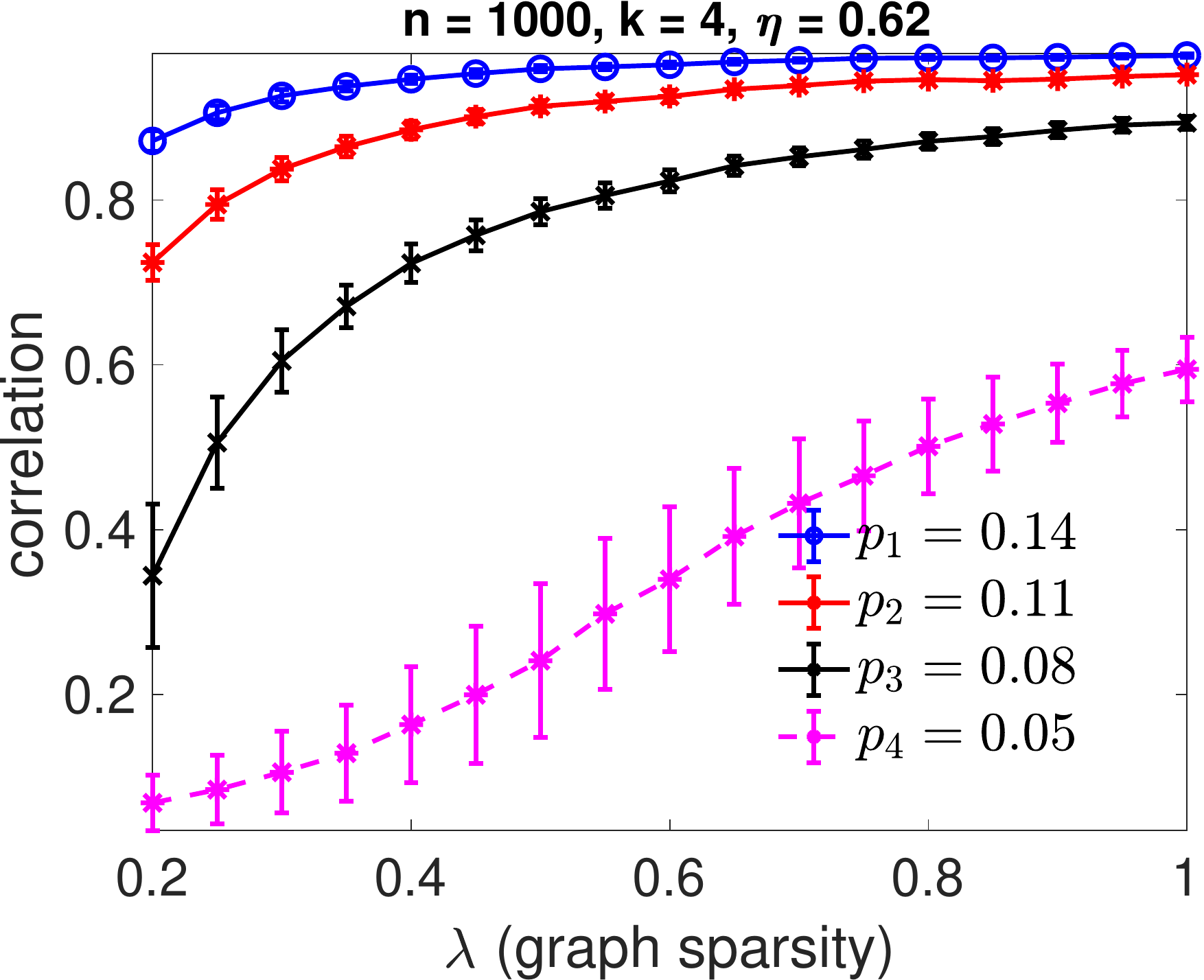} }
\vspace{-3mm}
\captionsetup{width=0.98\linewidth}
\caption[Short Caption]{Experimental Setup I: Correlation of recovered angles with the ground truth for $n=1000$, with $k=3$ (top) and $k=4$ (bottom) collections of angles, and various noise levels $\eta$,  as a function of $\lambda$ (sparsity of the measurement graph). Results are averaged over $20  \times 20 = 625$ runs.
}
\vspace{-4mm}
\label{fig:scanID_1_abcd_k34_1000}
\end{figure}

 

\subsection{Setup II: fixed gap experiments, correlation versus graph sparsity}

We also consider a second experimental setup, where we fix the gap between consecutive values of the $p_i$ sampling parameter corresponding to the fraction of \textit{good} measurements from $G_i$. 
We recall the mixture model considered in 
\eqref{Def:MixtureOmega} for $k=2$, and the more general setting from 
\eqref{Def:MixtureOmega_gen} for any $k \geq 2$, 
where $p_l \lambda$ denotes the probability of a correct pairwise measurement 
from subgraph $G_l$, $\eta$ is the probability of getting an incorrect measurement, and $ \lambda $ the overall sparsity of the measurement graph. Recall that we have assumed  w.l.o.g that $p_1 > p_2 > \dots > p_k$.
In the following experiments, for a given noise level $ \eta $, we choose $p_1, \ldots, p_k$ such that $ \sum_{l=1}^{k} p_l = 1-\eta$ and  $ p_{l+1} - p_{l} = \gamma, \forall l = 1,\dots, k-1$. 
For example, under the complete measurement graph scenario $\lambda = 1$, if $k=4$, $ \eta = 0.20$, and the gap parameter $ \gamma = 0.05$, the resulting good-edge probabilities are given by  $p_1 = 0.275, \; p_2 = 0.225, \;     p_3 = 0.175, \;  p_4 = 0.125$. The final mixture model is then 

\begin{equation} \label{eq:general_K_mixture}
\Theta_{ij} = \left\{ 
\begin{array}{rl} 
\theta_{l,i} - \theta_{l,j} &  \text{with probability } p_l \lambda \text{ for } l=1,\dots,k	\\
\Theta_{ij} \sim U[0,2\pi) &  \text{with probability } (1-\sum_{l=1}^{k} p_l) \lambda \\
0& \text{with probability } 1-\lambda.	\\
\end{array}
\right.
\end{equation} 

Figure \ref{fig:scanID_2_c_k234_500_gap} reports the outcomes of numerical experiments, for three different values of $k \in \{ 2,3,4 \}$ (indexing the rows), and three values of the graph density parameter $ \lambda \in \{ 0.2, 0.4, 0.8\}$.

%

\begin{figure}[!ht]
\vspace{-2mm}

\hspace{2.8cm} $ \lambda = 0.20 $ \hspace{3.6cm}   $ \lambda = 0.40 $   \hspace{3.6cm}  $ \lambda = 0.80 $

\vspace{4mm}
\centering
\raisebox{0.75in}{\rotatebox[origin=t]{90}{k=2}}\hspace{1mm}  
\subcaptionbox[]{ $  $
}[ 0.31\textwidth ]
{\includegraphics[width=0.31\textwidth, trim=0cm 0cm 0cm 0.7cm,clip] {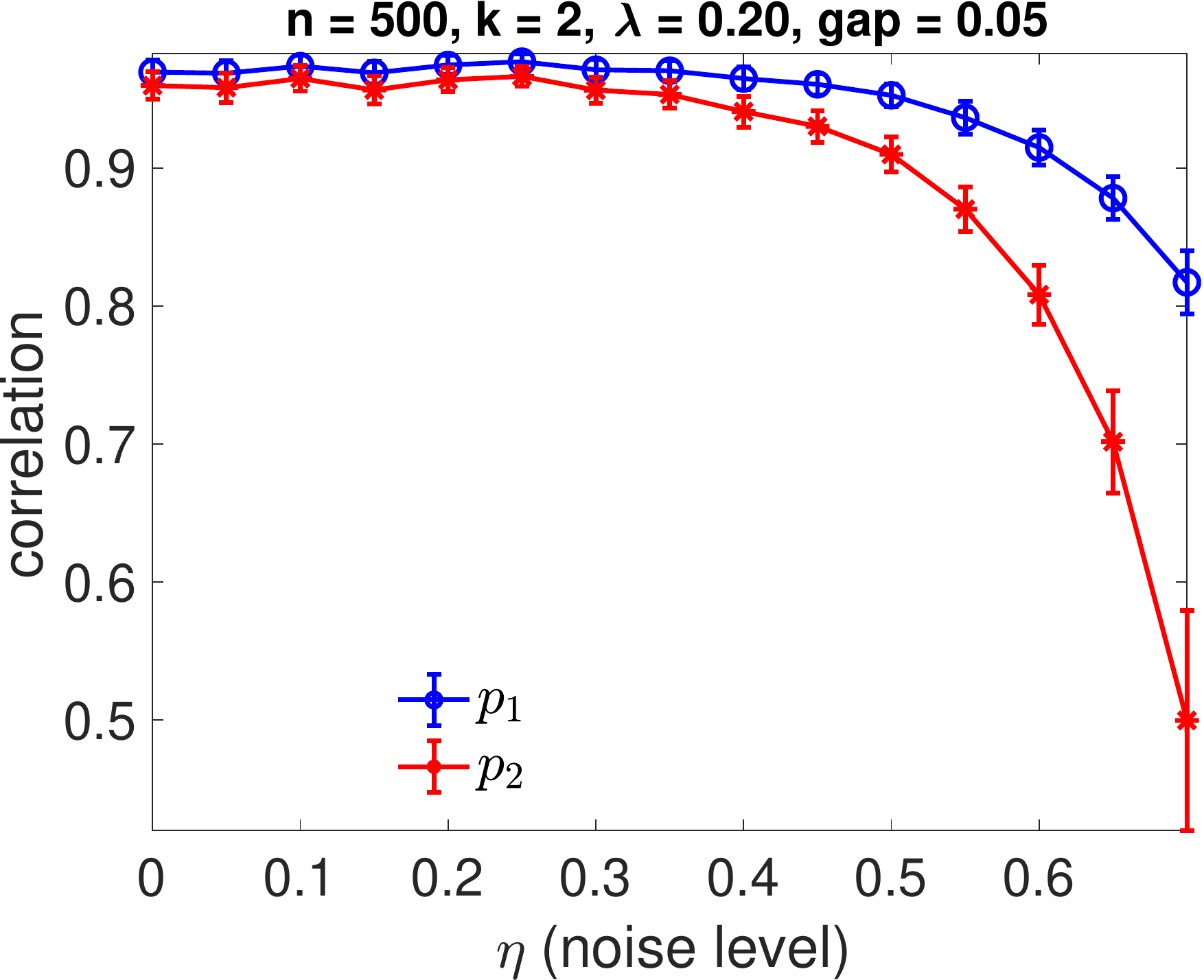} }
%
\subcaptionbox[]{ 
}[ 0.31\textwidth ]
{\includegraphics[width=0.31\textwidth, trim=0cm 0cm 0cm 0.7cm,clip] {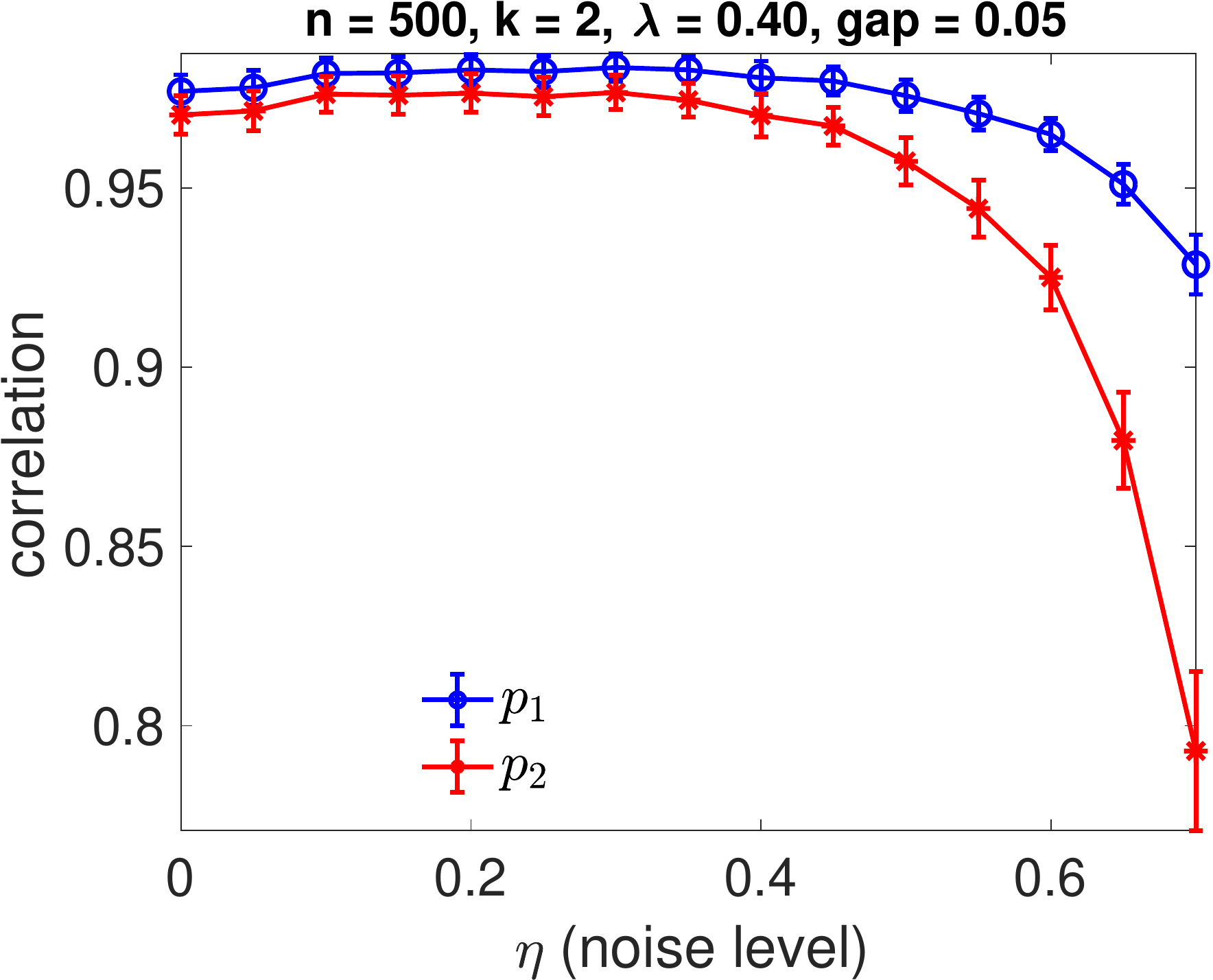} }
%
\subcaptionbox[]{ 
}[ 0.31\textwidth ]
{\includegraphics[width=0.31\textwidth, trim=0cm 0cm 0cm 0.7cm,clip] {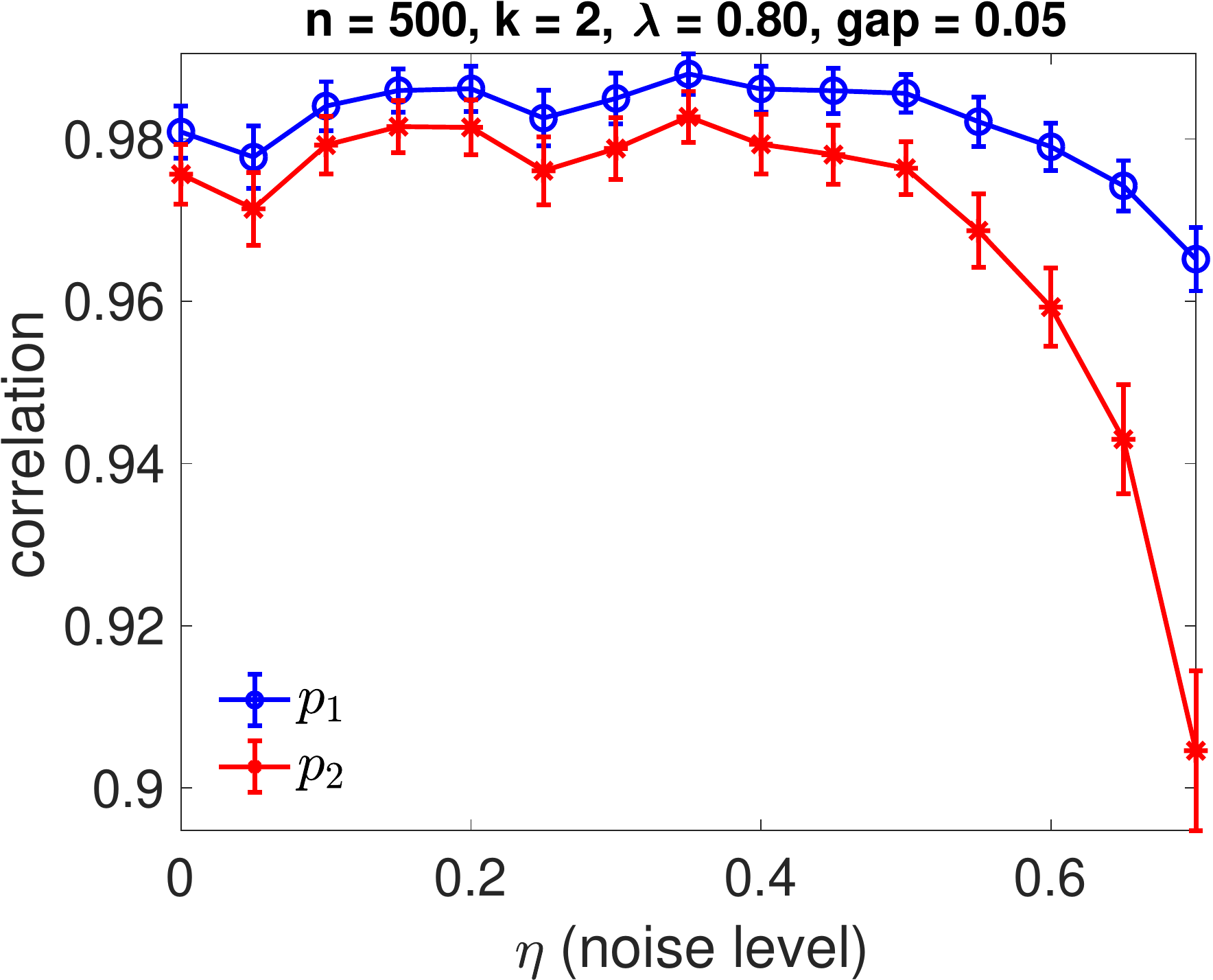} }
%
%
 %

\raisebox{0.75in}{\rotatebox[origin=t]{90}{k=3}}\hspace{1mm}
\subcaptionbox[]{ 
}[ 0.31\textwidth ]
{\includegraphics[width=0.31\textwidth, trim=0cm 0cm 0cm 0.7cm,clip] {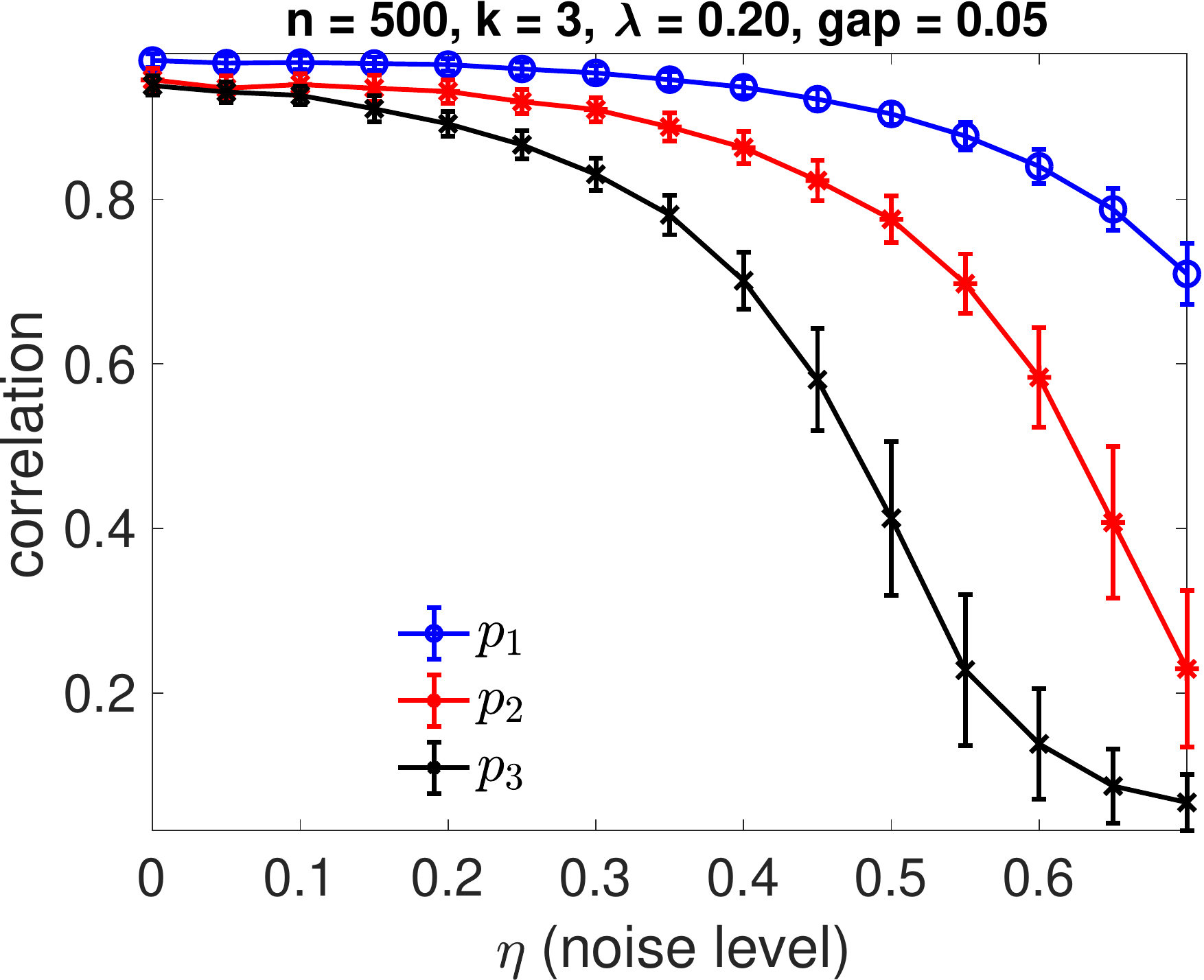} }
%
\subcaptionbox[]{ 
}[ 0.31\textwidth ]
{\includegraphics[width=0.31\textwidth, trim=0cm 0cm 0cm 0.7cm,clip] {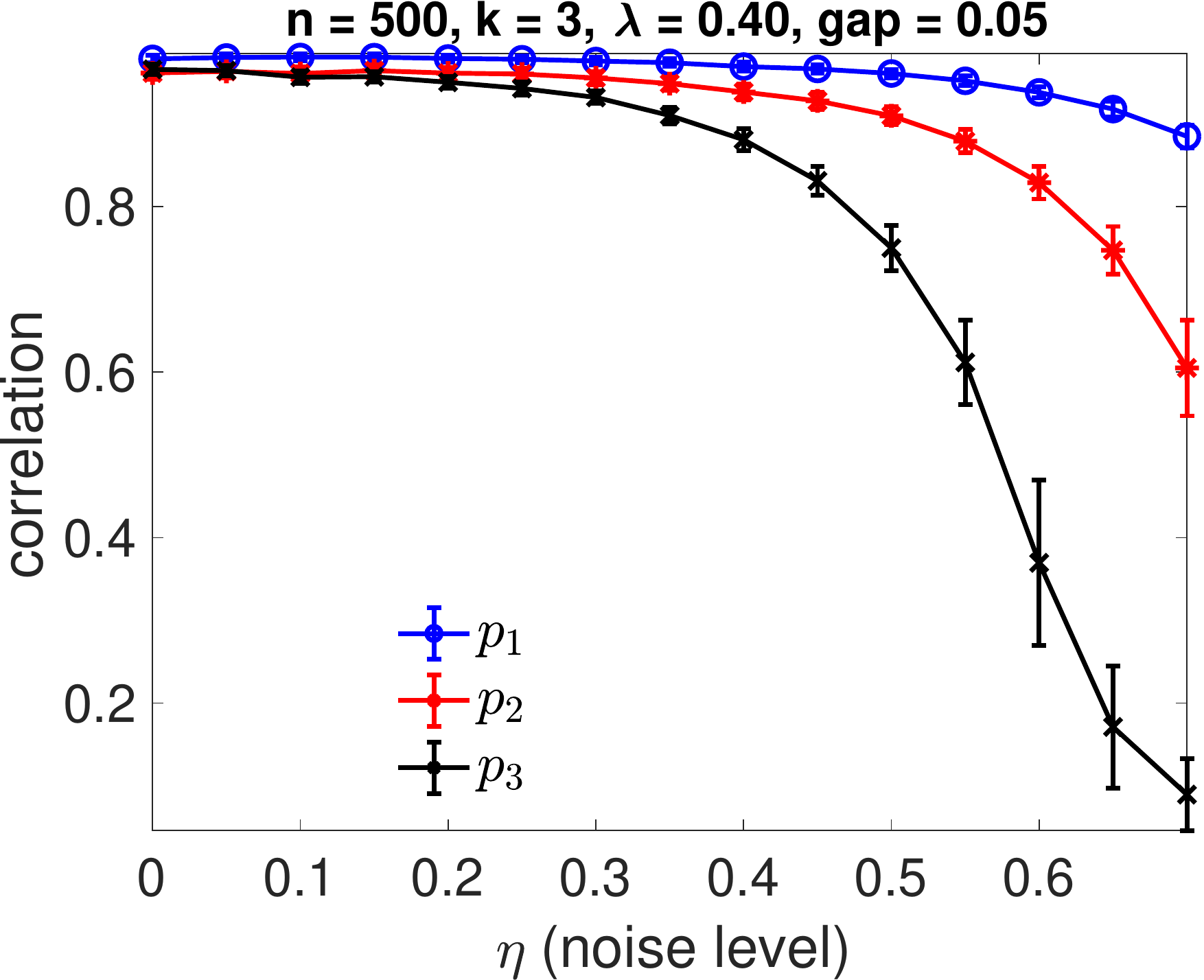} }
%
\subcaptionbox[]{ 
}[ 0.31\textwidth ]
{\includegraphics[width=0.31\textwidth, trim=0cm 0cm 0cm 0.7cm,clip] {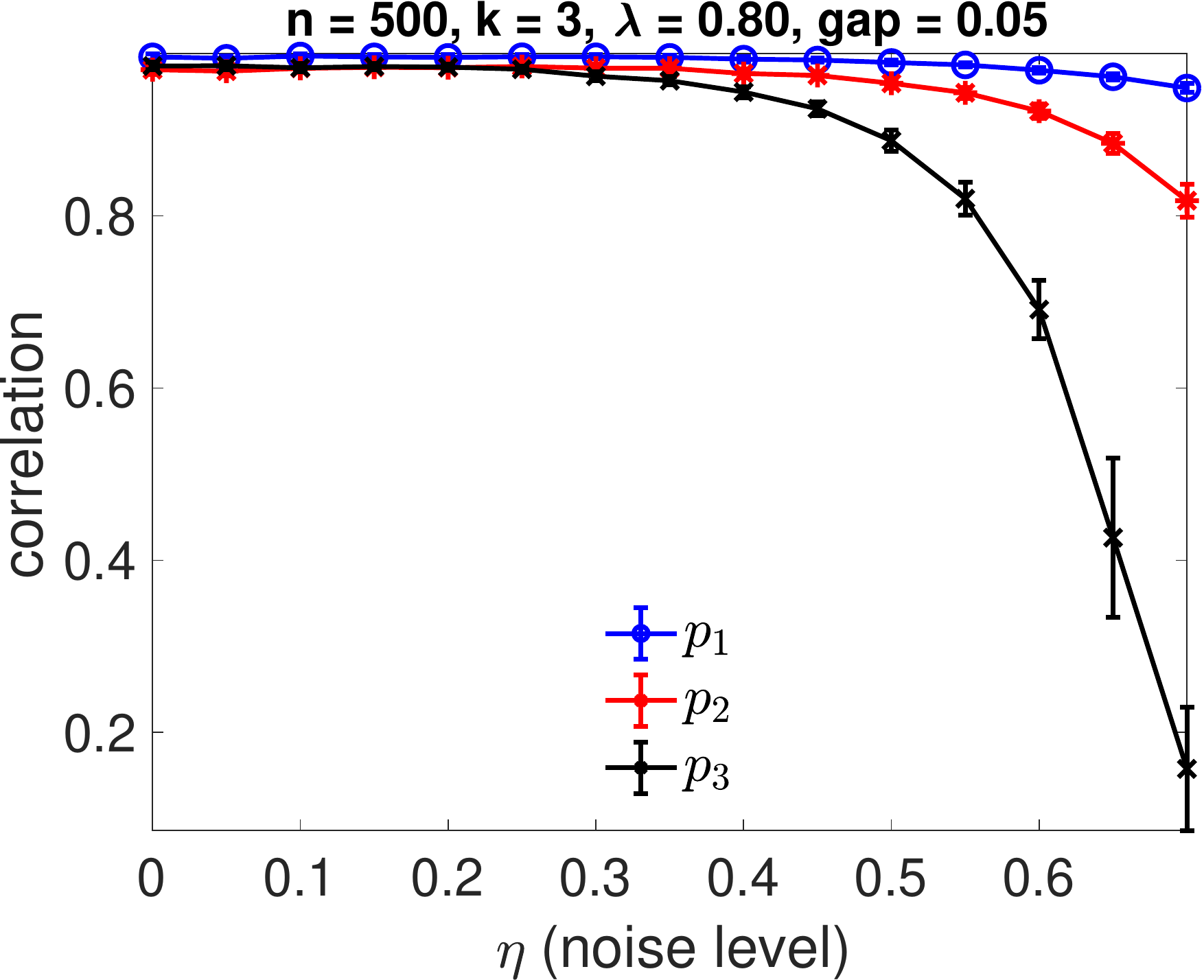} }
%

\raisebox{0.75in}{\rotatebox[origin=t]{90}{k=4}}\hspace{1mm}
\subcaptionbox[]{ 
}[ 0.31\textwidth ]
{\includegraphics[width=0.31\textwidth, trim=0cm 0cm 0cm 0.7cm,clip] {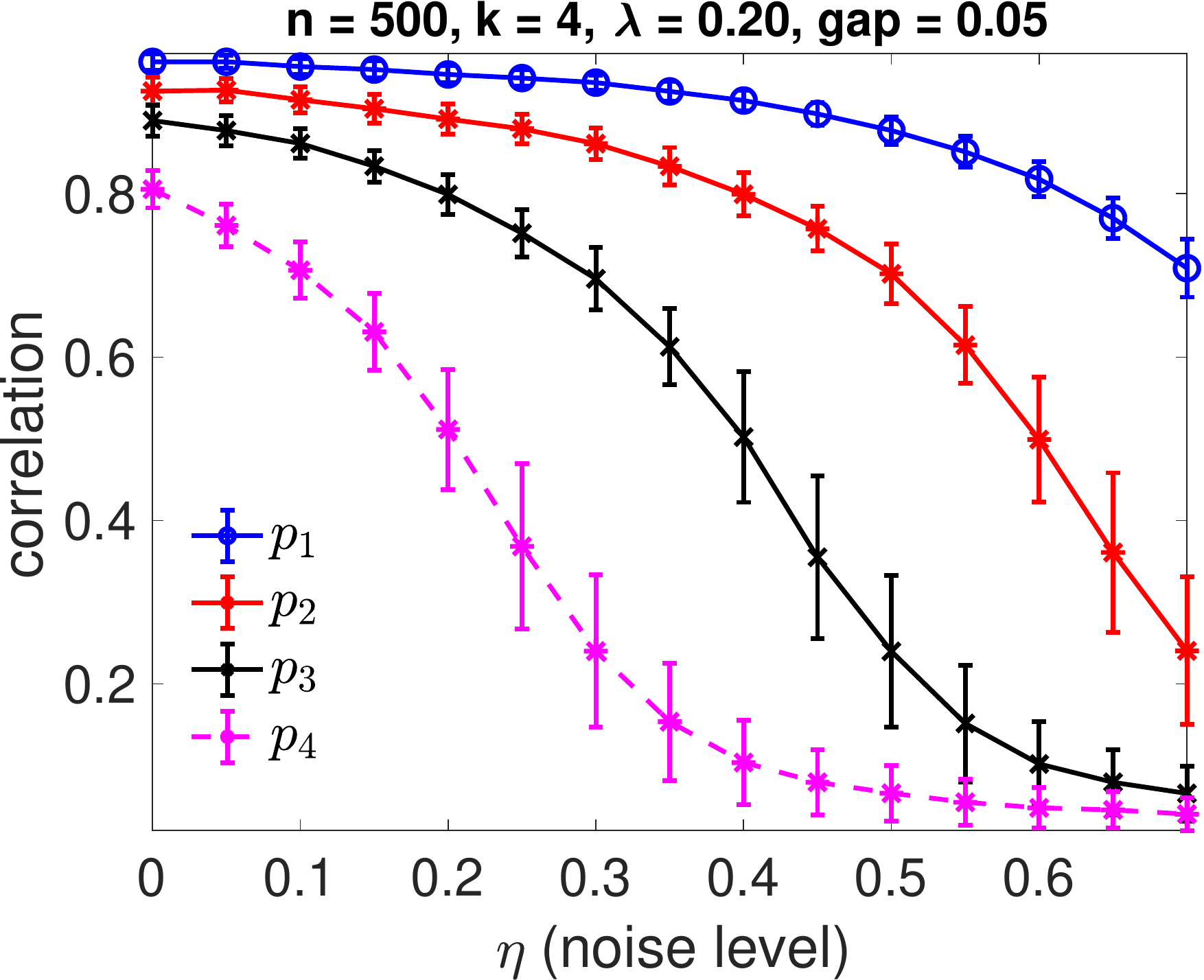} }
%
\subcaptionbox[]{ 
}[ 0.31\textwidth ]
{\includegraphics[width=0.31\textwidth, trim=0cm 0cm 0cm 0.7cm,clip] {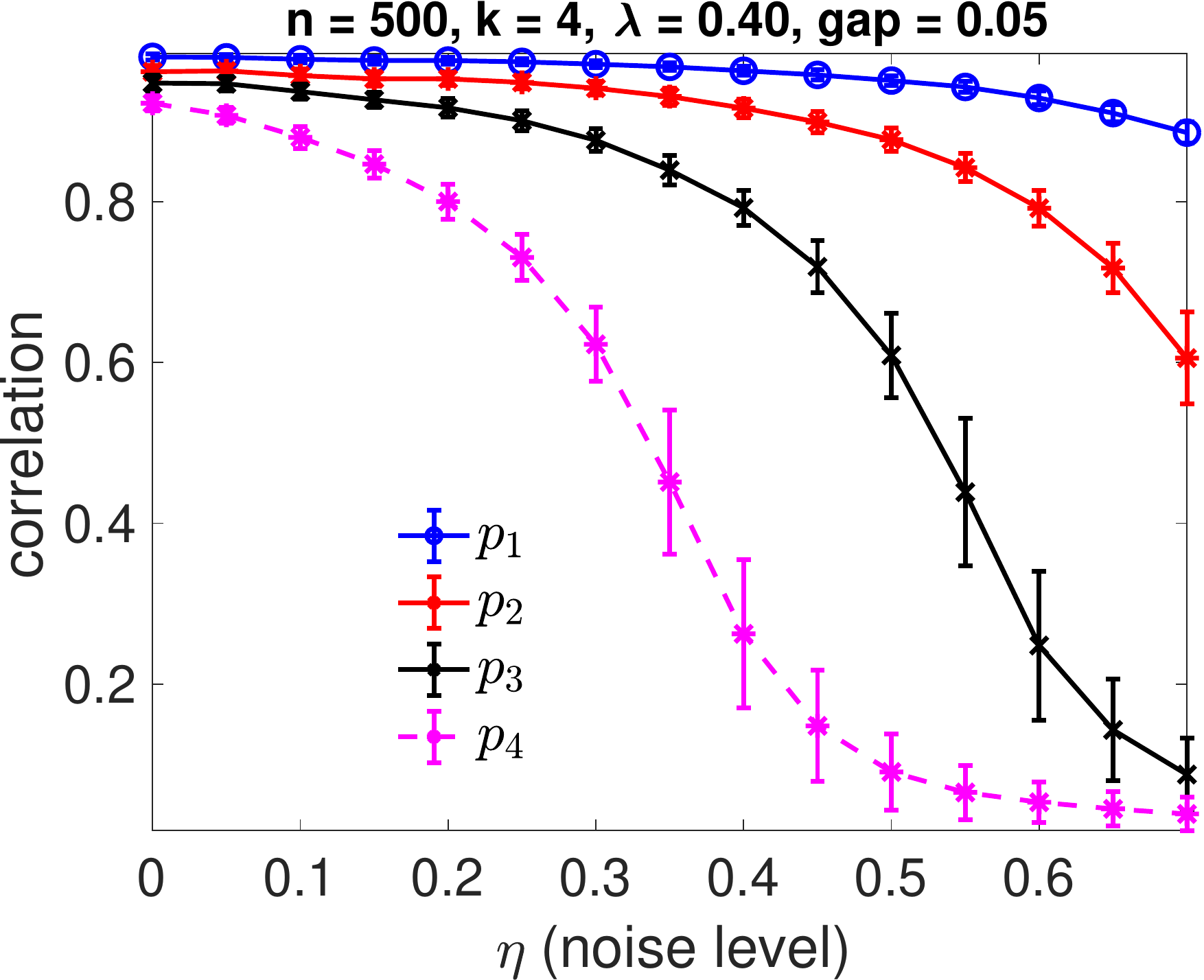} }
%
\subcaptionbox[]{ 
}[ 0.31\textwidth ]
{\includegraphics[width=0.31\textwidth, trim=0cm 0cm 0cm 0.7cm,clip] {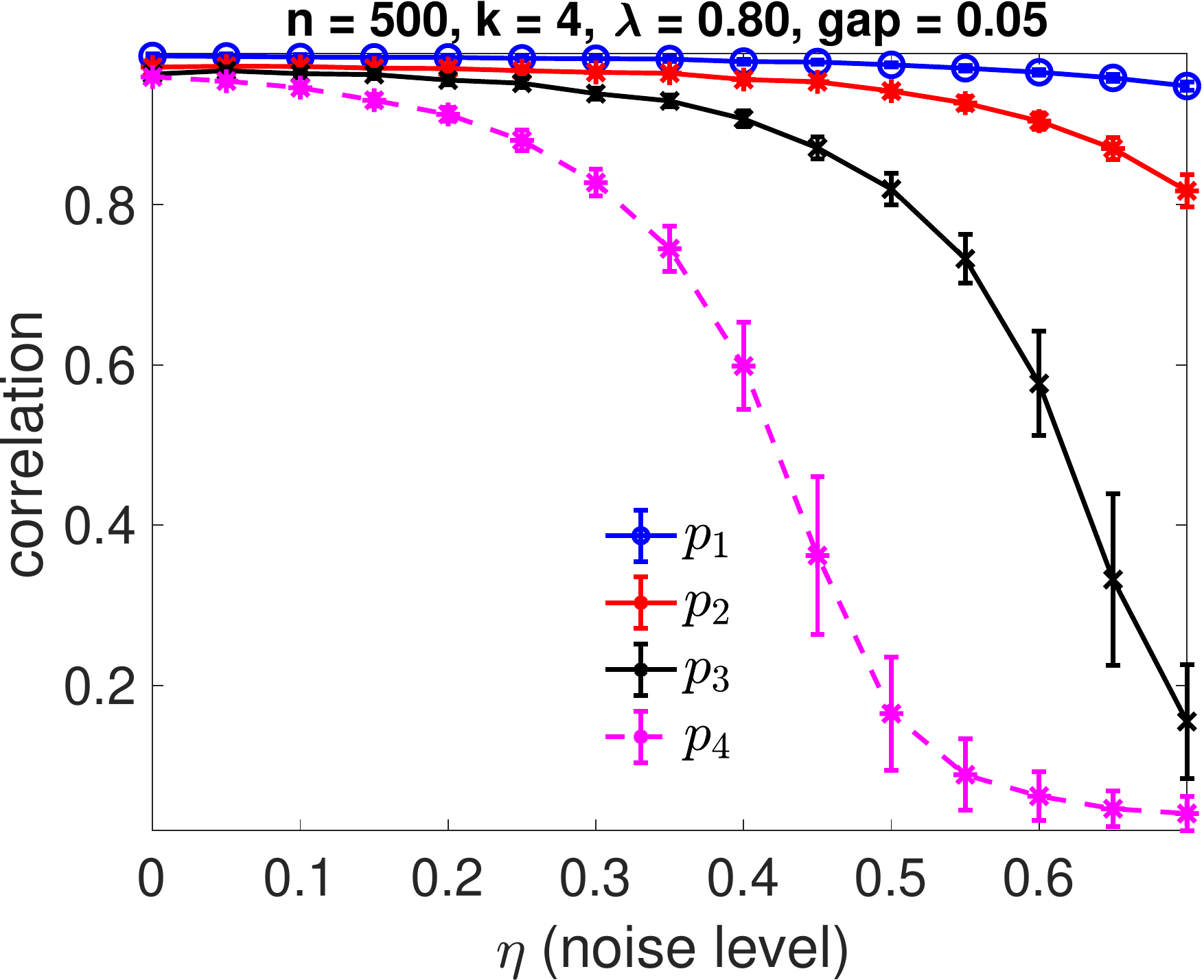} }
%
\captionsetup{width=0.98\linewidth}
\caption[Short Caption]{   
Experimental Setup II:  Correlation of recovered angles with the ground truth, where we keep constant the gap $\gamma = 0.05$ between consecutive sampling probabilities $ p_{l+1} - p_l = \gamma, l = 1, \ldots, k-1$, of the subgraphs $ G_{l}$. We also fix the number of angles $n=500$, and  vary  $k \in \{2,3,4\}$ (indexing the rows), and the overall measurement graph sparsity $ \lambda \in \{ 0.2, 0.4, 0.8\}$ (indexing the columns). Each plot shows the correlation with the ground truth, as we vary the noise level $ \eta $. Results are averaged over $20  \times 20 = 400$ runs.}
\vspace{-3mm}
\label{fig:scanID_2_c_k234_500_gap}
\end{figure}

\subsection{Comparison with the Normalized Spectral and SDP relaxations} 

This section compares different spectral and SDP relaxations of the heterogeneous angular $k$-synchronization problem under the second experimental setup, where we fix the gap probability $\gamma$, and compare the accuracy of the various methods as we vary the noise level. 

\paragraph{Normalization of $H$}  
When dealing with measurement graphs exhibiting a skewed degree distribution, a suitable normalisation is typically employed in problems involving spectral methods, such as clustering (see for eg. \cite{kunegis2010spectral}).  To this end, instead of extracting the final estimated angles from the top eigenvectors of $H$, the authors of \cite{asap2d} considered the following normalization of $H$ by the diagonal matrix $D$ with diagonal elements given by $D_{ii} = \sum_{j=1}^N |H_{ij}|$, arriving at 
\begin{equation}
R  = D^{-1} H.
\label{eq:Rnormalization}
\end{equation}
Note that $R$ is similar to the Hermitian matrix $D^{-1/2} H D^{-1/2}$ via
$ R = D^{-1/2} (D^{-1/2}  H D^{-1/2}) D^{1/2}$, 
and thus $R$ has $n$ real eigenvalues. 
%
This normalization was subsequently employed with success in \cite{syncRank,SVDRank}, where it was shown to outperform the unnormalized version.  

Note that the operator $R$ was studied  in the context of angular synchronization and the graph realization problem \cite{asap2d}, and also in \cite{SingerVDM} which introduced Vector Diffusion Maps for nonlinear dimensionality reduction  and explored the interplay with the Connection-Laplacian operator for vector fields over manifolds.  We also remark that these Hermitian operators (construed as Hermitian analogues to the usual graph Laplacians) have been successfully used in the
directed clustering \cite{DirectedClustImbCuts}  and ranking \cite{syncRank}  literatures, where the net outcome of pairwise matches  between players can be encoded in a digraph with a skew-symmetric adjacency matrix. In particular, \cite{syncRank} formulated the ranking problem as an instance of the angular synchronization problem, considered an angular embedding and relied on the top eigenvector of the above matrix operator $R$ to recover a one-dimensional ordering of the players.
Very recently, we have considered the same normalization in \cite{SVDRank} in the context of ranking and synchronization over $ \mathbb{R} $ via SVD. In another recent work, \cite{FANUEL2017JACHA} proposed  a certain deformation of the combinatorial Laplacian, in particular, the \textit{Dilation Laplacian}, which is shown to perform well for ranking in directed networks of pairwise comparisons.

\begin{figure}[!ht]
\vspace{-2mm}  
\hspace{2.8cm} $ \lambda = 0.20 $ \hspace{3.6cm}   $ \lambda = 0.40 $   \hspace{3.6cm}  $ \lambda = 0.80 $

\vspace{4mm}
\centering
\raisebox{0.75in}{\rotatebox[origin=t]{90}{k=2}}\hspace{1mm}  
%
\subcaptionbox[]{ %
}[ 0.31\textwidth ]
{\includegraphics[width=0.31\textwidth] {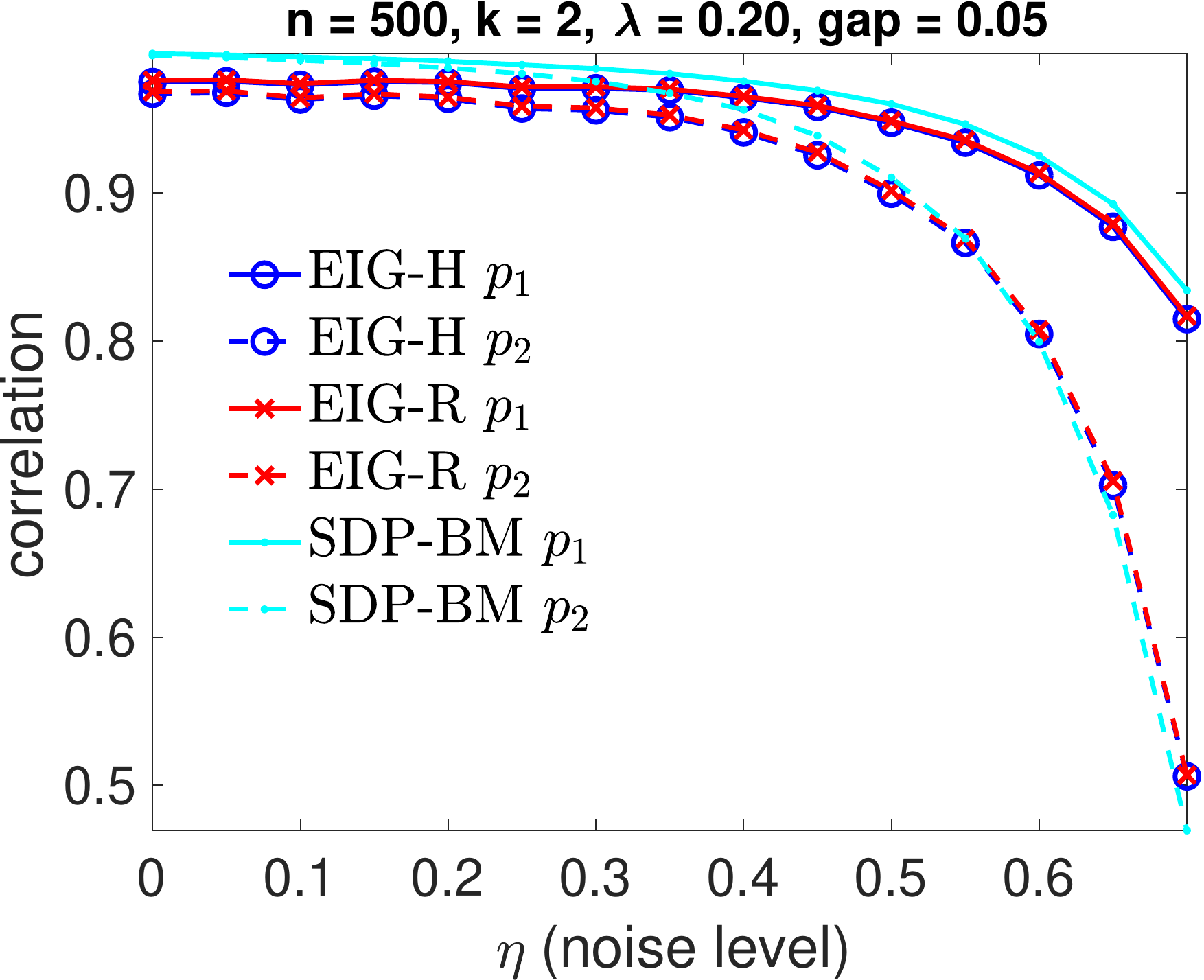} }
%
\subcaptionbox[]{  %
}[ 0.31\textwidth ]
{\includegraphics[width=0.31\textwidth] {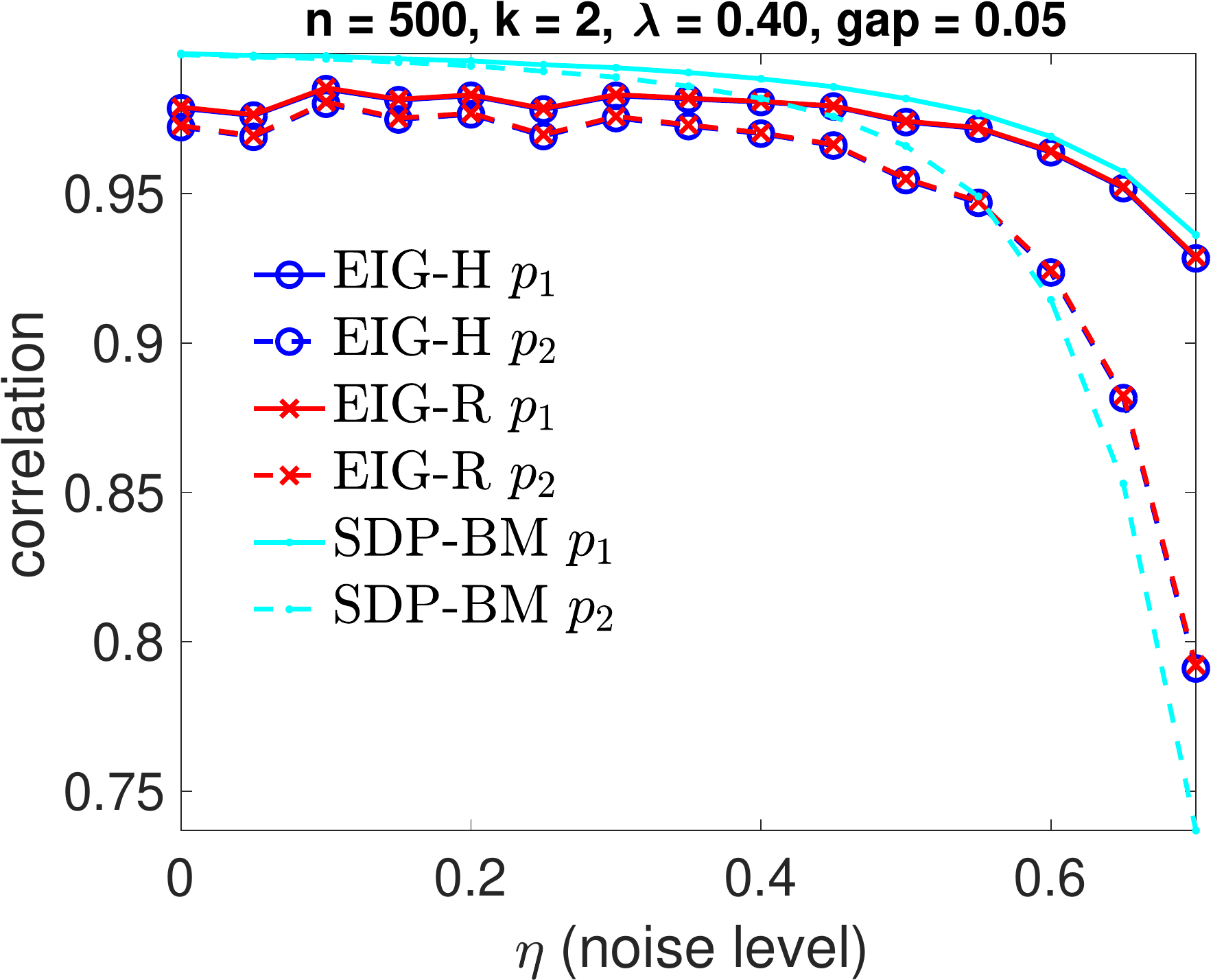} }
%
\subcaptionbox[]{ %
}[ 0.31\textwidth ]
{\includegraphics[width=0.31\textwidth] {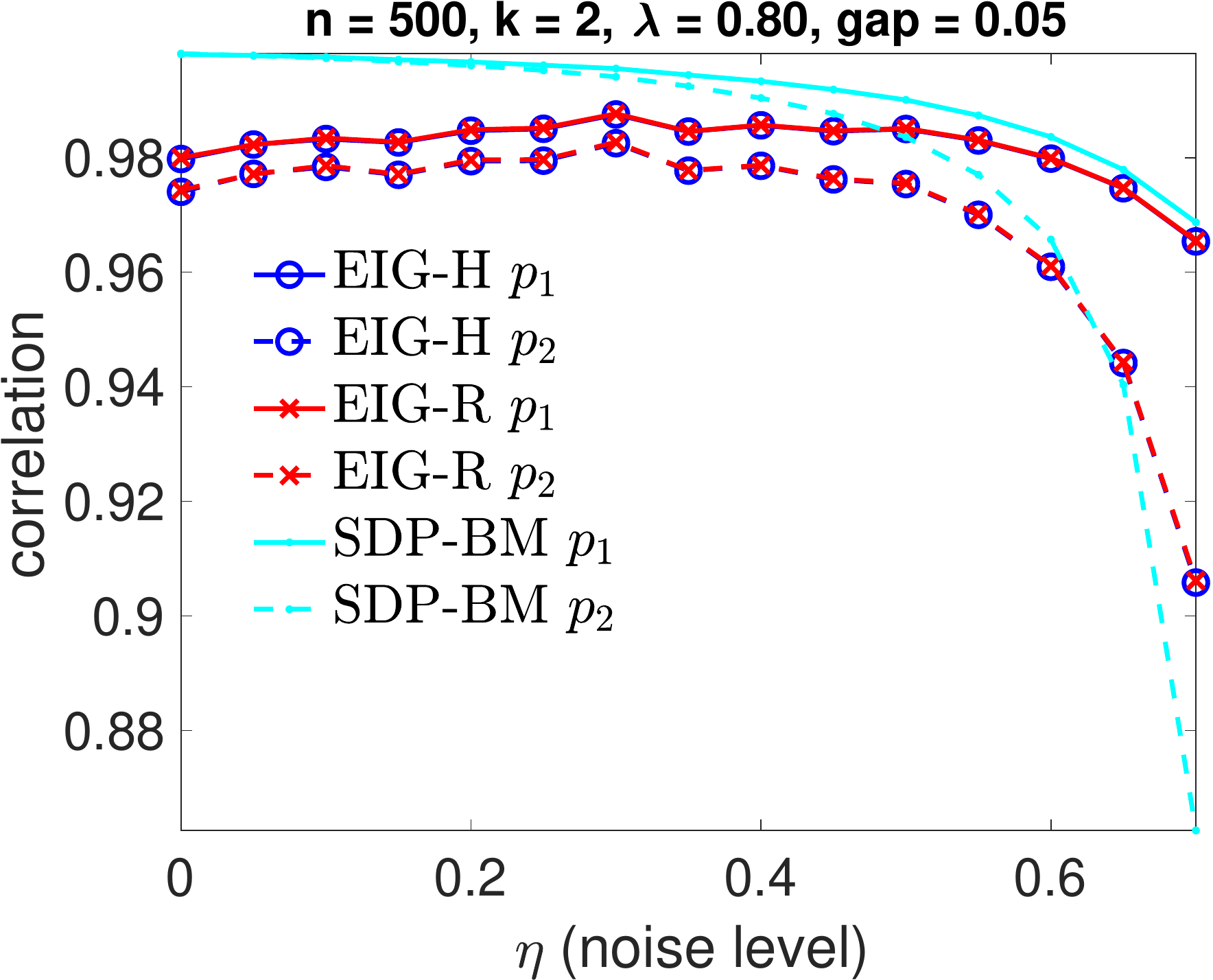} }
%

\raisebox{0.75in}{\rotatebox[origin=t]{90}{k=3}}\hspace{1mm}
\subcaptionbox[]{ %
}[ 0.31\textwidth ]
{\includegraphics[width=0.31\textwidth] {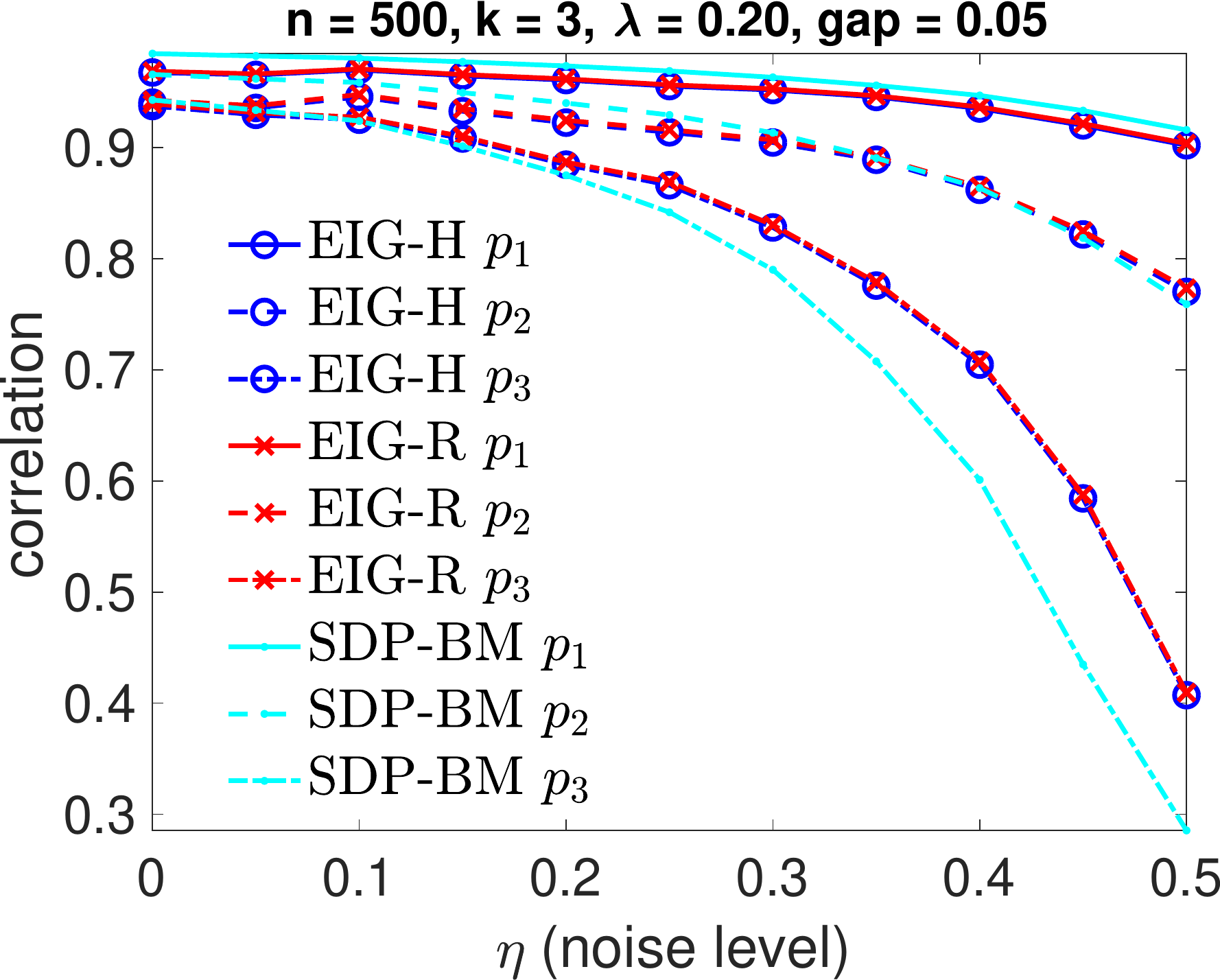} }
%
\subcaptionbox[]{ %
}[ 0.31\textwidth ]
{\includegraphics[width=0.31\textwidth] {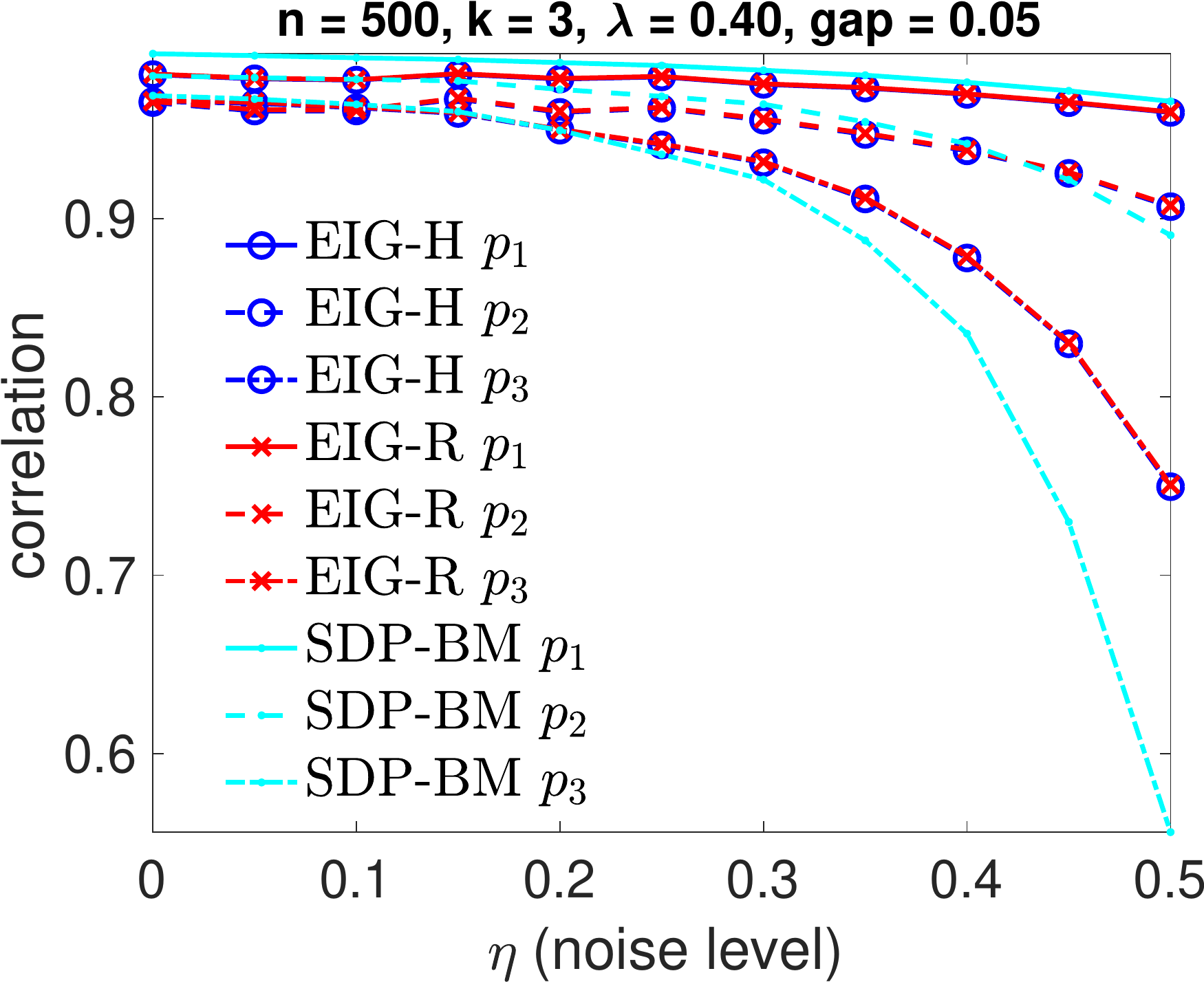} }
%
\subcaptionbox[]{ %
}[ 0.31\textwidth ]
{\includegraphics[width=0.31\textwidth] {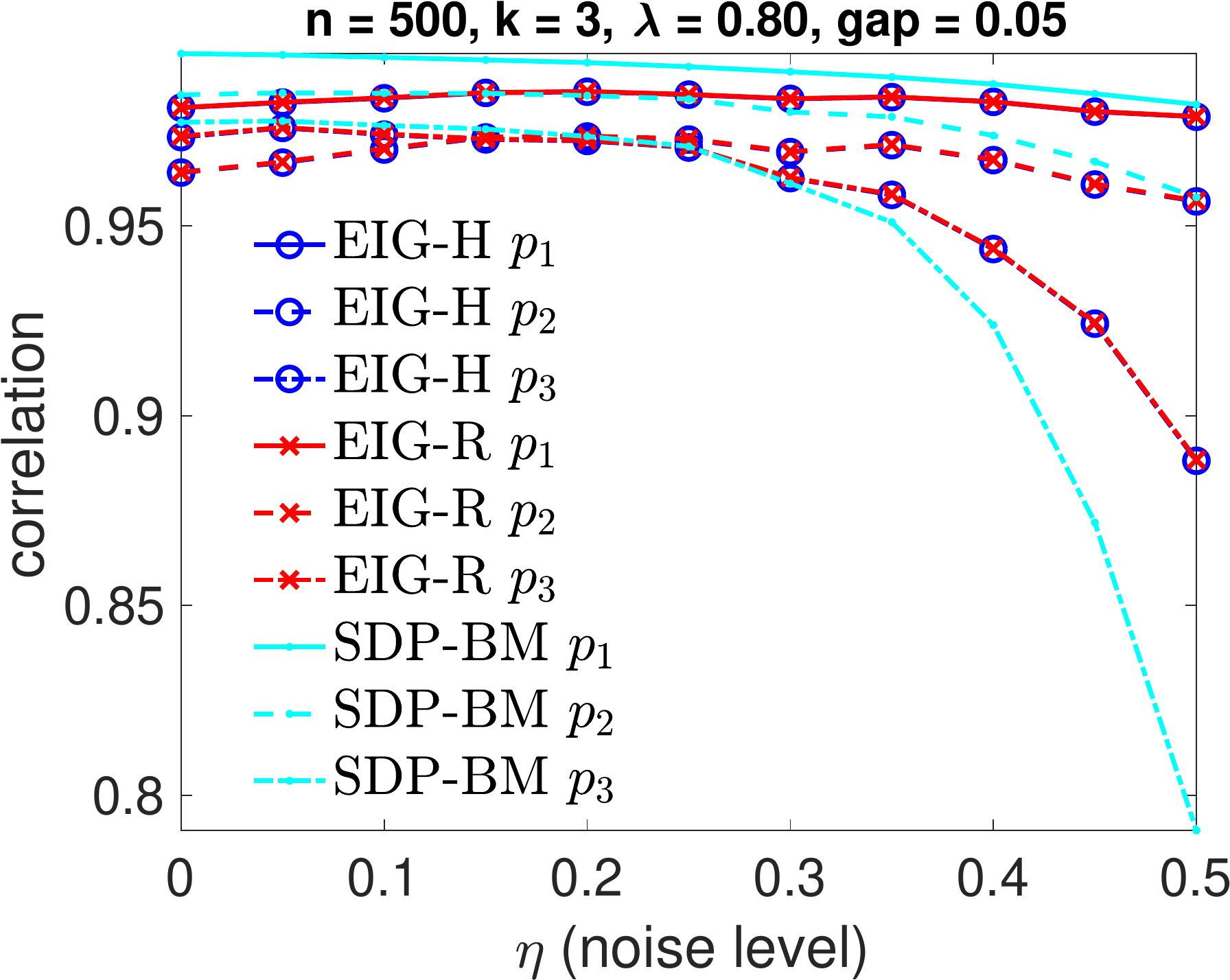} }
%

\raisebox{0.75in}{\rotatebox[origin=t]{90}{k=4}}\hspace{1mm}
\subcaptionbox[]{ 
}[ 0.31\textwidth ]
{\includegraphics[width=0.31\textwidth] {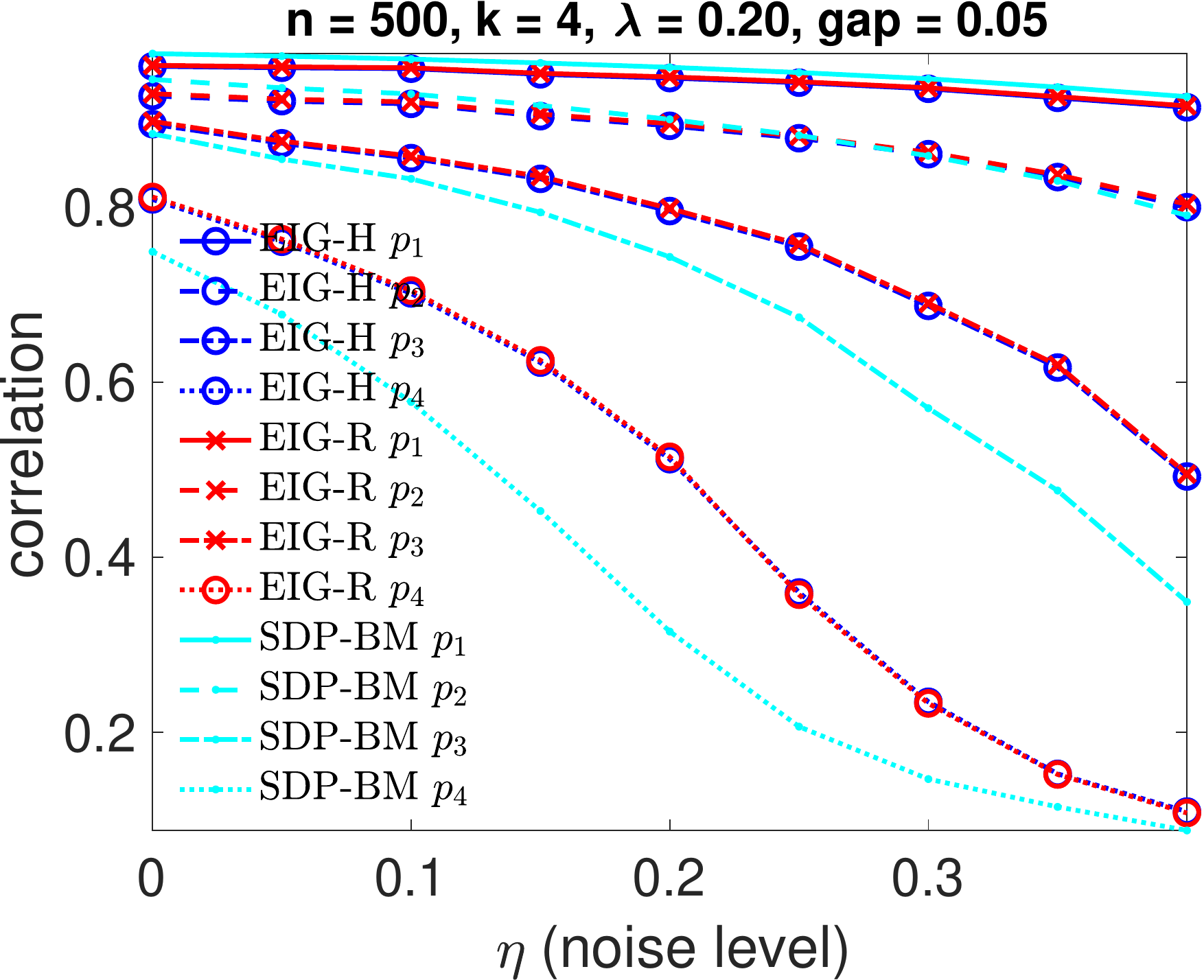} }
%
\subcaptionbox[]{ %
}[ 0.31\textwidth ]
{\includegraphics[width=0.31\textwidth] {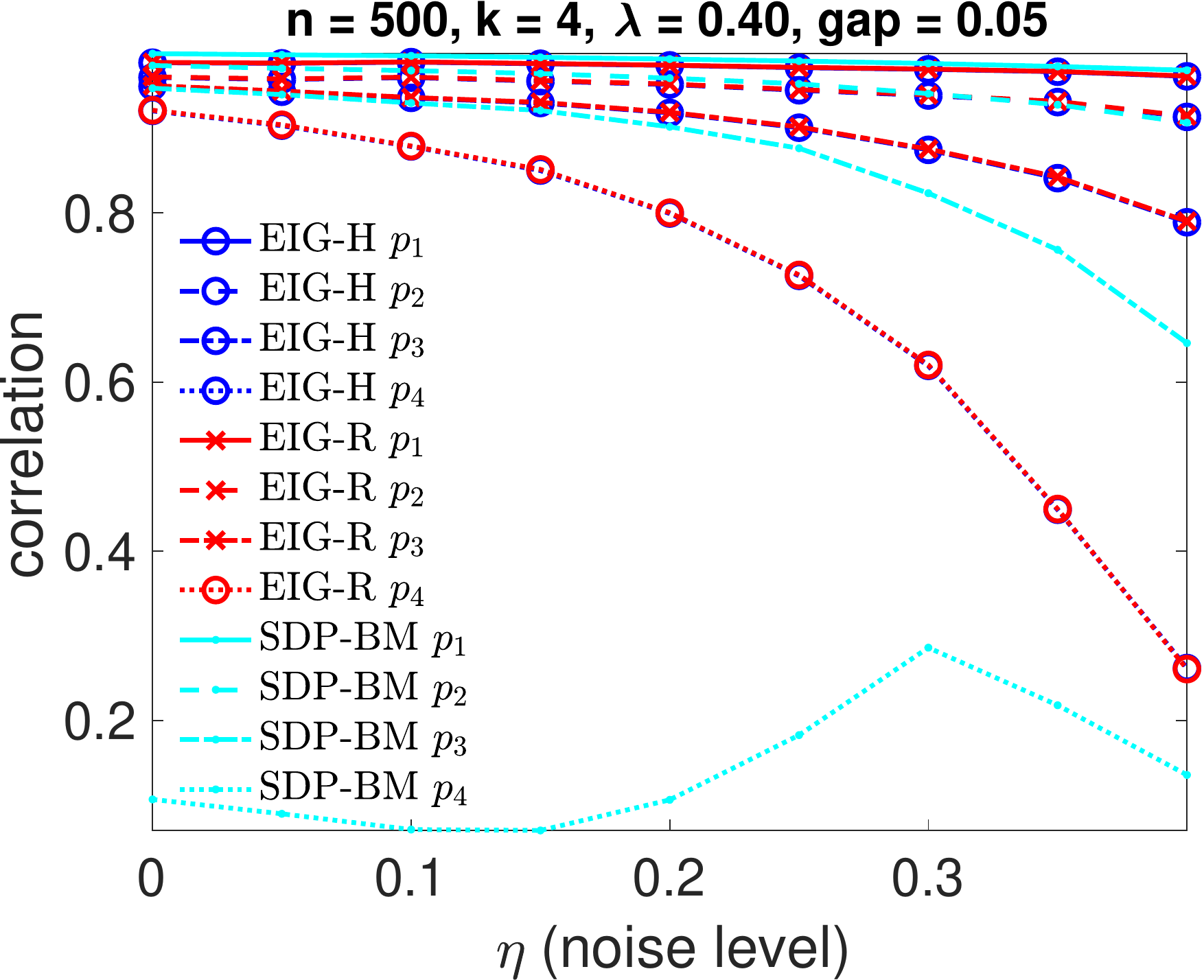} }
%
\subcaptionbox[]{ %
}[ 0.31\textwidth ]
{\includegraphics[width=0.31\textwidth] {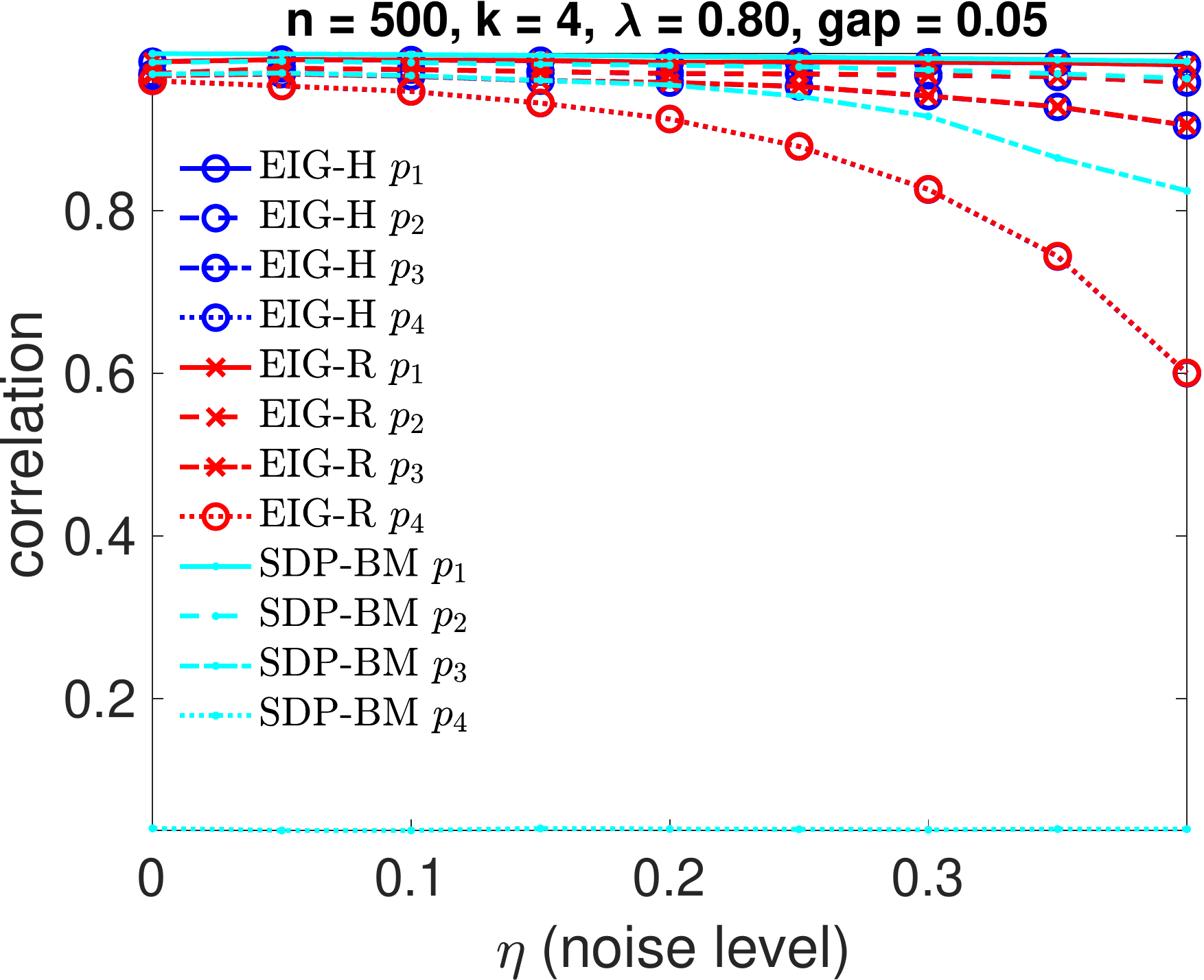} }
%
\vspace{-2mm}   
\captionsetup{width=0.98\linewidth}
\caption[Short Caption]{  
Experimental Setup II: Correlation of recovered angles with the ground truth in the second experimental setup
where we compared the performance of the two spectral relaxations and the SDP relaxation. \textsc{EIG-H} denotes the relaxation that uses the top eigenvector of matrix $H$ from \eqref{eq:bisync_HijDef}; \textsc{EIG-R} employs the normalized matrix 
\cref{eq:Rnormalization}, 
and  \textsc{SDP-BM} denotes the SDP relaxation \cref{eq:SDP_program_SYNC}  solved via the Burer-Monteiro approach. We keep constant the gap $\gamma = 0.05$ between consecutive sampling probabilities $ p_{l+1} - p_l = \gamma, l = 1, \ldots, k-1$, of the subgraphs $ G_{l}$. The number of angles is kept fixed at $n=500$, and we vary  $k \in \{2,3,4\}$ (indexing the rows), and the overall measurement graph sparsity $ \lambda \in \{ 0.2, 0.4, 0.8\}$ (indexing the columns). Each plot shows the correlation with ground truth, as we vary the noise level $ \eta $. Results are averaged over $20  \times 20 = 400$ runs.}
\vspace{-3mm}
\label{fig:scanID_5c_k234_500_SDP}
\end{figure}

Figure \ref{fig:scanID_5c_k234_500_SDP} shows the correlation of recovered angles with the ground truth in the second experimental setup, comparing the performance of the two spectral relaxations and the SDP relaxation. The full set of methods we compare are  
\begin{itemize}
\item \textsc{EIG-H}: the relaxation that uses the top $k$ eigenvectors of matrix $H$, as in \Cref{Algo:kSync}; 
\item \textsc{EIG-R}: employs the normalized  
version of $H$ as detailed in \eqref{eq:Rnormalization},
\item  \textsc{SDP-BM}:  denotes the SDP relaxation \eqref{eq:SDP_program_SYNC}  solved via the Burer-Monteiro approach \cite{Barvinok1995, Burer2005}. Note that after solving the SDP program, we extract angles from the top $k$ eigenvectors of the computed solution $\Upsilon$ (an $n\times n$ complex-valued  Hermitian  positive-semidefinite matrix), following the same steps outlined in \Cref{Algo:kSync}. 
\end{itemize}

We fixed the number of angles  $n=500$, and kept constant the gap $\gamma = 0.05$ between consecutive sampling probabilities $ p_{l+1} - p_l = \gamma$, $l = 1, \ldots, k-1$, of the subgraphs $ G_{l}$. We varied $k \in \{2,3,4\}$ and the overall measurement graph sparsity $ \lambda \in \{ 0.2, 0.4, 0.8\}$, and plotted the correlation with ground truth, as we varied the noise level $ \eta $. The experiments reveal that the SDP relaxation typically outperforms the spectral relaxations on the angle recovery task, with the exception of the set of angles corresponding to the sparsest measurement subgraph $G_3$ for the case $k=3$, and $G_3$ and $G_4$ for the case $k=4$.

\paragraph{Numerical experiments for the Barab\'asi--Albert model}    
Finally, to highlight the benefits of the normalization step, we also compare the two spectral relaxations with the SDP relaxation in the setting where the underlying measurement graph (indicating the presence or absence of an edge) is generated from the \textit{Barab\'asi--Albert} (BA) model \cite{Barabasi99emergenceScaling}, which is known to lead to skewed degree distributions. We denote by $G_{BA}$ the resulting BA network, which can be used to build the relevant measurement matrix. In other words, the $\Theta$ matrix is given by the Hadamard product between the complete angle offset measurement graph (for e.g., as generated by the model in \eqref{eq:general_K_mixture}  with $\lambda = 1$), and the $G_{BA}$ graph. 
\begin{figure}[!ht]
\vspace{-2mm}
\centering
\subcaptionbox[]{  $ k = 2 $ 
}[ 0.324\textwidth ]
{\includegraphics[width=0.324\textwidth, trim=0cm 0cm 0cm 0.7cm,clip] {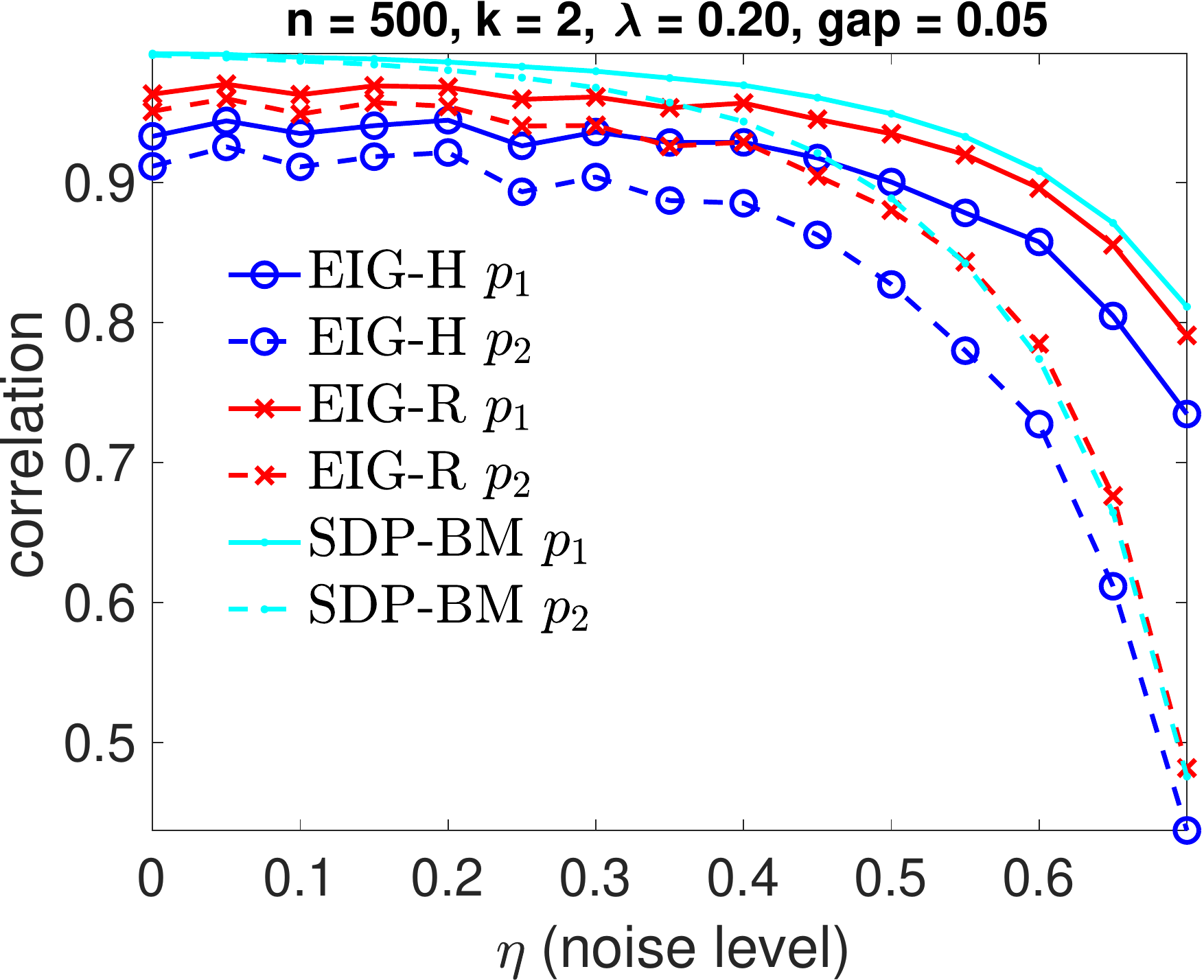} }
%
\subcaptionbox[]{ $ k = 3 $ 
}[ 0.324\textwidth ]
{\includegraphics[width=0.324\textwidth, trim=0cm 0cm 0cm 0.7cm,clip] {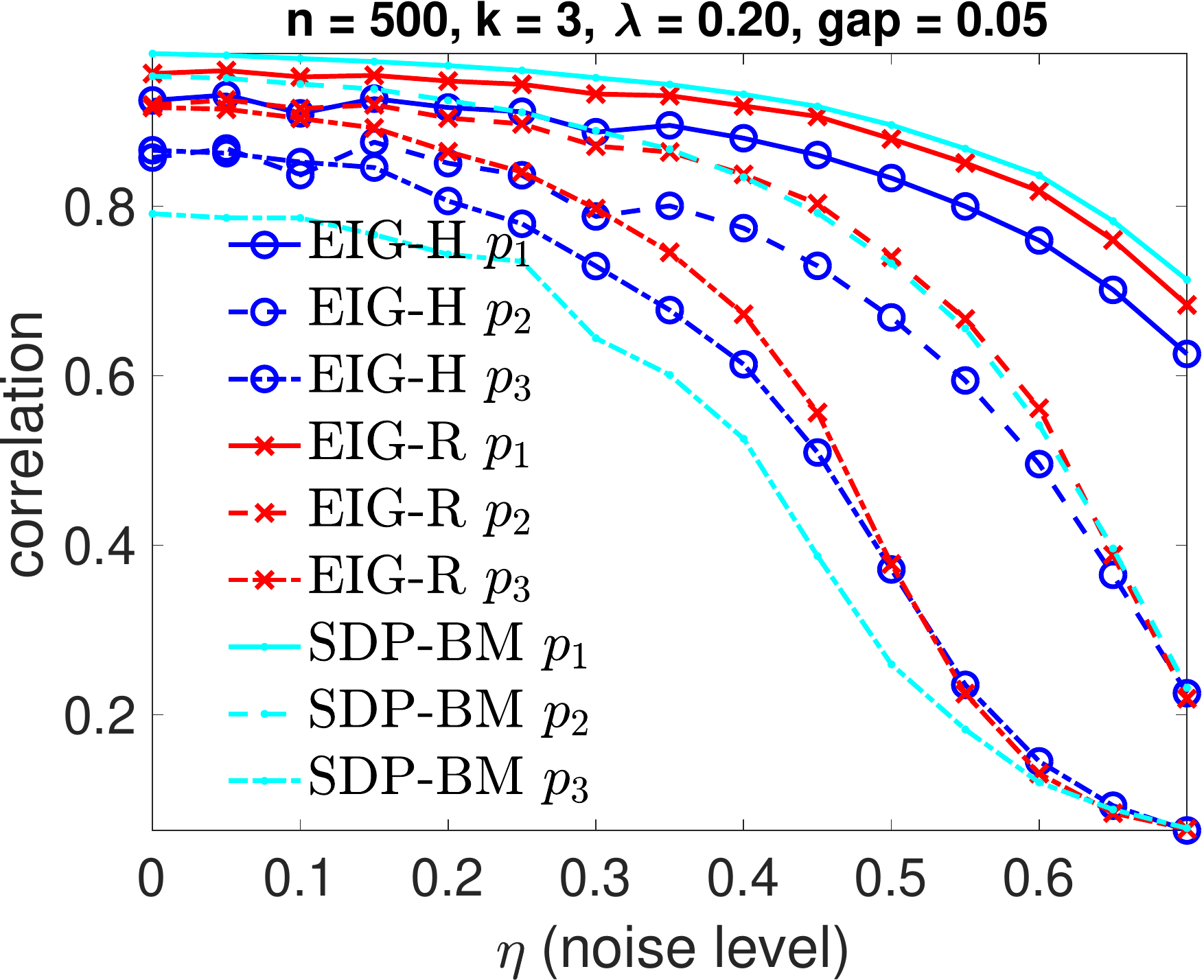} }
%
\subcaptionbox[]{ $ k = 4 $  
}[ 0.324\textwidth ]
{\includegraphics[width=0.324\textwidth, trim=0cm 0cm 0cm 0.7cm,clip] {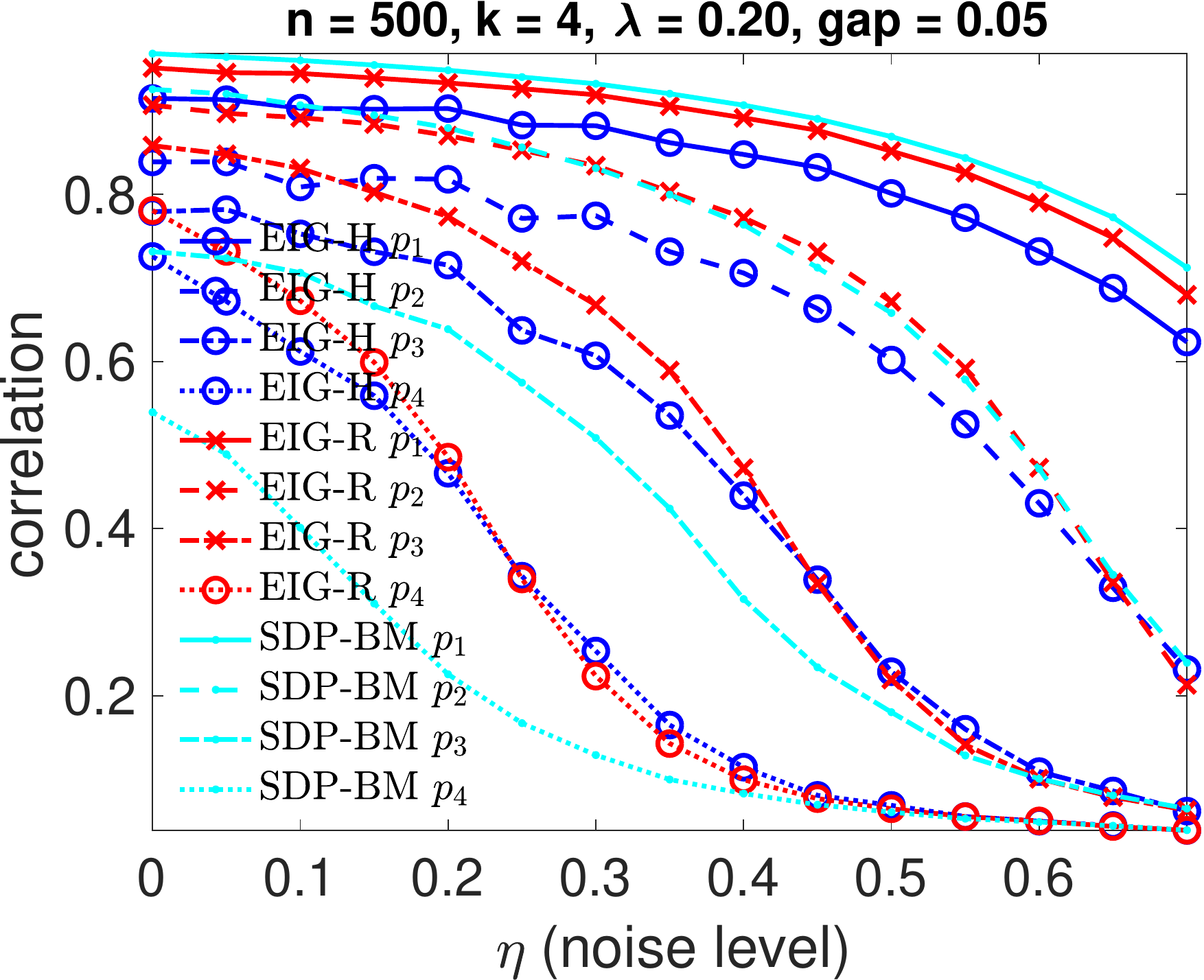} }
%
%
%
\captionsetup{width=1.0\linewidth}
\vspace{-2mm}
\caption[Short Caption]{Experimental Setup II: Performance comparison in the setting of a Barab\'asi--Albert model, for $n=500$, edge density $\lambda = 0.20$, gap $\gamma = 0.05$,  $k \in \{2,3,4\}$, as a function of the  noise levels $\eta$. Results are averaged over $20 \times  20$ runs.}
\vspace{-3mm}
\label{fig:scanID_6_abc_k234_500_SDP_BA_model}
\end{figure}


\subsection{Graph disentangling procedure}

In this section, we give a procedure for uncovering the latent measurement good graphs $G_1, G_2, \ldots, G_k$ corresponding to the different groups of angles. We refer to this process as ``\textit{graph  disentangling}", which is illustrated in \Cref{fig:graph_recovery_k2_eta20_n100}.  
%
The input to this pipeline is given by the initial estimates of the $k$ groups of angles, 
$\hat{\theta}_{l,i}$, for $l=1, \ldots, k$, and $i=1, \ldots, n$. 
In a nutshell, the algorithm we propose considers the \textbf{residual matrix} for each recovered solution with respect to the initial input matrix of pairwise measurements $\Theta$ 
\begin{equation}  \label{eq:residualMtx}  
\Psi_{l,ij} = \min \{ (\Theta_{ij} - \widehat{\Theta}_{l,ij})\mod 2\pi, \;\;  
(\widehat{\Theta}_{l,ij} - \Theta_{ij})\mod 2\pi  \}   ,  \;\;\;\;  l = 1, \ldots, k, 
\end{equation}   
where $ \widehat{\Theta}_{l, ij} = (\hat{\theta}_{l,i} - \hat{\theta}_{l,j}) \mod 2\pi$,  
and leverages this set of matrices $\Psi_{l}, l=1,\ldots,k$ for the purpose of attributing a given edge  $ \setij \in E$ to one of the subgraphs $G_1, G_2, \ldots, G_k$. More specifically, for each edge $\setij$ of the initial measurement graph $G$, we consider the set of residuals induced by each solution vector $ \hat{ \theta_l }, l = 1, \ldots, k $, and  attribute the smallest residual in the list to the  corresponding estimated graph $\widehat{G}_l$. In other words, we first compute the matrix of smallest residual errors
\begin{eqnarray} \label{eq:GammaDef}  
\Gamma_{ij} = \min_{l = 1, \ldots, k}   \Psi_{l, \; ij}, \ \setij \in E, 
\label{eq:gamma_ij}
\end{eqnarray}
and then update the residual matrix corresponding to each set of angles $ l= 1, \ldots, k$
\begin{equation}    \label{eq:UpdatedPhi}   
\tilde{\Psi}_{l, \; ij} = \begin{cases}
\Gamma_{ij} & \text{ if }  \Psi_{l, \; ij}  = \Gamma_{ij}    \\ 
0 & \text{if otherwise}
\end{cases}.
\end{equation} 
Note that we have decomposed 
$ \Gamma_{ij} =  \sum_{l=1}^{k}  \tilde{\Psi}_{l, \; ij} $, making the implicit assumption that there is a unique minimum in \eqref{eq:gamma_ij} (if this is not the case, we break ties arbitrarily and assign the residual to a single matrix $\tilde{\Psi}_{l}$).   
The top row of Figure \ref{fig:scanID_eee6_abc_k234_500_SDP} shows the distribution of the nonzero entries for each matrix $\tilde{\Psi}_{l}$, $l=1,2,3$, for an instance of the $k$-synchronization problem with $k=3$.
 
In the remaining part of this procedure, we use the support of the $k$ matrices  $\tilde{\Psi}^{(1)}, \tilde{\Psi}^{(2)}, \ldots, \tilde{\Psi}^{(k)} $ to recover the $k+1$ underlying graphs given by
\begin{itemize}
\item the $k$ measurement \textit{good} graphs $G_1,G_2, \ldots, G_k$, and with adjacency matrices given by $A_1, A_2, \ldots, A_k$, henceforth denoted as the list $(G_l, A_l), \; l=1,\ldots,k$;
\item the \textit{bad} graph $W$ (with adjacency matrix $B$) corresponding to the outlier entries. 
\end{itemize}
Denote by $(\tilde{G}_l,\tilde{A}_l)$ the support graph corresponding to the matrix of residuals $\tilde{\Psi}^{(l)}, l=1 \ldots k $. Let  
\begin{equation*}
\Theta^{(l)} = \Theta  \odot  \tilde{A}_l,    
\label{eq:ThetaL}
\end{equation*}
where $\odot$  denotes the entry-wise Hadamard product of two matrices, and note that this also induces the following decomposition of the initial measurement matrix $\Theta$
\begin{equation*}
    \Theta = \sum_{l=1}^k   \Theta^{(l)}. 
\end{equation*}
Recall the ground truth decomposition of the adjacency matrix $A$ of the 
measurement graph $G$
\begin{equation*}
A = \sum_{l=1}^{k} A_l  \; + B, 
\end{equation*}
and the fact that $G$  is comprised of both \textit{good} and \textit{bad} edges. 
Each good edge $ \setij \in E_l$ will ideally be placed in its corresponding graph $\widetilde{G}_l$  as it should achieve the smallest residual in \eqref{eq:GammaDef}. However, each (\textit{bad}) edge in $W$ will also be randomly attributed to one of the graphs $ \widetilde{G}_l $.  To this end, we perform a second synchronization stage, where we further synchronize individually each $\Theta^{(l)}$ matrix, recover the solution angles $ \hat{\theta}_{l,i}^{(1)} $ (the superscript is to indicate that this is the first iteration of our entire pipeline described thus far), and then rebuild  each denoised pairwise comparison matrix $\widehat{\Theta}^{(l)}$ as 
\begin{equation*}
\widehat{\Theta}^{(l)}_{ij} = (\widehat{\theta}_{l,i}^{(1)}  -  \hat{\theta}_{l,j}^{(1)})  \; \mod 2\pi .  
\end{equation*}
Similar to \eqref{eq:residualMtx},  we first build the matrix of residuals between  $\widehat{\Theta}^{(l)}$ and $\Theta^{(l)}$, restricted to the support of $\widetilde{G}_l$ (which only contain edges present in the original measurement graph $G$), and finally classify each edge as being  \textit{good} or \textit{bad} depending on the magnitude of its residual.  
In doing so, for simplicity, we assume knowledge of the noise level $1 - p_l$,  and classify as \textit{bad} edges the  $1 - p_l$  percent of the edges with the largest residual, as depicted in the histograms in Figure \ref{fig:scanID_eee6_abc_k234_500_SDP}. 
This altogether renders the following decomposition of $(\widetilde{G}_l, \widetilde{A}_l)$  into the \textit{bad} and \textit{bad} subgraphs with adjacency matrices satisfying 
\begin{equation*}
	 \widetilde{A}_{l}   = \widetilde{A}_{l}^{good} + \widetilde{A}_{l}^{bad}. 
\end{equation*}
Finally, we obtain edge-disjoint estimates $\widehat{A}_1,\ldots,\widehat{A}_k$ for the set of graph adjacency matrices  $ A_1, \ldots, A_k$ as
$$  \widehat{A}_l  := \widetilde{A}_{l}^{good},   $$
while for the bad graph $W$, we pool together all the bad edges inferred from each of the $k$ solutions, and estimate the adjacency matrix of $W$ as 
\begin{equation*}
\widehat{B} := \sum_{l=1}^k   \widetilde{A}_{l}^{bad}. 
\end{equation*}
Figure \ref{fig:graph_recovery_k2_eta20_n100}  
is a pictorial representation of the above pipeline for $k=2$, that depicts the extraction of the underlying subgraph structures. We consider the case of a complete measurement graph (thus $\lambda =1$), with $k=2$ subgraphs $G_1$ and $G_2$ corresponding to two sets of measurement angles $\theta_{1}$ and $\theta_{2}$, with good edge probability $p_1 = 0.45$, respectively $p_2 = 0.35$, for the correct pairwise offsets. Recall that in the bi-synchronization setup, we denoted two such sequences of angles by $\theta_{1} = (\alpha_1, \ldots, \alpha_n)$ and $\theta_{2}=(\beta_1, \ldots, \beta_n)$.    
The subgraph $W$ of outlier measurements has $\eta = 1 - (p_1+p_2) = 0.2$.
These graphs are depicted in the top row of Figure \ref{fig:graph_recovery_k2_eta20_n100}.  The second row of this figure illustrates the combined graph $G = G_1 \cup  G_2 \cup  W$ (with adjacency matrices $ A = A_1 + A_2 + B $), which is available to the user. The bottom row shows the disentangled graph estimates as recovered by the above procedure.

\paragraph{An iterative approach for synchronization} The entire pipeline described thus far can be iterated $M$ times, and the entire process is summarized in Algorithm \ref{algo:IterSyncAlgo}. 
The bottom plot of the Figure \ref{fig:scanID_eee6_abc_k234_500_SDP} shows a histogram of the residuals for each of the three sets of estimated angles, after $M=20$ iterations of the above process. Note that it is noticeable to the naked eye that the residuals become significantly smaller after the iterative process, for all three sets of angles, hinting on the effectiveness of the proposed procedure. To this end, we furthermore plot in Figure \ref{fig:iterations_detangle_k234} the correlation between the estimated solutions and the ground truth, for each of the first $M=20$ iterations of Algorithm  \ref{algo:IterSyncAlgo}. We consider three different problem instances
$k=2$ (with $p_1 = 0.23, p_2=0.15$), 
$k=3$ (with $p_1 = 0.18, p_2=0.15$, $p_3=0.12$), 
$k=4$ (with $p_1 = 0.2, p_2=0.17$, $p_3=0.13$, $p_4=0.10$), and remark that in each problem instance and for each collection of angles, the correlations increase for subsequent iterations, showcasing the efficacy of the iterative procedure. 

\begin{remark}  \label{rem:iterativeAlgo}
Note that the proposed iterative scheme is of independent interest, and could also be considered in the context of the classical angular synchronization problem, where $k=1$ and the goal is uncover the subgraph $G_1$ of good edges and the subgraph $W$ of bad edges. Furthermore, re-weighting schemes could also be explored, where a confidence score (at iteration $r$) on the available measurement can be used and based on the residual from the previous iteration $r-1$. To the best of our knowledge, such an iterative re-weighting scheme has not been previously considered in the synchronization literature, and it would be an interesting research direction to analyze how it compares to the standard methods detailed in Section \ref{sec:angSyncSo2}.
\end{remark}

\ifthenelse{\boolean{arXivMode}}{ 
\begin{algorithm}[!ht]
\begin{algorithmic}[1]
\State \textbf{Input:} Measurement graph $G = ([n], E)$,  pairwise offset measurements $\Theta_{ij}$ for $\set{i,j} \in E$, number of groups of angles $k$, number of iterations $M$.

\textsc{// Stage 1: Compute initial estimates for $\hat{\theta}_{l,i}^{(0)}$}  
\State Estimate angles $\hat{\theta}_{l,i}^{(0)}$ from the top $k$ eigenvectors of the matrix $H$ as detailed in Algorithm \ref{Algo:kSync}.

\textsc{// Stage 2: Iterative procedure }  
\For{\texttt{ $r = 1,2, \ldots , M$ }}

\State Build the residual matrices $ \Psi_{l}, l=1,\ldots,k$ as in \eqref{eq:residualMtx}.

\State Use the residual matrices to decompose the input measurement matrix $\Theta$  
into  $  \Theta = \sum_{l=1}^k   \Theta^{(l)}.$

\State Synchronize each individual matrix $\Theta^{(l)}$, obtain a denoised estimate $\widehat{\Theta}^{(l)}$ matrix, and the groups of angles $ \hat{\theta}_{l,i}^{(r)} $, for  $l=1,\ldots,k$.

\State Consider the residual matrix between $\widehat{\Theta}^{(l)}$ and $\Theta^{(l)}$, and classify each existing edge as \textit{good} or \textit{bad}.

\State Produce the set of edge-disjoint  \textit{good} subgraphs estimates $\widehat{G}_1^{(r)}, \ldots, \widehat{G}_k^{(r)}$, and the  \textit{bad} subgraph $\widehat{W}^{(r)}$.
\EndFor

\State \textbf{Output:} Estimates $\hat{\theta}_{l,i}^{(r)}$, for $l=1, \ldots, k$, and $i=1, \ldots, n$, and subgraphs of \textit{good} edges $\widehat{G}_1^{(r)}, \ldots, \widehat{G}_k^{(r)}$ and subgraph of \textit{bad} edges $\widehat{W}^{(r)}$.
\end{algorithmic}
\caption{\textsc{Iterative synchronization and graph disentangling algorithm}}
\label{algo:IterSyncAlgo} 
\end{algorithm}
}
{
\begin{algorithm}[!ht]
\begin{algorithmic}[1]
\STATE \textbf{Input:} Measurement graph $G = ([n], E)$,  pairwise offset measurements $\Theta_{ij}$ for $\set{i,j} \in E$, number of groups of angles $k$, number of iterations $M$.

\textsc{// Stage 1: Compute initial estimates for $\hat{\theta}_{l,i}^{(0)}$}  
\STATE{Estimate angles $\hat{\theta}_{l,i}^{(0)}$ from the top $k$ eigenvectors of the matrix $H$ as detailed in Algorithm \ref{Algo:kSync}.}

\textsc{// Stage 2: Iterative procedure }  
\FOR{\texttt{ $r = 1,2, \ldots , M$ }}

\STATE{Build the residual matrices $ \Psi_{l}, l=1,\ldots,k$ as in \eqref{eq:residualMtx}.}

\STATE{Use the residual matrices to decompose the input measurement matrix $\Theta$  
into  $  \Theta = \sum_{l=1}^k   \Theta^{(l)}.$}

\STATE{Synchronize each individual matrix $\Theta^{(l)}$, obtain a denoised estimate $\widehat{\Theta}^{(l)}$ matrix, and the groups of angles $ \hat{\theta}_{l,i}^{(r)} $, for  $l=1,\ldots,k$.}

\STATE{Consider the residual matrix between $\widehat{\Theta}^{(l)}$ and $\Theta^{(l)}$, and classify each existing edge as \textit{good} or \textit{bad}.}

\STATE{Produce the set of edge-disjoint  \textit{good} subgraphs estimates $\widehat{G}_1^{(r)}, \ldots, \widehat{G}_k^{(r)}$, and the  \textit{bad} subgraph $\widehat{W}^{(r)}$.}
\ENDFOR

\STATE \textbf{Output:} Estimates $\hat{\theta}_{l,i}^{(r)}$, for $l=1, \ldots, k$, and $i=1, \ldots, n$, and subgraphs of \textit{good} edges $\widehat{G}_1^{(r)}, \ldots, \widehat{G}_k^{(r)}$ and subgraph of \textit{bad} edges $\widehat{W}^{(r)}$.
\end{algorithmic}
\caption{\textsc{Iterative synchronization and graph disentangling algorithm}}
\label{algo:IterSyncAlgo} 
\end{algorithm}
}

\begin{figure}[!ht]
\vspace{-2mm}
\centering
\includegraphics[width=0.90824\textwidth] {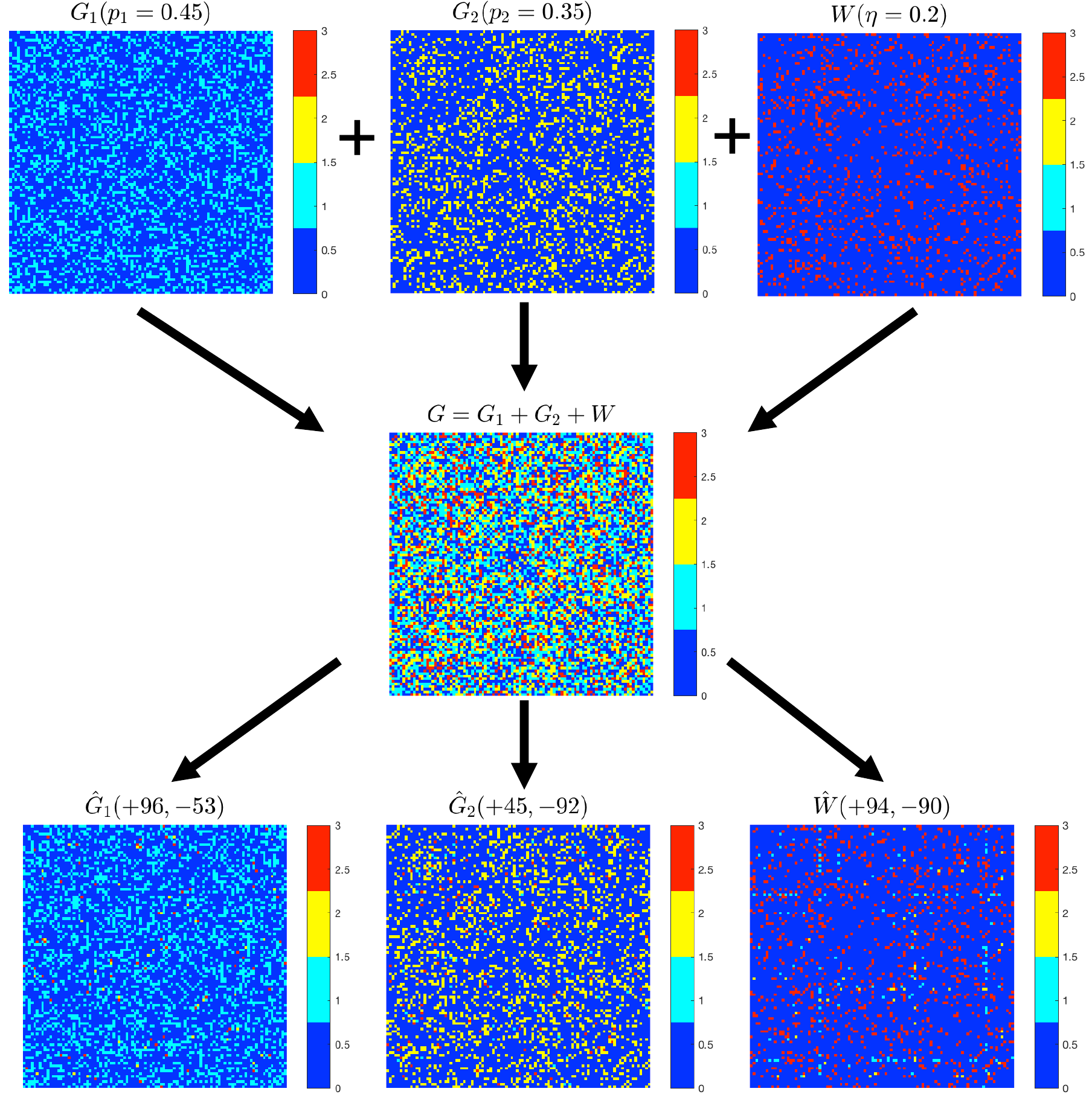} 
\vspace{-3mm}
\captionsetup{width=0.98\linewidth}
\caption[Short Caption]{Top: ground truth \textit{good} graph $G_1, G_2$ and the \textit{bad} graph $W$, all edge disjoint, of size $n=100$, with corresponding probabilities $p_1=0.45$, $p_2 = 0.35$ and $\eta=0.20$. Middle: the user observes an angle offset matrix $\Theta$ whose underlying entangled measurement graph $G=G_1 \cup G_2 \cup W$ is depicted. Bottom: final estimates  of $\widehat{G}_1, \widehat{G}_2,  \widehat{W}$ from our recovery pipeline, with a visualization of the classification errors; the titles contain the two types of errors being made: the number of extra edges (+) and the number of missing edges (-) for each individual estimated graph.  
}
\vspace{-2mm}
\label{fig:graph_recovery_k2_eta20_n100}
\end{figure}

\begin{figure}[!ht]
\centering
\includegraphics[width=0.9824\textwidth] {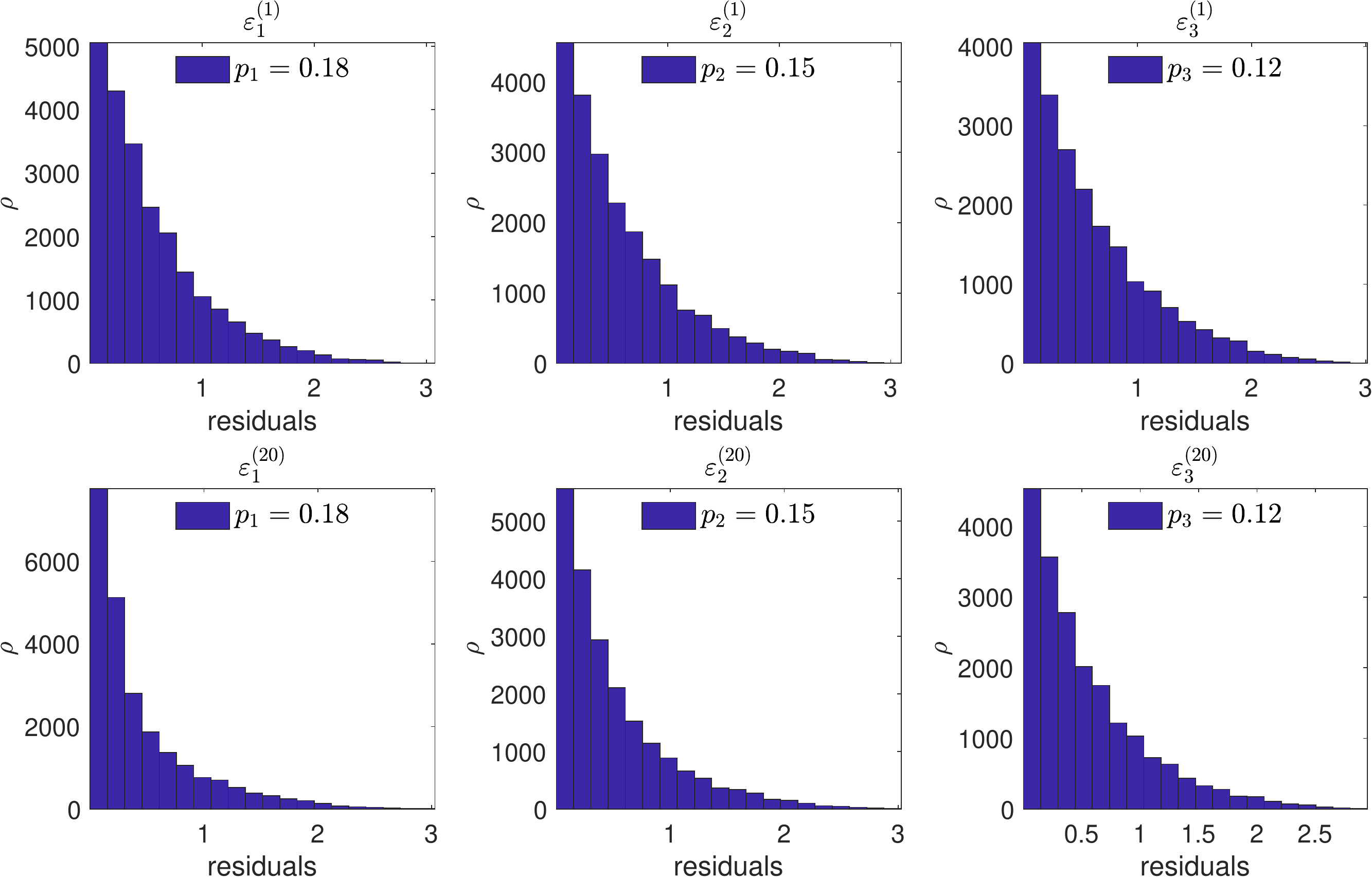}
\vspace{-2mm}
\captionsetup{width=0.98\linewidth}
\caption[Short Caption]{Histogram of synchronization residuals given by the non-zero entries of the matrix $ \tilde{\Psi}_{l}, \; l=\{1,2,3\}$, as defined in \eqref{eq:UpdatedPhi}, where the  $k=3$ underlying measurement graphs have edges densities $(p_1,p_2,p_3)= (0.18, 0.15, 0.12)$, noise level  $ \eta = 0.55$, and sparsity $ \lambda = 0.3$. The top (resp. bottom) row plots the non-zero residual entries after one iteration (resp., 20 iterations of  Algorithm \ref{algo:IterSyncAlgo}), with columns indexing the three groups of underlying angles.  
Note that for each column (i.e., group of angles), the residuals become significantly smaller after 20 iterations, for all three groups of angles, showcasing the effectiveness of the iterative procedure.} 
\label{fig:scanID_eee6_abc_k234_500_SDP}
\end{figure}

\begin{figure}[!ht]
\vspace{-3mm} 
\centering
\subcaptionbox[]{  
$k=2$; $\eta=0.62$; $\lambda=0.3$
}[ 0.31\textwidth ]
{\includegraphics[width=0.31\textwidth, trim=0cm 0cm 0cm 0.6cm,clip] {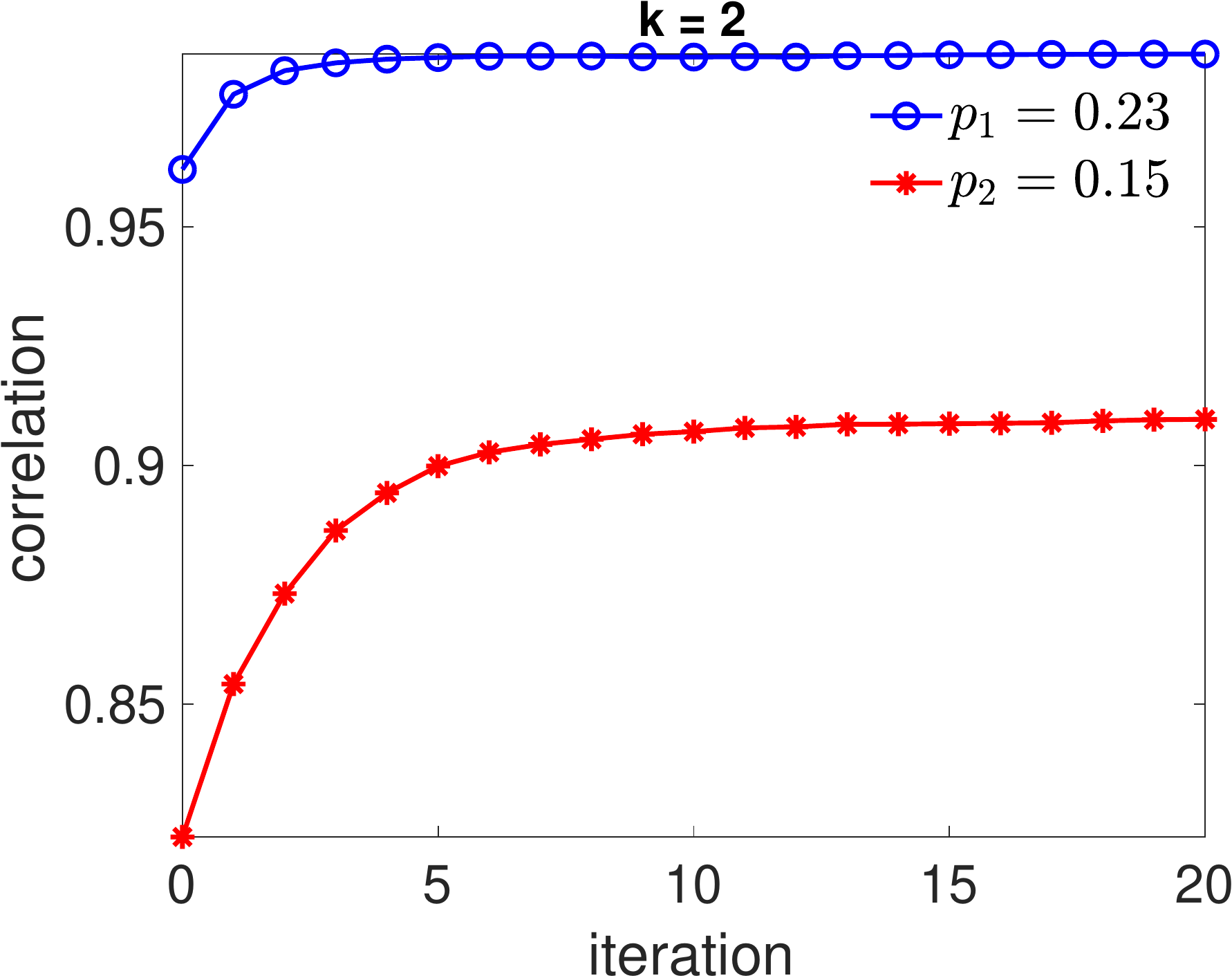} }
%
\subcaptionbox[]{ 
$k=3$; $\eta=0.55$; $\lambda=0.3$
}[ 0.31\textwidth ]
{\includegraphics[width=0.31\textwidth, trim=0cm 0cm 0cm 0.6cm,clip] {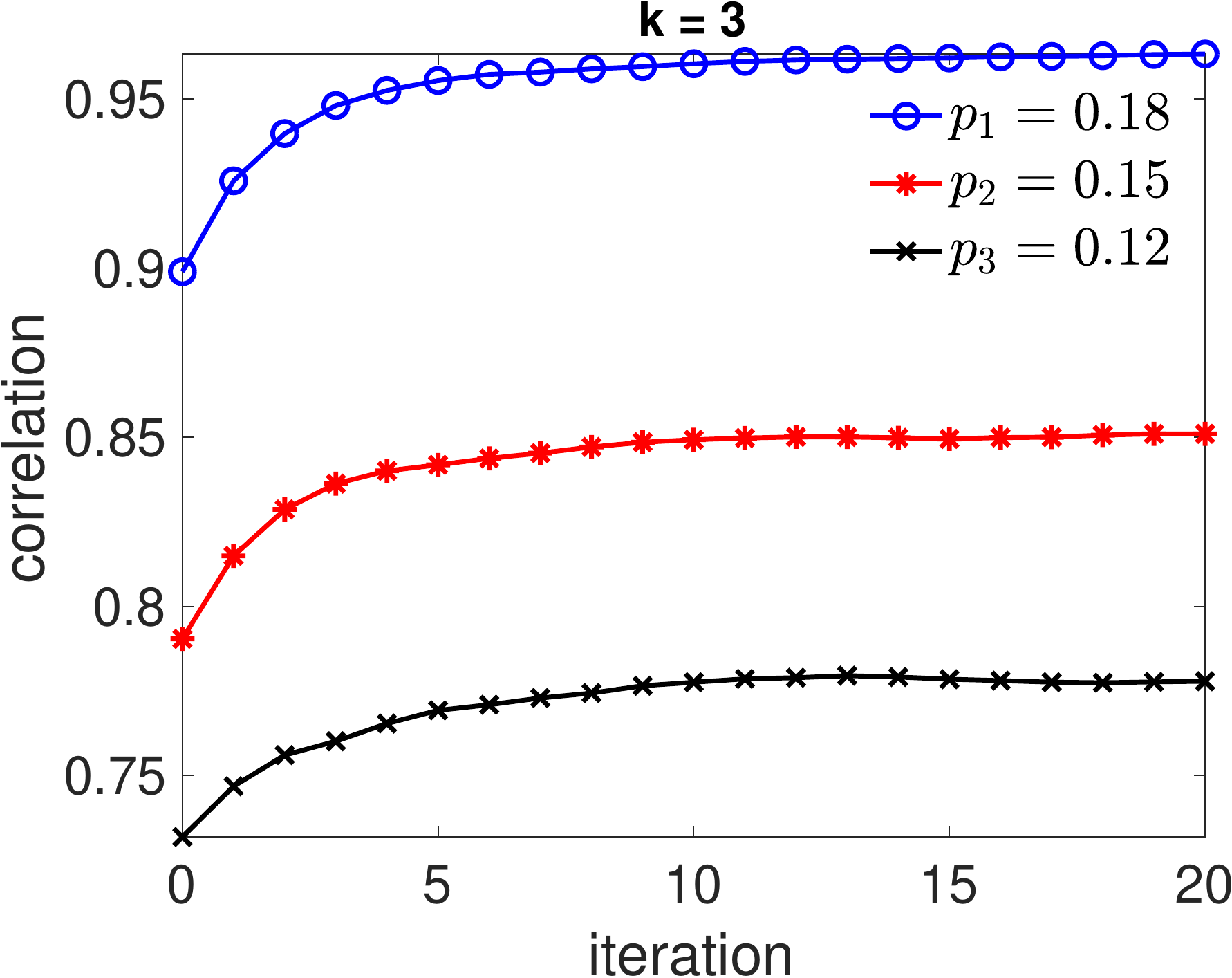} }
%
\subcaptionbox[]{ 
$k=4$; $\eta=0.40$; $\lambda=0.5$
}[ 0.31\textwidth ]
{\includegraphics[width=0.31\textwidth, trim=0cm 0cm 0cm 0.6cm,clip] {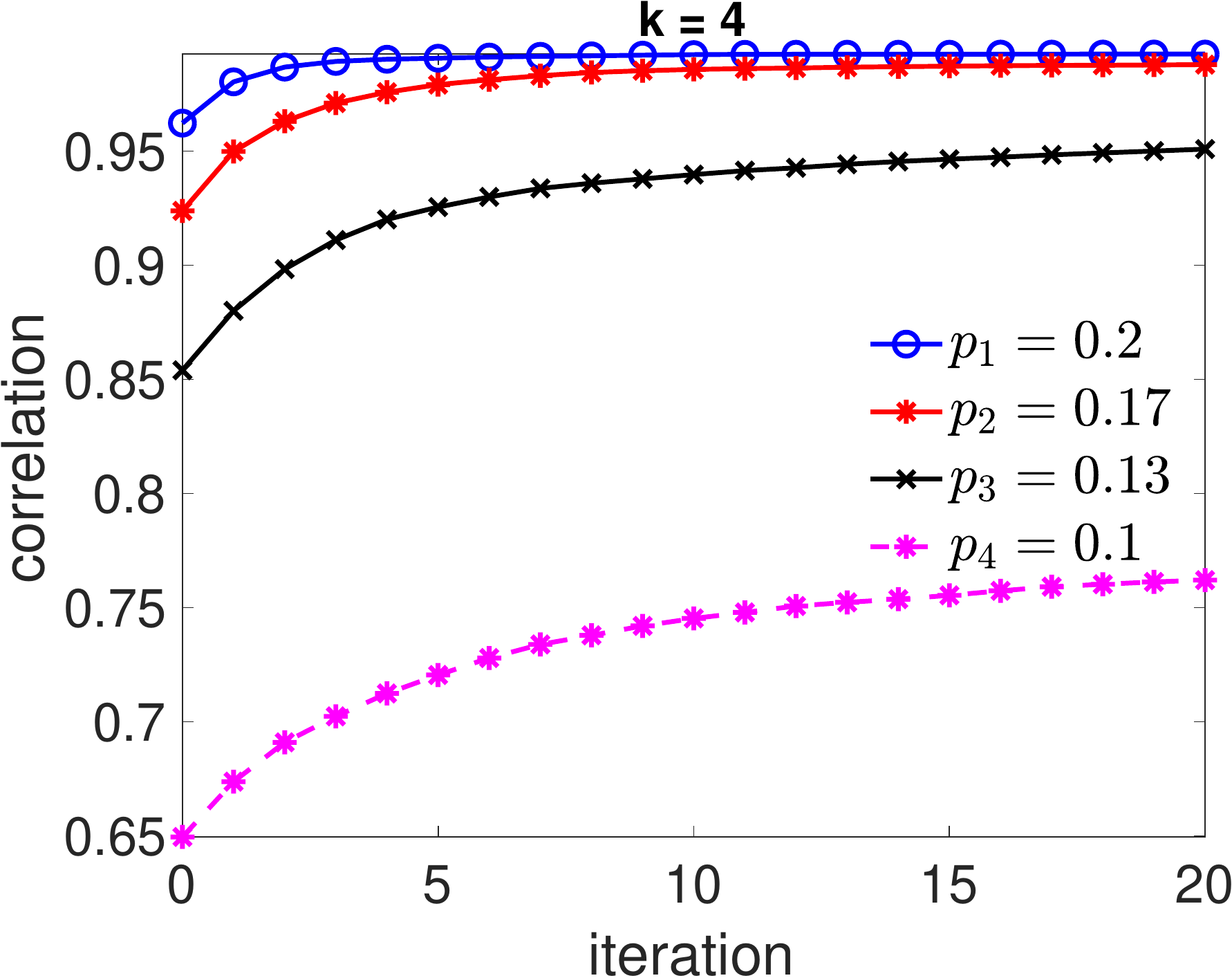} }
%
\vspace{-2mm}
\captionsetup{width=0.98\linewidth}
\caption[Short Caption]{Correlation with ground truth for the first 20 iterations of Algorithm \ref{algo:IterSyncAlgo}, for three problem instances with $k=2,3,4$ and $n=500$, at various noise levels $\eta$ and sparsity levels $\lambda$. In all three problem instances, and across all subsets of angles within each instance, the iterative procedure leads to an increase in the correlation with ground truth, especially for the groups of angles with the sparsest measurement graph.} 
\label{fig:iterations_detangle_k234}
\vspace{-3mm}
\end{figure}


\section{An application to the graph realization problem}  \label{sec:GRP} 


We now turn our attention to an application of bi-synchronization to a particular instance of the graph realization problem, where the graph disentangling procedure will prove useful.

\subsection{The graph realization problem}
In the \textit{graph realization problem} (GRP), one is given a graph $G=([n],E)$ together with a non-negative distance measurement $d_{ij}$ associated with each edge, and is asked to compute a realization of $G$ in $\mathbb{R}^d$. In other words, for any pair of adjacent nodes $i$ and $j$, the distance $d_{ij} = d_{ji}$ is available, and the goal is to find a $d$-dimensional embedding $p_1, p_2, \ldots, p_n \in \mathbb{R}^d$ such that $\|p_i-p_j \| =d_{ij}, \text{ for all } \set{i,j} \in E$. Due to its practical significance, the GRP has attracted a lot of attention in recent years, across many communities such as wireless sensor networks \cite{biswas_stress_sdp,overview}, structural biology \cite{molecule_problem}, dimensionality reduction, Euclidean ball packing and multidimensional scaling (MDS) \cite{cox}. 

In many real world applications, such as sensor networks and structural biology, the given distances $d_{ij}$ between adjacent nodes are not accurate, $d_{ij} = \|p_i - p_j \| + \varepsilon_{ij}$ where $\varepsilon_{ij}$ represents the added noise, and the goal is to find an embedding that realizes all known distances $d_{ij}$ as best as possible.
The 2D-ASAP algorithm proposed in \cite{asap2d} belongs to the group of algorithms that integrate local distance information into a global structure determination. For every sensor, one first identifies  globally rigid\footnote{
A framework is a finite graph $G$ together with a finite set of vectors in $d$-space, where each $p_i$ corresponds to a node of $G$, and the edges of $G$ correspond to fixed length bars connecting the nodes. Two frameworks with the same graph $G$ are equivalent if the lengths of their edges are the same.
A framework $G(p)$ is {\em globally rigid} in $\mathbb{R}^d$ if all frameworks $G(q)$ in $\mathbb{R}^d$ which are $G(p)$-equivalent (have all bars the same length as $G(p)$) are congruent to $G(p)$ (that is, they are related by a rigid transformation).
A graph $G$ is {\em generically globally rigid} in $\mathbb{R}^d$ if $G(p)$ is globally rigid at
all generic configurations $p$ \cite{Connelly3,Connelly}. Only recently it was demonstrated that global rigidity is a generic property in this sense, for graphs in each dimension \cite{Connelly,Harvard1}.} 
subgraphs of its $1$-hop neighborhood, denoted as patches. Each patch is then separately localized in a coordinate system of its own, using for example either a stress minimization approach \cite{gotsman}, or semidefinite programming (SDP).

In the noise-free case, the coordinates of the nodes in each patch must agree with their global positioning up to some unknown rigid motion, that is, up to translation, rotation and possibly reflection. To every patch there corresponds an element of the Euclidean group Euc(d), and the goal is to estimate the group elements that will properly align all the patches in a globally consistent manner. By finding the optimal alignment of all pairs of patches whose intersection is large enough, we obtain measurements for the ratios of the unknown group elements, which is thus an instance of the group synchronization problem over the Euclidean group Euc(d). 

Unlike the generative models analyzed in the earlier sections, the measurement graph $G$   arising in the graph realization problem is a disc graph, as opposed to an {\ER}  graph.  Consider a patch $P_i$ to be given by a central node $i$ along with all its neighbors within a radius $r$. In order for a pair of patches $P_i$ and  $P_j$ to be aligned, they need to have enough points in common (typically at least $d+1$, where $d$ is the ambient dimension), which in turn means that the two patches $P_{i}$ and $P_{j}$ have their center nodes at most $2r$ distance apart. Thus, the measurement graph $G$ is best modeled by a disc graph.

The specific problem instance we consider here is one where there exist two underlying non-congruent embeddings that we seek to recover, $X$ and $Y \in  \mathbb{R}^{n \times 2}$. For each such embedding, we have available information on the embedding of patches (subgraph embeddings), but whenever we have available a pairwise alignment between two patches, we do not know a-apriori whether this measurement pertains to structure $X$ or $Y$. We further detail this setup later in the section, and next introduce an existing algorithm from the literature on the graph realization problem.

\subsection{The ASAP algorithm}
This section details the steps of the ASAP (As-Synchronized-As-Possible)  algorithm, a divide-and-conquer pipeline  proposed in \cite{asap2d} for the problem of scalable recoverings of point clouds from a sparse noisy set of pairwise distance information.  

The ASAP approach starts by decomposing the given graph $G$ into overlapping subgraphs (referred to as \textit{patches}), which are then embedded via the method of choice. To every local patch embedding, there corresponds a scaling and an element of the Euclidean group Euc($d$) of $d$-dimensional rigid transformations, and our goal is to estimate the group elements that will properly align all the patches in a globally consistent framework. The local optimal alignments between pairs of overlapping patches (whose intersection should be large enough - preferably tens of even hundreds of nodes) yield noisy measurements for the ratios of the above unknown group elements. Finding group elements from noisy measurements of their ratios is nothing but the group synchronization problem, the central topic of our present paper. 
For completeness, Table~\ref{overview} gives an overview of the approach, and Figure \ref{fig:pipeline} shows a schematic view of the pipeline we consider, as detailed in \cite{asap2d}.

\begin{algorithm*}
  \begin{center}
	\small
    \begin{tabular}{| p{0.15\textwidth} | p{0.78\textwidth} |} 
      \hline
      INPUT & $G=(V,E), \; |V|=n, \; |E|=m, \; \; d$\\
      \hline
      \hline
      Choose Patches &
      1. Break $G$ into $N$ overlapping patches $P_1,\ldots,P_N$.\\
      Embed Patches &
      2. Embed each patch $P_i$ separately via the method of choice (for eg, cMDS).
      \\
      \hline
      Step 1 & 1. Align all pairs of patches $(P_i, P_j)$ that have enough nodes in common. \\
      Rotate \& Reflect & 
      2. Estimate their relative rotation and possibly reflection $H_{ij} \in O(d) \subset \mathbb{R}^{d\times d}$.\\
      & 3. Build a sparse $dN\times dN$ symmetric matrix $H=(H_{ij})$ where entry $ij$ is itself a matrix in $O(d)$.\\
      & 4. Define $\mathcal{H} = D^{-1} H$, where $D$ is a diagonal matrix with \newline
      $D_{1+d(i-1),1+d(i-1)}=\ldots=D_{di,di} = deg(i)$, $i = 1,\ldots,N$, where $deg(i)$ is the node degree of patch $P_i$. \\
      & 5. Compute the top $d$ eigenvectors $v_i^\mathcal{H}$ of $\mathcal{H}$  satisfying  $ \mathcal{H} v_i^\mathcal{H}= \lambda_i^\mathcal{H} v_i^\mathcal{H}, i=1,\ldots,d$. \\
      & 6. Estimate the global reflection and rotation of patch $P_i$ by the orthogonal matrix $\hat{h}_i $ that is closest to $\widetilde{H}_i $ in Frobenius norm, where $\widetilde{H}_i $ is the submatrix corresponding to the i$^\text{th}$ patch in the $dN \times d$ matrix formed by the top $d$ eigenvectors $[v_1^\mathcal{H} \ldots v_d^\mathcal{H}]$. \\
      & 7. Update the embedding of patch $P_i$ by applying the  orthogonal transformation $\hat{h}_i $. \\ 
      \hline
      Step 2 Translate &  Solve $m\times n$ overdetermined system of linear equations for optimal translation in each dimension.\\
      \hline
      \hline
      OUTPUT & Estimated coordinates $\hat{x}_1,\ldots,\hat{x}_n$ \\
      \hline
    \end{tabular}
  \end{center}
  \caption{The ASAP algorithm  \cite{asap2d}.}
  \label{overview}
  \end{algorithm*}

\normalsize

\begin{figure}[ht]
\vspace{-3mm}
\begin{center}
\includegraphics[width=0.95\columnwidth]{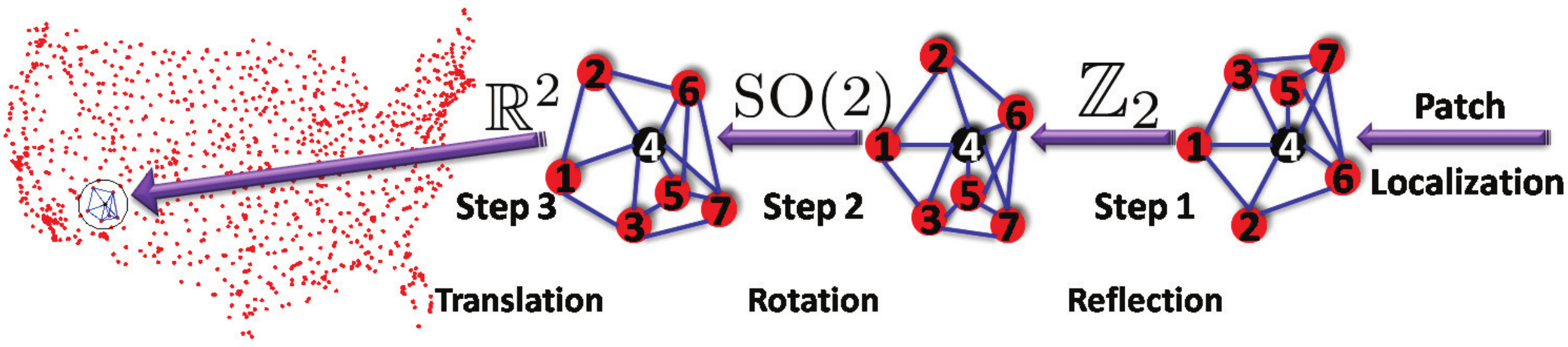}
\end{center}
\vspace{-2mm}
\captionsetup{width=0.99\linewidth}
\caption{The ASAP recovery process in $d=2$ dimensions, considered in \cite{asap2d},  for a patch in the US graph.} \label{fig:pipeline}
\vspace{-3mm}
\end{figure}


\begin{figure}[!ht]
\vspace{-2mm}
\centering
\subcaptionbox[]{  
}[ 0.41\textwidth ]
{\includegraphics[width=0.32\textwidth]{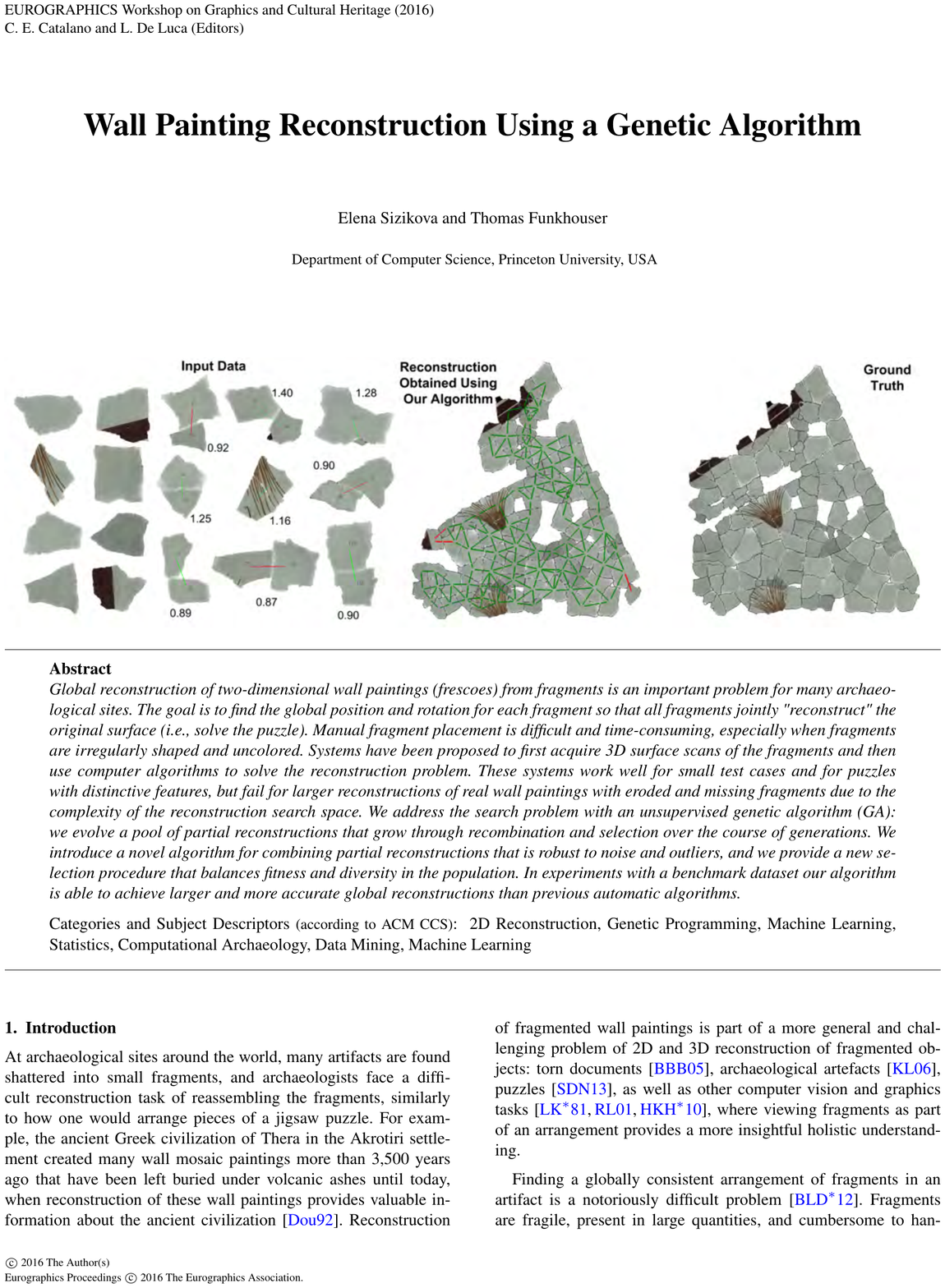} }
%
\subcaptionbox[]{   
}[ 0.45\textwidth ]
{\includegraphics[width=0.28\textwidth]{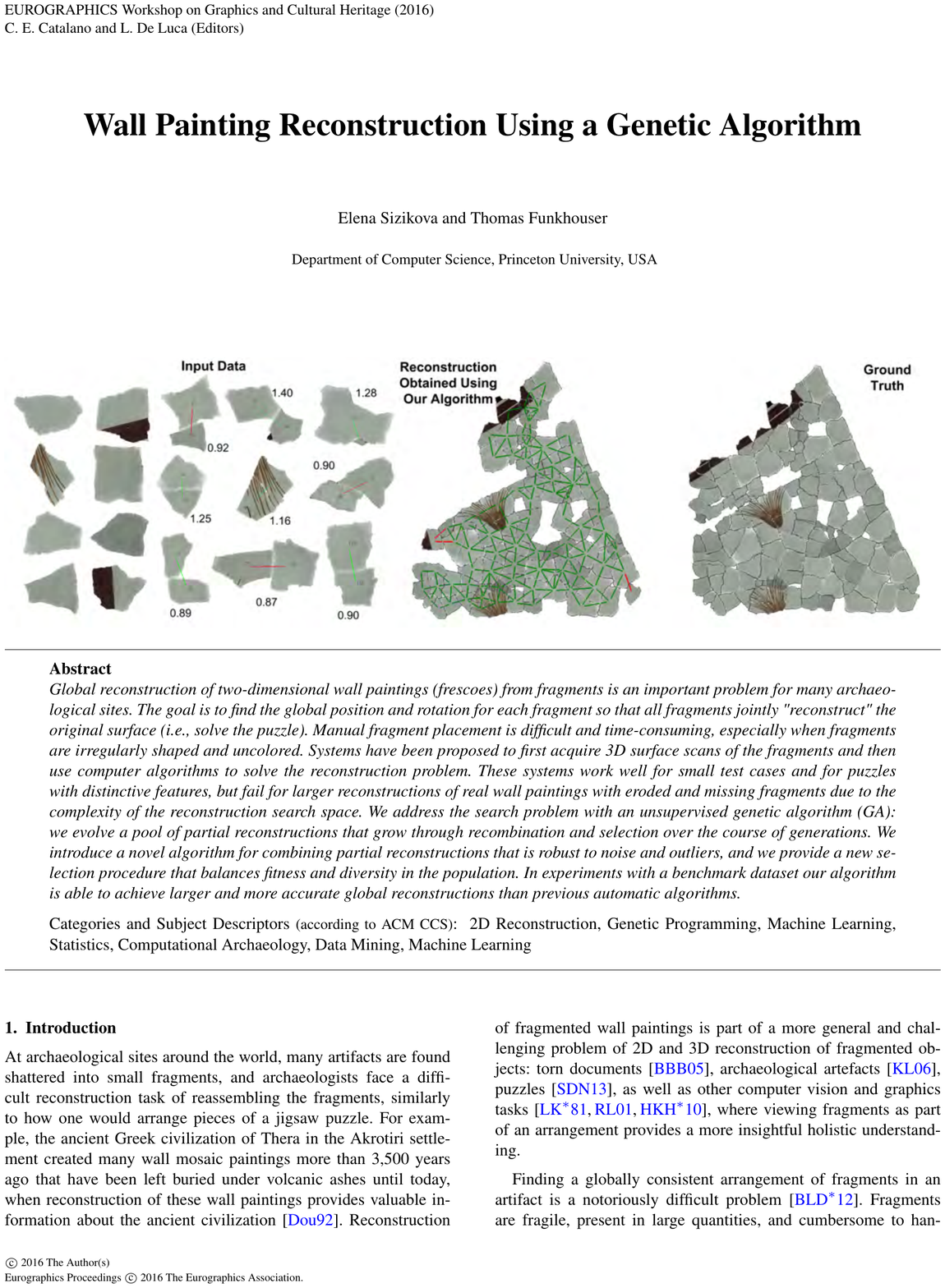} }
\vspace{-3mm} 
\captionsetup{width=0.98\linewidth}
\caption[Short Caption]{Alignment of frescos, as explored in \cite{syncFrescoes}. The alignment of the pieces can be construed as an instance of the synchronization problem over the Euclidean group of rigid motions.}  
\vspace{-2mm}
\label{fig:syncFrescoes_AC}
\end{figure}

\subsection{The multi-graph realization problem}
The patch alignment problem and the ASAP algorithm detailed above could be construed as aligning pieces of a jigsaw puzzles. Patches could be interpreted as pieces of the puzzle, and the task becomes to assign to each individual piece an element of the Euclidean Group in such a way that when we apply the specific transformation to each individual patch, all the pieces of the puzzle synchronize together.  
For the  case of a jigsaw puzzle, only a rotation and translation are needed; a reflection is not required since this is trivially handled for the pieces of the jigsaw puzzle, by simply identifying the front-back of each piece. 

Finally, we recall here the \textit{cluster-synchronization} problem detailed in Section \ref{sec:kSyncApp}, which can be solved and analyzed with similar tools as those used in the present article. Imagine a instance where one combines two different sets of puzzles, and aims to disentangle the two sets of pieces. In a real-world application, this could arise, for instance, at an archaeological site where two broken froscoes are uncovered, both shattered into pieces which got mixed up, and the goal is to separate out the two froscoes. When one considers a pair of pieces, it is not known whether they both come from same fresco, or from two different ones. A probabilistic model for this  setting is one where there are two groups of angles 
$ \mathcal{C}_1 = (\alpha_1,\alpha_2,\ldots,\alpha_{n_1} )$, and 
$ \mathcal{C}_2 = (\beta_{n_1+1},\beta_{n+2},\ldots,\beta_{n_1 + n_2} )$,  
and an available measurement between two angles $ \alpha_i$ and $\beta_j$ from two different clusters is completely uniformly distributed in the unit circle, while a measurement between a pair of angles from the same cluster is a noisy version of the offset $\alpha_i-\alpha_j$ or $\beta_i-\beta_j$. This model is reminiscent of the standard stochastic block model, well studied in the clustering and network analysis literatures.  

\medskip 

\paragraph{Setup}
We consider the following instance of the graph realization problem, where one aims to recover the coordinates of two clouds of points, $X$ and $Y \in \mathbb{R}^{n \times 2}$, indexed by the same set of nodes. More specifically, the goal is to identify two clouds of points,
$p^{(X)}_1, \ldots, p^{(X)}_n \in \mathbb{R}^d$
and 
$p^{(Y)}_1, \ldots, p^{(Y)}_n \in \mathbb{R}^2$, given information on subgraph embeddings from either confirmation. In other words, to each node we associate a patch
(eg, its one-hop neighborhood), which may assume one of two possible embeddings $\mathcal{P}_i^{(X)}$ and $\mathcal{P}_i^{(Y)}$.
Whenever a pair of patches $\mathcal{P}_i$ and $\mathcal{P}_j$ are compared and aligned, one of three things may happen. 

\begin{itemize}
\item (with probability $p_1$) Both subgraph embeddings are of type-$X$, and we align $\mathcal{P}_i^{(X)}$ and $\mathcal{P}_j^{(X)}$, with $ \setij \in E_1$; 
\item (with probability $p_2$)  both subgraph embeddings are of type-$Y$, and we align  $\mathcal{P}_i^{(Y)}$ and $\mathcal{P}_j^{(Y)}$
with $ \setij \in E_2$;  
\item (with probability $\eta = 1-p_1-p_2$)  one embedding is of type-$X$ and the other one of type-$Y$, for example 
$\mathcal{P}_i^{(X)}$ and $\mathcal{P}_j^{(Y)}$ (with $ \setij \in E_W$, the edge set of the bad subgraph) are aligned, and the resulting outlier measurement is a random number uniformly distributed in the unit circle, and thus completely uninformative. 
\end{itemize}
The above is nothing but an instance of the bi-synchronization problem considered in this paper. The task then becomes to identify the two subgraphs $G_1$ and $G_2$, which can ultimately facilitate the recovery of the two global embeddings $X$ and $Y$.

In our experiments, we let the point cloud $Y$ be to a non-rigid transformation of point cloud $X$, and the goal is to recover both embeddings. 
Figure  \ref{fig:US_clean}  shows the two embeddings $X$ (a) and $Y$ (b), while (c) shows the Procrustes alignment of two point clouds, together with the displacement error bars, clearly showing the two embeddings are not congruent. 
Both sets of patches take their 2D coordinates from their corresponding ground truth embedding. For example, in the noiseless case, the 2D embedding of patch $P_{i}^{(X)}$ is given by the coordinates within $X$ of the nodes contained in  $P_{i}^{(X)}$. When noise is added, indexed by $\sigma$, we perturb both 2-D coordinates by additive Gaussian noise with mean $0$ and variance $\sigma^2$.

\begin{figure}[!ht]
\vspace{-3mm}
\centering
\subcaptionbox[]{  
}[ 0.31\textwidth ]
{\includegraphics[width=0.31\textwidth, trim=0.8cm 0.4cm 0.2cm 0.4cm,clip] {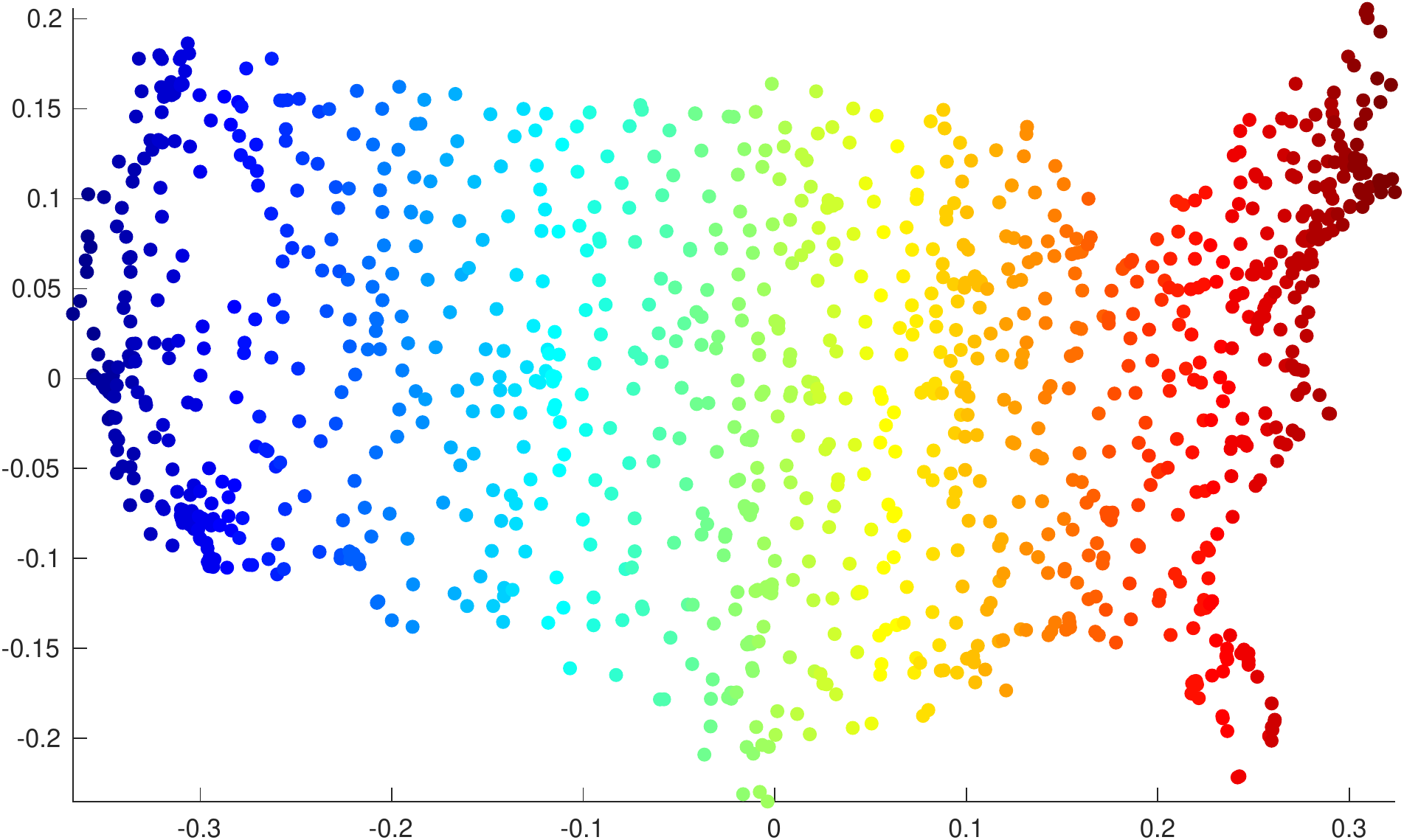} }
\hspace{2mm}
\subcaptionbox[]{  
}[ 0.31\textwidth ]
{\includegraphics[width=0.31\textwidth, trim=0.9cm 0.4cm 0.2cm 0.4cm,clip] {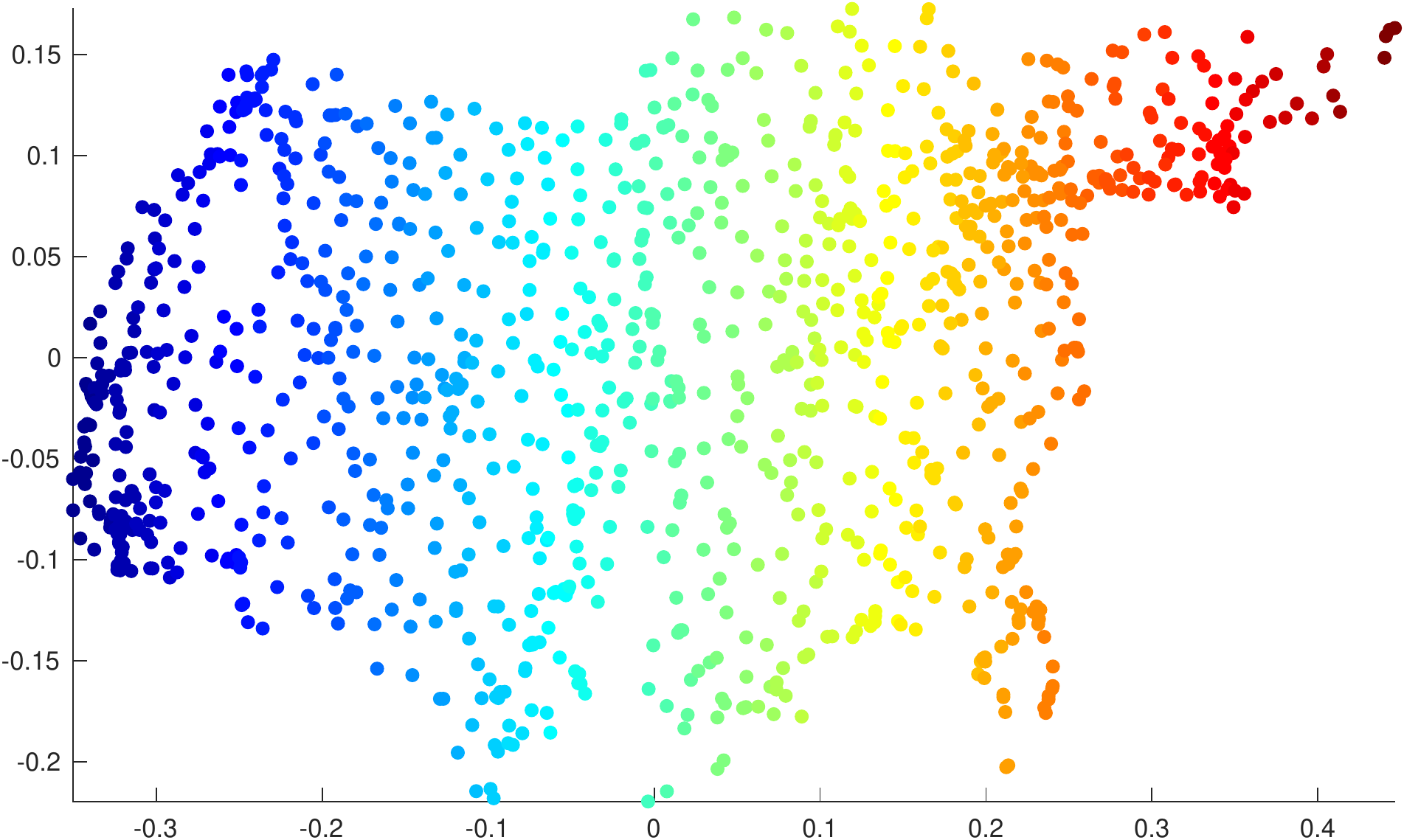} }
%
\subcaptionbox[]{   
}[ 0.33\textwidth ]
{\includegraphics[width=0.33\textwidth, trim=3cm 1.3cm 1.5cm 0.4cm,clip] {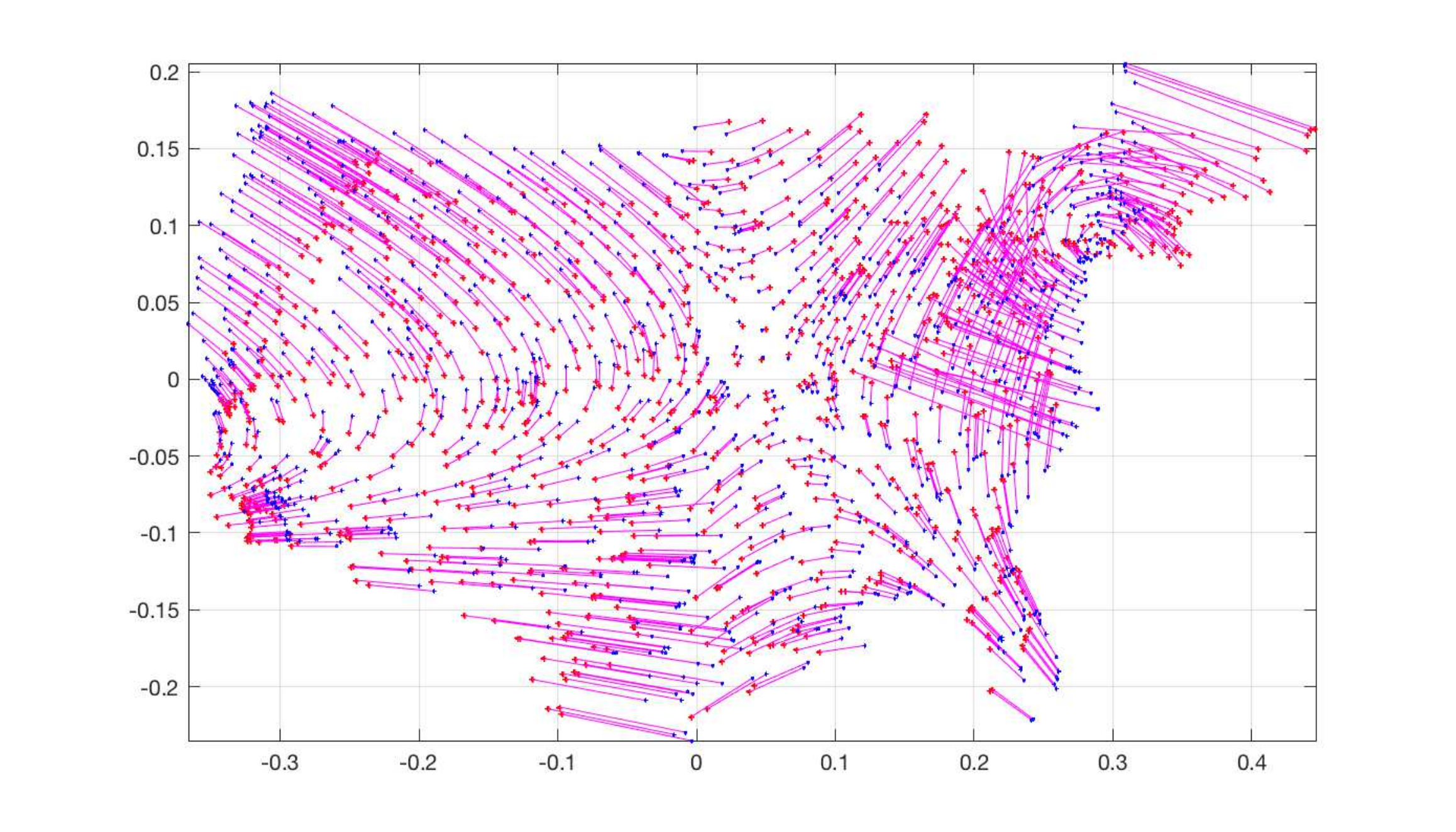} }
%
\vspace{-2mm}
\captionsetup{width=0.98\linewidth}
\caption[Short Caption]{Clean data, two non-congruent embeddings of the US graph: (a) Original US map (b) Non-rigid transformation of the original data (shear transformation, followed by a rotation and scaling of the Midwest and East Coast). (c) Procrustes alignment of (a) and (b) showing the displacements between the two embeddings. 
}  
\vspace{-2mm}
\label{fig:US_clean}
\end{figure}

\begin{figure}[!ht]
\vspace{-2mm}
\centering
\subcaptionbox[ ]{ $G =G_1 \cup G_2$
}[ 0.26\textwidth ]
{\includegraphics[width=0.26\textwidth] {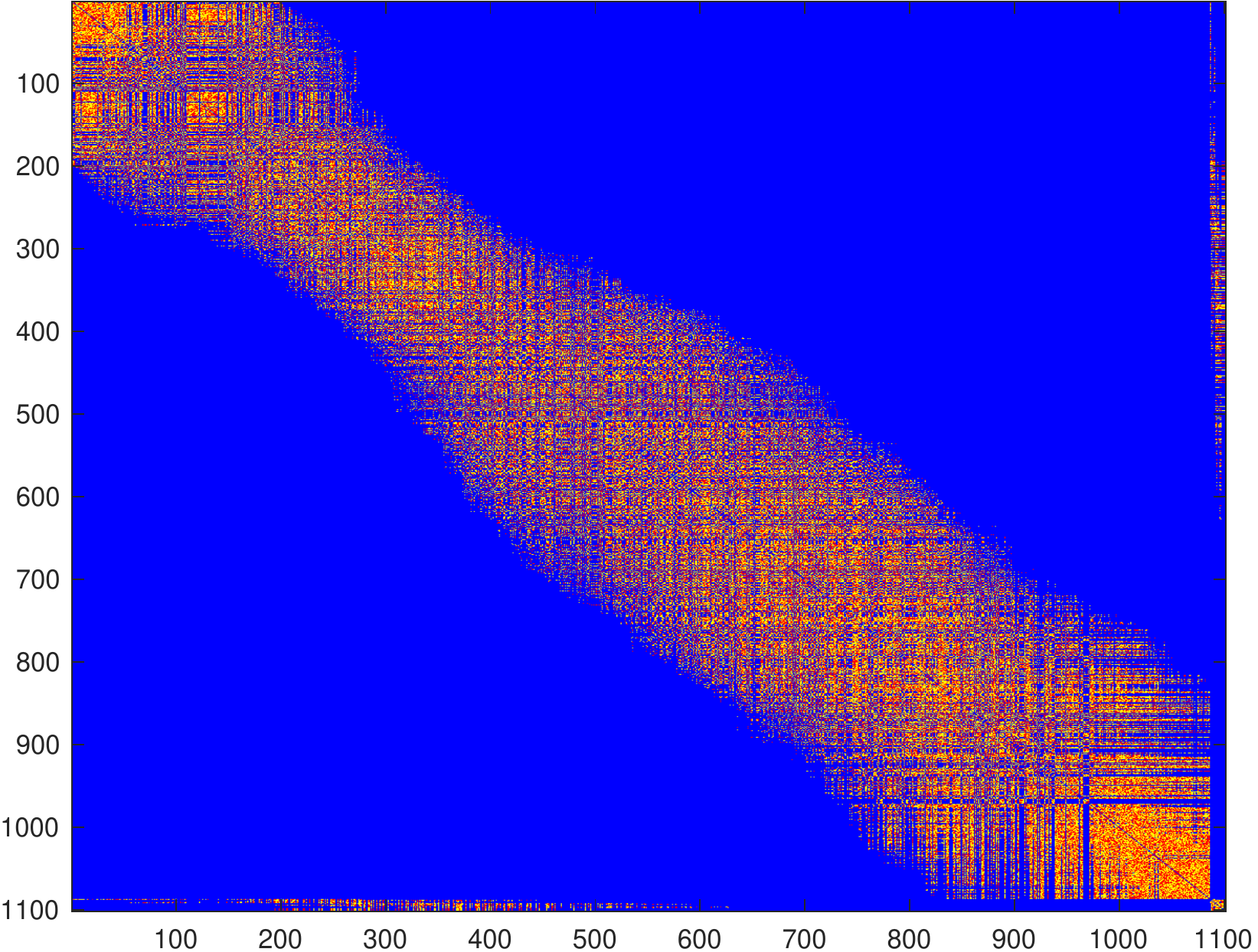}}
\subcaptionbox[ ]{ Degree distribution $G_1$
}[ 0.26\textwidth ]
{\includegraphics[width=0.26\textwidth] {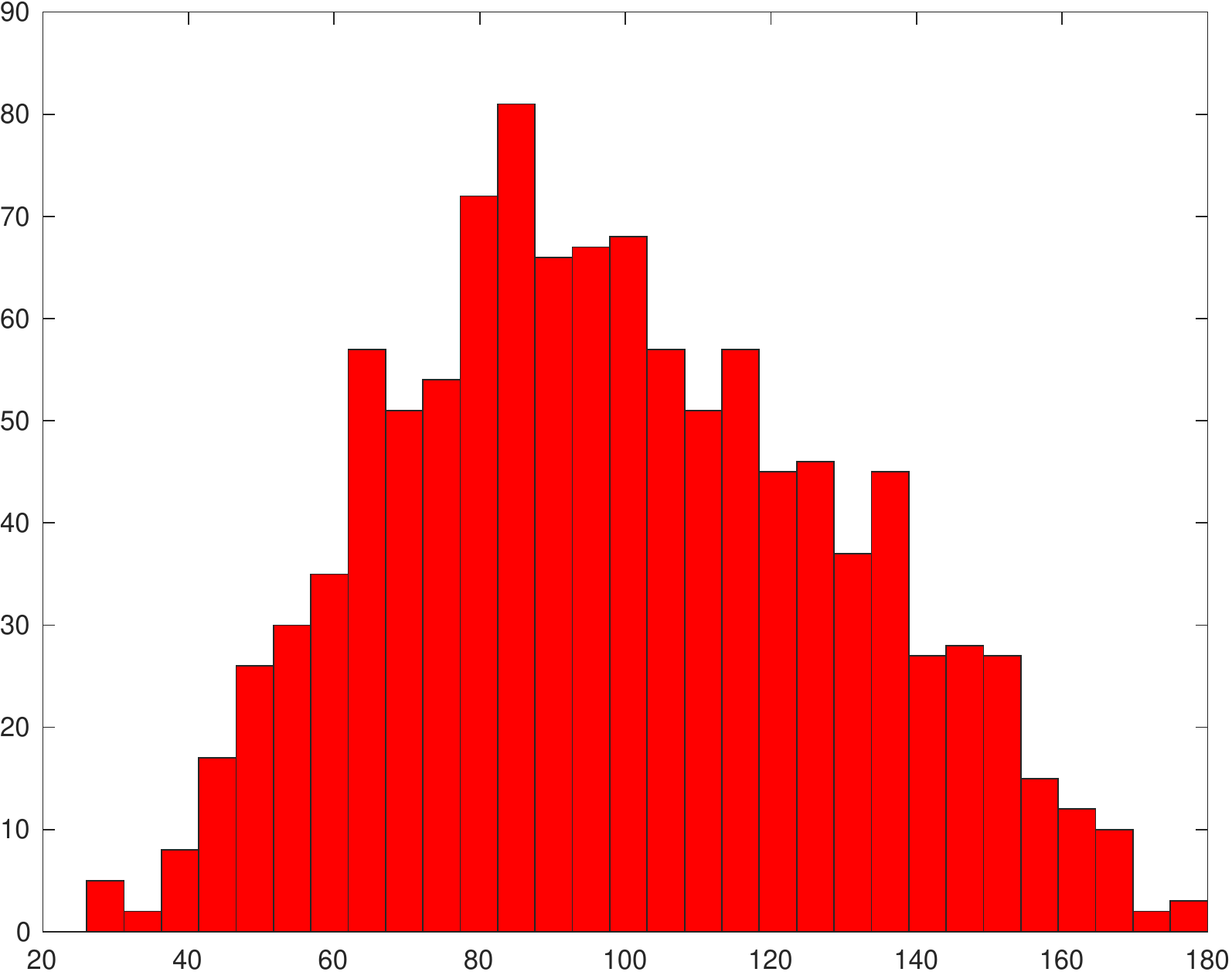}}
\subcaptionbox[ ]{ Degree distribution $G_2$
}[ 0.26\textwidth ]
{\includegraphics[width=0.26\textwidth] {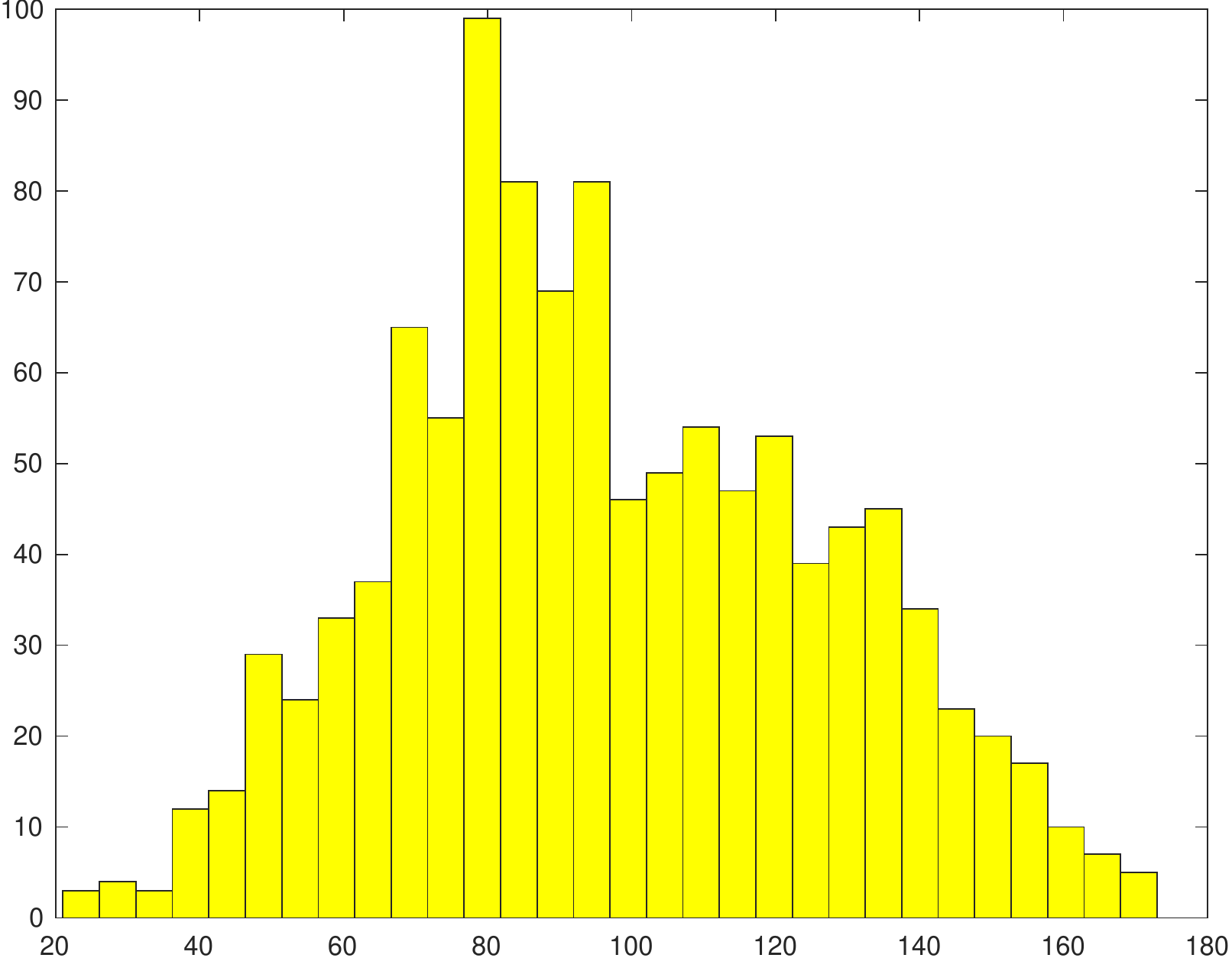}}
%
\subcaptionbox[ ]{ Heatmap of degrees in $G_1$ 
}[ 0.26\textwidth ]
{\includegraphics[width=0.26\textwidth] {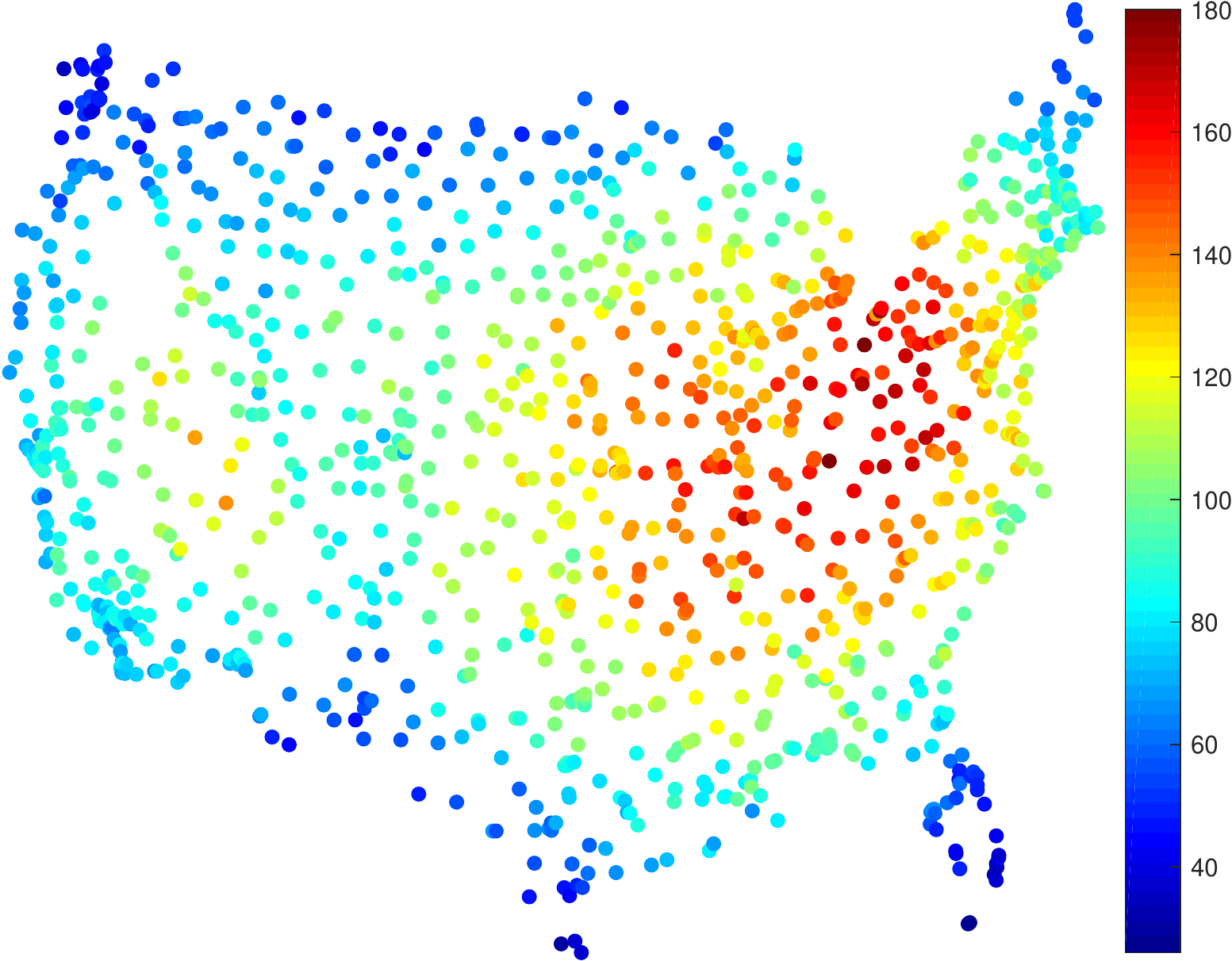} }
%
\subcaptionbox[ ]{ Heatmap of degrees in $G_2$ 
}[ 0.26\textwidth ]
{\includegraphics[width=0.26\textwidth] {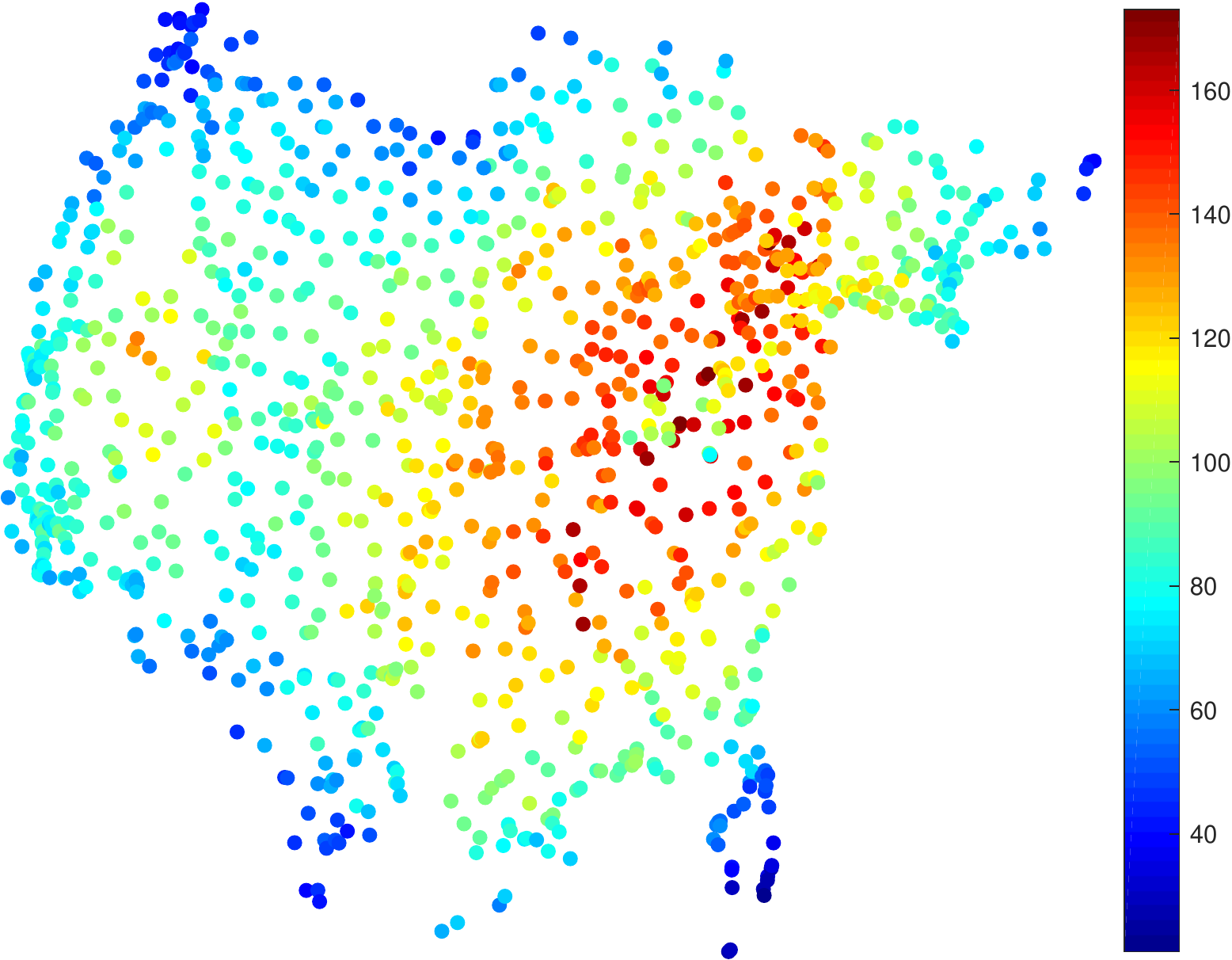} }
%
\captionsetup{width=0.99\textwidth}
\caption[Short Caption]{(a) Adjacency matrix of the measurement graph $G$, highlighting the decomposition $G =G_1 $ (red) $+ G_2$ (yellow). Plots (b) and (c) show the histogram of degrees in $G_1$ and $G_2$. The embeddings of $X$ and $Y$ are colored by their corresponding degree, i.e., each node $i$ in the embedding (that lies at the center of patch $P_i$) is colored by the number of other patches it overlaps with. 
}
\label{fig:US__G1_G2}
\vspace{-2mm}
\end{figure}

The top left plot of Figure \ref{fig:US__G1_G2} shows the adjacency matrix of the measurement graph $G$, highlighting the decomposition $G =G_1 \cup G_2$, with edges of $G_1$, respectively $G_2$, are colored in red, respectively yellow. The rightmost plots in the top row of the same Figure \ref{fig:US__G1_G2} show the histogram of degrees in the patch graphs $G_1$ and $G_2$ motivating the normalized version $D^{-1} H$.

\begin{figure}[!ht]
\vspace{-3mm}
\hspace{1.6cm} $\hat{X}$  \hspace{3cm}   $\hat{X} \;\; vs. \;\; X$   \hspace{2.6cm} $\hat{Y}$  \hspace{3.2cm}   $\hat{Y} \;\; vs. \;\; Y$ 

\centering
%
%
\subcaptionbox[ ]{$ \sigma = 0 $
}[ 1\textwidth ]
{\includegraphics[width=0.21\textwidth, trim=0.2cm 0.4cm 0.4cm 1.8cm,clip] {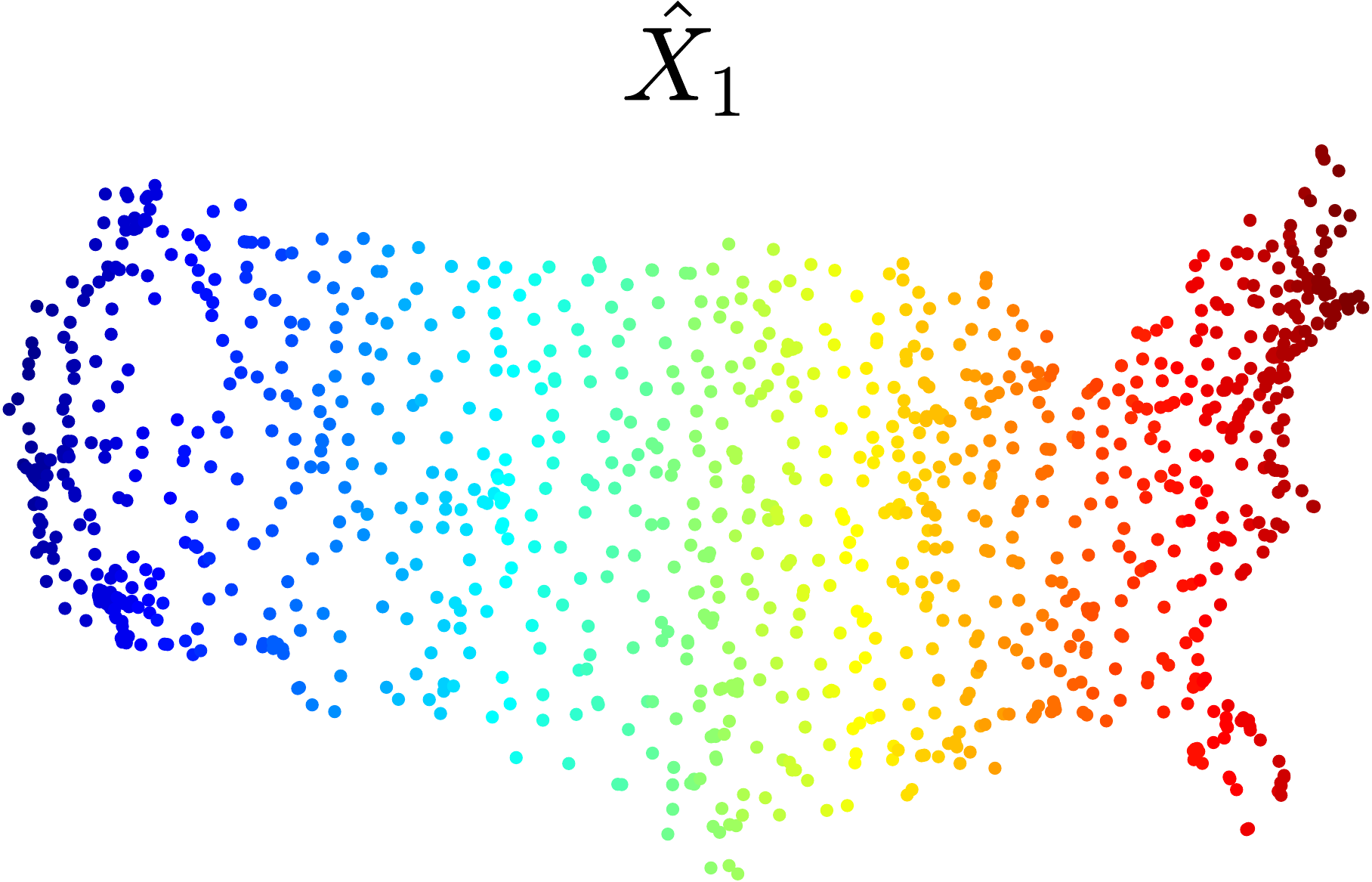}
\includegraphics[width=0.27\textwidth, trim=0.2cm 1cm 0.4cm 1.8cm,clip] {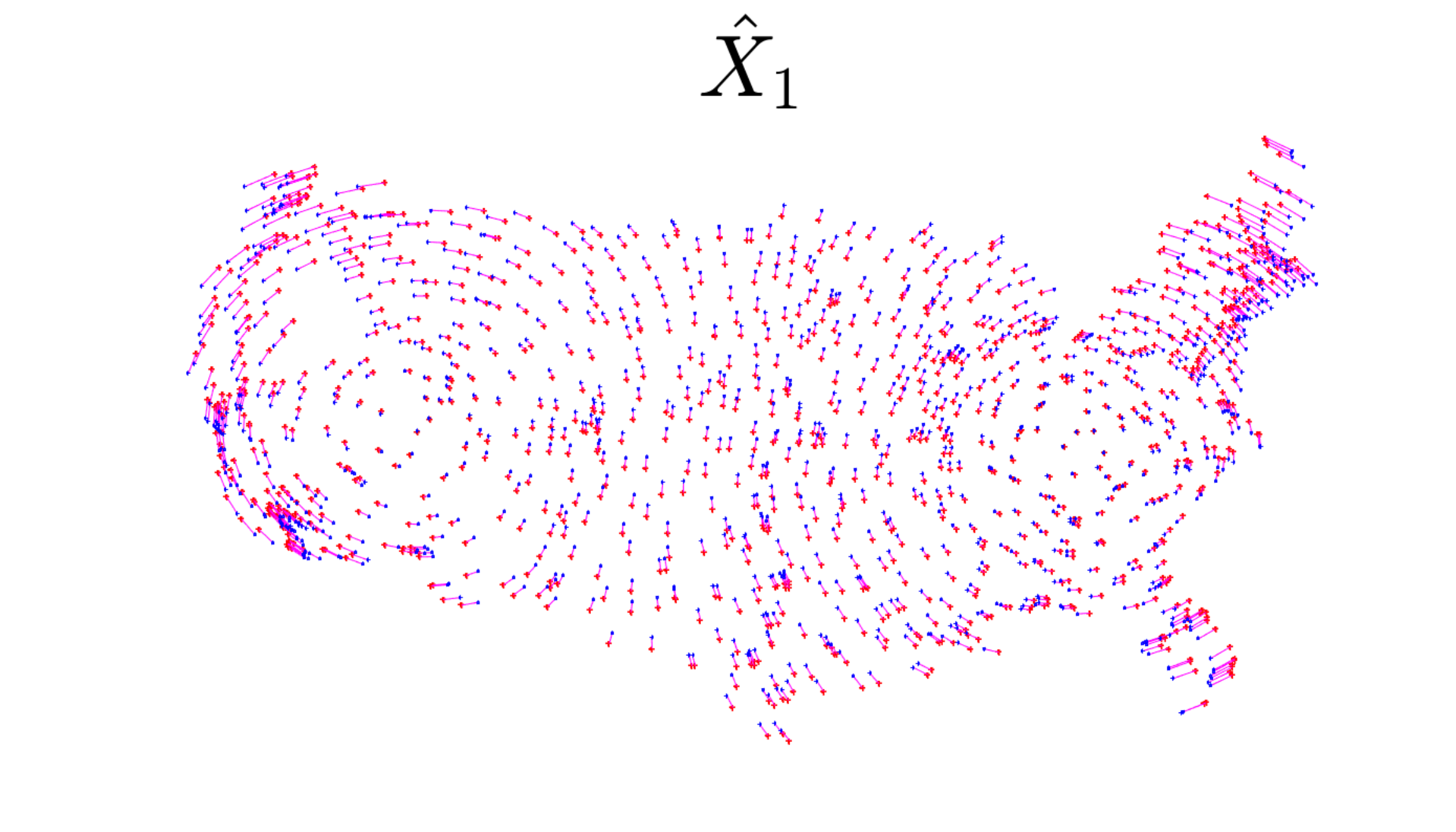}
\includegraphics[width=0.21\textwidth, trim=0.2cm 0.4cm 0.4cm 1.8cm,clip] {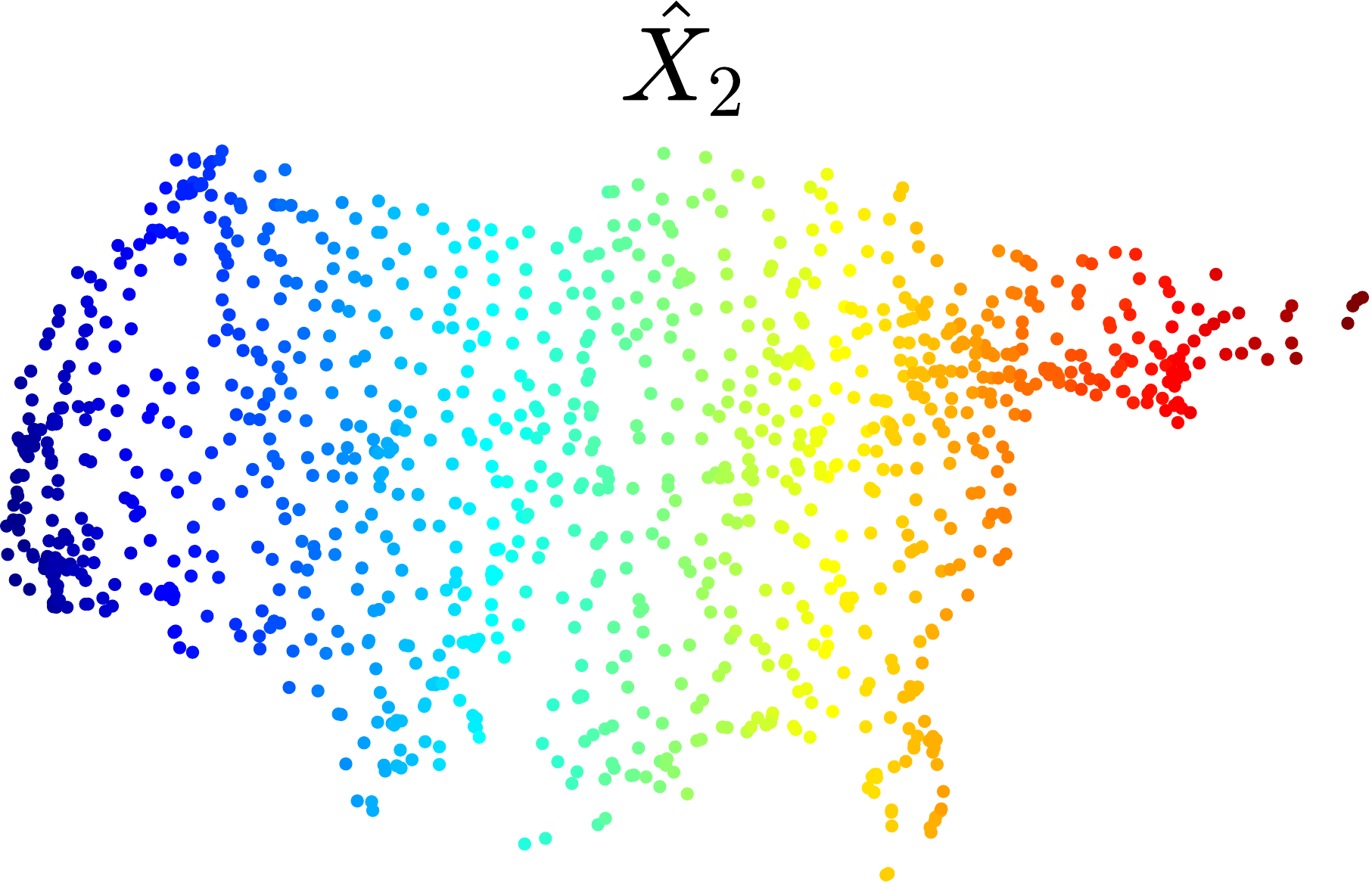}
\includegraphics[width=0.27\textwidth, trim=0.2cm 1cm 0.4cm 1.8cm,clip] {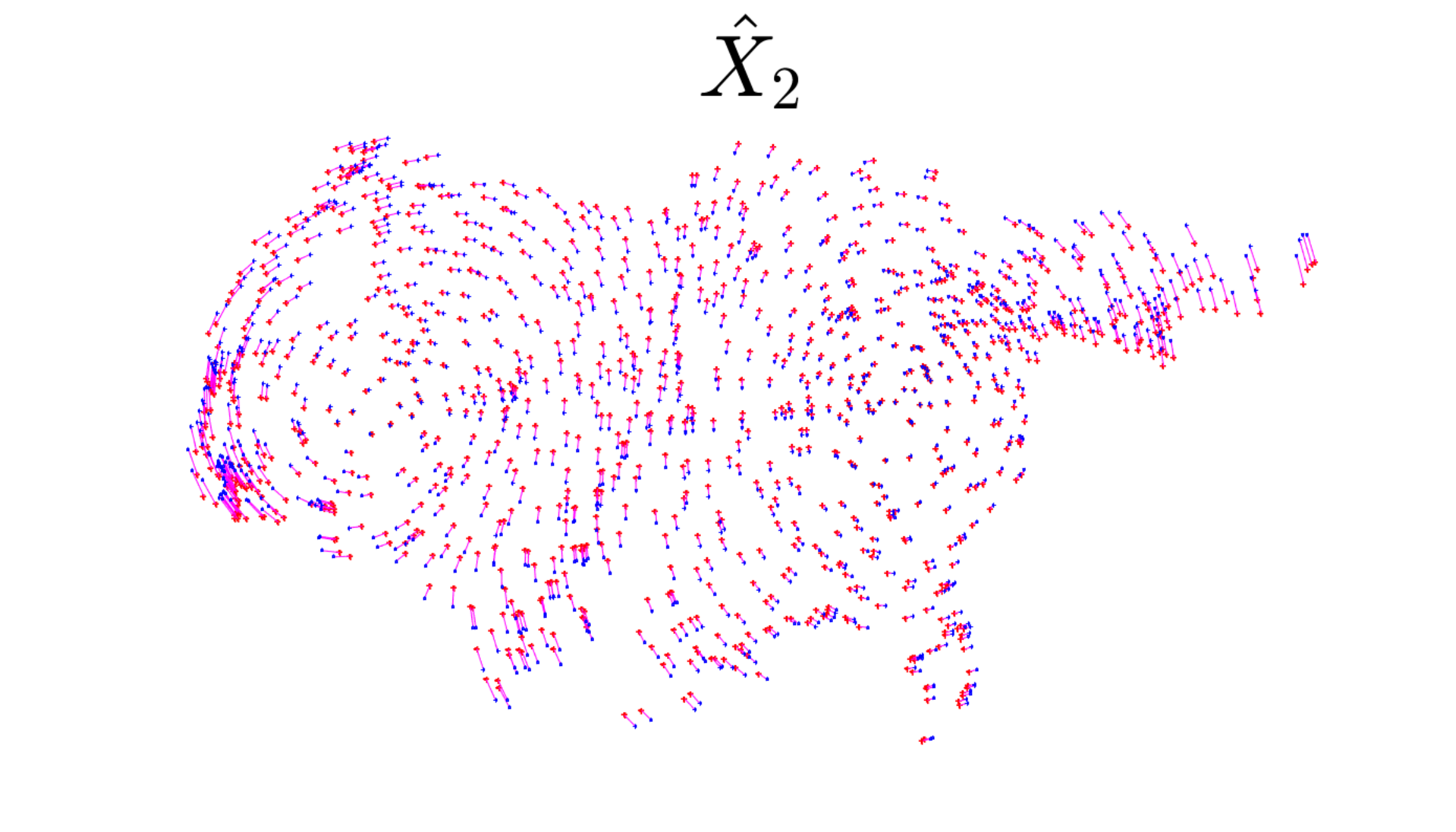}  
}  
%
%
%
\subcaptionbox[ ]{  $ \sigma = 0.20 $
}[ 1\textwidth ]
{\includegraphics[width=0.21\textwidth, trim=0.2cm 0.4cm 0.4cm 1.8cm,clip] {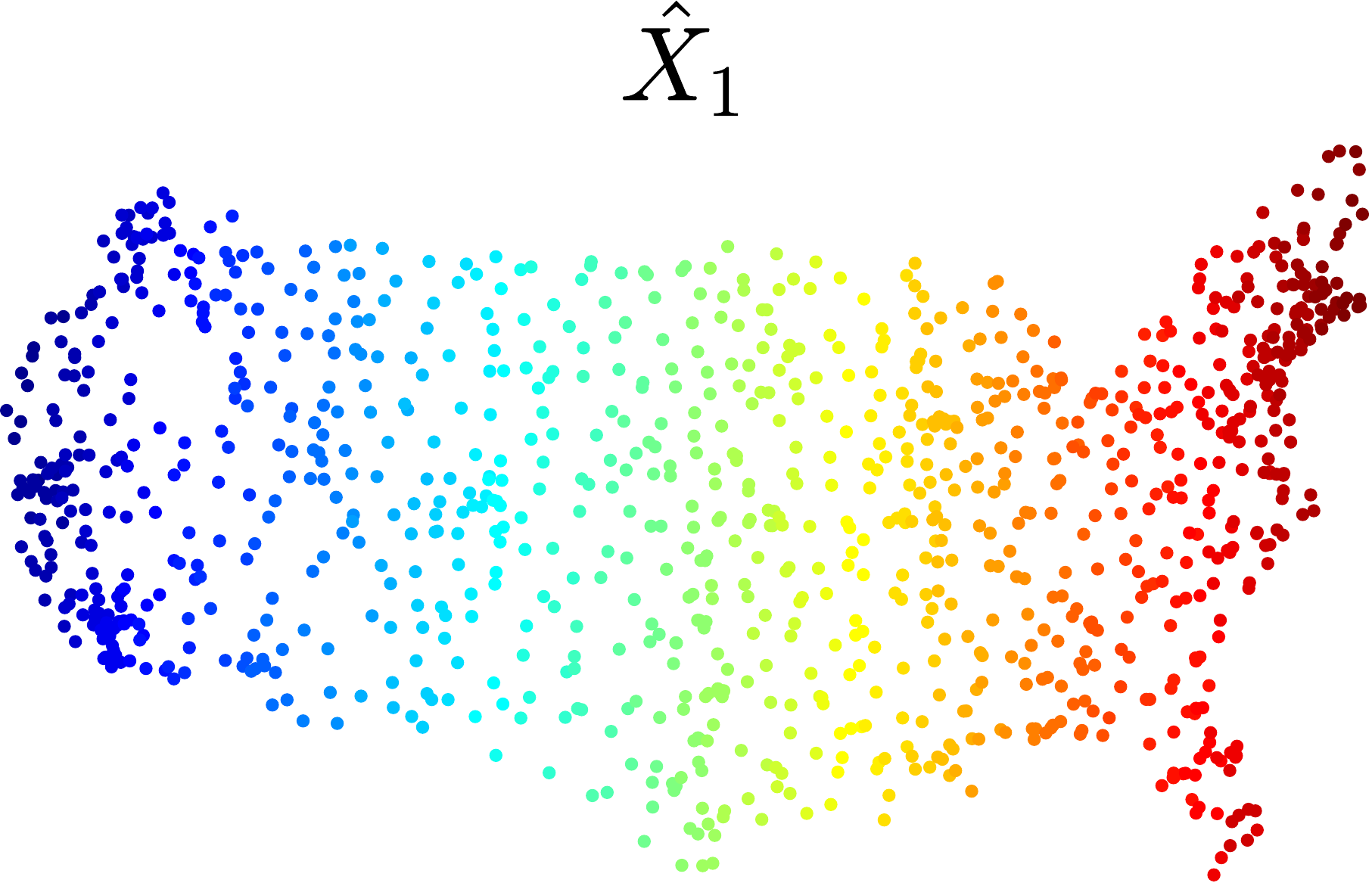}
\includegraphics[width=0.27\textwidth, trim=0.2cm 1cm 0.4cm 1.8cm,clip] {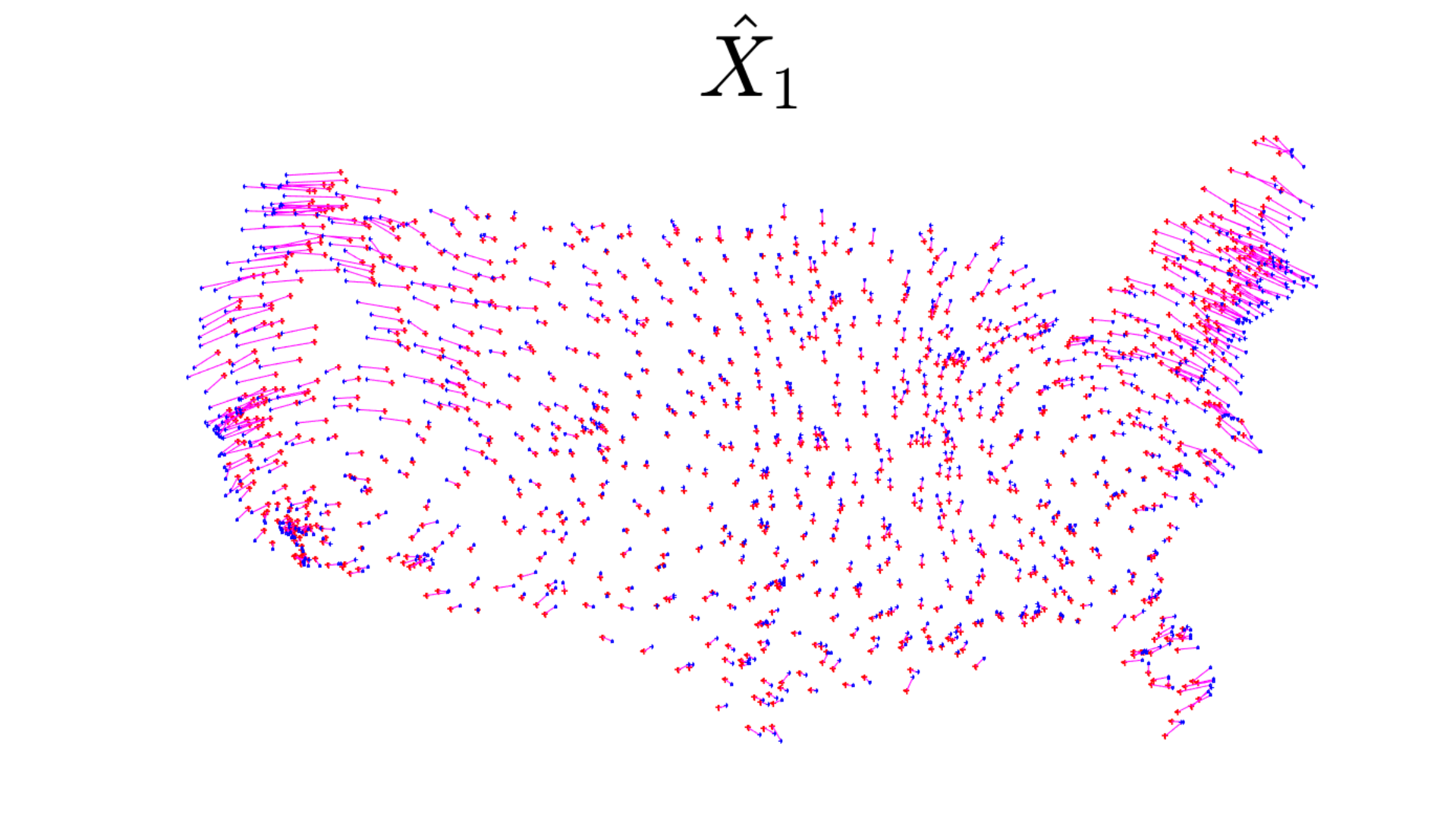}
\includegraphics[width=0.21\textwidth, trim=0.2cm 0.4cm 0.4cm 1.8cm,clip] {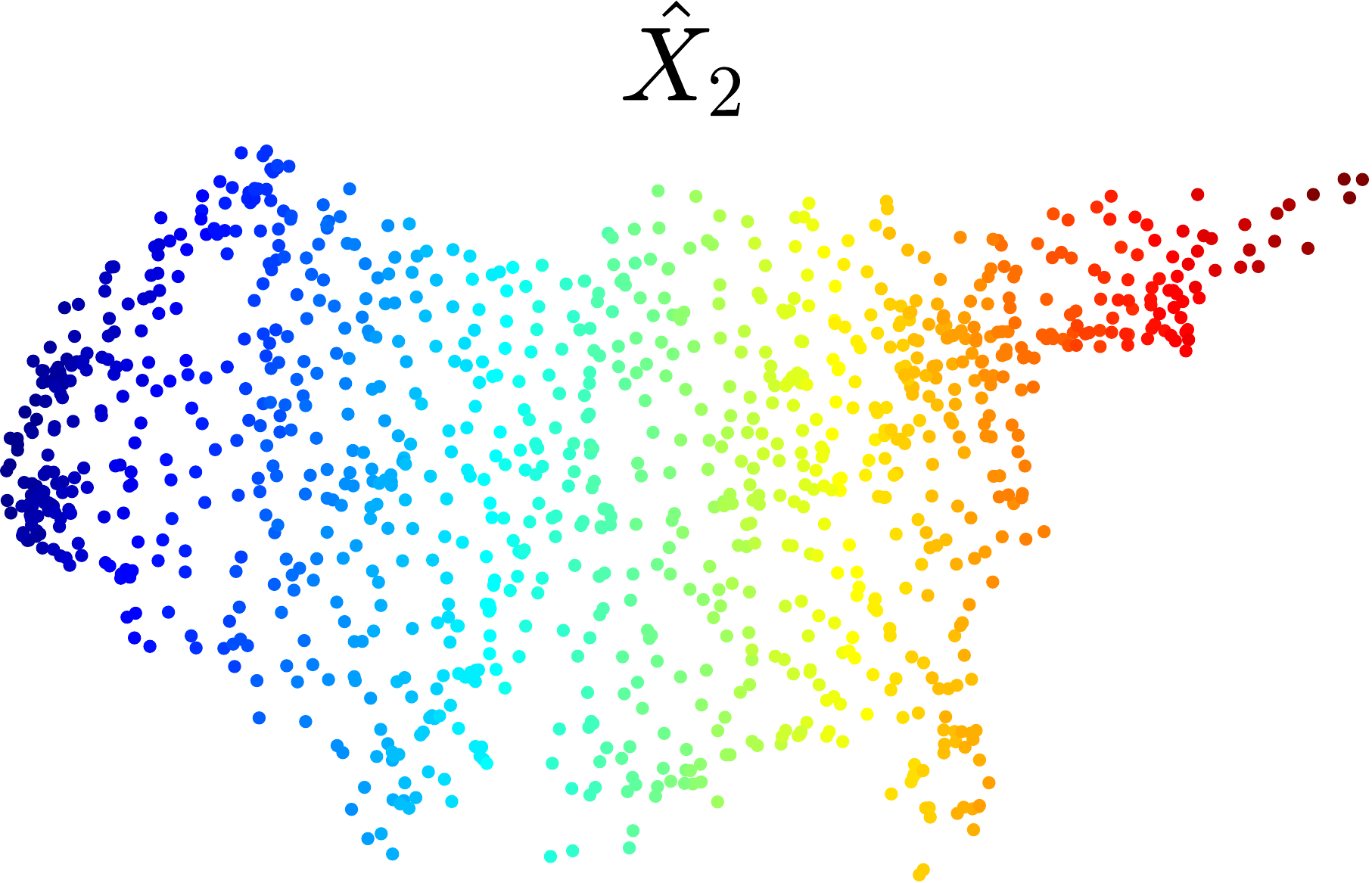}
\includegraphics[width=0.27\textwidth, trim=0.2cm 1cm 0.4cm 1.8cm,clip] {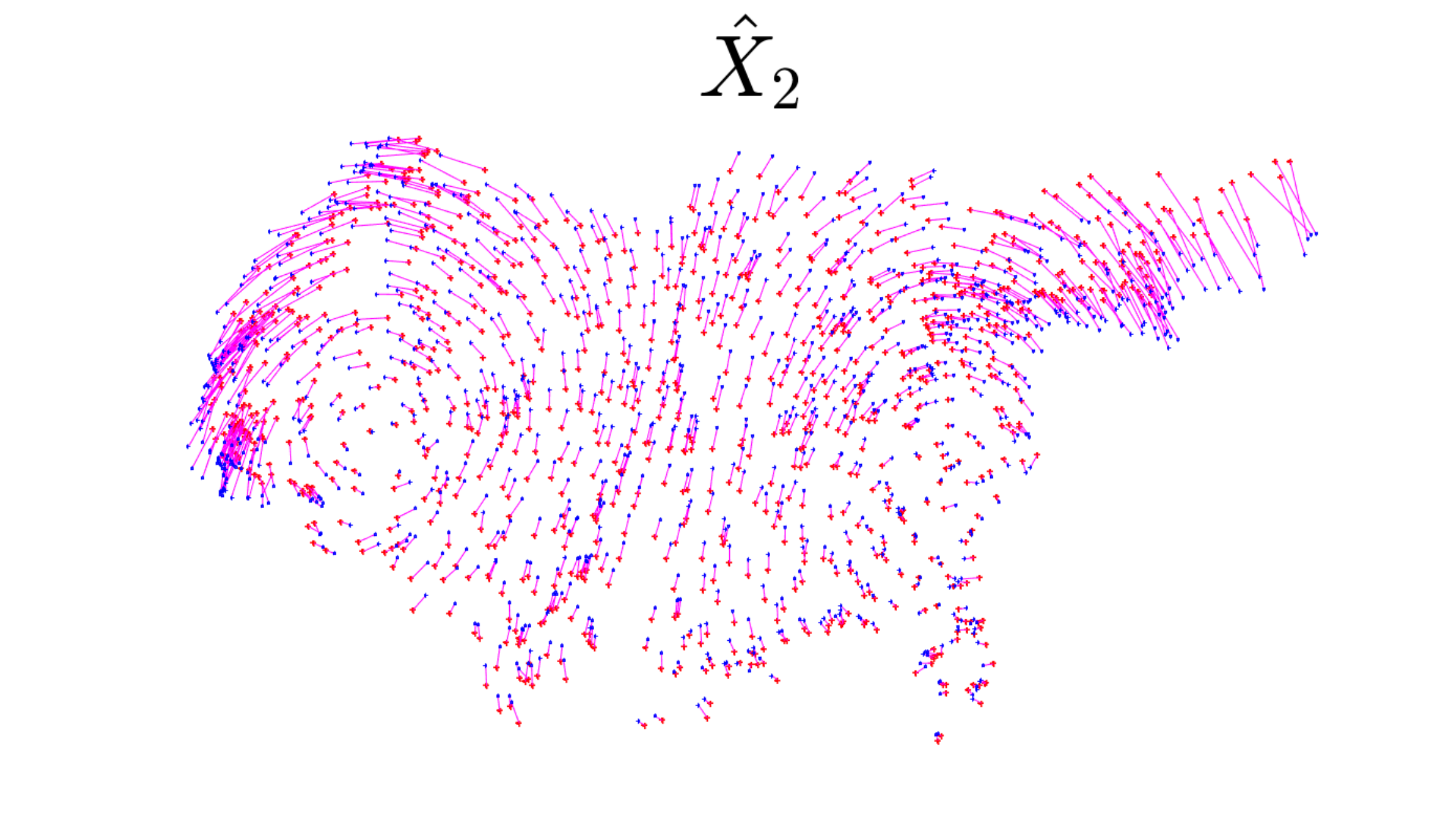}  
}  
%
%
\subcaptionbox[ ]{ $ \sigma = 0.40 $
}[ 1\textwidth ]
{\includegraphics[width=0.21\textwidth, trim=0.2cm 0.4cm 0.4cm 1.8cm,clip] {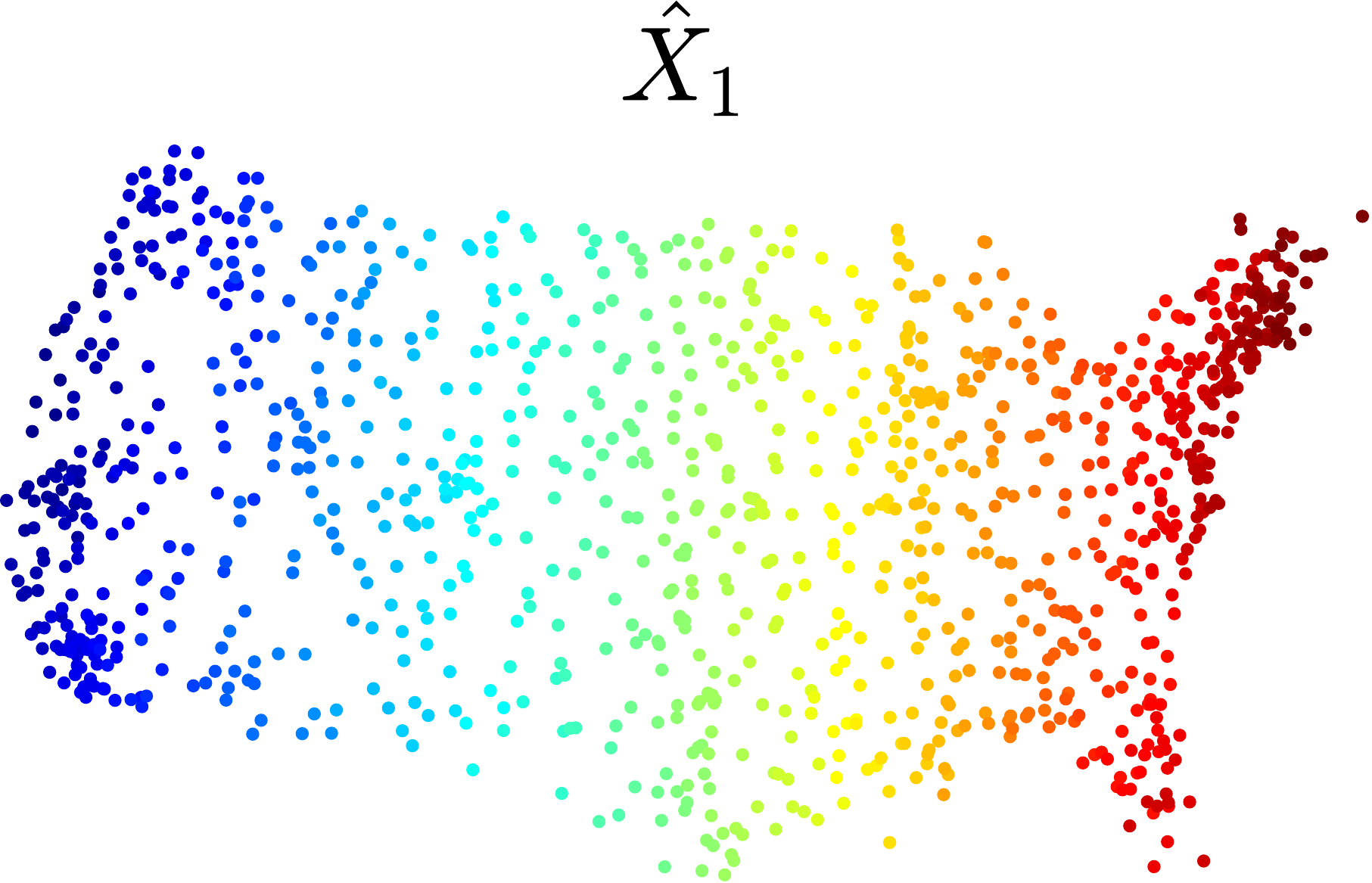}
\includegraphics[width=0.27\textwidth, trim=0.2cm 1cm 0.4cm 1.8cm,clip] {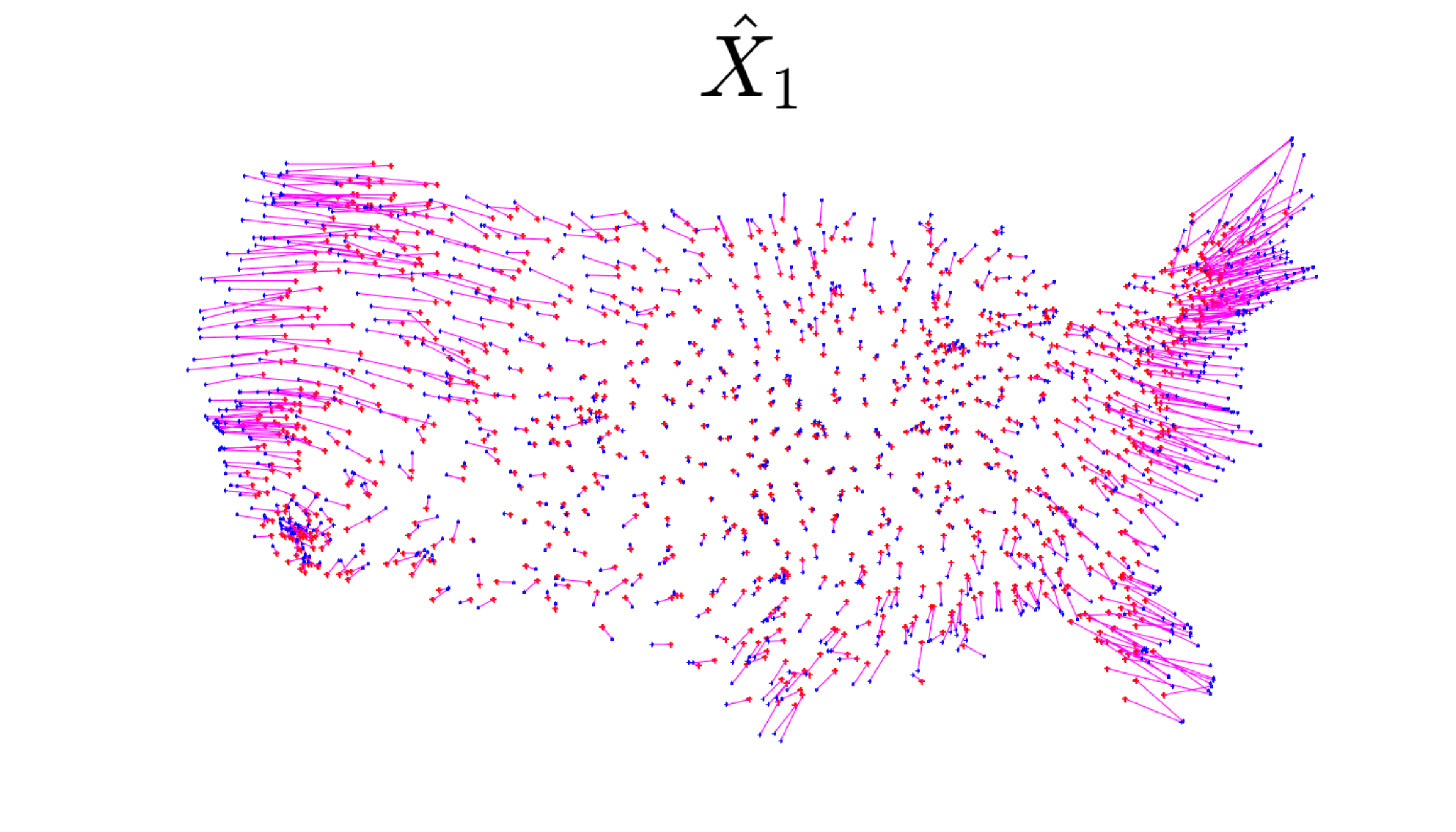}
\includegraphics[width=0.21\textwidth, trim=0.2cm 0.4cm 0.4cm 1.8cm,clip] {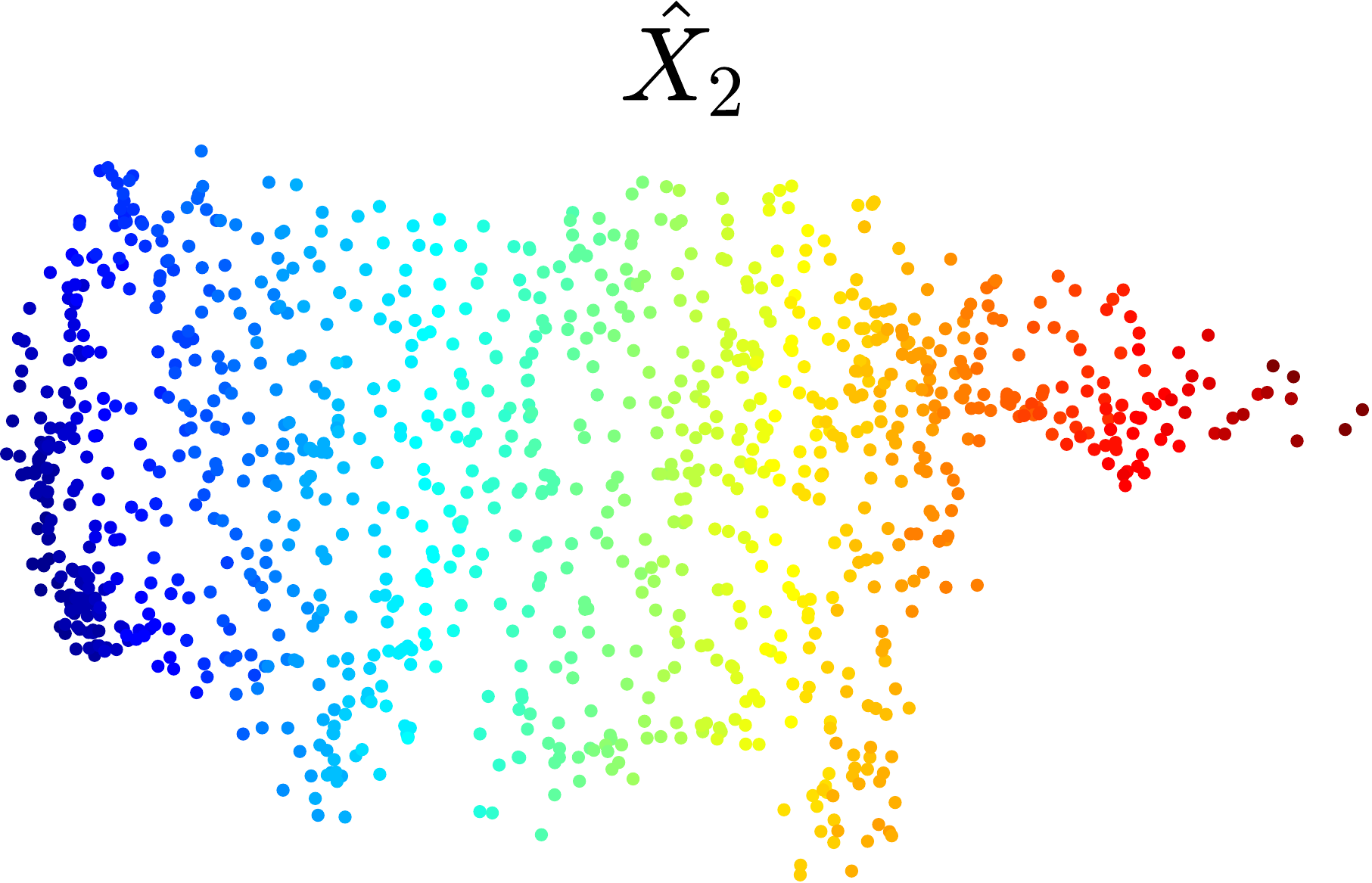}
\includegraphics[width=0.27\textwidth, trim=0.2cm 1cm 0.4cm 1.8cm,clip] {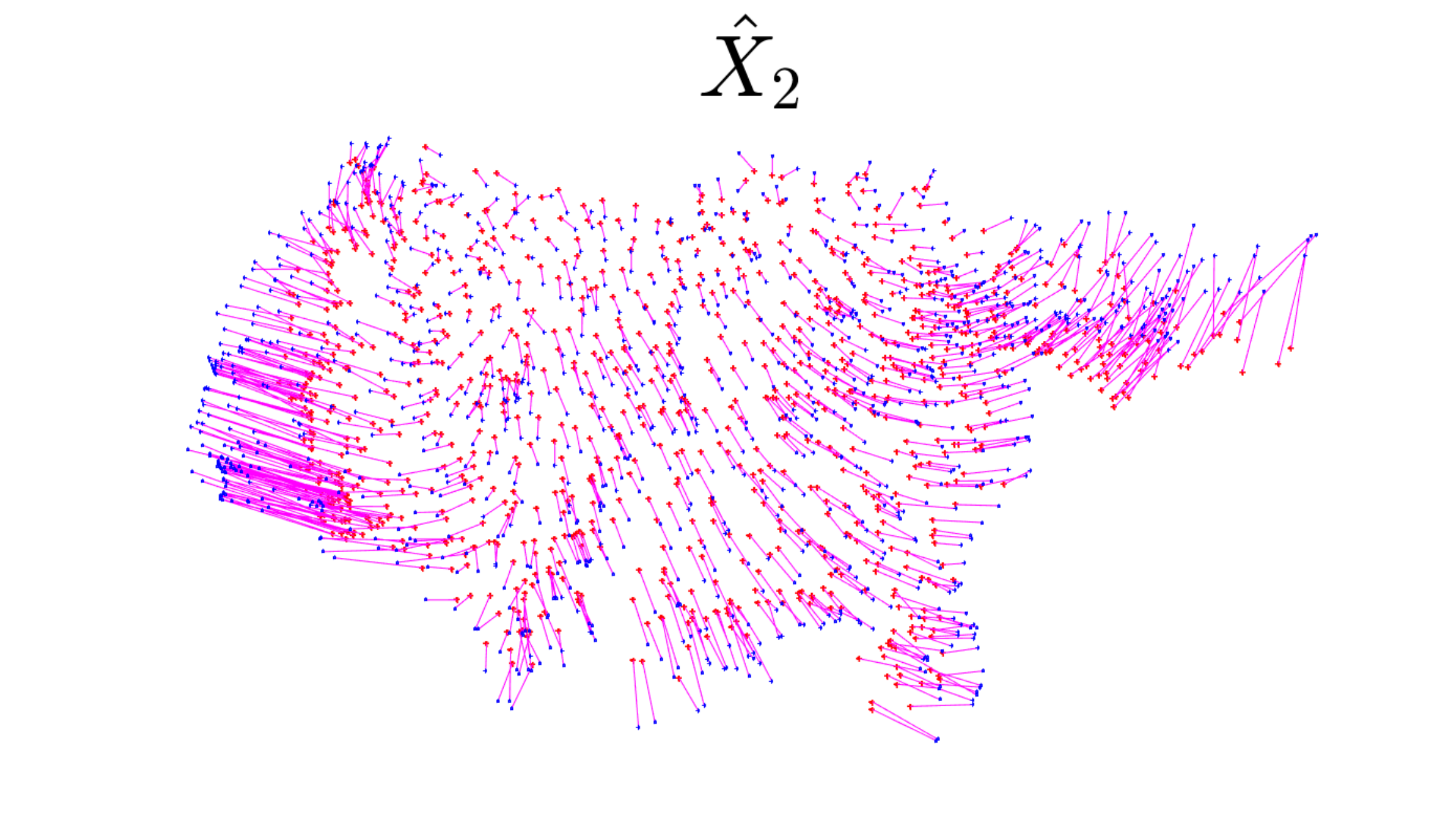}  
}  
%
%
\subcaptionbox[ ]{ $ \sigma = 0.60 $
}[ 1\textwidth ]
{\includegraphics[width=0.21\textwidth, trim=0.2cm 0.4cm 0.4cm 1.8cm,clip] {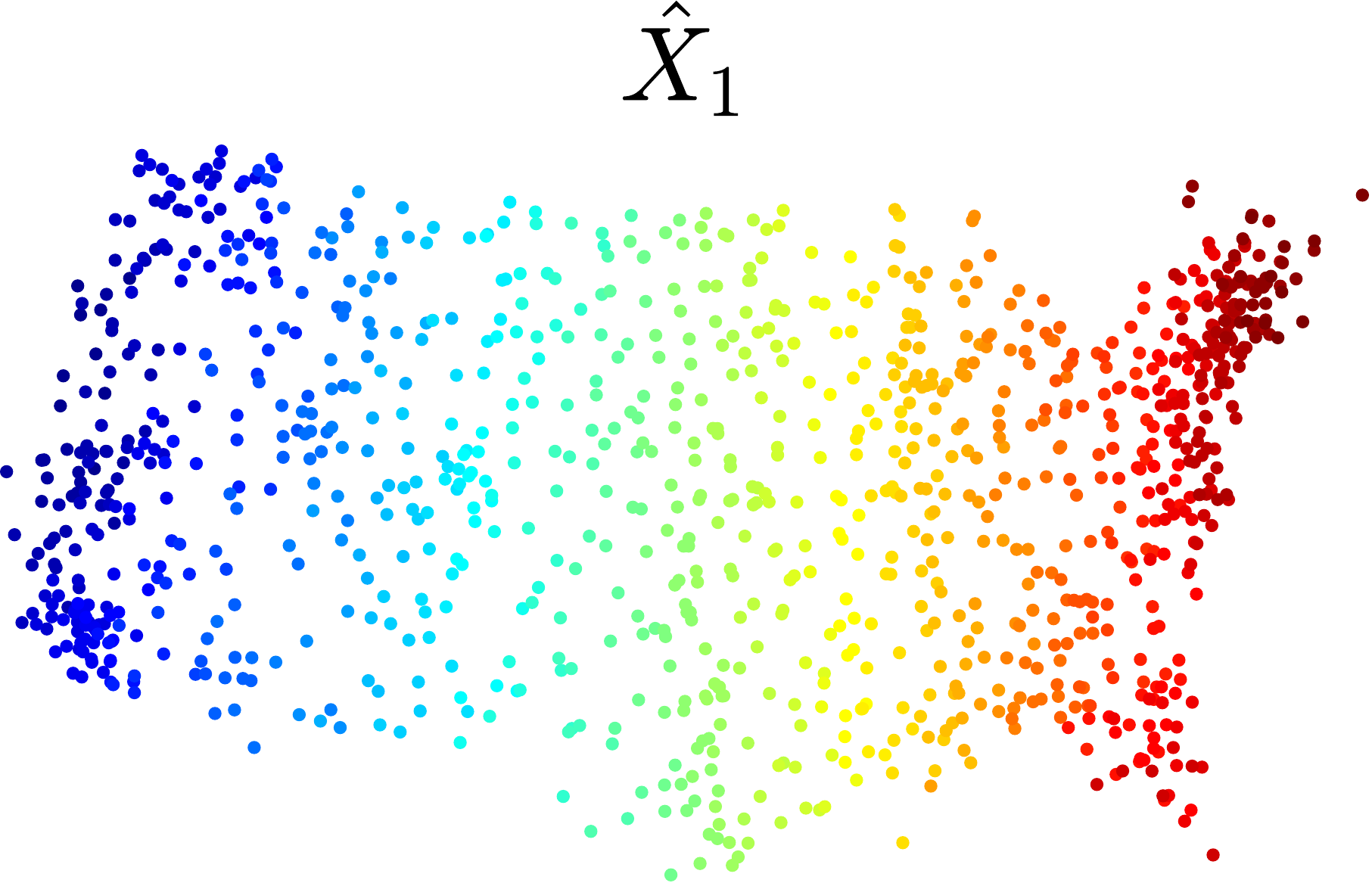}
\includegraphics[width=0.27\textwidth, trim=0.2cm 1cm 0.4cm 1.8cm,clip] {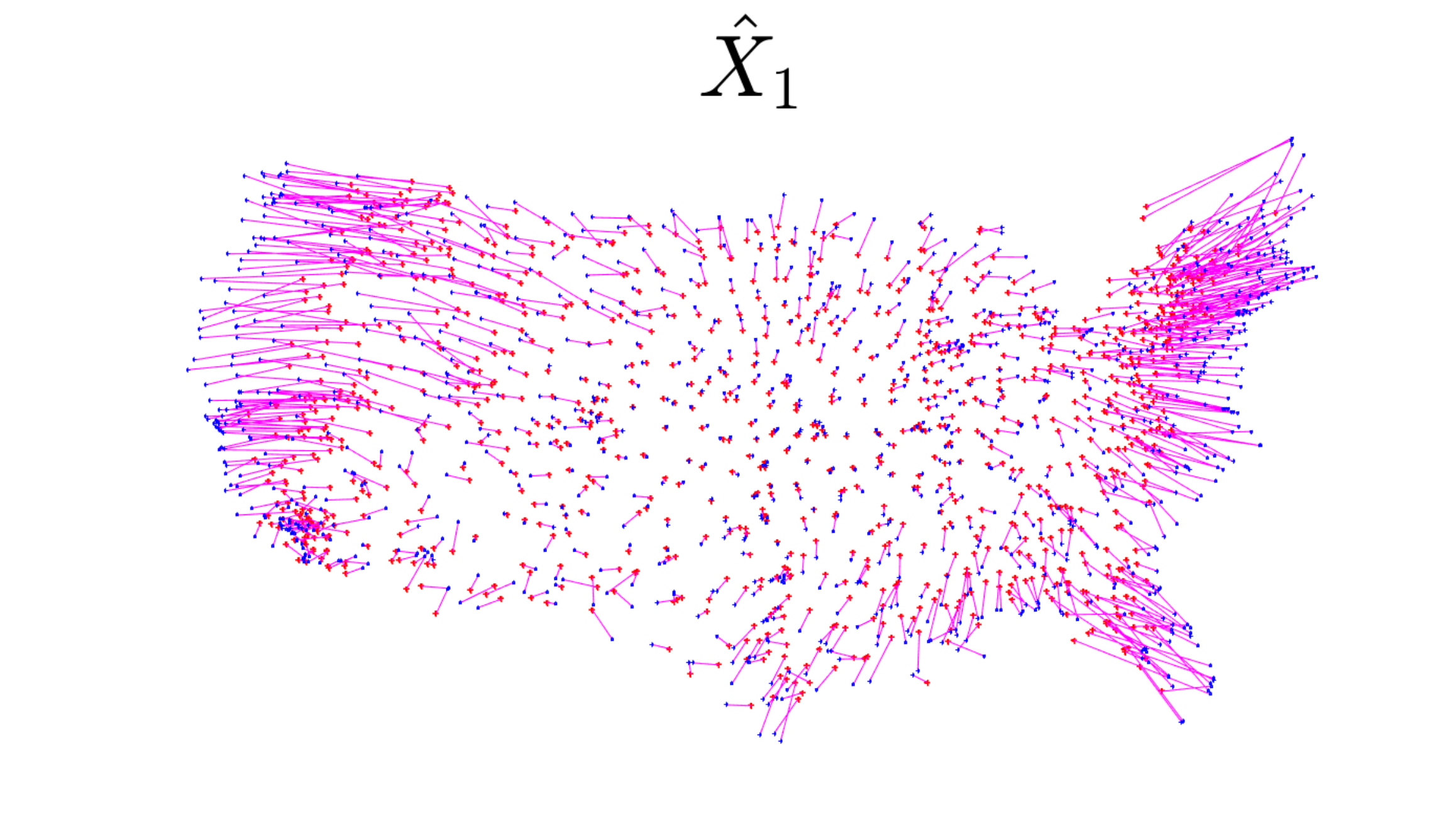}
\includegraphics[width=0.21\textwidth, trim=0.2cm 0.4cm 0.4cm 1.8cm,clip] {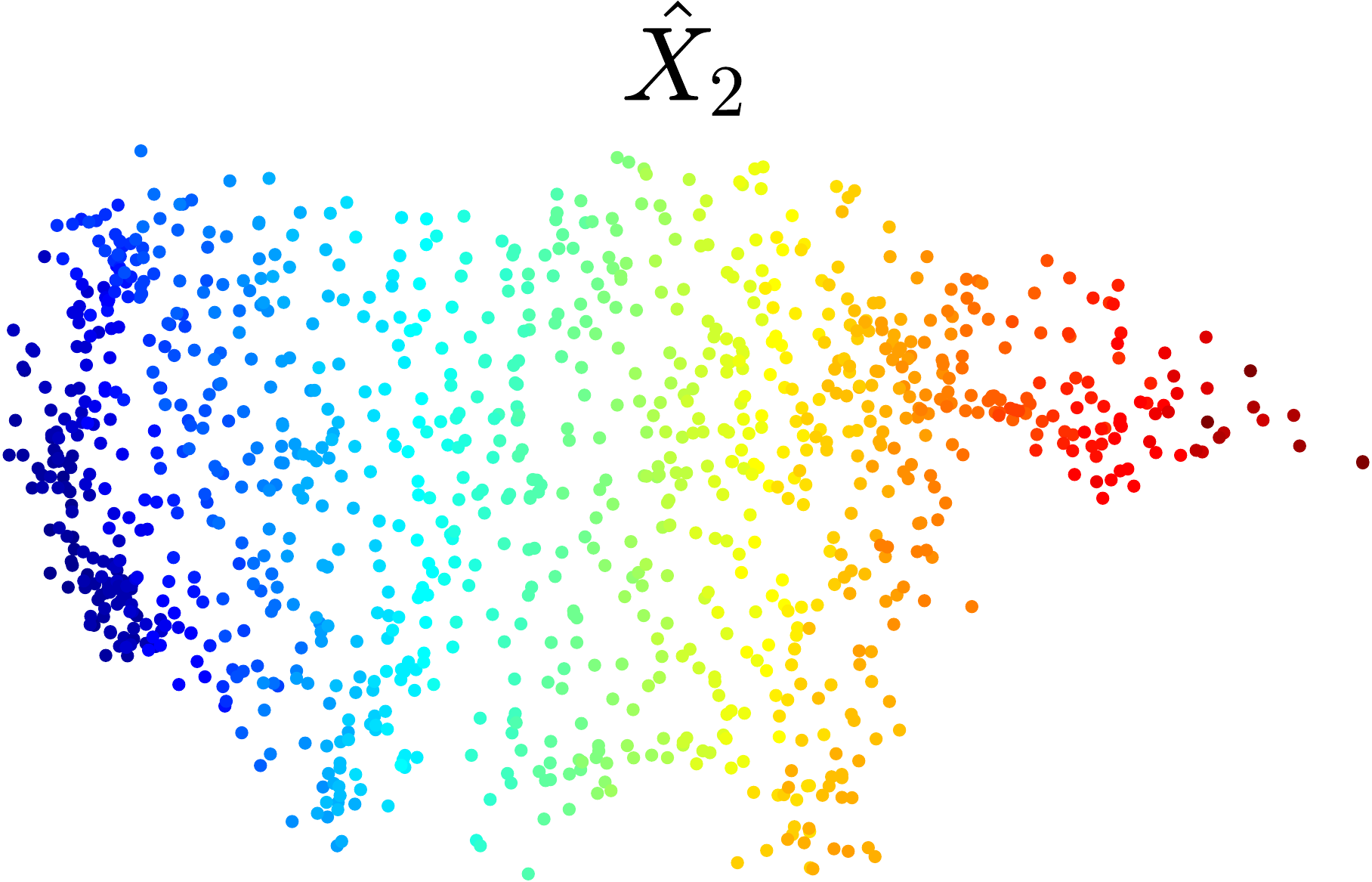}
\includegraphics[width=0.27\textwidth, trim=0.2cm 1cm 0.4cm 1.8cm,clip] {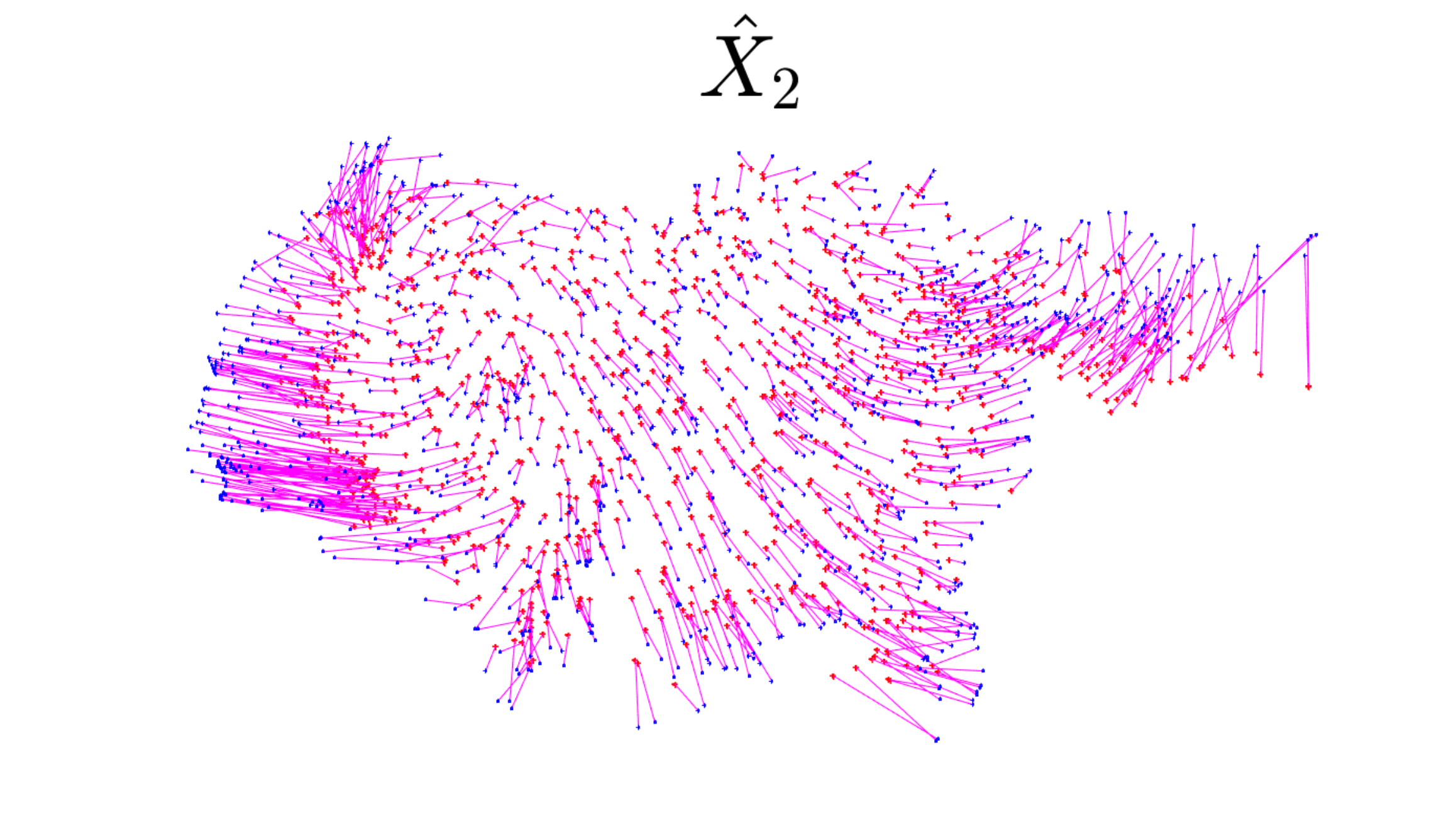}  
}  
\subcaptionbox[ ]{ $ \sigma = 0.80 $
}[ 1\textwidth ]
{\includegraphics[width=0.21\textwidth, trim=0.2cm 0.4cm 0.4cm 1.8cm,clip] {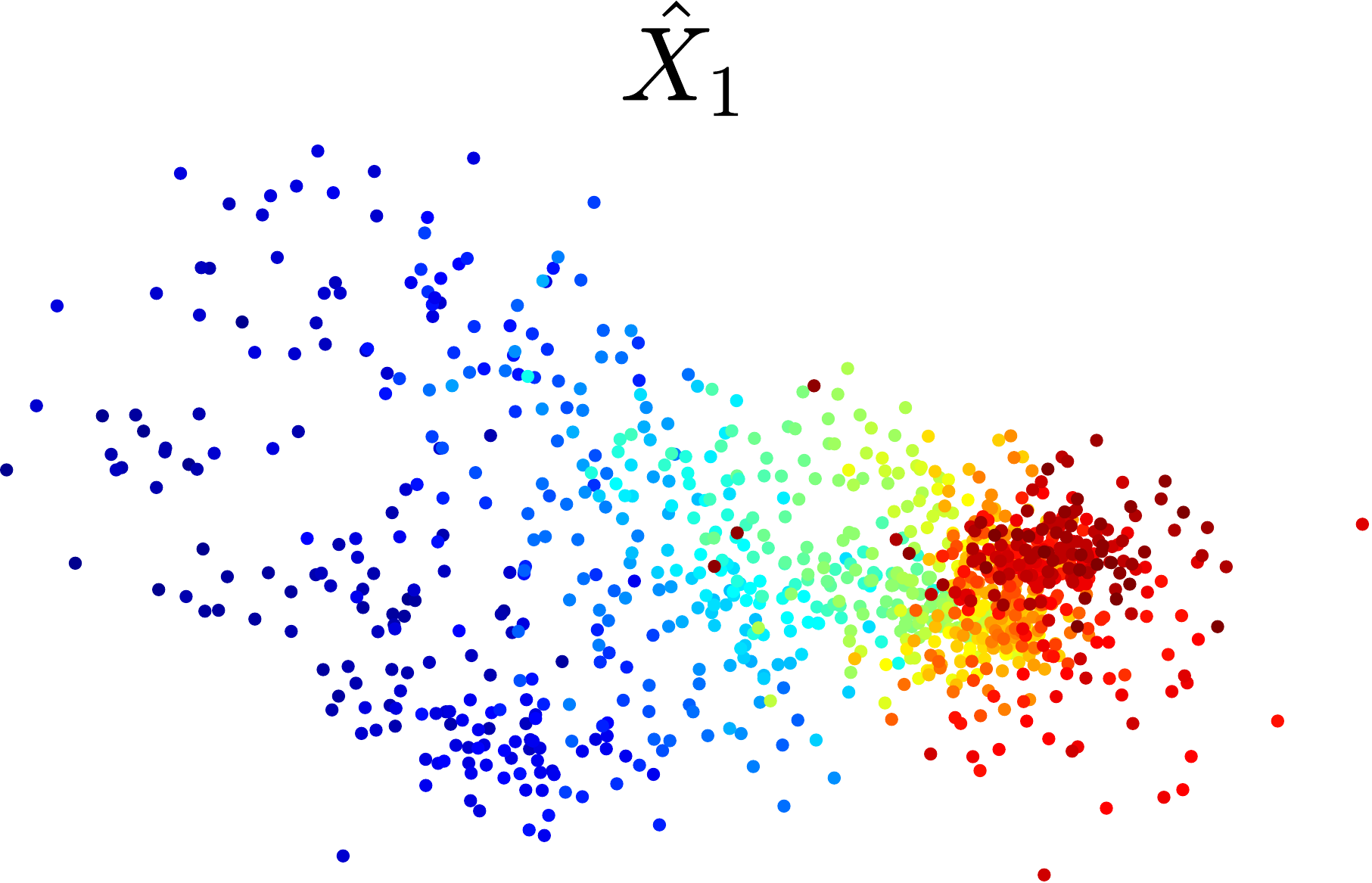}
\includegraphics[width=0.27\textwidth, trim=0.2cm 1cm 0.4cm 1.8cm,clip] {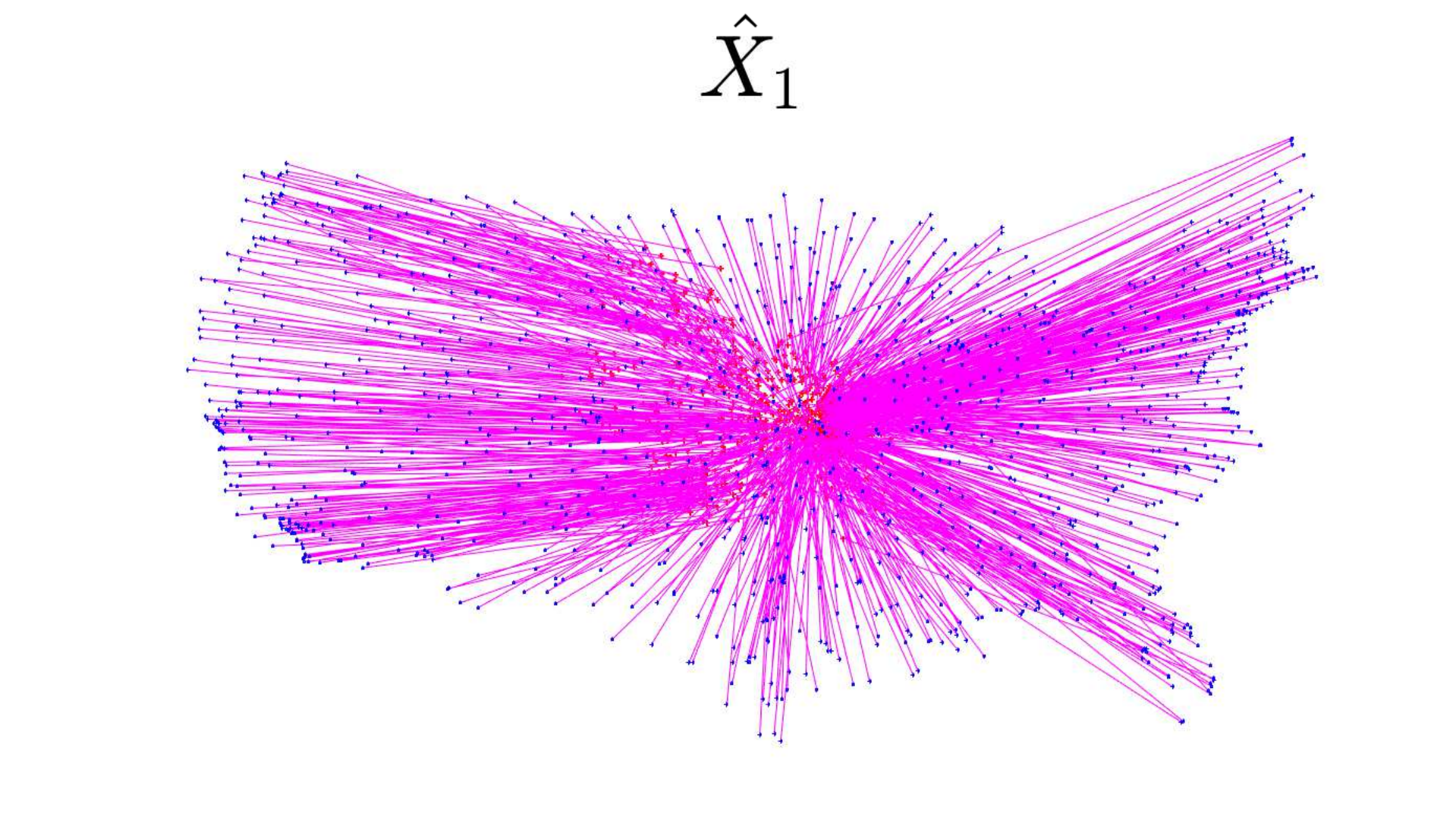}
\includegraphics[width=0.21\textwidth, trim=0.2cm 0.4cm 0.4cm 1.8cm,clip] {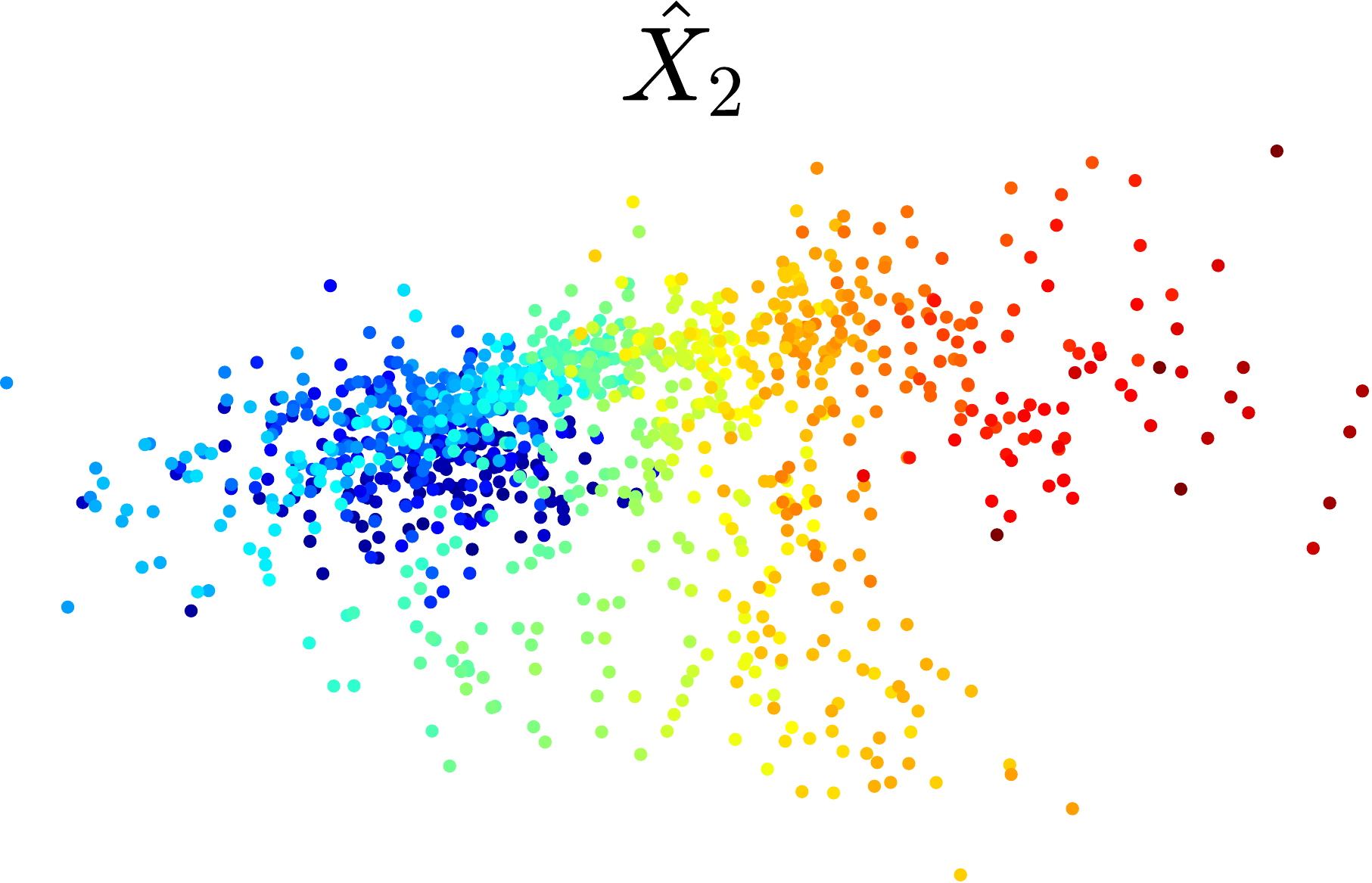}
\includegraphics[width=0.27\textwidth, trim=0.2cm 1cm 0.4cm 1.8cm,clip] {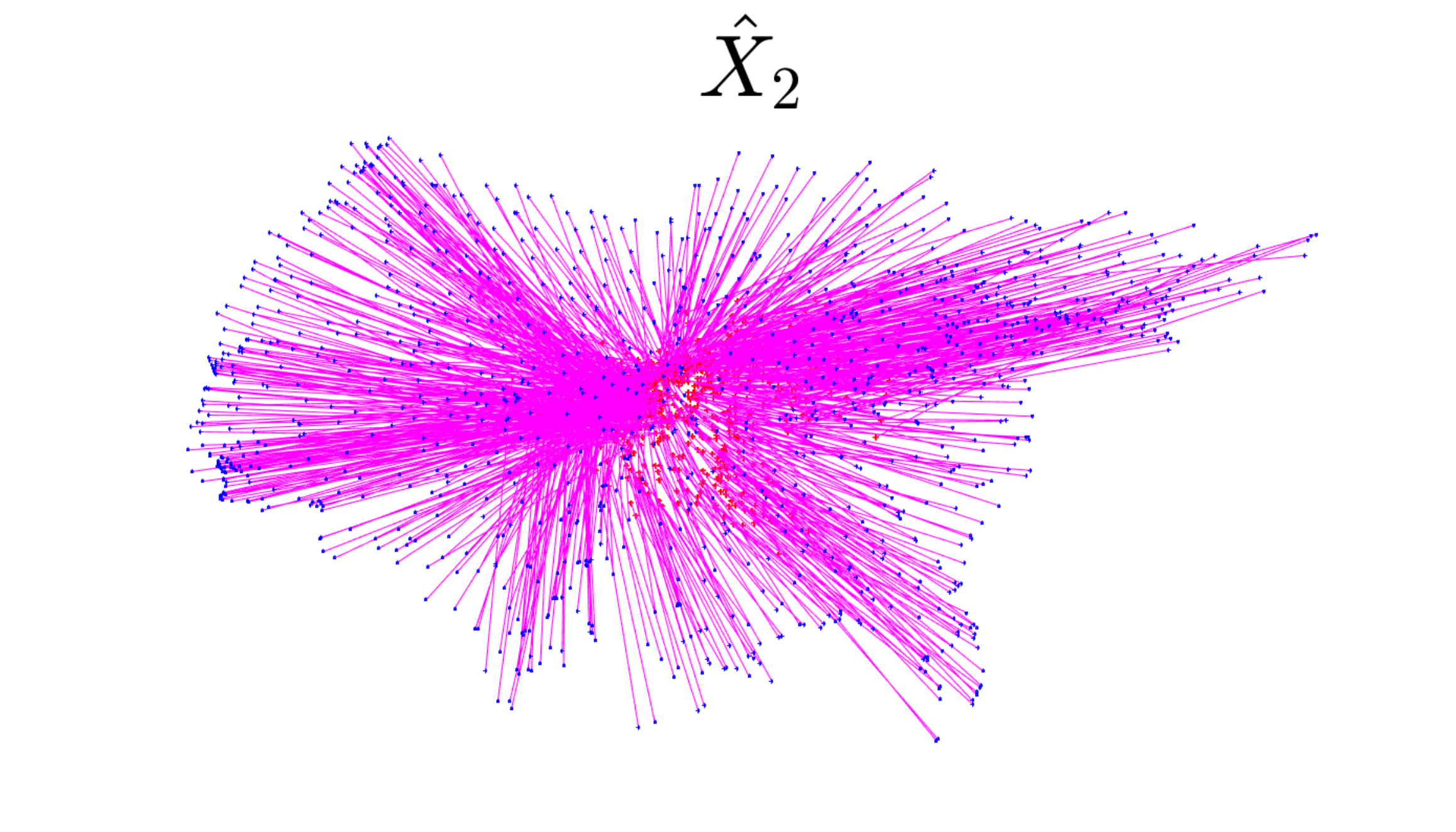}  
}  
%
\vspace{-2mm}
\captionsetup{width=0.98\linewidth}
\caption[Short Caption]{Recovery of the two estimated  embeddings $\hat{X}$ and $\hat{Y}$ shown in the first and  third columns, where the coloring is given by the longitude.  The second column shows the Procrustes alignment between  the recovered embedding $\hat{X}$ (red) and the ground truth $X$ (blue), together with the displacement error bars. The forth column shows a similar visualization for $\hat{Y}$ and the ground truth $Y$. We vary the noise level $\sigma$ used in the perturbations of the local patch embeddings in both ensembles, across the following set of values $\sigma \in \{ 0, 0.2, 0.4, 0.6, 0.8 \}$. 
}  
\label{fig:US__noises_X1_X2}
\end{figure}

\FloatBarrier
\section{Conclusion and future directions} \label{sec:conclusion}


This paper considers an extension of the classical synchronization problem (of recovering a group of angles given a subset of noisy pairwise angle offsets) to the setting when \textit{multiple} latent groups of angles exist. We propose probabilistic generative models for $k$-synchronization, along with spectral algorithms which we theoretically analyze, under a suitably defined notion of $\delta$-orthogonality. 
There are various further extensions one could pursue. 

\vspace{0mm}
\paragraph{List-synchronization} An interesting direction, ongoing work similar in spirit to the setup explored in the present paper, is that of \textit{list-synchronization}, where there exists a single latent group of angles, $\theta_1, \ldots, \theta_n$, and for each available edge $\set{i,j}$ in the measurement graph $G$, one has available not only a single measurement proxy for the offset $\theta_i - \theta_j$, but rather a list of $k$ possible offsets, of which only a single one is correct (or approximately correct), while the remaining $k-1$ elements of the edge list are outliers, as depicted in Figure \ref{fig:listSync}. This setting arises for instance in the GRP problem, where in the high-noise regime, one may consider multiple candidates for the pairwise angle offset, when aligning pairs of overlapping patches via the method of choice (see for example Section 6 within \cite{asap2d}). This problem is also related to the 
\textsc{Max-2-Lin} $\text{mod}$ $L$ problem \cite{andersson2001new,nonUniqueGamesSync}, for maximizing the number of satisfied linear equations $ \text{mod} L$  with exactly two variables in each equation, also known to be an instance of unique games \cite{FeigeLovaszTwoProver}.

\vspace{0mm}
\begin{figure}[h!]
\vspace{-3mm}
\centering
\includegraphics[width=0.55\textwidth]{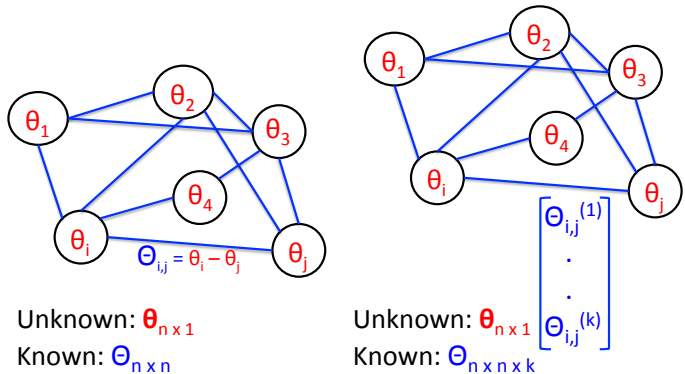}
\vspace{-2mm}
\captionsetup{width=1\linewidth}
\caption{Left: classical angular synchronization; Right: an instance of the $k$-list synchronization problem, where the goal is to recover a single group of $n$ angles, from a small subset of pairwise noisy offsets $\theta_i - \theta_j $; however, there are now multiple candidates for a given pairwise offset, only one of which is correct (or approximately correct).  }
\vspace{-2mm}
\label{fig:listSync}
\end{figure}

\paragraph{SDP relaxation} Extending the theoretical results obtained in this paper for the spectral relaxations to the setting of an SDP relaxation is another direction of interest. The advantage of the SDP relaxation stems from the fact that one can explicitly enforce the unit magnitude constraints in the formulation, by adding ones on the main diagonal of the Gram matrix in \eqref{eq:SDP_program_SYNC}. 
A potentially interesting approach, albeit computationally expensive,  is to consider a SDP which optimizes over $k$ complex-valued Hermitian positive semidefinite matrices $\Upsilon^{(1)}, \Upsilon^{(2)}, \ldots, \Upsilon^{(k)}$, while enforcing the unit magnitude diagonal constraints individually for each such matrix.

\vspace{0mm}
\paragraph{Sparse regime of the measurement graph} 
Another direction worth exploring pertains to improving existing algorithms for the sparse regimes. For example, in graph clustering problems, it is well known that spectral methods underperform in the very sparse regime, where edges exist with probability $ \Theta\left(\frac{1}{n}\right) $. Therein, graph regularization techniques have been proven to be effective both on the theoretical and experimental fronts; for eg., see regularization in the sparse regime for the signed clustering problem \cite{SPONGE2020regularized,chaudhuri12,Amini_2013}. Hence, a natural question to ask is what would be a corresponding graph regularization technique for very sparse measurement graphs in the group synchronization setting. For example, would a simple local augmentation step be effective to this end, where for each 2-path  $v_i, v_j, v_k$, with edges $\set{i,j}$ and  $\set{j,k}$, one can add an estimated measurement for edge $\set{i,k}$ stemming from the fact that summing up over triangles yields $0 \mod 2\pi$. Alternatively, for each missing edge $\set{i,k}$, one could consider all open triangles $i, j', k$ 
(with edges $\set{i,j'}$ and $\set{j',k}$), and each such triangle could cast a vote via the above procedure for the missing edge.

\paragraph{Cluster-synchronization}
Finally, we recall here the setup of cluster-synchronization, detailed in the last paragraph of Section \ref{sec:kSyncApp}, on the recovery of two groups of angles (of potentially different sizes $n_1$,$n_2$) from a noisy subset of pairwise offsets captured in a matrix of size $(n_1+n_2) \times (n_1+n_2)$. Therein, the noise model imparts a cluster structure on the measurement matrix, as depicted in 
\Cref{fig:clusterSync}. Proposing spectral and SDP algorithms for the recovery of both the latent clusters and the groups of angles themselves, is an interesting future direction worth pursuing, well motivated by applications within the ranking and recommender systems literature. 

Additional research directions include extending the $k$-synchronization pipeline to compact groups beyond SO$(2)$, and exploring the iterative (potentially re-weighted) scheme in the context of classical group synchronization ($k=1$), as detailed in Remark  \ref{rem:iterativeAlgo} and motivated by Figure   \ref{fig:iterations_detangle_k234}.




 
\bibliographystyle{siam}
\bibliography{aa_bib_sync}

\appendix
\section{Perturbation analysis} \label{app:sec_perturb_theory}
Let $A \in \mathbb{C}^{n \times n}$ be Hermitian with eigenvalues $\lambda_1 \geq \lambda_2 \geq \cdots \geq \lambda_n$ 
and corresponding eigenvectors $v_1,v_2,\dots,v_n \in \mathbb{C}^n$. 
Let $\widetilde{A} = A + W$ be a perturbed version of $A$, with the perturbation matrix 
$W \in \mathbb{C}^{n \times n}$ being Hermitian. Let us denote the eigenvalues of $\tilde{A}$ and $W$ by
$\tilde{\lambda}_1 \geq \cdots \geq \tilde{\lambda}_n$ and 
$\epsilon_1 \geq \epsilon_2 \geq \cdots \geq \epsilon_n$ respectively.

To begin with, we would like to quantify the perturbation of the eigenvalues of $\widetilde{A}$ with respect to the 
eigenvalues of $A$. Weyl's inequality \cite{Weyl1912} is a very useful result in this regard.
%
\begin{theorem} [Weyl's Inequality \cite{Weyl1912}] \label{thm:Weyl} 
For each $i = 1,\dots,n$, it holds that
\begin{equation}
 \lambda_i + \epsilon_n  \leq  \tilde{\lambda}_i \leq \lambda_i + \epsilon_1.
 \end{equation}
In particular, this implies that $\tilde{\lambda}_i \in [\lambda_i - \norm{W}_2, \lambda_i + \norm{W}_2]$.
\end{theorem}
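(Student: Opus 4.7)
The plan is to invoke the Courant--Fischer min--max characterization of eigenvalues of Hermitian matrices, and then exploit the additive structure $\widetilde{A} = A + W$ together with Rayleigh quotient bounds on $W$.

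First, I would recall that for any Hermitian matrix $M \in \mathbb{C}^{n \times n}$ with eigenvalues $\mu_1 \geq \cdots \geq \mu_n$, one has
\begin{equation*}
\mu_i(M) \;=\; \max_{\substack{S \subseteq \mathbb{C}^n \\ \dim S = i}} \;\min_{\substack{x \in S \\ \|x\|_2 = 1}} x^* M x,
\end{equation*}
and in particular $\mu_n \leq x^* M x \leq \mu_1$ for every unit vector $x$. Applied to $M = W$, this yields the uniform bound $\epsilon_n \leq x^* W x \leq \epsilon_1$ for all unit $x \in \mathbb{C}^n$.

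Next, for any $i$-dimensional subspace $S \subseteq \mathbb{C}^n$ and any unit vector $x \in S$, I would write
\begin{equation*}
x^* \widetilde{A} x \;=\; x^* A x + x^* W x,
\end{equation*}
and use the Rayleigh bound above to get $x^* \widetilde{A} x \geq x^* A x + \epsilon_n$. Taking the minimum over unit $x \in S$ preserves the inequality, and then taking the maximum over all $i$-dimensional subspaces $S$ gives $\tilde{\lambda}_i \geq \lambda_i + \epsilon_n$ via Courant--Fischer. The upper bound $\tilde{\lambda}_i \leq \lambda_i + \epsilon_1$ follows by a symmetric argument (either by replacing the lower Rayleigh bound by the upper one, or by applying the lower bound already proved to the pair $\widetilde{A}$ and $-W$, since $A = \widetilde{A} + (-W)$ and the eigenvalues of $-W$ are $-\epsilon_1 \geq \cdots \geq -\epsilon_n$).

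Finally, since $\|W\|_2 = \max\{|\epsilon_1|, |\epsilon_n|\}$ for a Hermitian matrix, both $\epsilon_1 \leq \|W\|_2$ and $-\epsilon_n \leq \|W\|_2$ hold, which immediately yields the corollary $\tilde{\lambda}_i \in [\lambda_i - \|W\|_2, \lambda_i + \|W\|_2]$. There is no real obstacle here: the entire argument is a one-line manipulation once the min--max formula and the Rayleigh bounds on $W$ are in place. The only subtle point is to be careful about the direction of the inequality when passing from the Rayleigh bound through the $\min$ and $\max$ in Courant--Fischer, but both operations are monotone, so the bound propagates cleanly.
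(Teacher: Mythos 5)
Your proof is correct, and it is the standard argument via the Courant--Fischer min--max characterization together with Rayleigh quotient bounds. Note, however, that the paper does not actually prove this statement: it is a classical result cited directly from Weyl's 1912 paper and stated without proof in Appendix A, so there is no in-paper proof to compare against. One small notational slip in your write-up: if $\epsilon_1 \geq \cdots \geq \epsilon_n$ are the eigenvalues of $W$, then the eigenvalues of $-W$ listed in decreasing order are $-\epsilon_n \geq \cdots \geq -\epsilon_1$ (not $-\epsilon_1 \geq \cdots \geq -\epsilon_n$ as written); what your argument actually uses is that the smallest eigenvalue of $-W$ is $-\epsilon_1$, so the conclusion $\tilde{\lambda}_i \leq \lambda_i + \epsilon_1$ goes through unchanged.
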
 
One can also quantify the perturbation of the subspace spanned by eigenvectors of $A$, this was established 
by Davis and Kahan \cite{daviskahan}. Before introducing the theorem, we need some definitions. 
Let $U,\widetilde{U} \in \mathbb{C}^{n \times k}$ (for $k \leq n$) have orthonormal columns respectively and 
let $\sigma_1 \geq \dots \geq \sigma_k$ denote the singular values of $U^{*}\widetilde{U}$. 
Then, the $k$ principal angles between $U, \widetilde{U}$ are defined as $\theta_i := \cos^{-1}(\sigma_i)$ for $1 \leq i \leq k$, 
with each $\theta_i \in [0,\pi/2]$. 
It is usual to denote $\Theta(U, \tilde{U}) \in \matR^{k \times k}$ to be a diagonal matrix with $\theta_i$ as its 
entries (arranged in decreasing order), and to define $\sin \Theta(U, \tilde{U}) \in \mathbb{C}^{k \times k}$ entrywise. 
Denoting $||| \cdot |||$ to be any unitarily invariant norm (Frobenius, spectral, etc.), 
the following relation holds (see for eg., \cite[Lemma 2.2]{li98} and ensuing discussion).
\begin{equation*} 
|||  \sin \Theta(U, \tilde{U})  |||  =  ||| (I - \tilde{U}  \tilde{U}^{*} ) U_1 |||.
\end{equation*}
With the above notation in mind, we now introduce a version of the Davis-Kahan theorem taken from \cite[Theorem 1]{dkuseful} 
(see also \cite[Theorem 3.2]{li98}).
%
\begin{theorem}[Davis-Kahan] \label{thm:DavisKahan} 
Fix $1 \leq r \leq s \leq n$, let $d = s-r+1$, and let 
$V = (v_r,\dots,v_s) \in \mathbb{C}^{n \times d}$ and 
$\widetilde{V} = (\widetilde{v}_r,\dots,\widetilde{v}_s) \in \mathbb{C}^{n \times d}$. Write
\begin{equation*}
 \varepsilon = \inf\set{\abs{\hat\lambda - \lambda}: \lambda \in [\lambda_s,\lambda_r], \hat\lambda \in (-\infty,\widetilde\lambda_{s+1}] \cup [\widetilde \lambda_{r-1},\infty)}, 
\end{equation*}
where we define $\widetilde\lambda_0 = \infty$ and $\widetilde\lambda_{n+1} = -\infty$ and assume that $\varepsilon > 0$. Then
\begin{equation*}
 ||| \sin \Theta(U, \widetilde U)|||  \leq  \frac{ ||| W ||| }{ \delta}.
 \end{equation*}
\end{theorem}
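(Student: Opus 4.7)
The plan is to follow the classical Davis–Kahan sine-$\Theta$ argument via a Sylvester (operator) equation. First I would fix orthonormal bases: let $V \in \mathbb{C}^{n \times d}$ collect the eigenvectors $v_r,\dots,v_s$ of $A$ with eigenvalue matrix $\Lambda = \mathrm{diag}(\lambda_r,\dots,\lambda_s)$, and let $\widetilde{V}_{\perp} \in \mathbb{C}^{n \times (n-d)}$ collect the remaining eigenvectors of $\widetilde{A}$, with eigenvalue matrix $\widetilde{\Lambda}_{\perp}$ whose diagonal entries lie in $(-\infty, \widetilde{\lambda}_{s+1}] \cup [\widetilde{\lambda}_{r-1}, \infty)$. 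The key observation is that the quantity to be bounded can be rewritten as $|||\sin\Theta(V,\widetilde{V})||| = |||(I - \widetilde{V}\widetilde{V}^*)V||| = |||\widetilde{V}_{\perp}\widetilde{V}_{\perp}^* V||| = |||\widetilde{V}_{\perp}^* V|||$, where the last equality uses unitary invariance together with the fact that $\widetilde{V}_{\perp}$ has orthonormal columns.

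Next I would derive a Sylvester-type identity for $X := \widetilde{V}_{\perp}^* V$. Starting from $AV = V\Lambda$ and $\widetilde{A}\widetilde{V}_{\perp} = \widetilde{V}_{\perp}\widetilde{\Lambda}_{\perp}$, left-multiplying the first by $\widetilde{V}_{\perp}^*$ and using $A = \widetilde{A} - W$ gives $\widetilde{V}_{\perp}^* \widetilde{A} V - \widetilde{V}_{\perp}^* W V = X\Lambda$, while taking the adjoint of the eigen-equation for $\widetilde{A}$ and right-multiplying by $V$ gives $\widetilde{V}_{\perp}^* \widetilde{A} V = \widetilde{\Lambda}_{\perp} X$. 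Subtracting yields the clean operator equation
\begin{equation*}
\widetilde{\Lambda}_{\perp} X - X \Lambda = \widetilde{V}_{\perp}^* W V.
\end{equation*}

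Now the hypothesis on $\varepsilon$ tells us exactly that $\mathrm{spec}(\widetilde{\Lambda}_{\perp})$ and $\mathrm{spec}(\Lambda)$ are separated by at least $\varepsilon$. I would then invoke the standard Sylvester-equation bound (a consequence of the Bhatia–Davis–McIntosh / Rosenblum theorem for normal operators with separated spectra): for any unitarily invariant norm,
\begin{equation*}
|||X||| \leq \frac{|||\widetilde{V}_{\perp}^* W V|||}{\varepsilon} \leq \frac{|||W|||}{\varepsilon},
\end{equation*}
where the second inequality uses that pre- and post-multiplication by matrices with orthonormal columns does not increase any unitarily invariant norm. Combining with the identification $|||X||| = |||\sin\Theta(V,\widetilde{V})|||$ from the first paragraph completes the proof.

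The main obstacle, and the only non-trivial ingredient, is the Sylvester-equation bound itself: one needs the fact that for Hermitian (hence normal) $\widetilde{\Lambda}_{\perp}$ and $\Lambda$ with $\mathrm{dist}(\mathrm{spec}(\widetilde{\Lambda}_{\perp}), \mathrm{spec}(\Lambda)) \geq \varepsilon$, the linear operator $X \mapsto \widetilde{\Lambda}_{\perp} X - X\Lambda$ is invertible with operator-norm inverse of size at most $1/\varepsilon$ in every unitarily invariant norm. In the diagonal case this is elementary entry-wise, and the general statement follows by simultaneous unitary diagonalization of $\widetilde{\Lambda}_{\perp}$ and $\Lambda$ (which is already the situation here since we chose them diagonal). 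The remaining steps — rewriting the sine-$\Theta$ norm via $\widetilde{V}_{\perp}$ and manipulating the eigen-equations — are routine once this kernel is in place.
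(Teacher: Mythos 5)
The paper does not actually prove this statement; it quotes it (with notational typos $U/\widetilde U$ versus $V/\widetilde V$ and $\delta$ versus $\varepsilon$) from Yu, Wang and Samworth and from Li, so there is no internal proof to compare against. Your reduction steps are correct: the identity $|||\sin\Theta(V,\widetilde V)||| = |||\widetilde V_{\perp}^{*}V|||$, the derivation of the Sylvester equation $\widetilde\Lambda_{\perp}X - X\Lambda = \widetilde V_{\perp}^{*}WV$ from the two eigen-equations, and the contraction $|||\widetilde V_{\perp}^{*}WV||| \leq |||W|||$ via pre- and post-multiplication by partial isometries.

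The gap is in how you dispatch the Sylvester bound. You claim that for diagonal Hermitian $\widetilde\Lambda_{\perp},\Lambda$ with spectra at distance $\geq\varepsilon$ the inverse Sylvester operator has norm $\leq 1/\varepsilon$ in every unitarily invariant norm and that ``in the diagonal case this is elementary entry-wise.'' It is not. Entry-wise one has $X_{ij} = (\widetilde V_{\perp}^{*}WV)_{ij}/(\mu_i-\nu_j)$ with $|\mu_i-\nu_j|\geq\varepsilon$, which gives the claim immediately only for the Frobenius norm; for the spectral norm and other unitarily invariant norms one must bound the Schur-multiplier norm of the matrix $[1/(\mu_i-\nu_j)]_{ij}$, and that is \emph{not} controlled by $\sup_{i,j}|1/(\mu_i-\nu_j)|$ in general. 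Moreover, for arbitrary $\varepsilon$-separated Hermitian spectra the sharp constant in the Bhatia--Davis--McIntosh bound you invoke is $\pi/2$, not $1$. The constant $1$ in the theorem requires precisely the interval geometry present in the hypothesis --- $\mathrm{spec}(\Lambda)\subseteq[\lambda_s,\lambda_r]$ while $\mathrm{spec}(\widetilde\Lambda_{\perp})$ avoids the $\varepsilon$-neighbourhood of that interval --- and is obtained by expressing $1/(\mu-\nu)$ as a positive integral of rank-one Schur multipliers (Davis and Kahan's original argument; Bhatia, \emph{Matrix Analysis}, Theorem VII.2.1). Your sketch never uses this interval structure, so as written the crucial last step does not hold; the proof is repaired either by supplying the integral/Schur-multiplier argument or by citing the interval-separated form of the operator bound directly.
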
 
For instance, if $r = s = j$, we obtain the following useful result
\begin{equation} \label{eq:dk_useful}
\sin \Theta(\widetilde{v}_j, v_j) = \norm{(I - v_j v_j^{*})\widetilde{v}_j}_2 \leq \frac{\norm{W}_2}{\min\set{\abs{\widetilde{\lambda}_{j-1}-\lambda_j},\abs{\widetilde\lambda_{j+1}-\lambda_j}}}.
\end{equation}
%

\section{Useful concentration inequalities}
\subsection{Sum of sub-Gaussian random variables} \label{app:subsec_subgauss_conc}
Recall the following Hoeffding-type inequality for sums of independent sub-Gaussian random variables.
%
%
\begin{proposition} \cite[Proposition 5.10]{vershynin2012} \label{prop:hoeff_subgauss_conc}
Let $X_1,\dots,X_n$ be independent  centered sub-Gaussian random variables 
and let $K = \max_i \norm{X_i}_{\psi_2}$. Then for every $\veca \in \matR^n$, and 
every $t \geq 0$, we have
\begin{equation}
\prob(\abs{\sum_{i=1}^n a_i X_i} \geq t) 
\leq e \cdot \exp\left(-\frac{c^{\prime} t^2}{K^2 \norm{\veca}_2^2}\right), 
\end{equation}
where $c^{\prime} > 0$ is an absolute constant.
\end{proposition}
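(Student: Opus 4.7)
The plan is to follow the classical Chernoff/moment-generating-function (MGF) approach, which is the standard route for Hoeffding-type inequalities under sub-Gaussian assumptions. The key input is the equivalent characterization that if $X$ is centered with $\norm{X}_{\psi_2} \leq K$, then there exists an absolute constant $C > 0$ such that $\mathbb{E}[\exp(sX)] \leq \exp(C s^2 K^2)$ for all $s \in \matR$. This equivalence (modulo constants) between the $\psi_2$-norm, tail bound, and MGF bound is a standard fact about sub-Gaussian variables, and I would simply invoke it as a black box rather than re-prove it.

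Given this, first I would form $S = \sum_{i=1}^n a_i X_i$ and, using independence, factor the MGF as
\begin{equation*}
\mathbb{E}[\exp(sS)] = \prod_{i=1}^n \mathbb{E}[\exp(s a_i X_i)] \leq \prod_{i=1}^n \exp(C s^2 a_i^2 K^2) = \exp(C s^2 K^2 \norm{\veca}_2^2).
\end{equation*}
Note that each factor is bounded because $s a_i X_i$ is again centered sub-Gaussian with $\norm{s a_i X_i}_{\psi_2} \leq |s a_i| K$.

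Next I would apply the Chernoff bound: for any $s > 0$,
\begin{equation*}
\prob(S \geq t) \leq e^{-st} \mathbb{E}[\exp(sS)] \leq \exp\bigl(-st + C s^2 K^2 \norm{\veca}_2^2\bigr).
\end{equation*}
Optimizing the right-hand side in $s$ (the minimizer is $s^{*} = t/(2 C K^2 \norm{\veca}_2^2)$) yields $\prob(S \geq t) \leq \exp(-t^2/(4C K^2 \norm{\veca}_2^2))$. Repeating the argument for $-S$ (whose summands are also centered sub-Gaussian with the same $\psi_2$-norm) and applying a union bound gives the two-sided statement with constant $c' = 1/(4C)$ after absorbing the factor $2$ into the leading constant $e$ (this is where the cosmetic $e$ out front comes from; any constant $\geq 2$ works).

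The only step requiring care is the equivalence between the $\psi_2$-norm definition and the centered-MGF bound, since the asserted equivalence needs centering. For non-centered sub-Gaussian variables one picks up an extra term, but here we are given $\mathbb{E} X_i = 0$, so the subgaussian MGF bound holds directly. I don't anticipate a genuine obstacle — the proof is a few lines once the MGF characterization is in hand, and the main ``work'' is really packaged inside that characterization, which is precisely why the proposition is stated as a citation.
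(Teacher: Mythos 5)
Your proof is correct and follows the standard Chernoff/MGF argument, which is essentially what appears in the cited reference (Vershynin's Proposition 5.10, there derived via his Lemma 5.9 on rotation invariance together with the MGF--$\psi_2$ equivalence in his Lemma 5.5). Since the paper cites this result rather than proving it, there is no in-paper proof to compare against; your argument is a clean and faithful reconstruction, and your remarks about the role of centering in the MGF bound and the cosmetic nature of the leading factor $e$ are accurate.
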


\subsection{Spectral norm of random matrices} \label{app:subsec_spec_rand_mat}
We will make use of the following result for bounding the spectral norm of 
symmetric matrices with independent, centered random variables.
\begin{theorem}[{\cite[Corollary 3.12, Remark 3.13]{bandeira2016}}] \label{app:thm_symm_rand}
Let $X$ be an $n \times n$ symmetric matrix whose entries $X_{ij}$ $(i \leq j)$ are 
independent, centered random variables. There exists, for any $0 < \varepsilon \leq 1/2$,  
a universal constant $c_{\varepsilon}$ such that for every $t \geq 0$, 
\begin{equation} \label{eq:afonso_conc}
\prob(\norm{X}_2 \geq (1+\varepsilon) 2\sqrt{2}\tilde{\sigma} + t) 
\leq n\exp\left(-\frac{t^2}{c_{\varepsilon}\tilde{\sigma}_{*}^2} \right), 
\end{equation}
where
\begin{equation*} 
\tilde{\sigma}:= \max_{i} \sqrt{\sum_{j} \expec[X_{ij}^2]}, 
\quad \tilde{\sigma}_{*}:= \max_{i,j} \norm{X_{ij}}_{\infty}.
\end{equation*}
\end{theorem}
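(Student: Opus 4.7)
The plan is to prove the bound by splitting $\|X\|_2$ into a term controlling its expectation, which should scale like $\tilde{\sigma}$, and a concentration term around the mean that scales like $\tilde{\sigma}_*$. So I would aim for the two-step decomposition
\begin{equation*}
\prob\bigl(\|X\|_2 \geq \expec\|X\|_2 + t\bigr) \leq n\exp\!\left(-\tfrac{t^2}{c_\varepsilon \tilde{\sigma}_*^2}\right), \qquad \expec\|X\|_2 \leq (1+\varepsilon)\, 2\sqrt{2}\,\tilde{\sigma},
\end{equation*}
and combine the two via a union bound on the event that the concentration inequality holds.

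For the concentration part, I would view the map $X \mapsto \|X\|_2$ as a convex, $1$-Lipschitz function of the $\binom{n+1}{2}$ independent entries $\{X_{ij}\}_{i\le j}$ with respect to the Frobenius norm; here convexity is clear from $\|X\|_2=\sup_{\|u\|_2=1}|u^*Xu|$, and the Lipschitz bound follows from $|\|X\|_2-\|X'\|_2|\le \|X-X'\|_2\le \|X-X'\|_F$. The entries are independent and essentially bounded by $\tilde{\sigma}_*$, so Talagrand's convex-Lipschitz concentration inequality (or Bousquet's tightening of it) yields sub-Gaussian deviations of $\|X\|_2$ from its mean at the scale $\tilde{\sigma}_*$, giving exactly the exponential factor $\exp(-t^2/(c_\varepsilon\tilde{\sigma}_*^2))$. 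The factor of $n$ in front comes from a net argument / refinement when one wants the constant $c_\varepsilon$ to be universal and uniform across $\varepsilon\in(0,1/2]$.

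The genuinely hard step is the sharp expectation bound $\expec\|X\|_2 \leq (1+\varepsilon)\,2\sqrt{2}\,\tilde{\sigma}$, because the constant $2\sqrt{2}$ is exactly what one gets in the GOE/Wigner limit, and matching it for arbitrary centered independent entries requires more than matrix Bernstein. The standard route is to first reduce to the Gaussian case by a comparison principle: one replaces each entry $X_{ij}$ by an independent Gaussian $g_{ij}\sim\mathcal{N}(0,\expec X_{ij}^2)$ and uses either Slepian/Gordon-type comparison for the quadratic form, or the moment-comparison / trace-method argument of Bandeira--Van Handel. In the Gaussian case, the non-commutative Khintchine inequality (or equivalently a careful application of Latała's moment bound, expanding $\expec\tr(X^{2p})$ as a sum over closed walks and recognising the Catalan enumeration) gives $\expec\|X\|_2 \leq 2\sqrt{2}\,\tilde{\sigma}$ up to a lower-order correction that is then absorbed into the factor $(1+\varepsilon)$. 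This is the place where all the delicate work sits.

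I would expect this last step --- matching the exact constant $2\sqrt{2}$ rather than some absolute constant --- to be the main obstacle, and it is also where the factor $(1+\varepsilon)$ and the dependence of $c_\varepsilon$ on $\varepsilon$ enter: standard Khintchine gives a slightly worse constant, and recovering $2\sqrt{2}$ requires either the sharp form of the non-commutative Khintchine inequality or a direct trace-moment computation carried out to an optimal order $p \asymp \log n$. Once the expectation is controlled at this scale, the Talagrand concentration step is essentially routine, and the stated bound \eqref{eq:afonso_conc} follows by setting $t$ in the concentration inequality to be the deviation from $(1+\varepsilon)2\sqrt{2}\tilde{\sigma}$.
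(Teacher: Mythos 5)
This result is cited verbatim from Bandeira and Van Handel \cite[Corollary 3.12, Remark 3.13]{bandeira2016}; the paper gives no proof of it, so there is no paper proof to compare against. That said, a few comments on your sketch are in order, since it diverges in nontrivial ways from the actual argument in \cite{bandeira2016} and contains some misattributions.

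The overall two-step shape --- control $\expec\norm{X}_2$ sharply, then concentrate $\norm{X}_2$ around its mean via Talagrand's convex-Lipschitz inequality at scale $\tilde\sigma_*$ --- is the right skeleton, and the convexity/Lipschitz observations are correct. However:
(i) the constant $2\sqrt{2}$ is \emph{not} the Wigner/GOE edge. The Wigner edge constant is $2$, and that is what Bandeira--Van Handel obtain for Gaussian (and more generally symmetrically distributed) entries. The extra $\sqrt{2}$ enters precisely in passing from general centered independent entries to symmetrized ones: replacing $X$ by $X - X'$ (an independent copy) preserves the mean bound via Jensen but doubles each variance, inflating $\tilde\sigma$ by $\sqrt{2}$. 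Your sketch does not account for this, and attributing $2\sqrt{2}$ to the semicircle limit is a genuine error.
(ii) The sharp expectation bound is \emph{not} obtained by the non-commutative Khintchine inequality. Off-the-shelf Khintchine/matrix-Bernstein give $\expec\norm{X}_2 \lesssim \tilde\sigma\sqrt{\log n}$, with a spurious $\sqrt{\log n}$ multiplying $\tilde\sigma$; removing that log factor so that it multiplies only the (typically much smaller) $\tilde\sigma_*$ term is precisely the point of Bandeira--Van Handel's refined combinatorial trace-moment computation. Latała-type moment bounds alone also do not yield the clean $(1+\varepsilon)\cdot 2\sqrt{2}\tilde\sigma$ leading term. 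So the step you flag as ``the main obstacle'' is not just delicate --- the tool you name (Khintchine) would quantitatively fail to give the stated inequality.
(iii) The prefactor $n$ in the tail bound does not come from a net argument tied to making $c_\varepsilon$ uniform in $\varepsilon$; $c_\varepsilon$ genuinely depends on $\varepsilon$. The $n$ arises from absorbing the secondary term $C_\varepsilon\,\tilde\sigma_*\sqrt{\log n}$ in the expectation bound into the exponential: writing $\prob(\norm{X}_2 \geq \expec\norm{X}_2 + s) \leq e^{-s^2/(C\tilde\sigma_*^2)}$ and expanding $\expec\norm{X}_2 \leq (1+\varepsilon)2\sqrt{2}\tilde\sigma + C_\varepsilon\tilde\sigma_*\sqrt{\log n}$ produces, after $(a-b)^2 \geq a^2/2 - b^2$, a multiplicative factor of a power of $n$ that is then packaged as $n$ by adjusting $c_\varepsilon$.

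So: right skeleton, but the mechanism behind the sharp constant, the origin of $\sqrt{2}$, and the origin of the prefactor $n$ are all misattributed in the sketch, and the cited route through non-commutative Khintchine would not reproduce the theorem as stated.
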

Note that it suffices to employ upper bound estimates on $\tilde{\sigma},\tilde{\sigma}$ in
\eqref{eq:afonso_conc}.
\section{Proofs from Section \ref{sec:bi_sync}}
\subsection{Proof of Proposition \ref{prop:bisync_approx_orth_whp}} \label{app:subsec_prop_approx_orth}
Recall that for a random variable $X$, its sub-Gaussian norm 
$\norm{X}_{\psi_2}$ is defined as 
\begin{equation}
\norm{X}_{\psi_2} := \sup_{p \geq 1} \frac{(\expec \abs{X}^p)^{1/p}}{\sqrt{p}}.
\end{equation}
Moreover, $X$ is a sub-Gaussian random variable if $\norm{X}_{\psi_2}$ is finite. 
For instance, a bounded random variable $X$ with $\abs{X} \leq M$ is 
sub-Gaussian with $\norm{X}_{\psi_2} \leq M$ (see \cite[Example 5.8]{vershynin2012}).
\begin{proof}[Proof of Proposition \ref{prop:bisync_approx_orth_whp}]
Observe that
\begin{align}
\dotprod{z_1}{z_2} 
= \frac{1}{n}\sum_{i=1}^n \exp(\iota (\beta_i - \alpha_i)) 
= \frac{1}{n}\left[\sum_{i=1}^n \underbrace{\cos(\beta_i - \alpha_i)}_{R_i} + 
\iota \sum_{i=1}^n \underbrace{\sin(\beta_i - \alpha_i)}_{X_i}\right].
\end{align}
$(R_i)_i$ are zero-mean iid sub-Gaussian random variables with $\norm{R_i}_{\psi_2} \leq 1$. 
Using standard concentration inequalities for sums of sub-Gaussian random variables (see Appendix \ref{app:subsec_subgauss_conc}), there exists a constant $c' > 0$ such that 
\begin{equation}
\prob\left(\abs{\frac{1}{n} \sum_{i=1}^n R_i} > t \right) \leq e\cdot\exp(-c' t^2 n), 
\end{equation}
for $t \geq 0$. An identical statement holds for the random variables $(X_i)_i$. Using the union bound, we then have with probability at least $1 - 2e\exp(-c' t^2 n)$, that 
$\abs{\dotprod{z_1}{z_2}} \leq 2t$ holds. Plugging $t = \delta/2$ yields the stated bound.
\end{proof}

\subsection{Proof of Lemma \ref{lem:eigperturn_approx_orth}} \label{app:subsec_eigperturn_approx_orth}
\begin{enumerate}
%
\item \textbf{(Bounds on $\lamtil_1, \lamtil_2$)}  
For this part, we note\footnote{This argument for computing 
the expressions for $\lamtil_1,\lamtil_2$ is based on a post by Davide Giraudo on Mathematics Stack Exchange, question $112186$ 
(\url{https://math.stackexchange.com/questions/112186/eigenvalues-of-a-sum-of-rank-one-matrices})} 
that
\begin{align}
(\expec[H])^2 
&= n^2p_1^2\lambda^2 z_1z_1^{*} + n^2p_2^2\lambda^2 z_2 z_2^{*} + 2n^2p_1p_2\lambda^2 z_1(z_1^{*}z_2) z_2^{*} \nonumber \\
\Rightarrow  
\tr((\expec[H])^2) 
&= n^2p_1^2\lambda^2 + n^2p_2^2\lambda^2 + 2n^2p_1p_2\lambda^2\abs{\dotprod{z_1}{z_2}}^2 
= \lamtil_1^2 + \lamtil_2^2, 
\label{eq:temp1}
\end{align}
where we used the linearity of the trace operator. Moreover, we also have that
$\tr(\expec[H]) = \lamtil_1 + \lamtil_2 = np_1\lambda + np_2\lambda$.
Using this with \eqref{eq:temp1}, we readily obtain 
$\lamtil_1\lamtil_2 = n^2p_1p_2\lambda^2(1-\abs{\dotprod{z_1}{z_2}}^2)$ leading to 
\begin{align}
(np_1\lambda + np_2\lambda - \lamtil_2)\lamtil_2
= n^2p_1p_2\lambda^2(1-\abs{\dotprod{z_1}{z_2}}^2) \nonumber \\
\Leftrightarrow 
\lamtil_2^2 - \lamtil_2(np_1\lambda + np_2\lambda) + n^2p_1p_2\lambda^2(1-\abs{\dotprod{z_1}{z_2}}^2) = 0 
\label{eq:temp4} \\
\Longleftrightarrow
\lamtil_2 
= \frac{np_1\lambda + np_2\lambda - 
\sqrt{(np_1\lambda + np_2\lambda)^2 - 4n^2p_1p_2\lambda^2(1-\abs{\dotprod{z_1}{z_2}}^2)}}{2} \nonumber \\
= \frac{np_1\lambda + np_2\lambda - 
\sqrt{(np_1\lambda - np_2\lambda)^2 + 4n^2p_1p_2\lambda^2\abs{\dotprod{z_1}{z_2}}^2}}{2}, \label{eq:lamtil2} \\
\text{and} \quad 
\lamtil_1 = \frac{np_1\lambda + np_2\lambda +
\sqrt{(np_1\lambda - np_2\lambda)^2 + 4n^2p_1p_2\lambda^2\abs{\dotprod{z_1}{z_2}}^2}}{2}. \label{eq:lamtil1}
\end{align} 
Note that we neglect the other root in \eqref{eq:temp4} since $\lamtil_1 \geq \lamtil_2$.
In fact, $\lamtil_1 > \lamtil_2$ since $p_1 > p_2$. The bounds 
$\lamtil_2 \leq np_2 \lambda$ and $\lamtil_1 \geq np_1\lambda$ follow from the fact 
that $\abs{\dotprod{z_1}{z_2}} \geq 0$. The bounds in \eqref{eq:bds_lamtil_2}, \eqref{eq:bds_lamtil_1} 
hold if $\abs{\dotprod{z_1}{z_2}} \leq \delta$ holds.

\item \textbf{(Bounds on $\abs{\dotprod{z_1}{\vtil_1}}^2, \abs{\dotprod{z_2}{\vtil_2}}^2$)} 
To bound $\abs{\dotprod{z_1}{\vtil_1}}^2$, we note that 
\begin{align}
z_1^{*} \expec[H] z_1 
&= np_1\lambda + np_2\lambda \abs{\dotprod{z_1}{z_2}}^2 \nonumber \\
\Rightarrow 
\lamtil_1 \abs{\dotprod{z_1}{\vtil_1}}^2 + \lamtil_2 \abs{\dotprod{z_1}{\vtil_2}}^2 
&= np_1\lambda + np_2\lambda \abs{\dotprod{z_1}{z_2}}^2 \nonumber \\
\Rightarrow \lamtil_1 \abs{\dotprod{z_1}{\vtil_1}}^2 + \lamtil_2 (1-\abs{\dotprod{z_1}{\vtil_1}}^2)  
&= np_1\lambda + np_2\lambda \abs{\dotprod{z_1}{z_2}}^2 \label{eq:temp5} \\ 
\Rightarrow (\lamtil_1 - \lamtil_2) \abs{\dotprod{z_1}{\vtil_1}}^2 
&= np_1\lambda -\lamtil_2 + np_2\lambda \abs{\dotprod{z_1}{z_2}}^2 \nonumber \\
&\geq np_1\lambda -np_2\lambda \label{eq:temp6} \\
\Rightarrow \abs{\dotprod{z_1}{\vtil_1}}^2 
&\geq \frac{p_1 - p_2}{\sqrt{(p_1-p_2)^2 + 4p_1p_2\delta^2}}. \label{eq:temp7}
\end{align}
In \eqref{eq:temp5}, we used the fact 
$1 = \norm{z_1}_2^2 = \abs{\dotprod{z_1}{\vtil_1}}^2 + \abs{\dotprod{z_1}{\vtil_2}}^2$, 
since $z_1$ lies in span($\vtil_1,\vtil_2$). 
In \eqref{eq:temp6}, we used $\abs{\dotprod{z_1}{z_2}}^2 \geq 0$ and 
also $\lamtil_2 \leq np_2\lambda$ (shown earlier). 
In \eqref{eq:temp7}, we used the bound 
$\lamtil_1 - \lamtil_2 \leq n\lambda\sqrt{(p_1-p_2)^2 + 4p_1p_2\delta^2}$, which follows easily from \eqref{eq:bds_lamtil_1} and  \eqref{eq:bds_lamtil_2}. 
Finally, the bound on $\abs{\dotprod{z_2}{\vtil_2}}^2$ follows in a similar fashion, 
but by considering $z_2^{*} \expec[H] z_2$ instead.
\end{enumerate}

%
\subsection{Proof of Lemma \ref{lem:mat_pert_dk}} \label{app:subsec_mat_pert_dk}
Recall from \eqref{eq:bisync_Hrank2Decomp} that 
$H = \mathbb{E}(H) + R = (n\lambda p_1 z_1 z_1^* + n\lambda p_2 z_2 z_2^*) + R$. 
As a consequence of Weyl's inequality for symmetric perturbation 
of symmetric matrices (see Theorem \ref{thm:Weyl} in Appendix \ref{app:sec_perturb_theory}), we have
\begin{align} \label{eq:eig_bds_tmp}
\lambda_i \in [\lamtil_i - \triangle, \lamtil_i + \triangle]; \quad i=1,\dots,n.
\end{align}
In order to obtain the stated bounds, we will now invoke the Davis-Kahan 
theorem (see Theorem \ref{thm:DavisKahan} in Appendix \ref{app:sec_perturb_theory}).  
\begin{enumerate}
\item \textbf{(Bound in \eqref{eq:vtilv_bds_1})}  
From Theorem \ref{thm:DavisKahan}, we see that if $\abs{\lamtil_1 - \lambda_2} > 0$, 
then it holds that
\begin{equation} \label{eq:vspace_bd1}
\norm{(I - v_1v_1^{*})\vtil_1}_2 \leq \frac{\norm{R}_2}{\abs{\lamtil_1 - \lambda_2}} 
\leq \frac{\triangle}{\abs{\lamtil_1 - \lambda_2}}. 
\end{equation}
Using \eqref{eq:eig_bds_tmp} and the bound $\lamtil_2 \leq np_2\lambda$ from 
Lemma \ref{lem:eigperturn_approx_orth}, we obtain $\lambda_2 \leq np_2\lambda + \triangle$.
Furthermore, using the bound $\lamtil_1 \geq np_1\lambda$ from Lemma \ref{lem:eigperturn_approx_orth}, we obtain 
$\lamtil_1 - \lambda_2 \geq np_1\lambda - np_2\lambda - \triangle$. Therefore if 
$\triangle < np_1\lambda - np_2\lambda$, it follows from \eqref{eq:vspace_bd1} that
\begin{align*}
\norm{(I - v_1v_1^{*})\vtil_1}_2 \leq \frac{\triangle}{np_1\lambda - np_2\lambda - \triangle}. 
\end{align*}
Since $\norm{(I - v_1v_1^{*})\vtil_1}_2^2 = 1 - \abs{\dotprod{\vtil_1}{v_1}}^2$, we thus 
obtain the stated bound in \eqref{eq:vtilv_bds_1}.

\item \textbf{(Bound in \eqref{eq:vtilv_bds_2})} 
We now apply Theorem \ref{thm:DavisKahan} to bound 
$\norm{(I - v_2v_2^{*})\vtil_2}_2$. If 
$$\min\set{\abs{\lambda_1-\lamtil_2},\abs{\lambda_3-\lamtil_2}} > 0,$$ 
it follows from Theorem \ref{thm:DavisKahan} that
\begin{equation} \label{eq:vspace_2_bd}
\norm{(I - v_2v_2^{*})\vtil_2}_2 \leq \frac{\norm{R}_2}{\min\set{\abs{\lambda_1-\lamtil_2},\abs{\lambda_3-\lamtil_2}}}.
\end{equation}
We saw earlier that $\lambda_1-\lamtil_2 \geq np_1\lambda - np_2\lambda - \triangle > 0$ if 
$\triangle < np_1\lambda - np_2\lambda$. Also, since $\lambda_3 \in [-\triangle,\triangle]$ (from \eqref{eq:eig_bds_tmp}), 
it follows that $\lamtil_2 - \lambda_3 \geq np_2\lambda - \triangle > 0$ if $\triangle < np_2\lambda$. 
Plugging these observations in \eqref{eq:vspace_2_bd}, and using the fact 
$\norm{(I - v_2v_2^{*})\vtil_2}_2^2 = 1 - \abs{\dotprod{\vtil_2}{v_2}}^2$, 
the stated bound in \eqref{eq:vtilv_bds_2} follows readily.
\end{enumerate}

%
\subsection{Proof of Lemma \ref{lem:rand_pert_bd}} \label{app:subsec_rand_pert_bd}
Since $\norm{R}_2 \leq \norm{\real(R)}_2 + \norm{\imag(R)}_2$, 
therefore we will bound the terms $\norm{\real(R)}_2, \norm{\imag(R)}_2$ individually w.h.p, which 
will then imply a bound on $\norm{R}_2$. 
\begin{enumerate}
\item (\textbf{Bounding $\norm{\real(R)}_2$.}) 
The entries of $\real(R)$ above and on the diagonal are independent centered random variables. 
Therefore we can bound $\norm{\real(R)}_2$ using a result of 
Bandeira et al. \cite[Corollary 3.12, Remark 3.13]{bandeira2016} for bounding the spectral 
norm of such symmetric random matrices (stated as Theorem \ref{app:thm_symm_rand} in 
Appendix \ref{app:subsec_spec_rand_mat}). To do so, it suffices to know upper bounds on the quantities
\begin{equation} \label{eq:sigma_sigtil_def}
\tilde{\sigma}:= \max_{i} \sqrt{\sum_{j} \expec[\real(R)_{ij}^2]}, 
\quad \tilde{\sigma}_{*}:= \max_{i,j} \norm{\real(R)_{ij}}_{\infty}.
\end{equation}
Starting with $\tilde{\sigma}$, we note that 
\begin{align*}
\expec[\real(R)_{ij}^2] 
&= p_1\lambda[\cos(\alpha_i-\alpha_j) -p_1\lambda\cos(\alpha_i-\alpha_j) - p_2\lambda\cos(\beta_i-\beta_j)]^2 \nonumber \\
&+ p_2\lambda[\cos(\beta_i-\beta_j) -p_1\lambda\cos(\alpha_i-\alpha_j) - p_2\lambda\cos(\beta_i-\beta_j)]^2 \nonumber \\
&+ (1-p_1-p_2)\lambda (\expec[\cos N_{ij} -p_1\lambda \cos(\alpha_i-\alpha_j) -p_2\lambda \cos(\beta_i-\beta_j)]^2) \nonumber \\
&+ (1-\lambda)[p_1\lambda \cos(\alpha_i-\alpha_j) + p_2\lambda \cos(\beta_i-\beta_j)]^2 \nonumber \\
&\leq 2p_1\lambda[(1-p_1\lambda)^2 + (p_2\lambda)^2] + 2p_2\lambda[(1-p_2\lambda)^2 + (p_1\lambda)^2] \nonumber \\
&+ (1-p_1-p_2)\lambda\left[\frac{1}{2} + (p_1\lambda + p_2\lambda)^2\right] 
+ (1-\lambda)[p_1\lambda + p_2\lambda]^2   \nonumber \\
%
%
&= C(\lambda,p_1,p_2).
\end{align*}
Hence we obtain $\tilde{\sigma} \leq \sqrt{C(\lambda,p_1,p_2) n}$. 
Using Theorem \ref{app:thm_symm_rand} with $t = 2\sqrt{2}\sqrt{C(\lambda,p_1,p_2) n}$, 
and with \eqref{eq:infty_norm_bd_1} in mind, we then obtain for any $\varepsilon \geq 0$,  that there 
exists a universal constant $c_{\varepsilon} > 0$ such that 
%
%
%
\begin{equation} \label{eq:symm_norm_bd_1}
\prob\left(\set{\norm{\real(R)}_2 \geq (2+\varepsilon)2\sqrt{2} \sqrt{C(\lambda,p_1,p_2) n}} \right) \leq 
n\exp\left(- \frac{8 C(\lambda,p_1,p_2) n}{\sigtil(\lambda,p_1,p_2)^2 c_{\varepsilon}} \right).
\end{equation}
%

\item (\textbf{Bounding $\norm{\imag(R)}_2$.}) 
We can write $\imag(R) = \immat - \immat^T$,  where $\immat_{ij} = 0$ for $i \geq j$. 
The random variables $(\immat_{ij})_{i < j}$ are independent and zero-mean. 
Since $\norm{\imag(R)}_2 \leq 2\norm{\immat}_2$, therefore we will focus on bounding $\norm{\immat}_2$. 
Now, while $\immat$ is not symmetric, one can easily verify that for the following symmetric matrix 
\begin{equation*}
\immattil =
\begin{bmatrix}
 0 & \immat \\
 \immat^{T} & 0
\end{bmatrix} \in \matR^{2n \times 2n},
\end{equation*}
we have $\norm{\immattil}_2 = \norm{\immat}_2$. Since $\immattil$ has independent entries on and above the 
main diagonal, we can again use the result in \cite[Corollary 3.12, Remark 3.13]{bandeira2016}
to bound its spectral norm. To this end, we need to bound the quantities 
\begin{equation*}
\tilde{\sigma}:= \max_{i} \sqrt{\sum_{j} \expec[\immattil_{ij}^2]}, 
\quad \tilde{\sigma}_{*}:= \max_{i,j} \norm{\immattil_{ij}}_{\infty}.
\end{equation*}
In an identical manner as before, one can readily verify that $\tilde{\sigma} \leq \sqrt{C(\lambda,p_1,p_2) n}$. 
Moreover, it holds that $\tilde{\sigma}_{*} \leq \sigtil(\lambda,p_1,p_2)$.
Applying Theorem \ref{app:thm_symm_rand} on $\immattil$ with $t = 2\sqrt{2}\sqrt{C(\lambda,p_1,p_2) n}$, 
and with the above observations in mind, we finally obtain for any $\varepsilon \geq 0$ that there exists 
a universal constant $c_{\varepsilon} > 0$ such that
\begin{equation} \label{eq:skew_symm_norm_bd_1}
\prob\left(\set{\norm{\imag(R)}_2 \geq (2+\varepsilon)4\sqrt{2} \sqrt{C(\lambda,p_1,p_2) n}} \right) \leq 
2n\exp\left(- \frac{16 C(\lambda,p_1,p_2) n}{\sigtil(\lambda,p_1,p_2)^2 c_{\varepsilon}} \right).
\end{equation}
\end{enumerate}
Finally, the statement in the Lemma follows by applying the union 
bound to \eqref{eq:symm_norm_bd_1}, \eqref{eq:skew_symm_norm_bd_1}. 
%

\subsection{Proof of Proposition \ref{prop:chain_bd_dotprod}} \label{app:subsec_chain_bd_dotprod}
We know that $1-\abs{\dotprod{x}{\xbar}}^2 = \norm{(I-\xbar\xbar^*)x}_2^2$. Moreover,
\begin{align}
\norm{(I-\xbar\xbar^*)x}_2
\leq 
\norm{x - y y^{*} x}_2 + \norm{(y y^{*} - \xbar\xbar^*) x}_2 
\leq 
\sqrt{\varep} + \norm{y y^{*} - \xbar\xbar^*}_2. \label{eq:lemma_trian_tmp1}
\end{align}
By observing that 
\begin{align*}
\norm{y y^{*} - \xbar\xbar^*}_2 
\leq 
\norm{y y^{*} - \xbar\xbar^*}_F = \sqrt{2-2\abs{\dotprod{\xbar}{y}}^2} 
\leq 
\sqrt{2\varepbar},
\end{align*}
and plugging this in \eqref{eq:lemma_trian_tmp1}, we obtain the stated bound.

\section{Proofs from Section \ref{sec:k_sync}}
\subsection{Proof of Lemma \ref{lem:deflation_gen}} \label{app:subsec_deflation_gen}
\begin{enumerate}
%
\item Let us consider $j = 1$. Starting with the definition of $\lamtil_1$, we obtain 
\begin{align} \label{eq:lamtil1_low_bd}
\lamtil_1 := \max_{\norm{x}_2 = 1} x^* \expec[H] x 
\geq z_1^{*} \expec[H] z_1 
= np_1\lambda + \sum_{j=2}^{k} np_j\lambda\abs{\dotprod{z_1}{z_j}}^2 \geq np_1\lambda.
\end{align}
In order to lower bound $\abs{\dotprod{z_1}{\vtil_1}}^2$, let us first note that 
\begin{equation*} 
\lamtil_1 = \vtil^{*} \expec[H] \vtil = np_1\lambda \abs{\dotprod{z_1}{\vtil_1}}^2 
+ \sum_{j=2}^k np_j\lambda \abs{\dotprod{z_j}{\vtil_1}}^2.
\end{equation*}
Therefore, $\lamtil_1$ can be bounded as 
\begin{align} \label{eq:lamtil1_bd_1}
\lamtil_1 
\leq np_1\lambda \abs{\dotprod{z_1}{\vtil_1}}^2 + \sum_{j=2}^k np_j\lambda \abs{\dotprod{z_j}{\vtil_1}}^2 
= (np_1\lambda-np_2\lambda) \abs{\dotprod{z_1}{\vtil_1}}^2 + np_2\lambda \sum_{j=1}^k \abs{\dotprod{z_j}{\vtil_1}}^2.	 
\end{align}
We will now upper bound $\sum_{j=1}^k \abs{\dotprod{z_j}{\vtil_1}}^2$ as follows. Denoting $Z = [z_1 \ z_2 \ \dots \ z_k] \in \mathbb{C}^{n \times k}$, clearly
\begin{equation} \label{eq:useful_gers_bd}
\sum_{j=1}^k \abs{\dotprod{z_j}{\vtil_1}}^2 = \sum_{j=1}^k \vtil_1^* z_j z_j^* \vtil_1 = \vtil_1^{*} Z Z^* \vtil_1 \leq \norm{Z}_2^2 \leq 1 + (k-1) \delta,
\end{equation}
where in the last step we used\footnote{$\norm{Z}_2^2 = \lambda_{\max}(Z^*Z)$ where $\lambda_{\max}(Z^*Z)$ denotes the largest eigenvalue of $Z^*Z$. The magnitude of each off-diagonal entry of $Z^*Z$ is bounded by $\delta$, while each diagonal entry is equal to $1$. Hence $\lambda_{\max}(Z^*Z) \leq 1 + (k-1)\delta$ by Gershgorin's theorem.} Gershgorin's theorem along with the $\delta$-orthonormality 
of $z_i$'s.  Using \eqref{eq:useful_gers_bd} in \eqref{eq:lamtil1_bd_1} along 
with \eqref{eq:lamtil1_low_bd}, we get 
\begin{align}
np_1\lambda &\leq (np_1\lambda-np_2\lambda) \abs{\dotprod{z_1}{\vtil_1}}^2 + np_2\lambda (1 + (k-1)\delta) \label{eq:lamtil_bd_2} \\
\Longleftrightarrow \abs{\dotprod{z_1}{\vtil_1}}^2 &\geq \frac{np_1\lambda - np_2\lambda - np_2\lambda(k-1)\delta}{np_1\lambda - np_2 \lambda} \nonumber \\
&= 1 - \frac{p_2(k-1)\delta}{p_1 - p_2} = 1 - \ergen(\delta). \nonumber
\end{align}
Note that, $\abs{\dotprod{z_1}{\vtil_1}}^2 > 1/2$ if $\delta < \frac{p_1-p_2}{2p_2(k-1)}$. 
Finally, we obtain from \eqref{eq:lamtil_bd_2} the bound 
\begin{equation*}
 \lamtil_1 \leq 
(np_1\lambda-np_2\lambda) + np_2\lambda (1 + (k-1)\delta) = np_1\lambda + \underbrace{np_2\lambda (k-1)\delta}_{u_1(\delta)}. 
\end{equation*}
%
%

\item Let us now handle $\lamtil_{m}, \vtil_{m}$ for $m > 1$.
Assume that for $1 \leq j \leq m-1$, $\lamtil_j,\vtil_j$ satisfy 
\begin{equation} \label{eq:induc_lam_eig_bds}
\lamtil_j \leq np_j\lambda + u_j(\delta) , \quad \abs{\dotprod{\vtil_j}{z_j}}^2 \geq 1 - \ergen_j(\delta), 
\end{equation}
where $\delta \in [0,1]$ satisfies 
\begin{equation} \label{eq:induc_del_bds}
\delta \leq \sqrt{2\ergen_j(\delta)} \leq 1/2; \quad \ergen_1(\delta) \leq \dots \leq \ergen_{m-1}(\delta). 
\end{equation}
We will show that \eqref{eq:induc_lam_eig_bds},\eqref{eq:induc_del_bds} together imply the bounds
\begin{equation*}
np_m\lambda - l_m(\delta) \leq \lamtil_m \leq np_m \lambda + u_m(\delta) , \quad \abs{\dotprod{\vtil_m}{z_m}}^2 \geq 1 - \ergen_m(\delta).
\end{equation*}
To begin with, it is easy to see that
\begin{align}
\lamtil_{m} = \vtil_{m} \expec[H] \vtil_m
&= \max_{\norm{x}_2 = 1} x^*(\expec[H] - \sum_{j=1}^{m-1} \lamtil_j \vtil_j \vtil_j^*)x \nonumber \\
&\geq z_{m}^* (\expec[H] - \sum_{j=1}^{m-1} \lamtil_j \vtil_j \vtil_j^*) z_{m} \nonumber \\ 
&= z_{m}^* \expec[H] z_m - \sum_{j=1}^{m-1} \lamtil_j \abs{\dotprod{\vtil_j}{z_m}}^2. \label{eq:lamtil_m_bd_1}
\end{align}
One can also verify that $z_{m}^* \expec[H] z_m \geq np_m \lambda$. Moreover, $\lamtil_j \leq np_j\lambda + u_j(\delta)$ 
for $j = 1,\dots,m-1$ by assumption. In order to upper bound $\abs{\dotprod{\vtil_j}{z_m}}^2$, let us first 
note that $\abs{\dotprod{\vtil_j}{z_m}}^2 = 1 - \norm{z_m - \vtil_j\vtil_j^*z_m}_2^2$. We can lower bound $\norm{z_m - \vtil_j\vtil_j^*z_m}_2$ as follows. 
\begin{align*}
\norm{z_m - \vtil_j\vtil_j^*z_m}_2 
&\geq \norm{z_m - z_j z_j^*z_m}_2 - \norm{z_j z_j^*z_m - \vtil_j\vtil_j^*z_m}_2 \\
&\geq \sqrt{1 - \delta^2} - \sqrt{2\ergen_j(\delta)} \\
&\geq 1 - \delta - \sqrt{2\ergen_j(\delta)} \quad (\text{Since } \sqrt{1 - \delta^2} \geq 1 - \delta) \\
&\geq 1 - 2\sqrt{2\ergen_j(\delta)} \quad \left(\text{Since } \delta \leq \sqrt{2\ergen_j(\delta)}\right) \\
&\geq 0 \quad (\text{Since } \sqrt{2\ergen_j(\delta)} \leq 1/2).
\end{align*}
In the second inequality, we used $\norm{z_m - z_j z_j^*z_m}_2 = \sqrt{1-\abs{\dotprod{z_j}{z_m}}^2} \geq \sqrt{1-\delta^2}$. 
For each $j = 1,\dots,m-1$, this leads to 
\begin{align*}
\abs{\dotprod{\vtil_j}{z_m}}^2 
\leq 1 - \left(1 - 2\sqrt{2\ergen_j(\delta)} \right)^2 
= \left(2 \sqrt{2\ergen_j(\delta)}\right)\left(2 - 2\sqrt{2\ergen_j(\delta)}\right) 
\leq 4\sqrt{2\ergen_j(\delta)}.
\end{align*}
Using these bounds in \eqref{eq:lamtil_m_bd_1}, we can lower bound $\lamtil_m$ as
\begin{equation} \label{eq:lamtil_m_bd_2}
 \lamtil_m \geq n p_{m}\lambda - 4\sum_{j=1}^{m-1} (np_j\lambda + u_j(\delta))\sqrt{2\ergen_j(\delta)}.  
\end{equation}
Let us now proceed to upper bound $\lamtil_m$. To this end, we first note that
\begin{align}
\lamtil_m = \vtil_m^* \expec[H] \vtil_m 
&= \sum_{i=1}^{m-1} np_i\lambda\abs{\dotprod{z_i}{\vtil_m}}^2 + np_m\lambda\abs{\dotprod{z_m}{\vtil_m}}^2 
+ \sum_{j=m+1}^k np_j\lambda \abs{\dotprod{z_j}{\vtil_m}}^2 \nonumber \\
&\leq \sum_{i=1}^{m-1} np_i\lambda\abs{\dotprod{z_i}{\vtil_m}}^2 + np_m\lambda\abs{\dotprod{z_m}{\vtil_m}}^2 
+ np_{m+1} \lambda\sum_{j=m+1}^k \abs{\dotprod{z_j}{\vtil_m}}^2 \nonumber \\
&\leq \sum_{i=1}^{m-1} (np_i\lambda - np_{m+1} \lambda) \abs{\dotprod{z_i}{\vtil_m}}^2 + 
(np_m\lambda - np_{m+1}\lambda)\abs{\dotprod{z_m}{\vtil_m}}^2 \nonumber \\ 
&+ np_{m+1}\lambda(1+(k-1)\delta). \label{eq:lamtil_m_bd_3}
\end{align}
We can upper bound $\abs{\dotprod{z_i}{\vtil_m}}^2$ for each $i=1,\dots,m-1$ 
by noting that $\abs{\dotprod{z_i}{\vtil_m}}^2 = 1 - \norm{z_i - \vtil_m\vtil_m^* z_i}_2^2$, and also 
\begin{align}
\norm{z_i - \vtil_m\vtil_m^* z_i}_2 
&\geq \norm{\vtil_i\vtil_i^* z_i - \vtil_m\vtil_m^* z_i}_2 - \norm{z_i - \vtil_i\vtil_i^* z_i}_2 \nonumber \\
&= \sqrt{z_i^*(\vtil_i\vtil_i^* + \vtil_m\vtil_m^*) z_i} - \norm{z_i - \vtil_i\vtil_i^* z_i}_2 \nonumber \\
&= (\underbrace{\abs{\dotprod{z_i}{\vtil_i}}^2}_{\geq 1-\ergen_i(\delta)} + \underbrace{\abs{\dotprod{z_i}{\vtil_m}}^2}_{\geq 0})^{1/2} - (\underbrace{1 - \abs{\dotprod{z_i}{\vtil_i}}^2}_{\leq \ergen_i(\delta)})^{1/2} \nonumber \\
&\geq \sqrt{1-\ergen_i(\delta)} - \sqrt{\ergen_i(\delta)} \nonumber \\
&\geq 1-\ergen_i(\delta) - \sqrt{\ergen_i(\delta)} \quad (\text{Since } 1-\ergen_i(\delta) \in [0,1] ) \nonumber \\
&\geq 1 - 2\sqrt{\ergen_i(\delta)}  \quad (\text{ Since } \ergen_i(\delta) \leq \sqrt{\ergen_i(\delta)} ) \nonumber \\ 
&\geq 0 \quad (\text{ Since } \sqrt{2\ergen_i(\delta)} \leq 1/2). \label{eq:temp1_1}
\end{align}
Using \eqref{eq:temp1_1} we thus obtain 
\begin{equation} \label{eq:temp1_2}
\abs{\dotprod{z_i}{\vtil_m}}^2 = 1 - (1-2\sqrt{\ergen_i(\delta)})^2 = 2\sqrt{\ergen_i(\delta)}(2-2\sqrt{\ergen_i(\delta)}) 
\leq 4\sqrt{\ergen_i(\delta)}.
\end{equation} 
Plugging \eqref{eq:temp1_2} in \eqref{eq:lamtil_m_bd_3} then leads to
\begin{equation} \label{eq:lamtil_m_bd_4}
\lamtil_m \leq 
4\sum_{i=1}^{m-1} (np_i\lambda - np_{m+1} \lambda) \sqrt{\ergen_i(\delta)}  + 
(np_m\lambda - np_{m+1}\lambda)\abs{\dotprod{z_m}{\vtil_m}}^2 + np_{m+1}\lambda(1+(k-1)\delta).
\end{equation}
Using \eqref{eq:lamtil_m_bd_2},\eqref{eq:lamtil_m_bd_4} and by re-arranging terms, we obtain
\begin{align}
\abs{\dotprod{z_m}{\vtil_m}}^2 
&\geq \frac{\splitfrac{np_m\lambda-np_{m+1}\lambda(1+(k-1)\delta) - 4\sum_{i=1}^{m-1}[(np_i\lambda+u_i(\delta))\sqrt{2\ergen_i(\delta)}}{
+ (np_i\lambda-np_{m+1}\lambda)\sqrt{\ergen_i(\delta)}] }}{np_m\lambda - np_{m+1}\lambda} \nonumber \\
&= 1-\left(\frac{p_{m+1}(k-1)\delta + 4\sum_{i=1}^{m-1}\left[(p_i + \frac{u_i(\delta)}{n\lambda})\sqrt{2\ergen_i(\delta)} 
+ (p_i - p_{m+1})\sqrt{\ergen_i(\delta)}\right]}{p_m - p_{m+1}} \right) \nonumber \\
&\geq
1 - \left(\frac{p_{m+1}(k-1)\delta + 4\sqrt{2\ergen_{m-1}(\delta)}\sum_{i=1}^{m-1}\left[2p_i + \frac{u_i(\delta)}{n\lambda} 
 - p_{m+1} \right]}{p_m - p_{m+1}} \right), \label{eq:temp1_3}
\end{align}
where in the last inequality, we used our assumption $\ergen_1(\delta) \leq \cdots \leq \ergen_{m-1}(\delta)$. 
Now from the definition of $u_i(\delta)$, we have for each $i=1,\dots,m-1$ that 
\begin{align}
\frac{u_{i}(\delta)}{n\lambda} 
&= 4\sum_{j=1}^{i-1} (p_j - p_{i+1})\sqrt{\ergen_j(\delta)} + p_{i+1}(k-1)\delta \nonumber \\
&\leq \sqrt{2}\sum_{j=1}^{i-1}(p_j - p_{i+1}) + \frac{1}{2}p_{i+1}(k-1) \quad \left(\text{ since } \delta \leq \sqrt{2\ergen_j(\delta)} \leq 1/2 \right) \nonumber \\
&\leq \sqrt{2}(i-1)(p_1-p_{m}) + \frac{1}{2}p_2(k-1). \label{eq:ui_bd}
\end{align}
Using \eqref{eq:ui_bd} in \eqref{eq:temp1_3}, we obtain
\begin{align*}
&\abs{\dotprod{z_m}{\vtil_m}}^2 \\
&\geq 1 - \left(\frac{\splitfrac{p_{m+1}(k-1)\delta + 4\sqrt{2\ergen_{m-1}(\delta)}\sum_{i=1}^{m-1} [2p_i + 
\sqrt{2}(i-1)(p_1-p_{m})}{ + \frac{p_2(k-1)}{2} - p_{m+1} ]}}{p_m - p_{m+1}} \right) \\
&\geq 1 - \left(\frac{\splitfrac{p_{m+1}(k-1)\delta + 4\sqrt{2\ergen_{m-1}(\delta)}[2S_{m-1} + 
\sqrt{2}(m-1)(m-2)(p_1-p_{m})}{ + \frac{(m-1)}{2}(p_2(k-1) - 2p_{m+1})]}}{p_m - p_{m+1}} \right) \\
&\geq 1 - \left(\frac{\splitfrac{p_{m+1}(k-1)\sqrt{2} + 4\sqrt{2}[2S_{m-1} + 
\sqrt{2}(m-1)(m-2)(p_1-p_{m})}{ + \frac{(m-1)}{2}(p_2(k-1) - 2p_{m+1})]}}{p_m - p_{m+1}} \right)\sqrt{\ergen_{m-1}(\delta)} \\
&= 1 - \underbrace{C_m\sqrt{\ergen_{m-1}(\delta)}}_{\ergen_m(\delta)}, 
\end{align*}
where in the penultimate step, we used $\delta \leq \sqrt{2\ergen_{m-1}(\delta)}$. To conclude, 
we now establish the bounds on $\lamtil_m$ stated in the Lemma. The upper bound follows 
from \eqref{eq:lamtil_m_bd_4} by using $\abs{\dotprod{z_m}{\vtil_m}}^2 \leq 1$. Indeed,
\begin{align*}
\lamtil_m 
&\leq 
4\sum_{i=1}^{m-1} (np_i\lambda - np_{m+1} \lambda) \sqrt{\ergen_i(\delta)}  + 
(np_m\lambda - np_{m+1}\lambda) + np_{m+1}\lambda(1+(k-1)\delta) \\
&= np_m\lambda + \left(4\sum_{i=1}^{m-1} (np_i\lambda - np_{m+1} \lambda) \sqrt{\ergen_i(\delta)} + np_{m+1}\lambda(k-1)\delta \right) \\
&= np_m\lambda + u_m(\delta).
\end{align*}
To lower bound $\lamtil_m$, we start with \eqref{eq:lamtil_m_bd_2} and 
the assumption $\ergen_1(\delta) \leq \cdots \leq \ergen_{m-1}(\delta)$. This leads to
\begin{align*}  
\lamtil_m 
&\geq np_{m}\lambda - 4\sqrt{2\ergen_{m-1}(\delta)}\left(\sum_{j=1}^{m-1} (np_j\lambda + u_j(\delta)) \right) \\
&\geq np_{m}\lambda - 4\sqrt{2\ergen_{m-1}(\delta)} (n\lambda S_{m-1} + \sum_{j=1}^{m-1} u_j(\delta)).
\end{align*}
Using the definition of $u_j(\delta)$, we then obtain
{\withMaybeSmall
\begin{align*}
\lamtil_m 
&\geq np_{m}\lambda - 4\sqrt{2\ergen_{m-1}(\delta)} \biggl(n\lambda S_{m-1}  
+ \sum_{j=1}^{m-1} \biggl(2\sum_{i=1}^{j-1} (np_i - np_{j+1})\lambda\sqrt{\ergen_i(\delta)} 
 + np_{j+1}\lambda(k-1)\delta\biggr)\biggr) \\
%
%
&\geq np_{m}\lambda - 4\sqrt{2\ergen_{m-1}(\delta)} \biggl(n\lambda S_{m-1}  
+ \sum_{j=1}^{m-1} \biggl(\sqrt{2} (j-1) (np_1 - np_{j+1})\lambda + np_{j+1}\lambda(k-1) \biggr)\biggr) \\
&\geq np_{m}\lambda - 4\sqrt{2\ergen_{m-1}(\delta)} \biggl(n\lambda S_{m-1}  
+ \sqrt{2} (m-1)(m-2) (np_1 - np_{m})\lambda + np_{2}\lambda(k-1)(m-1) \biggr) \\
&= np_{m}\lambda - l_m(\delta).
\end{align*}
}
\end{enumerate} 

%
\subsection{Proof of Lemma \ref{lem:mat_pert_dk_gen}} \label{app:subsec_mat_pert_dk_gen}
Recall that $H = \mathbb{E}(H) + R = \sum_{i=1}^{k} n\lambda p_i z_i z_i^* + R$. 
Also recall that via Weyl's inequality  
(see Theorem \ref{thm:Weyl} in Appendix \ref{app:sec_perturb_theory}), it holds
\begin{align} \label{eq:eig_bds_tmp_gen}
\lambda_i \in [\lamtil_i - \triangle, \lamtil_i + \triangle]; \quad i=1,\dots,n.
\end{align}
As in Lemma \ref{lem:mat_pert_dk}, we will obtain the bounds via an application of 
the Davis-Kahan theorem (see Theorem \ref{thm:DavisKahan} in Appendix \ref{app:sec_perturb_theory}). 
From now, we will assume that $\delta$ satisfies the conditions stated in Lemma \ref{lem:deflation_gen}. 
Before proceeding to the analysis, it will be helpful to use the following upper bound on $u_j(\delta)$ 
for each $j=2,\dots,k$.
\begin{align}
u_j(\delta) 
&= 4\sum_{i=1}^{j-1} (np_i - np_{j+1})\lambda\sqrt{\ergen_i(\delta)} + np_{j+1}\lambda(k-1)\delta \nonumber \\
&\leq 4(j-1)(np_1 - np_{j+1})\lambda \sqrt{\ergen_{j-1}(\delta)} + np_{j+1}\lambda(k-1)\delta \nonumber \\
&\leq n\lambda \sqrt{\ergen_{j-1}(\delta)}[4(j-1)(p_1 - p_{j+1}) + p_{j+1}(k-1)]. \label{eq:bd_uj}
\end{align}
\begin{enumerate}
\item \textbf{(Bounding $\abs{\dotprod{\vtil_1}{v_1}}^2$)} 
From Theorem \ref{thm:DavisKahan}, we observe that 
\begin{equation} \label{eq:vspace_bd1_gen}
\norm{(I - v_1v_1^{*})\vtil_1}_2 \leq \frac{\triangle}{\abs{\lamtil_1 - \lambda_2}},
\end{equation}
if $\abs{\lamtil_1 - \lambda_2} > 0$. Since $\lamtil_1 \geq np_1\lambda$ (see Lemma \ref{lem:deflation_gen}), and $\lambda_2 \leq \lamtil_2 + \triangle \leq np_2\lambda + u_2(\delta) + \triangle$ (from \eqref{eq:eig_bds_tmp_gen} and Lemma \ref{lem:deflation_gen}), it follows that 
\begin{align*}  
  \lamtil_1 - \lambda_2 
	&\geq \lamtil_1 - \lamtil_2 - \triangle \\ 
	&\geq n\lambda(p_1-p_2) - u_2(\delta) - \triangle \\
	&\geq n\lambda(p_1-p_2) - n\lambda \sqrt{\ergen_{1}(\delta)}(4(p_1 - p_3) + p_{3}(k-1)) - \triangle.
\end{align*}
Hence for $\mu \in [0,1/2]$, we have that $\lamtil_1 - \lambda_2 \geq (1-\mu) n\lambda(p_1-p_2) > 0$ if  
\begin{align} \label{eq:ergen_j_0_cond}
\triangle \leq \mu\frac{n\lambda(p_1-p_2)}{2}, \quad \ergen_{1}(\delta) \leq \mu^2 \left(\frac{p_1-p_2}{8(p_1 - p_3) + 2p_{3}(k-1)}\right)^2.
\end{align} 
In particular, it then follows from \eqref{eq:vspace_bd1_gen}, 
that
\begin{equation*}
\norm{(I - v_1v_1^{*})\vtil_1}_2 \leq \frac{\mu}{2(1-\mu)}.
\end{equation*}
Finally, the stated bound on $\abs{\dotprod{\vtil_1}{v_1}}^2$ follows from the identity $\norm{(I - v_1v_1^{*})\vtil_1}_2^2 = 1 - \abs{\dotprod{\vtil_1}{v_1}}^2$.

\item \textbf{(Bounding $\abs{\dotprod{\vtil_j}{v_j}}^2$ for $1 < j < k$)} 
From Theorem \ref{thm:DavisKahan}, we observe that 
\begin{equation} \label{eq:vspace_bd2_gen}
\norm{(I - v_jv_j^{*})\vtil_j}_2 \leq \frac{\triangle}{\min\set{\abs{\lamtil_j - \lambda_{j+1}}, \abs{\lamtil_j - \lambda_{j-1}}}}, 
\end{equation}
if $\abs{\lamtil_j - \lambda_{j+1}}, \abs{\lamtil_j - \lambda_{j-1}} > 0$.

Let us first establish conditions under which $\abs{\lamtil_j - \lambda_{j-1}} > 0$ holds. Since $\lambda_{j-1} \geq \lamtil_{j-1} - \triangle$, it follows that 
\begin{align*}  
 & \lambda_{j-1} - \lamtil_j
	\geq \lamtil_{j-1} - \lamtil_{j} - \triangle \\ 
	&\geq n\lambda(p_{j-1}-p_{j}) - l_{j-1}(\delta) - u_{j}(\delta) - \triangle \quad (\text{From Lemma \ref{lem:deflation_gen}})\\
	&= n\lambda(p_{j-1}-p_{j}) 
	- 4n\lambda \sqrt{2\ergen_{j-2}(\delta)}(S_{j-2} + \sqrt{2} (j-2)(j-3) (p_1 - p_{j-1}) + p_{2}(k-1)(j-2)) \\
	&\hspace{10mm}  - n\lambda \sqrt{\ergen_{j-1}(\delta)}[4(j-1)(p_1 - p_{j+1}) + p_{j+1}(k-1)] - \triangle \quad (\text{From Lemma \ref{lem:deflation_gen}, \eqref{eq:bd_uj}}) \\
	&\geq n\lambda(p_{j-1}-p_{j}) - n\lambda \sqrt{\ergen_{j-1}(\delta)} E_j - \triangle \quad (\text{Since } \ergen_{j-2}(\delta) \leq \ergen_{j-1}(\delta)),
\end{align*}
where, 
Hence for $\mu \in [0,1/2]$, we have that $\lambda_{j-1} - \lamtil_j \geq (1-\mu) n\lambda(p_{j-1}-p_{j}) > 0$ if 
\begin{align} \label{eq:ergen_j_1_cond}
\triangle \leq \mu\frac{n\lambda(p_{j-1}-p_{j})}{2}, \quad \ergen_{j-1}(\delta) \leq \mu^2\left(\frac{p_{j-1} - p_j}{2E_j}\right)^2.
\end{align}

Now let us establish conditions under which $\abs{\lamtil_j - \lambda_{j+1}} > 0$ holds. Since $\lambda_{j+1} \leq \lamtil_{j+1} + \triangle$, it follows that 
\begin{align*}  
 & \lamtil_j - \lambda_{j+1} 
	\geq \lamtil_j - \lamtil_{j+1} - \triangle \\ 
	&\geq n\lambda(p_j-p_{j+1}) - l_j(\delta) - u_{j+1}(\delta) - \triangle \quad (\text{From Lemma \ref{lem:deflation_gen}})\\
	&= n\lambda(p_j-p_{j+1}) 
	- 4n\lambda \sqrt{2\ergen_{j-1}(\delta)}(S_{j-1} + \sqrt{2} (j-1)(j-2) (p_1 - p_{j}) + p_{2}(k-1)(j-1)) \\
	&\hspace{10mm}  - n\lambda \sqrt{\ergen_{j}(\delta)}[4j(p_1 - p_{j+2}) + p_{j+2}(k-1)] - \triangle \quad (\text{From Lemma \ref{lem:deflation_gen}, \eqref{eq:bd_uj}}) \\
	&\geq n\lambda(p_j-p_{j+1}) 
	- n\lambda \sqrt{\ergen_{j}(\delta)}\bigl(4\sqrt{2} S_{j-1} + 8 (j-1)(j-2) (p_1 - p_{j}) + 4\sqrt{2} p_{2}(k-1)(j-1) \\
	&\hspace{10mm}  + 4j(p_1 - p_{j+2}) + p_{j+2}(k-1)\bigr) - \triangle \quad (\text{Since } \ergen_{j-1}(\delta) \leq \ergen_{j}(\delta)) \\
	&= n\lambda(p_j-p_{j+1}) - n\lambda \sqrt{\ergen_j(\delta)} E_{j+1} - \triangle	
\end{align*}
%
%
%
Hence for $\mu \in [0,1/2]$ we have that $\lamtil_j - \lambda_{j+1} \geq (1-\mu)n\lambda(p_j-p_{j+1}) > 0$ if 
\begin{align} \label{eq:ergen_j_2_cond}
\triangle \leq \mu\frac{n\lambda(p_j-p_{j+1})}{2}, \quad \ergen_{j}(\delta) \leq \mu^2\left(\frac{p_j-p_{j+1}}{2E_{j+1}}\right)^2.
\end{align}
Therefore from \eqref{eq:ergen_j_1_cond}, \eqref{eq:ergen_j_2_cond}, it follows that 
$$\lambda_{j-1} - \lamtil_j, \lamtil_j - \lambda_{j+1} \geq n\lambda(1-\mu)\min\set{p_{j-1}-p_{j}, p_j-p_{j+1}} > 0$$ holds if 
\begin{equation*}
\withMaybeSmall  
\triangle < \mu\frac{n\lambda}{2} \min\set{(p_j-p_{j+1}), (p_{j-1}-p_{j})};  \;  \ergen_{j-1}(\delta) < \mu^2\left(\frac{p_{j-1} - p_j}{2E_j}\right)^2; \;  \ergen_{j}(\delta) < \mu^2\left(\frac{p_j-p_{j+1}}{2E_{j+1}}\right)^2.
\end{equation*}
In particular, it then follows from \eqref{eq:vspace_bd2_gen} that
{\withMaybeSmall
\begin{align*} 
\norm{(I - v_jv_j^{*})\vtil_j}_2 &\leq \frac{\triangle}{\min\set{ n\lambda(p_{j-1}-p_{j}) - n\lambda \sqrt{\ergen_{j-1}(\delta)} E_j - \triangle, n\lambda(p_j-p_{j+1}) - n\lambda \sqrt{\ergen_j(\delta)} E_{j+1} - \triangle }} \\
&\leq \frac{\mu}{2(1-\mu)}.
\end{align*}
} 
The bound on $\abs{\dotprod{\vtil_j}{v_j}}^2$ now follows from the identity $\norm{(I - v_jv_j^{*})\vtil_j}_2^2 = 1 - \abs{\dotprod{\vtil_j}{v_j}}^2$.

\item \textbf{(Bounding $\abs{\dotprod{\vtil_k}{v_k}}^2$)} From Theorem \ref{thm:DavisKahan}, we observe that 
\begin{equation} \label{eq:vspace_bd3_gen}
\norm{(I - v_kv_k^{*})\vtil_k}_2 \leq \frac{\triangle}{\min\set{\abs{\lamtil_k - \lambda_{k+1}}, \abs{\lamtil_k - \lambda_{k-1}}}}, 
\end{equation}
if $\abs{\lamtil_k - \lambda_{k+1}}, \abs{\lamtil_k - \lambda_{k-1}} > 0$.

Let us first see when $\abs{\lamtil_k - \lambda_{k-1}} > 0$ holds. By proceeding in an identical manner as before (for showing when $\lambda_{j-1} - \lamtil_j > 0$ holds for $1 < j < k$), 
we readily obtain $\lambda_{k-1} - \lamtil_k \geq n\lambda(p_{k-1}-p_{k}) - n\lambda \sqrt{\ergen_{k-1}(\delta)} E_k - \triangle$, where $E_k$ is as defined in \eqref{eq:E_j_def} by 
plugging $j = k$. 
Hence for $\mu \in [0,1/2]$, we have that 
$\lambda_{k-1} - \lamtil_k \geq (1-\mu)n\lambda(p_{k-1}-p_{k}) > 0$ holds if 
\begin{equation} \label{eq:ergen_j_3_cond}
\triangle \leq \mu\frac{n\lambda(p_{k-1}-p_{k})}{2}, \quad 
\ergen_{k-1}(\delta) \leq \mu^2\left(\frac{p_{k-1} - p_k}{2E_k}\right)^2.
\end{equation}

Now let us establish conditions under which $\abs{\lamtil_k - \lambda_{k+1}} > 0$ holds. Since $\lambda_{k+1} \leq \lamtil_{k+1} + \triangle = \triangle$, it follows that 
\begin{align*}  
  \lamtil_k - \lambda_{k+1} 
	&\geq \lamtil_k - \triangle \\ 
	&\geq n\lambda p_k - l_k(\delta) - \triangle \quad (\text{From Lemma \ref{lem:deflation_gen}})\\
	&= n\lambda p_k - 4n\lambda \sqrt{2\ergen_{k-1}(\delta)}(S_{k-1} + \sqrt{2} (k-1)(k-2) (p_1 - p_{k}) \\ 
	&\hspace{10mm} + p_{2}(k-1)^2) - \triangle \quad (\text{From Lemma \ref{lem:deflation_gen}, \eqref{eq:bd_uj}}) \\
&= n\lambda p_k - n\lambda \sqrt{\ergen_{k-1}(\delta)} \widetilde{E} - \triangle.
\end{align*}
Finally, for $\mu \in [0,1/2]$, we see that $\lamtil_k - \lambda_{k+1} \geq n\lambda(1-\mu)p_k > 0$ holds if 
\begin{equation} \label{eq:ergen_j_4_cond}
\triangle \leq \mu\frac{n\lambda p_{k}}{2}, \quad 
\ergen_{k-1}(\delta) \leq \mu^2\left(\frac{p_k}{2\widetilde{E}}\right)^2.
\end{equation}
Therefore from \eqref{eq:ergen_j_3_cond},\eqref{eq:ergen_j_4_cond}, it follows that 
$$\lambda_{k-1} - \lamtil_k, \lamtil_k - \lambda_{k+1} \geq (1-\mu)n\lambda\min\set{p_{k-1}-p_{k}, p_k} > 0,$$ 
provided that
\begin{equation*}  
\triangle \leq \mu\frac{n\lambda}{2} \min\set{p_{k}, (p_{k-1}-p_{k})} , \quad 
\ergen_{k-1}(\delta) \leq \mu^2 \min \set{\left(\frac{p_{k-1} - p_k}{2E_k}\right)^2, \left(\frac{p_k}{2\widetilde{E}}\right)^2}.
\end{equation*}
In particular, it then follows from \eqref{eq:vspace_bd3_gen} that 
\begin{align*}
\norm{(I - v_kv_k^{*})\vtil_k}_2 
&\leq 
\frac{\triangle}{\min\set{n\lambda(p_{k-1}-p_{k}) - n\lambda \sqrt{\ergen_{k-1}(\delta)} E_k - \triangle, n\lambda p_k - n\lambda \sqrt{\ergen_{k-1}(\delta)} \widetilde{E} - \triangle}} \\
&\leq \frac{\mu}{2(1-\mu)}.
\end{align*}
Finally, the bound on $\abs{\dotprod{\vtil_k}{v_k}}^2$ is obtained in the same manner as explained for previous cases.
\end{enumerate}
%
%

%
\subsection{Proof of Lemma \ref{lem:rand_pert_bd_gen}} \label{app:subsec_rand_pert_bd_gen}
As the proof follows the same arguments as that in the proof of 
Lemma \ref{lem:rand_pert_bd}, we only point out the main differences. 
\begin{enumerate}
\item (\textbf{Bounding $\norm{\real(R)}_2$.}) 
Recall the definition of $\tilde{\sigma}$, $\tilde{\sigma}_{*}$ as in \eqref{eq:sigma_sigtil_def}. 
From \eqref{eq:infty_norm_grn_bd_1}, we have that $\tilde{\sigma}_{*} \leq \sigtil(\lambda,p_1,\dots,p_k)$.
Moreover, note that  
\begin{align}
\expec[\real(R)_{ij}^2] 
&= \underbrace{\sum_{l=1}^k p_l\lambda[\cos(\theta_{l,i}-\theta_{l,j}) - \sum_{l'=1}^k p_{l'}\lambda\cos(\theta_{l',i}-\theta_{l',j})]^2}_{P_1} \nonumber \\
&+ \underbrace{(1-\sum_{l=1}^k p_l)\lambda \expec[\cos N_{ij} - \sum_{l'=1}^k p_{l'}\lambda \cos(\theta_{l',i}-\theta_{l',j})]^2}_{P_2} \nonumber \\
&+ \underbrace{(1-\lambda)[\sum_{l=1}^k p_l \lambda \cos(\theta_{l,i}-\theta_{l,j})]^2}_{P_3} \nonumber \\
&= P_1 + P_2 + P_3. \label{eq:p_i_terms}
\end{align}
One can easily verify the following inequalities. 
\begin{align*}
P_1 &\leq \sum_{l=1}^k 2p_l \lambda [(1-p_l \lambda)^2 + (\sum_{l' \neq l} p_{l'}\lambda)^2], \\
%
P_2 &\leq (1-\sum_{l=1}^k p_l)\lambda \left[\frac{1}{2} + (\sum_{l'=1}^k p_{l'} \lambda)^2 \right] \mbox{ and } 
P_3 \leq (1-\lambda) (\sum_{l'=1}^k p_{l'} \lambda)^2.
\end{align*}
Plugging these in \eqref{eq:p_i_terms} readily leads to $\tilde{\sigma} \leq \sqrt{C(\lambda,p_1,\dots,p_k) n}$. 
Therefore, using Theorem \ref{app:thm_symm_rand} with $t = 2\sqrt{2}\sqrt{C(\lambda,p_1,\dots,p_k) n}$, 
we obtain for constants $\varepsilon \geq 0, c_{\varepsilon} > 0$ that 
\begin{equation} \label{eq:symm_norm_bd_1_gen}
\prob\left(\set{\norm{\real(R)}_2 \geq (2+\varepsilon)2\sqrt{2} \sqrt{C(\lambda,p_1,\dots,p_k) n}} \right) \leq 
n\exp\left(- \frac{8 C(\lambda,p_1,\dots,p_k) n}{\sigtil(\lambda,p_1,\dots,p_k)^2 c_{\varepsilon}} \right).
\end{equation}

\item (\textbf{Bounding $\norm{\imag(R)}_2$.}) By proceeding analogously as 
in the proof of Lemma \ref{lem:rand_pert_bd}, we obtain for 
constants $\varepsilon \geq 0, c_{\varepsilon} > 0$ that
\begin{equation} \label{eq:skew_symm_norm_gen_bd_1}
\prob\left(\set{\norm{\imag(R)}_2 \geq (2+\varepsilon)4\sqrt{2} \sqrt{C(\lambda,p_1,\dots,p_k) n}} \right) \leq 
2n\exp\left(- \frac{16 C(\lambda,p_1,\dots,p_k) n}{\sigtil(\lambda,p_1,\dots,p_k)^2 c_{\varepsilon}} \right).
\end{equation}
\end{enumerate}
Finally, the statement in the Lemma follows by applying the union 
bound to \eqref{eq:symm_norm_bd_1_gen}, \eqref{eq:skew_symm_norm_gen_bd_1}.

\end{document}